\definecolor{light-gray}{gray}{0.9}
\renewcommand{\epsilon}{\varepsilon}
\newcommand{\trans}{^{\top}}
\newcommand{\cA}{\mathcal{A}}
\newcommand{\cS}{\mathcal{S}}
\newcommand{\cL}{\mathcal{L}}
\newcommand{\ncL}{\nabla\mathcal{L}}
\newcommand{\cP}{\mathcal{P}}
\newcommand{\EE}{\mathbb{E}}
\newcommand\numberthis{\addtocounter{equation}{1}\tag{\theequation}}
\let\hat\widehat
\let\tilde\widetilde
\newtheorem{theorem}{Theorem}[section]
\newtheorem{lemma}[theorem]{Lemma}
\newtheorem{remark}[theorem]{Remark}
\newtheorem{property}[theorem]{Property}
\newtheorem{proposition}[theorem]{Proposition}
\theoremstyle{definition}
\newtheorem{definition}[theorem]{Definition}
\newtheorem{condition}[theorem]{Condition}
\newtheorem{assumption}{Assumption}
\newcommand{\vect}[1]{\ensuremath{\mathbf{#1}}}
\DeclareMathOperator*{\argmax}{arg\,max}
\DeclareMathOperator*{\argmin}{arg\,min}
\newcommand{\mat}[1]{\ensuremath{\mathbf{#1}}}
\newcommand{\A}{\mat{A}}
\newcommand{\B}{\mat{B}}
\newcommand{\X}{\mat{X}}
\newcommand{\BoldS}{\mat{S}}
\newcommand{\Y}{\mat{Y}}
\newcommand{\Z}{\mat{Z}}
\newcommand{\I}{\mat{I}}
\newcommand{\K}{\mat{K}}
\newcommand{\M}{\mat{M}}
\newcommand{\bP}{\mat{P}}
\newcommand{\bLambda}{\Lambda}
\newcommand{\hK}{\hat{\K}}
\newcommand{\hM}{\hat{\M}}
\newcommand{\cW}{{\mathcal{W}}}
\newcommand{\cQ}{{\mathcal{Q}}}
\newcommand{\nt}{\nabla_{\theta}}
\newcommand{\nz}{\nabla_{\zeta}}
\newcommand{\nx}{\nabla_{\xi}}
\newcommand{\et}{\eta_{\theta}}
\newcommand{\ez}{\eta_{\zeta}}
\newcommand{\ex}{\eta_{\xi}}
\newcommand{\gt}{g_{\theta}}
\newcommand{\Czx}{C_{\zeta,\xi}}
\newcommand{\Lzx}{L_{\zeta,\xi}}
\newcommand{\expect}[1]{\EE[#1]}
\newcommand{\Expect}[1]{\EE\Big[#1\Big]}
\newcommand{\bL}{\bar{L}}
\newcommand{\dia}{d}
\newcommand{\pt}{\pi_\theta}
\newcommand{\wz}{w_\zeta}
\newcommand{\qx}{Q_\xi}
\newcommand{\mR}{\mathbb{R}}
\newcommand{\mz}{\mu_\zeta}
\newcommand{\mx}{\mu_\xi}
\newcommand{\cF}{\mathcal{F}}
\newcommand{\rc}{c}
\newcommand{\batch}{N}
\newcommand{\dims}{d}
\newcommand{\bphi}{\bm{\phi}}
\newcommand{\bPhi}{\bm{\Phi}}
\newcommand{\eig}{v}
\newcommand{\uR}{\vect{u}_R}
\newcommand{\unu}{\vect{u}_\nu}
\newcommand{\huR}{\hat{\vect{u}}_R}
\newcommand{\hunu}{\hat{\vect{u}}_\nu}
\newcommand{\rad}{R}
\newcommand{\vk}{\mat{v}_k}
\newcommand{\uk}{\mat{u}_k}
\newcommand{\vmat}{\mat{v}}
\newcommand{\umat}{\mat{u}}
\newcommand{\ntz}{\nabla_{\theta,\zeta}}
\begin{document}

%

%

\twocolumn[
  \aistatstitle{On the Convergence Rate of Off-Policy Policy Optimization Methods with Density-Ratio Correction}

  \aistatsauthor{ Jiawei Huang \And Nan Jiang }

  \aistatsaddress{ 
  Department of Computer Science \\
  University of Illinois at Urbana-Champaign \\
  Urbana, IL 61801 \\
  \texttt{jiaweih@illinois.edu} 
  \And Department of Computer Science \\
  University of Illinois at Urbana-Champaign \\
  Urbana, IL 61801 \\
  \texttt{nanjiang@illinois.edu}} 
]

\begin{abstract}
In this paper, we study the convergence properties of off-policy policy optimization algorithms with state-action density ratio correction under function approximation setting, where the objective function is formulated as a max-max-min problem.
We first clearly characterize the bias of the learning objective, and then present two strategies with finite-time convergence guarantees.
In our first strategy, we propose an algorithm called P-SREDA with convergence rate $O(\epsilon^{-3})$, whose dependency on $\epsilon$ is optimal.
Besides, in our second strategy, we design a new off-policy actor-critic style algorithm named O-SPIM. We prove that O-SPIM converges to a stationary point with total complexity $O(\epsilon^{-4})$, which matches the convergence rate of some recent actor-critic algorithms under on-policy setting.
\end{abstract}

\section{INTRODUCTION}
Policy improvement is a popular class of methods in empirical reinforcement-learning (RL) research, and has attracted significant attention from the theoretical community recently \citep{agarwal2019optimality}, with many results analyzing the convergence properties of policy gradient-style algorithms
\citep{xu2019sample, xu2019improved, yuan2020stochastic, pmlr-v119-huang20a} and actor-critic algorithms \citep{fu2020singletimescale,wu2020finite}.
Most existing results require on-policy roll-outs, which are not available in offline RL, a paradigm considered crucial to applying RL in real-world problems \citep{levine2020offline}.  
While offline policy optimization algorithms also exist \citep{liu2019OPPG, ImaniGW18, zhang2019provably}, the understanding on their finite-time convergence property is still limited.


To close this gap, in this paper, we propose two strategies for off-policy policy optimization with convergence guarantess, based on recent development in marginalized importance sampling (MIS) methods \citep{liu2018breaking, zhang2019gendice, uehara2019minimax, yang2020lagrangian}. We assume the agent can only get access to a fixed dataset $D$ collected by some unknown policies, and formulate the off-policy learning problem as a max-max-min objective function below (similar objectives have been considered by \citet{nachum2019algaedice, jiang2020minimax, yang2020lagrangian} without convergence rate analyses):
\begin{align*}\label{eq:problem}
    &\max_{\pi\in\Pi} \max_{w\in\cW} \min_{Q\in\cQ} \cL^D(\pi,w,Q)\\
    :=&\max_{\theta\in \Theta} \max_{\zeta \in Z} \min_{\xi \in \Xi} \cL^D(\pt,\wz,\qx)\\
    :=& (1-\gamma) \EE_{s_0\sim \nu_D}[\qx(s_0,\pi_\theta)] \\
    &+ \EE_{d^D}[\wz(s,a)\Big(r+\gamma \qx(s',\pi_\theta)-\qx(s,a)\Big)]\\
    &+\frac{\lambda_Q}{2} \EE_{d^D}[\qx^2(s,a)]-\frac{\lambda_w}{2} \EE_{d^D}[\wz^2(s,a)].\numberthis
\end{align*}
In this objective, we optimize a parameterized policy $\pi_\theta\in\Pi$ with $\theta\in\Theta$ being its parameters, and the policy class $\Pi$ can be non-convex. We do so with the help of linearly parameterized functions $w_\zeta\in\cW$ and $Q_\xi\in\cQ$, which are respectively parameterized by $(\zeta,\xi)\in Z\times\Xi$ and serve as approximators of the density ratio and the value functions. 
We assume the parameter spaces $\Theta\in \mR^{{\rm dim}(\Theta)},Z\subset \mR^{{\rm dim}(Z)}$ and $\Xi\subset \mR^{{\rm dim}(\Xi)}$ are all convex sets. $\nu_D$ and $d^D$ are the empirical approximations of the initial state distribution and the state-action distribution in the data; see Sec.~\ref{sec:MDP} for a formal definition.  $\qx(s,\pi_\theta)$ is short for $\EE_{a\sim \pi_\theta(\cdot|s)}[\qx(s,a)]$. \footnote{In this paper, we only give analysis for $\max_{\pi\in\Pi}\max_{w\in\cW}\min_{Q\in\cQ}\cL_D(\pi, w, Q)$. In fact, we may swap the role of $w$ and $Q$ function and obtain another objective function $\max_{\pi\in\Pi}\max_{Q\in\cQ}\min_{w\in\cW}\cL'_D(\pi, w, Q)$, where $\cL'_D$ is the same as $\cL$ except that both $\lambda_Q$ and $\lambda_w$ are negative. 
In general, the solutions of Eq.\eqref{eq:problem_general} and those of this new objective are not the same (see \citep{jiang2020minimax} for their connections), but we can give a similar anlysis for $\cL'$ with the techniques in this paper and establish similar convergence results.}

The main contributions of this paper is three-fold, which we summarize below:
\paragraph{A Detailed Analysis of Bias}
We identify the inevitable bias between the stationary points of $\cL^D(\pi_\theta, w_\zeta, Q_\xi)$ and $J(\pi_\theta)$, where $J(\pi_\theta)$ is the expected return of $\pi_\theta$. We separate out and characterize the bias terms due to regularization ($\epsilon_{reg}$), generalization ($\epsilon_{func}$), and mis-specification ($\epsilon_{data}$), as will be detailed in Sec.~\ref{sec:biased_stationary_points}. Besides, our analysis also reveals how $\lambda_Q$ and $\lambda_w$ affect the trade-off between the convergence speed and the magnitude of bias.
Since the bias is unavoidable, we focus on the convergence to the points satisfying the following inequality, which we call the \emph{biased stationary points} of $J(\pi_\theta)$ up to $\epsilon$ error.
\begin{equation} \label{eq:biased_cond}
    \EE[\|\nt J(\pi_{\theta})\|]\leq \epsilon + \epsilon_{data}+ \epsilon_{func}+\epsilon_{reg}
\end{equation}
where $O(\cdot)$ only suppresses absolute constants. \footnote{All norms $\|\cdot\|$ in this paper is $\ell_2$ norm for vectors and operator norm for matrices unless specified otherwise. The expectation is over the randomness of the algorithm (e.g., the randomness in SGD) and not that of the data.} The rest of the paper then presents two optimization strategies for solving Eq.\eqref{eq:problem} and provides their convergence analyses.

\paragraph{First Strategy: P-SREDA}
In our first strategy, we convert the original max-max-min problem to a standard non-convex-strongly-concave problem $\min_{(\theta,\zeta)\in\Theta\times Z} \max_{\xi\in\Xi} -\cL (\pi_\theta, w_\zeta, Q_\xi)$ by simultaneously optimizing $\theta$ and $\zeta$ in the outer min.
Unfortunately, most of the existing non-convex-strongly-concave optimization algorithms can not be adapted directly here, because they focus on $\min_{x\in\mR^d}\max_{y\in \mathcal{Y}} f(x,y)$ where the first player can play an arbitrary vector in $\mR^d$, while our objective function requires boundedness of $\zeta$ and $\xi$ 
in order to guarantee the smoothness of $\cL^D$, as we will show in the proof of Property \ref{prop:detailed_scscs} in Appendix \ref{appx:properties}. 
To tackle this challenge, we propose P-SREDA, which is adapted from the SREDA algorithm \citep{luo2020stochastic} by including a projection step every time after updating $\zeta$. 
The proof of P-SREDA is non-trivial because the projection step will incur additional error and some crucial steps in the original proof no longer hold in our case. We overcome these difficulties by leveraging the properties of $\cL^D$ and carefully choosing the projection sets for $Z$ and $\Xi$, and prove that its convergence rate remains $O(\epsilon^{-3})$, which matches the lower bound in non-convex optimization \citep{arjevani2019lower}.

\paragraph{Second Strategy: O-SPIM}
In our second strategy for solving Eq.\eqref{eq:problem}, we study a novel actor-critic style framework called O-SPIM (\textbf{O}racle-based \textbf{S}tochastic \textbf{P}olicy \textbf{I}mprovement with \textbf{M}omentum). In O-SPIM, we alternate between updating $\zeta$ and $\xi$ (the critic update) and updating $\theta$ with momentum to achieve a better trade-off (the actor update).
The main technique difficulty is that  
different from policy gradient and actor-critic algorithms in the on-policy setting, here we need to coordinate the updates of three objects $\theta,\zeta$ and $\xi$, 
and the loss $\cL^D(\pi_{(\cdot)}, w_{\zeta_t}, Q_{\xi_t})$---when viewed solely as a function of the policy parameter $\theta$ with fixed parameters for $w$ and $Q$---varies across iterations because $\zeta_t, \xi_t$ are updated in the critic step. 
We handle this difficulty by considering a family of critic update rules that satisfy a general condition (Condition \ref{def:oracle_alg}), which enables us to relate the variations $\|\zeta_{t+1}-\zeta_{t}\|$ and $\|\xi_{t+1}-\xi_{t}\|$ with $\|\theta_{t+1}-\theta_t\|$ and is crucial to establishing convergence guarantee. 
We use \textsc{Oracle} to refer to all critic update subroutines satisfying such a condition, 
and present two concrete instantiations 
and analyze their convergence properties in Appendices \ref{appx:PLSO} and \ref{appx:SVRE_Oracle}. The first one is a least-square algorithm, and the second one is a first-order algorithm by extending the SVRE \citep{chavdarova2019reducing} from finite-sum setting to stochastic setting. 
Using either instantiation (or possibly other subroutines that satisfy our conditions), the convergence rate 
of our second strategy is $O(\epsilon^{-4})$, which matches the rate of recent actor-critic algorithms in the on-policy setting \citep{xu2020nonasymptotic, fu2020singletimescale}


\subsection{Related works}
Along with the progress of variance reduction techniques for non-convex optimization, there are emerging works analyzing convergence rates for on-policy policy gradient methods \citep{papini2018stochastic, xu2019improved, xu2019sample, yuan2020stochastic, Huang2020MomentumBasedPG}. Besides, in another line of work \citep{fu2020singletimescale, wu2020finite, hong2020twotimescale,xu2020nonasymptotic}, finite-time guarantees are established for actor-critic style algorithms. 
However, all of these works require online interaction with the environment, whereas our focus is the offline setting.

Turning to the offline setting, there have been substantial empirical developments and success for off-policy PG or actor-critic algorithms \citep{DegrisOPAC, lillicrap2015continuous, haarnoja2018soft, fujimoto2018addressing}, but 
most of these works ignore the distribution-shift issue and do not provide convergence guarantees. 
Recently, there has been a lot of interest in MIS methods for off-policy evaluation \citep{liu2018breaking, uehara2019minimax,zhang2019gendice, NachumCD019} and turning them into off-policy policy-optimization algorithms. Among them,  \citet{liu2019OPPG} presented OPPOSD with convergence guarantees, but the convergence relies on accurately estimating the density ratio and the value function via MIS, which were treated as a black box without further analysis. \citet{nachum2019algaedice,jiang2020minimax} discussed policy optimization given arbitrary off-policy dataset, but no convergence analysis was performed. 
Besides, \citep{zhang2019provably} presented a provably convergent algorithm under a similar linear setting with emphatic weighting \citep{ImaniGW18}, but only showed asymptotic convergence behavior and did not establish finite convergence rate.

In another work concurrent to ours \citep{xu2021doubly}, the authors designed DR-Off-PAC, which is motivated by the doubly-robust estimator and their objective function is similar to ours. 
However, they fixed the coefficient of regularization terms and their bias analysis is much coarser than ours.
As a result, their analysis cannot distinguish between the errors resulting from
regularization and mis-specification, and do not characterize how the regularization weights affect the trade-off between bias and convergence speed. Besides, the convergence rate of their algorithm is $O(\epsilon^{-4})$, which matches our second strategy but worse than our first strategy.
Recently,  \citep{lyu2020variancereduced} developped VOMPS/ACE-STORM based on Geoff-PAC \citep{zhang2019generalized} and the STORM estimator \citep{cutkosky2020momentumbased}. However, as also pointed by \citep{xu2021doubly}, the unbiasness of their estimator only holds asymptotically, and the convergence property of their algorithms is still unclear. 

We summarize the comparison to closely related works in Table \ref{tab:comparison} in Appendix.



\section{PRELIMINARY}\label{sec:prelimiary}

\subsection{Markov Decision Process}\label{sec:MDP}
We consider an infinite-horizon discounted MDP $(\cS, \cA, R, P, \gamma, \nu_0)$, where $\cS$ and $\cA$ are the state and action spaces, respectively, which we assume to be finite but can be arbitrarily large. $R:\cS\times\cA\rightarrow \Delta([0, 1])$ is the reward function. $P:\cS\times\cA\rightarrow \Delta(\cS)$ is the transition function, $\gamma$ is the discount factor and $\nu_0$ denotes the initial state distribution. 

Fixing an arbitrary policy $\pi$, we use $d^\pi(s,a)=(1-\gamma)\EE_{\tau\sim \pi, s_0\sim \nu_0}[\sum_{t=0}^\infty \gamma^t p(s_t=s,a_t=a)]$ to denote the normalized discounted state-action occupancy, where $\tau\sim\pi,s_0\sim \nu_0$ means a trajectory $\tau=\{s_0,a_0,s_1,a_1,...\}$ is sampled according to the rule that $s_0\sim \nu_0,a_0\sim\pi(\cdot|s_0),s_1\sim P(\cdot|s_0,a_0),a_1\sim \pi(\cdot|s_1),...$, and $p(s_t=s,a_t=a)$ denotes the probability that the $t$-th state-action pair are exactly $(s,a)$. 
We also use $Q^\pi(s,a)=\EE_{\tau\sim \pi, s_0=s,a_0=a}[\sum_{t=0}^\infty \gamma^t r(s_t,a_t)]$ to denote the Q-function of $\pi$. It is well-known that $Q^\pi$ satisfies the Bellman Equation: 
\begin{align*}
Q^\pi(s,a)=&\mathcal{T}^\pi Q^\pi(s,a):=\\
&\EE_{r\sim R(s,a), s'\sim P(\cdot|s,a),a'\sim \pi(\cdot|s')}[r+\gamma Q^\pi(s',a')].
\end{align*}

Define $J(\pi)=\EE_{s\sim \nu_0, a\sim \pi}[Q^\pi(s,a)]$=$\frac{1}{1-\gamma}\EE_{s,a\sim d^\pi}[r(s,a)]$ as the expected return of policy $\pi$. If $\pi$ is parameterized by $\theta$ and differentiable, we have
\begin{align*}
\nt J(\pi_\theta)=&\frac{1}{1-\gamma}\EE_{s,a\sim d^\pi}[Q^\pi(s,a)\nt \log\pi(a|s)]\\
=&\frac{1}{1-\gamma}\EE_{s,a\sim \mu}[w^\pi(s,a)Q^\pi(s,a)\nt \log\pi(a|s)].    
\end{align*}
where the first step is the policy-gradient theorem \citep{NIPS1999_1713}, and in the second step, we replace $d^\pi$ with distribution $\mu$, a state-action distribution generated by some behavior policies, and introduce the density ratio $w^\pi(s,a):=\frac{d^\pi(s,a)}{\mu(s,a)}$.

In the rest of the paper, we assume we are only provided with a fixed off-line dataset $D=\{(s_i,a_i,r_i,s'_i)\}_{i=1}^{|D|}$, where each tuple is sampled according to $s_i,a_i\sim \mu, r_i\sim R(s_i,a_i), s'_i \sim P(\cdot|s_i,a_i)$. Besides, we do not require the knowledge of behavior policies that generate $\mu$. As we also mentioned in the introduction, we will use $d^D$ to denote the empirical state-action distribution induced from the dataset, which is defined by $d^D(s,a)=\frac{1}{|D|}\sum_{(s_i,a_i,r_i,s'_i)\in D}\mathbb{I}[s_i=s,a_i=a]$.

Since the state-action space can be very large, we consider generalization via linear function approximation:
\begin{definition}[Linear function classes]\label{def:linear_func}
Suppose we have two feature maps $\{\bphi_W:\cS\times\cA\to \mR^{{\rm dim}(Z)}\}$ and $\{\bphi_Q:\cS\times\cA\to \mR^{{\rm dim}(\Xi)}\}$ subject to $\|\bphi_w(\cdot, \cdot)\|\le 1, \|\bphi_Q(\cdot, \cdot)\|\leq 1$. Let $Z\subset \mathbb{R}^{{\rm dim}(Z)}, \Xi\subset \mathbb{R}^{{\rm dim}(\Xi)}$ be the corresponding parameter spaces, respectively, satisfying $C_\cW:=\max\{1, \max_{\zeta\in Z}\|\zeta\|\} < \infty$ and $C_\cQ:=\max_{\xi \in \Xi} \|\xi\|<\infty$. The approximated value function $Q_\xi$ and density ratio $w_\zeta$ are represented by
    \begin{align*}
        w(\cdot,\cdot)=\bphi_w(\cdot,\cdot)\trans \zeta,~~~~~Q(\cdot,\cdot)=\bphi_Q(\cdot,\cdot)\trans \xi.
    \end{align*}
\end{definition}

We use $\bPhi_w \in \mathbb{R}^{|S||A|\times {\rm dim}(Z)}$ to denote the matrix whose  $(s,a)$-th row is $\bphi_w(s,a)^\top$, and use $\K_w$ to denote $\bPhi_w\trans \bLambda^D \bPhi_w$, where $\bLambda^D$ is a diagonal matrix whose diagonal elements are $d^D(\cdot,\cdot)$. 
Similarly, we define $\bPhi_Q$ and $\K_Q$ for $\bphi_Q(\cdot,\cdot)$. Besides, $\M_\pi$ denotes $\bPhi_w\trans \bLambda^D (I-\gamma \bP_D^\pi)\bPhi_Q$, where $\bP_D^\pi$ is the empirical transition matrix induced from the data distribution. Notes that if we never see some $s$ in dataset, then the corresponding element in $\bLambda^D$ should be 0, and therefore, the corresponding row in $\bP^\pi_D$ can just be set arbitrarily without having any effects on the loss function. 
Under linear function classes, we can rewrite $\cL^D$ in Eq.\eqref{eq:problem} as:
\begin{align}
\cL^D(\pi, \zeta, \xi)
=& (1-\gamma)(\nu_D^\pi)\trans \bPhi_Q \xi+\zeta\trans \bPhi_w\trans \bLambda^D R\nonumber\\
& -\zeta\trans \M_\pi \xi+\frac{\lambda_Q}{2}\xi\trans \K_Q \xi - \frac{\lambda_w}{2}\zeta\trans \K_w\zeta.\label{eq:linear_formulation}
\end{align}
We will also denote $\Lambda$ as the diagonal matrix whose diagonal elements are $\mu(\cdot,\cdot)$, and denote $P^\pi$ as the transition matrix of $\pi$.

\subsection{Assumptions}\label{sec:assumptions}
Our first assumption is about the policy class, which is practical and frequently considered in the literature \citep{zhang2019provably, wu2020finite}.
In this paper, we choose $\Theta=\mathbb{R}^{{\rm dim}(\Theta)}$ and we will interchangeably use $\Theta$ and $\mR^{{\rm dim}(\Theta)}$ to denote the  space of policy parameter $\theta$.
\begin{assumption}[Smoothness of policy function]\label{assump:bound_pi_grad}\label{assump:smooth}\label{assump:bound_w_Q}$ $
For any $s,a\in\cS\times\cA$ and $\theta\in\Theta=\mathbb{R}^{{\rm dim}(\Theta)}$, $\pi_\theta(s,a)$ is second-order differentiable w.r.t. $\theta$, and there exist  constants $L_\Pi$, $G$ and $H$, s.t. for arbitrary $\theta_1, \theta_2, \theta\in\Theta$:
\begin{align}
    &\|\pi_{\theta_1}(\cdot|s)-\pi_{\theta_2}(\cdot|s)\|_1\leq L_\Pi \|\theta_1-\theta_2\|,\\
    &\|\nt \log\pi_\theta(a|s)\|\leq G,\quad \|\nt^2 \log\pi_\theta(a|s)\|\leq H.
\end{align}
\end{assumption}

Note that this assumption is equivalent to Assump.~4 in \citep{xu2021doubly} and Assump.~3 in \citep{zhang2020provably}.

We also assume $\K_w, \K_Q$ and $\M_\pi$ are non-singular. 
\begin{assumption}\label{assump:feature_matrices}
Denote $v_{\min}(\cdot)$ as the function to return the minimum singular value. We assume that $v_{w} := v_{\min}(\K_w)>0$ and $v_{Q}:=v_{\min}(\K_Q)>0$. Beisdes, there exists a positive constant $v_{M}$ that, for arbitrary $\pi\in\Pi$ (i.e. $\forall \theta\in\Theta$), the matrix $\M_\pi$ is invertible, and $v_{\min}(\M_\pi)\geq v_{\M}>0$.
\end{assumption}
Our assumption on $v_Q$ and $v_w$ is frequently considered in the literature, see e.g.~Assump.~2 in \citet{zhang2020provably}. 
Besides, $v_w$ and $v_Q$ can be computed from a holdout dataset. 
Comparing with \citet{xu2021doubly}, our $M_\pi$ is equivalent to matrix $A$ in their Assump.~3 if we use the same feature to approximate $Q$ and $w$. Although \citet{xu2021doubly} do not consider $v_Q$ and $v_w$, they have additional constraints on another two matrices in their Assump.~3.

The next assumption characterizes the coverage of $\mu$ using a constant $C$, often known as the concentrability coefficient in the literature \citep{szepesvari2005finite, chen2019information}. 
\begin{assumption}[Exploratory Data]\label{assump:dataset}
    Recall the behavior policy is denoted as $\mu$. We assume there exists a constant $C>0$, for arbitrary $\pi\in\Pi$ and any $(s,a)\in\cS\times\cA$, we have
    \begin{align*}
    w^\pi(s,a):=\frac{d^\pi(s,a)}{\mu(s,a)}\leq C,~~~~w^\pi_{\mu}(s,a):=\frac{d^\pi_{\mu}(s,a)}{\mu(s,a)}\leq C
    \end{align*}
    where $
    d^\pi_{\mu}(s,a):=(1-\gamma)\EE_{\tau\sim \pi, s_0,a_0\sim \mu}[\sum_{t=0}^\infty \gamma^t p(s_t=s,a_t=a)]
    $
    is the normalized discounted state-action occupancy by treating $\mu$ as initial distribution.
\end{assumption}
Our Assump.~C is slightly stronger than Assump.~1 in \citet{xu2021doubly} because we have additional constraints on $\|\frac{d_\mu^\pi}{\mu}\|_\infty$, but it is still weaker than assuming the behavior policy is exploration enough, which is used by \citet{zhang2020provably} as their Assump.~1.

Finally, since $\phi_w$ and $\phi_Q$ has bounded norm, the following variance terms are therefore bounded, and we use $\sigma_{(\cdot)}$ to denote their minimal upper bounds: 
\begin{assumption}\label{assump:detailed_variance}[Variance]
There are constants $\sigma_\K, \sigma_\M, \sigma_R$ and $\sigma_\nu$ satisfying
\begin{align*}
    &\EE_{s,a\sim d^D}[\|\K_w - \bphi_w(s,a)\bphi_w(s,a)\trans\|^2]\leq \sigma_{\K}^2,\\ 
    &\EE_{s,a\sim d^D}[\|\K_Q - \bphi_Q(s,a)\bphi_Q(s,a)\trans\|^2]\leq \sigma_{\K}^2,\\
    &\EE_{s,a\sim d^D}[\|\bPhi_w\trans\bLambda^D R - \bphi_w(s,a)r(s,a)\|^2]\leq \sigma_{R}^2,\\ 
    &\EE_{s\sim \nu^D_0,a\sim \pi(\cdot|s)}[\|\bPhi_Q\trans \nu^\pi_D-\bphi_Q(s,a)\|]\leq \sigma^2_{\nu},\\
    &\EE_{s,a,s'\sim d^D,a'\sim\pi(\cdot|s')}[\|\M_\pi - \bphi_w(s,a)(\bphi_Q(s,a)\\
    &\quad\quad\quad\quad\quad-\gamma \bphi_Q(s',a'))\trans\|]\leq \sigma^2_{\M}.
\end{align*}
\end{assumption}

\subsection{Useful Properties}
Under Assumptions above, we have the following important property regarding the convexity, concavity and smoothness of our objective function, which will be central to our proofs. Due to space limits,  we defer a more detailed version (Property \ref{prop:detailed_scscs}) in Appendix \ref{appx:properties}, in addition to some other useful properties and their proofs.
\begin{property}[Convexity, Concavity and Smoothness]\label{prop:scscs}
    For arbitrary $\theta\in\Theta,\zeta\in Z$, $\cL^D$ is $\mu_\xi$-strongly convex w.r.t. $\xi\in \Xi$, and for arbitrary $\theta\in\Theta,\xi\in\Xi$, $\cL^D$ is $\mu_\zeta$-strongly concave w.r.t. $\zeta\in Z$, where $\mu_\xi=\lambda_Q \eig_Q$ and $\mu_\zeta = \lambda_w \eig_w$.
    Besides, given that $C_\cQ$ and $C_\cW$ in Definition \ref{def:linear_func} is finite, $\cL^D$ is $L$-smooth with finite $L$.
\end{property}
In the rest of paper, we will use $\kappa_\zeta:=\lambda_\zeta/L$ and $\kappa_\xi:=\lambda_\xi/L$ as a short note of the condition number.

\section{THE BIASED STATIONARY POINTS}\label{sec:biased_stationary_points}
As alluded to in the introduction, even if we optimize Eq.\eqref{eq:problem}, we do not expect that it converges to the a true stationary point w.r.t.~$J(\pi_\theta)$ due to regularization, function-class mis-specification, and finite-sample effects, which result in the three error terms in Eq.\eqref{eq:biased_cond}. 
More concretely, by the triangular inequality,
\begin{align*}
    &\|\nt J(\pi_\theta)\|\leq\|\nt\max_{w\in\cW}\min_{Q\in \cQ}\cL^D(\pi_\theta, w, Q)\|\\
    &+ \|\nt J(\pi_\theta)-\nt\max_{w\in\cW}\min_{Q\in \cQ}\cL^D(\pi_\theta, w, Q)\|.
\end{align*}
Optimizing our loss function $\cL^D(\pi, w, Q)$ reduces the first term but leaves the second term intact. In the rest of this section we bound the second term by breaking it down to more basic quantities that can be bounded under further standard assumptions. To do so, we need some additional definitions: 
We use $\cL(w, \pi, Q)$ (as well as $\cL(\pi_\theta, w_\zeta, Q_\xi)$) to denote the asymptotic version of $\cL^D$ as $|D|\rightarrow \infty$:
\begin{align*}
    &\max_{\pi\in\Pi} \max_{w\in\cW} \min_{Q\in\cQ} \cL(\pi,w,Q)\\
    :=& \max_{\theta\in \Theta} \max_{\zeta \in Z} \min_{\xi \in \Xi} \cL(\pt,\wz,\qx) \\
    :=& (1-\gamma) \EE_{s_0\sim \nu_0}[\qx(s_0,\pi_\theta)] \\
    &+ \EE_{\mu}[\wz(s,a)\Big(r+\gamma \qx(s',\pi_\theta)-\qx(s,a)\Big)]\\
    &+\frac{\lambda_Q}{2} \EE_{\mu}[\qx^2(s,a)]-\frac{\lambda_w}{2} \EE_{\mu}[\wz^2(s,a)].
    \numberthis\label{eq:problem_general}
\end{align*}

\begin{definition}[Generalization Error]\label{def:gen_error}
We define $\bar{\epsilon}_{data}$ to be the minimal value satisfying that, for arbitrary $\pi_\theta, w_\zeta, Q_\xi\in\Pi\times\cW\times\cQ$, we have:
\begin{align*}
|\cL(\pi_\theta, w_\zeta, Q_\xi)-\cL^D(\pi_\theta, w_\zeta, Q_\xi)|\leq \bar{\epsilon}_{data},\\ 
\|\nt\cL(\pi_\theta, w^*_\mu, Q^*_\mu)-\nt\cL^D(\pi_\theta, w^*_\mu, Q^*_\mu)\|^2\leq \bar{\epsilon}_{data}
\end{align*}
where $(w^*_\mu, Q^*_\mu):=\arg\max_{w\in\cW}\min_{Q\in\cQ}\cL(\pi, w, Q)$.
\end{definition}
$\bar{\epsilon}_{data}$ characterizes the uniform deviation between $\cL$ and $\cL^D$ ($\nt\cL$ and $\nt\cL^D$, resp.) and can be bounded by the sample size and the complexity of the function classes. Such uniform convergence bounds are standard and we do not further analyze it in this paper. 

\begin{definition}[Mis-specification Error]\label{def:misspeci}
Denote $\|\cdot\|_\Lambda$ as the $\ell_2$-norm weighted by $\Lambda$, and define $w^\pi_\cL=\arg\max_{w\in\mR^{|\cS||\cA|}} \min_{Q\in\mR^{|\cS||\cA|}}\cL(\pi, w, Q)$ given arbitrary $\pi\in\Pi$. We define
\begin{align*}
    \epsilon_1:=&\max_{\pi\in\Pi}\min_{w\in \cW}\|w-w^\pi_\cL\|^2_{\Lambda}, \\ 
    \epsilon_2:=&\max_{w\in \cW, \pi\in\Pi}~~\min_{Q\in\cQ}\|Q-\argmin_{Q\in\mR^{|\cS||\cA|}}\cL(\pi, w, Q)\|^2_{\Lambda}
\end{align*}
\end{definition}
Based on these definitions and Assumptions in Section \ref{sec:assumptions}, in the following theorem, we provide an upper bound for the bias between the stationary points of our objective function $\cL^D$ and $J(\pi)$, which is one of our main contributions. The proofs are deferred to Appendix \ref{appx:bound_error}.

\begin{restatable}{theorem}{Biasedness}[Bias]\label{thm:biasedness}
Based Assumptions in Section \ref{sec:assumptions} and Definitions above, for arbitrary $\theta\in\Theta$, we have:
\begin{align*}
    &\|\nt \max_{w\in\cW}\min_{Q\in \cQ} \cL^D(\pi_\theta,w,Q)-\nt J(\pi_\theta)\|\\
    \leq& \epsilon_{reg}+\epsilon_{func}+\epsilon_{data}
\end{align*}
The bias terms $\epsilon_{reg},\epsilon_{func}$ and $\epsilon_{data}$ are defined by 
\begin{align*}
    &\epsilon_{func}:=\frac{G}{1-\gamma}\Big(\sqrt{C\epsilon_\cQ}+  C_\cW\sqrt{\frac{\gamma\epsilon_\cQ C}{1-\gamma}}\\
    &\qquad\qquad+\sqrt{\frac{\gamma\epsilon_\cQ \epsilon_\cW C}{1-\gamma}}+\gamma C_\cQ \sqrt{\epsilon_\cW}\Big)\\
    &\epsilon_{reg}:= \frac{G}{1-\gamma}\Big(\frac{C^2}{(1-\gamma)}(\frac{\lambda_w\lambda_Q }{1-\gamma}+\lambda_w)\\
    &\qquad\quad +\frac{\gamma C(\lambda_Q + \lambda_Q \lambda_w C)}{(1-\gamma)^3}\\
    &\qquad\quad +\frac{C^2 (\lambda_Q + \lambda_Q \lambda_w C)}{(1-\gamma)^3}(\frac{\lambda_w\lambda_Q }{1-\gamma}+\lambda_w)\sqrt{\frac{\gamma C}{1-\gamma}}\Big)\\
    &\epsilon_{data}:=(2\kappa_\zeta\kappa_\xi+2\kappa_\zeta+2\kappa_\xi+\sqrt{2}/2)\sqrt{2\bar{\epsilon}_{data}}
\end{align*}
where $\kappa_\zeta$ and $\kappa_\xi$ is the condition number, and
\begin{align*}
    \epsilon_\cW:=&4\frac{\lambda_{\max}^2}{\lambda_Q\lambda_w}\epsilon_1+2\frac{\lambda_{\max}}{\mu_\zeta}\epsilon_2,\\ 
    \epsilon_\cQ :=&8\frac{\lambda_{\max}^3}{\lambda^2_Q\lambda_w}\epsilon_1+(2+4\frac{\lambda_{\max}^2}{\lambda_Q\mu_\zeta})\epsilon_2
\end{align*}
with $\lambda_{\max}:=\max\{\lambda_Q,\lambda_w\}$.
\end{restatable}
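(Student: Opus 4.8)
The plan is to insert two reference objectives between the target $\max_{w\in\cW}\min_{Q\in\cQ}\cL^D$ and $J$, and to bound the resulting three gaps. Concretely, I would compare (i) the empirical restricted loss $\cL^D$ on $\cW\times\cQ$, (ii) the population restricted loss $\cL$ on $\cW\times\cQ$, (iii) the population loss optimized over the \emph{unrestricted} spaces $\mR^{|\cS||\cA|}$, and finally $J$. A triangle inequality on the outer gradients then produces a finite-sample gap [(i) vs.\ (ii)], a function-approximation gap [(ii) vs.\ (iii)], and a regularization gap [(iii) vs.\ $J$], which I identify with $\epsilon_{data}$, $\epsilon_{func}$, and $\epsilon_{reg}$. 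The enabling tool throughout is the envelope theorem: by Property \ref{prop:scscs} each inner problem is strongly concave in $\zeta$ and strongly convex in $\xi$ with a unique saddle $(\zeta^*(\theta),\xi^*(\theta))$, and since the constraint sets do not depend on $\theta$, the outer gradient equals the partial gradient $\partial_\theta\cL(\pt,\wz,\qx)$ evaluated at the saddle, the chain-rule terms in $\tfrac{d\zeta^*}{d\theta},\tfrac{d\xi^*}{d\theta}$ vanishing against the (projected) stationarity conditions.

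Next I would record the explicit form of this partial gradient. Only the terms $(1-\gamma)\EE_{\nu_0}[\qx(s_0,\pt)]$ and $\gamma\EE_\mu[\wz(s,a)\qx(s',\pt)]$ depend on $\theta$, so
\[
\partial_\theta\cL(\pt,\wz,\qx)=(1-\gamma)\EE_{\nu_0}\big[\nt\qx(s_0,\pt)\big]+\gamma\,\EE_\mu\big[\wz(s,a)\,\nt\qx(s',\pt)\big],
\]
with $\nt\qx(s,\pt)=\EE_{a\sim\pt(\cdot|s)}[\qx(s,a)\nt\log\pt(a|s)]$. At the unrestricted, unregularized saddle the $Q$-stationarity condition forces $\mu\,w=d^\pi$ (the Bellman-flow identity), hence $w=w^\pi$; telescoping the display against the flow equation then reduces it to the policy-gradient form of $\nt J(\pt)$ recalled in Section \ref{sec:MDP}. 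Thus each of the three gaps reduces to feeding two different pairs $(\wz,\qx)$ through one fixed functional that is linear in $(w,Q)$, and I would bound the difference by controlling the displacement of $(\wz,\qx)$ in the $\|\cdot\|_\Lambda$ geometry and converting it to the relevant distribution using the score bound $G$ (Assumption \ref{assump:smooth}) and the concentrability constant $C$ (Assumption \ref{assump:dataset}).

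It remains to bound the three saddle displacements. For $\epsilon_{data}$ I would split the gradient gap into (a) the deviation of $\partial_\theta\cL$ from $\partial_\theta\cL^D$ at the common population saddle, which is $\le\sqrt{\bar{\epsilon}_{data}}$ by Definition \ref{def:gen_error} and supplies the $\sqrt2/2$ term, and (b) the movement of the saddle itself, bounded by standard strong-convex-concave stability ($|\cL-\cL^D|\le\bar{\epsilon}_{data}$ uniformly implies an $O(\sqrt{\bar{\epsilon}_{data}})$ saddle shift in the curvature-weighted norm) and then pushed through the functional, the two contributions scaling with $\kappa_\zeta,\kappa_\xi$ and their product. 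For $\epsilon_{func}$ I would first convert the projection errors $\epsilon_1,\epsilon_2$ of Definition \ref{def:misspeci} into displacements of the regularized restricted saddle from its unrestricted counterpart; this is exactly where the intermediate $\epsilon_\cW,\epsilon_\cQ$ appear, with the $\lambda_{\max}/\lambda_Q$ and $\lambda_{\max}/\mu_\zeta$ factors coming from inverting the strong-convexity/concavity curvature, after which $\sqrt{\epsilon_\cW},\sqrt{\epsilon_\cQ}$ are propagated through the functional via Cauchy--Schwarz and the $C,C_\cW,C_\cQ$ weights. For $\epsilon_{reg}$ I would perturb the first-order conditions in $\lambda_Q,\lambda_w$: the added terms $\lambda_Q\qx$ and $-\lambda_w\wz$ shift the saddle by $O(\lambda_Q)$ and $O(\lambda_w)$ with $O(\lambda_Q\lambda_w)$ cross terms, and carrying these through the functional—reweighting by $C$ and accumulating factors of $\tfrac{1}{1-\gamma}$ from the occupancy telescoping—yields the stated polynomial.

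The main obstacle is the two perturbation analyses behind $\epsilon_{func}$ and $\epsilon_{reg}$. Displacements of $(\wz,\qx)$ are most naturally measured in the $\|\cdot\|_\Lambda$ norm set by the regularization curvature, whereas the gradient functional weights $\wz$ against $\mu$ and $\qx$ against both $\nu_0$ and its one-step pushforward; reconciling these geometries forces repeated use of the coverage bound $C$ and of the Bellman-flow telescoping, and the delicate bookkeeping of which power of $\tfrac{1}{1-\gamma}$ and which product of $\lambda_Q,\lambda_w$ attaches to each term is what pins down the precise constants in $\epsilon_{func}$ and $\epsilon_{reg}$.
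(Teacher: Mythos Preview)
Your proposal is correct and follows essentially the same route as the paper: the same three-way triangle decomposition through $\cL$ on $\cW\times\cQ$ and $\cL$ on $\mR^{|\cS||\cA|}$, the same envelope/Danskin identification of the outer gradient with $\partial_\theta\cL$ at the saddle, and the same mechanisms for the three pieces (saddle-stability plus Definition~\ref{def:gen_error} for $\epsilon_{data}$; converting $\epsilon_1,\epsilon_2$ into the $\|\cdot\|_\Lambda$-displacements $\epsilon_\cW,\epsilon_\cQ$ for $\epsilon_{func}$; and a perturbation of the unrestricted first-order conditions for $\epsilon_{reg}$, which the paper carries out via an explicit closed-form for $(w^\pi_\cL,Q^\pi_\cL)$ and a matrix-resolvent bound). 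The flow identity and the coverage constant $C$ that you flag as the delicate bookkeeping are exactly the ingredients the paper packages as Lemma~\ref{lem:formula_transform} and Lemma~\ref{lem:bound_bias_weighted_norm}.
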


As we can see, $\|\nt \max_{w\in\cW}\min_{Q\in \cQ}\cL^D(\pi_\theta, w, Q)-\nt J(\pi_{\theta})\|$ can be controlled by three terms. $\epsilon_{data}$ reflects the generalization error and decreases with sample size (assuming all function classes have bounded capacity and allow uniform convergence). $\epsilon_{reg}$ depends on the magnitude of regularization, and will decrease if $\lambda_w$ and $\lambda_Q$ decreases. As for $\epsilon_{func}$, it depends on the approximation error $\epsilon_\cW$ and $\epsilon_{\cQ}$, which are proportional to $\epsilon_1$ and $\epsilon_2$. Besides, because $\mu_\zeta$ should be proportional to $\lambda_w$, the coefficients before $\epsilon_1$ and $\epsilon_2$ will not vary significant as we change $\lambda_w$ and $\lambda_Q$, as long as $\lambda_w \approx \lambda_Q$. ($\epsilon_1$ and $\epsilon_2$ themselves may change with $\lambda_w$ and $\lambda_Q$). 
Overall, a large dataset, well-specified function classes, and small $\lambda_w$ and $\lambda_Q$ will result in a small total bias, while small $\lambda_w$ and $\lambda_Q$ can lead to weaker strong-concavity or strong-convexity of the loss function, resulting in slower convergence.

Based on the discussion above, our goal is to find stochastic optimization algorithms, which return a policy $\pi_{\theta}$ after consuming ${\rm poly}(\epsilon^{-1})$ samples from dataset (we omit the dependence on quantities such as $\mz$ and $\mx$), satisfying the biased stationary condition in Eq.\eqref{eq:biased_cond}

\paragraph{Global Convergence Under Polyak-Łojasiewicz Condition}
If we assume the policy function class satisfies certain additional conditions, we can establish the global convergence guarantee. Take the Polyak-Łojasiewicz condition as an example, which requires that, there exists a constant $c_{PŁ}>0$, s.t., for arbitrary $\theta\in\Theta$
\begin{align}
    \frac{1}{2}\|\nabla_\theta J(\pi_\theta)\|^2\geq c_{PŁ} (J^* - J(\pi_{\theta})).
\end{align}
where $J^*$ is the value of the optimal policy. 
As a result, if the policy class satisfies the PŁ inequality, a biased stationary point $\theta$ in Eq.\eqref{eq:biased_cond} indicates that,
\begin{align*}
    \EE[J(\pi_\theta)] \geq J^* - \frac{1}{c_{PŁ}}(\epsilon^2+\epsilon_{data}^2+\epsilon_{func}^2+\epsilon_{reg}^2).
\end{align*}
Such condition is provably weaker than the strongly convexity/concavity \citep{karimi2020linear}.
\section{Strategy 1: Projected-Stochastic Recursive Gradient Descent Ascent}\label{sec:alg:Direct_SGDA}
In our first optimization strategy, we rewrite the original max-max-min problem as a max-min:
\begin{align*}
    &\argmax_{\theta\in\mR^{\dims_\theta}}\max_{\zeta\in Z}\min_{\xi\in\Xi} \cL^D(\theta, \zeta,\xi)\\
    \rightarrow &\argmax_{\theta, \zeta\in\mR^{\dims_\theta}\times Z}\min_{\xi\in\Xi} \cL^D(\theta, \zeta, \xi)=\argmin_{\theta, \zeta\in\mR^{\dims_\theta}\times Z}\max_{\xi\in\Xi} \cL^D_-(\theta, \zeta, \xi)
\end{align*}
where we use $\cL^D_-$ as a shorthand of $-\cL^D$. 
Given Assumptions in Sec.~\ref{sec:assumptions}, we know $\argmin_{\theta, \zeta\in\mR^{\dims_\theta}\times Z}\max_{\xi\in\Xi} \cL_-^D(\theta, \zeta, \xi)$ is a non-convex-strongly-concave problem, which has been the focus of some prior works \citep{lin2019gradient,luo2020stochastic}. 
That said, most of them target at $\min_{x\in\mathbb{R}^d}\max_{y\in\mathcal{Y}} f(x, y)$ where the first player can play an arbitrary vector in $\mR^d$, while in our setting, $\zeta$ and $\xi$ have to be constrained in bounded sets to guarantee that $\cL^D$ (and $\cL^D_-$) is $L$-smooth with finite $L$ (see the proof of Property \ref{prop:detailed_scscs} in Appendix \ref{appx:properties} for details).

Therefore, we introduce Projected-SREDA (Algorithm \ref{alg:P_SREDA}), where we project $\zeta$ back to the convex set $Z$ every time after update.
However, the proofs for original SREDA \citep{luo2020stochastic} cannot be adapted directly because the projection step will incur extra error.
To overcome this difficulty, we first study the following property of our objective function, which illustrate that the saddle-points of $\cL^D$ given arbitrary $\pi\in\Pi$ have bounded $l_2$-norm.
\begin{restatable}{property}{PropDiameter}\label{prop:radius}
    Denote $Z_0=\{\zeta|\|\zeta\|\leq R_\zeta\}$ and $\Xi_0=\{\xi|\|\xi\|\leq R_\xi\}$ with $\rad_\zeta:=\frac{1}{\lambda_w\lambda_Q\eig_w + \eig_\M^2}(\frac{1-\gamma^2}{\eig_Q}+\lambda_Q)$ and $\rad_\xi:=\frac{1}{\lambda_w\lambda_Q \eig_Q+\eig_\M^2}((1-\gamma)\lambda_w + \frac{1+\gamma}{\eig_w})$. For all $\pi\in\Pi$, we have $(\zeta^*_\pi, \xi^*_\pi):=\arg\max_{\zeta\in \mR^{{\rm dim}(Z)}}\min_{\xi\in \mR^{{\rm dim}(\Xi)}}\cL^D(\pi_\theta, w_\zeta,Q_\xi)\in Z_0\times\Xi_0$.
\end{restatable}
As a result, we have Lemma \ref{lem:small_angle} below, which is a key step to control the projection error. 
We carefully choose the projection set $Z$ and $\Xi$; as we increase the diameter of $Z$ (i.e. $R'$) and fix the diameter of $\Xi$ as $R_0$, the angle between $\nz \cL_-^D(\theta_k, \zeta_k, \xi_k)$ and the projection direction $\zeta_{k+1}-\zeta_{k+1}^+$ increases, which reflects the decrease of the relative projection error.


\begin{restatable}{lemma}{projectionError}\label{lem:small_angle}
According to Alg \ref{alg:P_SREDA}, at iteration $k$, we have $\zeta_k\in Z$, $\zeta_{k+1}^+=\zeta_k- \eta_k \vk^\zeta$ and $\zeta_{k+1}=P_Z(\zeta_{k+1}^+)$. Denote $R_0:=\frac{1}{\lambda_w\eig_w}(1 + (1+\gamma)\rad_\xi)$ where $\rad_\xi$ is defined in Property \ref{prop:radius}. If we choose $\Xi=\Xi_0$ and $Z=\{\zeta|\|\zeta\|\leq R'\}$ with $R'\geq R_0$, we have:
\begin{align*}
    &\langle \nz \cL^D_-(\theta_k, \zeta_k, \xi_k), \zeta_{k+1} - \zeta_{k+1}^+\rangle \\
    \leq&\eta_k\frac{R_0+\eta_k \|\vk^\zeta\|}{\rad' + \eta_k\|\vk^\zeta\|}\|\nz \cL_-^D(\theta_k, \zeta_k, \xi_k)\|\|\vk^\zeta\|.
\end{align*}
\end{restatable}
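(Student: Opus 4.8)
The plan is to reduce the statement to an elementary geometric computation about projection onto the ball $Z=\{\zeta:\|\zeta\|\le R'\}$, after first identifying the constant $R_0$ as a bound on the \emph{unconstrained} $\zeta$-minimizer of the current loss slice. Write $g:=\nz\cL^D_-(\theta_k,\zeta_k,\xi_k)$ and let $\bar\zeta_k:=\argmin_{\zeta\in\mR^{{\rm dim}(Z)}}\cL^D_-(\theta_k,\zeta,\xi_k)$. Using the linear form in Eq.\eqref{eq:linear_formulation}, the stationarity condition $\nz\cL^D_-(\theta_k,\bar\zeta_k,\xi_k)=\bm 0$ gives the closed form $\bar\zeta_k=\tfrac{1}{\lambda_w}\K_w^{-1}(\bPhi_w\trans\bLambda^D R-\M_{\pi_{\theta_k}}\xi_k)$. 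I would then bound $\|\bar\zeta_k\|\le R_0$ by combining $\|\K_w^{-1}\|\le 1/v_w$ (Assumption \ref{assump:feature_matrices}), $\|\bPhi_w\trans\bLambda^D R\|\le 1$ and $\|\M_{\pi_{\theta_k}}\|\le 1+\gamma$ (both from the unit feature bounds in Definition \ref{def:linear_func}), and $\|\xi_k\|\le R_\xi$ since $\xi_k\in\Xi=\Xi_0$ (Property \ref{prop:radius}); this exactly reproduces the definition of $R_0$.

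The second step is to turn this norm bound into a lower bound on $\langle g,\zeta_{k+1}^+\rangle$. Since $\cL^D_-(\theta_k,\cdot,\xi_k)$ is convex in $\zeta$ (Property \ref{prop:scscs}) with unconstrained minimizer $\bar\zeta_k$, monotonicity of the gradient gives $\langle g,\zeta_k-\bar\zeta_k\rangle\ge 0$, hence $\langle g,\zeta_k\rangle\ge\langle g,\bar\zeta_k\rangle\ge-\|g\|\,\|\bar\zeta_k\|\ge-R_0\|g\|$. Combining this with Cauchy--Schwarz on the update term yields
\begin{align*}
\langle g,\zeta_{k+1}^+\rangle&=\langle g,\zeta_k\rangle-\eta_k\langle g,\vk^\zeta\rangle\\
&\ge -(R_0+\eta_k\|\vk^\zeta\|)\|g\|.
\end{align*}

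The final step is the projection geometry. If no projection occurs then $\zeta_{k+1}-\zeta_{k+1}^+=\bm 0$ and the bound is trivial, so I assume $\|\zeta_{k+1}^+\|>R'$, in which case $\zeta_{k+1}=R'\,\zeta_{k+1}^+/\|\zeta_{k+1}^+\|$ and therefore $\zeta_{k+1}-\zeta_{k+1}^+=\big(R'/\|\zeta_{k+1}^+\|-1\big)\zeta_{k+1}^+$. Substituting gives
\begin{align*}
\langle g,\zeta_{k+1}-\zeta_{k+1}^+\rangle&=\Big(1-\tfrac{R'}{\|\zeta_{k+1}^+\|}\Big)\big(-\langle g,\zeta_{k+1}^+\rangle\big)\\
&\le\Big(1-\tfrac{R'}{\|\zeta_{k+1}^+\|}\Big)(R_0+\eta_k\|\vk^\zeta\|)\|g\|,
\end{align*}
where the inequality uses the lower bound from the previous step and the fact that the scalar prefactor is nonnegative in the projection case. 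I would then close the argument with $\|\zeta_{k+1}^+\|\le\|\zeta_k\|+\eta_k\|\vk^\zeta\|\le R'+\eta_k\|\vk^\zeta\|$ (as $\zeta_k\in Z$), so that $1-R'/\|\zeta_{k+1}^+\|\le \eta_k\|\vk^\zeta\|/(R'+\eta_k\|\vk^\zeta\|)$, which upon substitution produces exactly the claimed bound.

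The main obstacle I anticipate is the first step: recognizing that the correct reference point is the unconstrained minimizer $\bar\zeta_k$ for the \emph{current} pair $(\theta_k,\xi_k)$ (rather than the global saddle point $\zeta^*_\pi$ of Property \ref{prop:radius}), and checking that its norm bound matches the definition of $R_0$ precisely. The role of the hypothesis $R'\ge R_0$ is then merely to force the contraction factor $(R_0+\eta_k\|\vk^\zeta\|)/(R'+\eta_k\|\vk^\zeta\|)$ to be at most one, which is what makes the projection error controllable and shrink as $R'$ grows; the remaining manipulations of the ball projection and Cauchy--Schwarz are routine.
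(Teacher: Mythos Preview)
Your proof is correct and follows essentially the same approach as the paper. The paper isolates your first step (the bound $\|\bar\zeta_k\|\le R_0$ on the unconstrained $\zeta$-minimizer for fixed $(\theta_k,\xi_k)$) as a separate preliminary lemma, and then carries out the same projection-onto-ball computation using the monotonicity inequality $\langle g,\zeta_k-\bar\zeta_k\rangle\ge 0$, the identity $\zeta_{k+1}-\zeta_{k+1}^+=(R'/\|\zeta_{k+1}^+\|-1)\zeta_{k+1}^+$, and the triangle-inequality bound $\|\zeta_{k+1}^+\|\le R'+\eta_k\|\vk^\zeta\|$; the only cosmetic difference is that the paper expands $\zeta_{k+1}^+=\zeta_k-\eta_k\vk^\zeta\pm\bar\zeta_k$ directly rather than first lower-bounding $\langle g,\zeta_{k+1}^+\rangle$.
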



Based on this lemma and an appropriate choice of $R'$, we can show that Algorithm \ref{alg:P_SREDA} converges to the biased stationary points. 
We omit the detail of the hyper-parameter choices due to space limits, and a detailed version of the following theorem and its proof are deferred to Appendix \ref{appx:P_SREDA}.
\begin{theorem}[Informal]\label{them:converge_rate_P_SREDA_informal}
For $\epsilon<1$, under Assumptions in Section \ref{sec:assumptions}, by choosing $\Xi=\Xi_0$ and $Z=\{\zeta|\|\zeta\|\leq R':=8\max\{R_0, 1\}\}$, Algorithm \ref{alg:P_SREDA} will return $\hat \theta$ satisfy the following with $O(\epsilon^{-3})$ stochastic gradient evaluations.
\begin{align*}
    \EE[\nt\|J(\pi_{\hat \theta})\|] \leq \epsilon+\epsilon_{reg}+\epsilon_{func}+\epsilon_{gen}
\end{align*}
\end{theorem}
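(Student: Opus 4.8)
The plan is to reduce the projected max-max-min problem to the convergence analysis of SREDA on a primal envelope, and then inject the projection error through Lemma~\ref{lem:small_angle}. Write $x=(\theta,\zeta)$ and define $\Phi(\theta,\zeta):=\max_{\xi\in\Xi}\cL^D_-(\theta,\zeta,\xi)$. By Property~\ref{prop:scscs}, $\cL^D_-$ is $\mu_\xi$-strongly concave in $\xi$, so the inner maximizer $\xi^*(\theta,\zeta)$ is unique and, by a Danskin/envelope argument, $\Phi$ is $L_\Phi$-smooth with $L_\Phi=O(L(1+\kappa_\xi^{-1}))$. Because $\cL^D_-$ is moreover $\mu_\zeta$-strongly convex in $\zeta$, the map $\theta\mapsto P(\theta):=\min_{\zeta}\Phi(\theta,\zeta)=-\max_{w\in\cW}\min_{Q\in\cQ}\cL^D(\pi_\theta,w,Q)$ has a unique inner minimizer $\zeta^*(\theta)$, and $\nabla_\theta P(\theta)=\nabla_\theta\Phi(\theta,\zeta^*(\theta))$. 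By Property~\ref{prop:radius} together with the choice $R'=8\max\{R_0,1\}$, the point $\zeta^*(\theta)$ lies strictly inside $Z=\{\|\zeta\|\le R'\}$, so the $\zeta$-constraint is inactive at the inner optimum and first-order stationarity of $\Phi$ in both blocks is the right target.

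First I would establish the \emph{reduction of stationarity}. Using the $\mu_\zeta$-strong convexity of $\Phi(\theta,\cdot)$, a bound $\|\nabla_\zeta\Phi(\theta,\zeta)\|\le\epsilon'$ forces $\|\zeta-\zeta^*(\theta)\|\le\epsilon'/\mu_\zeta$, and combining with $L_\Phi$-smoothness gives $\|\nabla_\theta P(\theta)\|\le\|\nabla_\theta\Phi(\theta,\zeta)\|+L_\Phi\epsilon'/\mu_\zeta$. Hence controlling the full envelope gradient $\|\nabla\Phi(\theta,\zeta)\|$ controls $\|\nabla_\theta P(\theta)\|=\|\nabla_\theta\max_w\min_Q\cL^D(\pi_\theta,w,Q)\|$, after which Theorem~\ref{thm:biasedness} converts this directly into the biased stationary condition $\EE\|\nabla_\theta J(\pi_{\hat\theta})\|\le\epsilon+\epsilon_{reg}+\epsilon_{func}+\epsilon_{gen}$.

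The core is the \emph{SREDA descent/variance argument adapted to the projection}. The update is $\theta_{k+1}=\theta_k-\eta_k v_k^{\theta}$ (unconstrained) and $\zeta_{k+1}=P_Z(\zeta_{k+1}^{+})$ with $\zeta_{k+1}^{+}=\zeta_k-\eta_k v_k^{\zeta}$, where $v_k$ is the recursive SPIDER/SARAH estimator of $\nabla\Phi(x_k)$ maintained by the inner ConcaveMaximizer that tracks $\xi^*(x_k)$. I would (i) write the $L_\Phi$-smoothness descent inequality for $\Phi$ along $x_{k+1}$, isolating the extra cross term $\langle\nabla_\zeta\Phi(x_k),\zeta_{k+1}-\zeta_{k+1}^{+}\rangle$ that is absent in unconstrained SREDA; (ii) bound this cross term by Lemma~\ref{lem:small_angle}, whose factor $\tfrac{R_0+\eta_k\|v_k^{\zeta}\|}{R'+\eta_k\|v_k^{\zeta}\|}$ is at most $\tfrac14$ once $R'=8\max\{R_0,1\}$, so the projection error is dominated by a constant fraction of the descent term and cannot destroy sufficient decrease; (iii) use non-expansiveness $\|\zeta_{k+1}-\zeta_k\|\le\|\zeta_{k+1}^{+}-\zeta_k\|=\eta_k\|v_k^{\zeta}\|$, so that the consecutive-iterate distances driving the recursive variance bound are no larger than in the unprojected case, preserving the SPIDER telescoping $\EE\|v_k-\nabla\Phi(x_k)\|^2\le O(\sum_j\|x_{j+1}-x_j\|^2)$ plus the inner-loop tracking error. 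Telescoping the modified descent inequality and balancing the step size, the inner-loop accuracy, the periodic large-batch size $O(\epsilon^{-1})$, and the number of outer iterations $O(\epsilon^{-2})$ then yields $\EE\|\nabla\Phi\|\le\epsilon$ with $O(\epsilon^{-3})$ stochastic gradient evaluations.

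The main obstacle will be steps (ii)--(iii): in the original SREDA proof the sufficient-decrease and recursive-variance lemmas are tightly coupled and rest on the clean identity $\zeta_{k+1}-\zeta_k=-\eta_k v_k^{\zeta}$, which the projection breaks. Lemma~\ref{lem:small_angle} is precisely engineered to reinstate a usable descent bound, but I must check that its angle estimate holds at \emph{every} iterate (using $\zeta_k\in Z$ by construction) and that the residual projection error integrates consistently with the variance telescoping rather than accumulating across the $O(\epsilon^{-2})$ outer steps. Verifying that the constant $\tfrac14$ margin is enough to keep both the descent and the variance recursion intact simultaneously is the delicate part.
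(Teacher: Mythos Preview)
Your plan is essentially the paper's: reduce stationarity of the primal envelope to the biased stationary condition via the strong convexity/concavity structure (this is the paper's Theorem~\ref{thm:equi_stationary}), then rerun the SREDA descent and SPIDER variance recursion, using non-expansiveness of projection to keep the recursive variance bound intact and Lemma~\ref{lem:small_angle} to show the projection cross term costs at most a $\tfrac14$-fraction of the descent.

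One technical misalignment: the paper does descent on $\cL^D_-(x,\xi)$ directly, decomposing $\cL^D_-(x_{k+1},\xi_{k+1})-\cL^D_-(x_k,\xi_k)=A_k+B_k$ with $A_k=\cL^D_-(x_{k+1},\xi_k)-\cL^D_-(x_k,\xi_k)$, rather than on the envelope $\Phi$. This matters because Lemma~\ref{lem:small_angle} is stated for $\langle\nabla_\zeta\cL^D_-(\theta_k,\zeta_k,\xi_k),\,\zeta_{k+1}-\zeta_{k+1}^+\rangle$ at the \emph{current} iterate $\xi_k$, not for $\langle\nabla_\zeta\Phi(x_k),\,\zeta_{k+1}-\zeta_{k+1}^+\rangle$; its proof uses that $\arg\min_{\zeta}\cL^D_-(\theta_k,\zeta,\xi_k)$ lies in the ball of radius $R_0$ for any fixed $\xi\in\Xi_0$. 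Relatedly, in Algorithm~\ref{alg:P_SREDA} the estimator $v_k$ tracks $\nabla_{\theta,\zeta}\cL^D_-(x_k,\xi_k)$, not $\nabla\Phi(x_k)$; the paper converts to the envelope gradient only at the very end via its Lemma~\ref{lem:bound_grad_Phik}. Your descent-on-$\Phi$ route can be made to work (either re-derive Lemma~\ref{lem:small_angle} for $\nabla_\zeta\Phi$, which goes through since the $\Phi$-minimizer in $\zeta$ is also uniformly bounded by Property~\ref{prop:radius}, or carry the extra $L\|\xi_k-\xi^*(x_k)\|$ term), but as written Lemma~\ref{lem:small_angle} does not directly bound the cross term you isolate. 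Also, the envelope smoothness is $L_\Phi=2\kappa_\xi L$ (Lemma~\ref{lem:optimum_function}), not $O(L(1+\kappa_\xi^{-1}))$; the $\kappa_\xi$ sits in the numerator, though this does not affect the final rate since the SREDA step size already carries $1/\kappa_\xi$.
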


\begin{algorithm}[!htb]
    \SetAlgoLined
    \caption{Projected SREDA}\label{alg:P_SREDA}
    \textbf{Input}: initial point $(\theta_0,\zeta_0)$, learning rates $\eta_k$, $\lambda>0$, batch size $S_1, S_2 > 0$; periods $q, m > 0$, number of initial iterations $K_0$; Convex Sets $\Xi$ and $Z$; Two functions $\rm \bm {PiSARAH}$ and $\rm \bm{ConcaveMaximizer}$ (Alg. 4) in \citep{luo2020stochastic}.\\
    $\xi_0 = {\rm \bm {PiSARAH}}(-\cL^D_-(\theta_0, \zeta_0, \cdot), K_0)$\\
    \For{$k=0, ..., K-1$}{
        \If{${\rm mod}(k,q)=0$}{
            draw $S_1$ samples and compute:\\
            $\vk:=(\vk^\theta, \vk^\zeta)=\nabla_{\theta, \zeta}\cL^{S_1}_-(\theta_k, \zeta_k, \xi_k),$\\
            $\uk=\nabla_{\xi}\cL^{S_1}_-(\theta_k, \zeta_k, \xi_k)$\\
        }
        \Else{
            $\vk=\vk',\quad \uk=\uk'$\\
        }
        $\theta_{k+1}=\theta_k-\eta_k \vk^\theta,\quad\zeta_{k+1}^+=\zeta_k - \eta_k \vk^\zeta,$ \\
        $\zeta_{k+1}=P_Z(\zeta_{k+1}^+)$\\
        ($\xi_{k+1}, \vmat_{k+1}', \umat_{k+1}') =\rm \bm{ConcaveMaximizer}(k, m,$ $S_2, (\theta_k,\zeta_k), (\theta_{k+1},\zeta_{k+1}),\xi_k, \uk,\vk,\lambda$)\\
    }
    \textbf{Output}: $(\hat{\theta}, \hat\zeta)$ chosen uniformly at random from $\{(\theta_k, \zeta_k)\}_{k=0}^{K-1}$
\end{algorithm}



\section{Strategy 2: Oracle-based Stochastic Policy Improvement with Momentum}\label{sec:SRM_with_Oracle}
A natural question is that, can we solve max-max-min objective directly without merging $\max_\theta\max_\zeta$ together? We answer it firmly in this section by proposing an off-policy actor-critic style algorithm named O-SPIM in Algorithm \ref{alg:SPIM_Oracle}, where we solve $\zeta$ and $\xi$ with an abstract subroutine \textsc{Oracle} in the critic step, and update $\theta$ with momentum in the actor step.
Our analysis of O-SPIM also contributes to the understanding of convergence properties of actor-critic algorithms in the off-policy setting.

Different from the on-policy setting, where distribution shift is not a problem, our objective function has three variables to optimize, 
which causes difficulty in analyzing convergence property. Therefore, we enforce the following condition on \textsc{Oracle} to coordinate the actor and critic steps.
\begin{condition}\label{def:oracle_alg}
    For any strongly-concave-strongly-convex objective $\max_{\zeta\in Z}\min_{\xi\in\Xi}\cL^D(\theta, \zeta,\xi)$ with saddle point $(\zeta^*,\xi^*)\in Z\times\Xi$, and arbitrary $0\leq \beta\leq 1$ and $c>0$, starting from a random initializer $(\bar \zeta,\bar \xi)\in Z\times \Xi$, \textsc{Oracle} can return a solution $(\hat\zeta,\hat\xi)$ satisfying
    \begin{align*}
        \EE&[\|\hat\zeta-\zeta^*\|^2+\|\hat\xi-\xi^*\|^2]\\
        \leq& \frac{\beta}{2} \EE[\|\bar\zeta-\zeta^*\|^2+\|\bar\xi-\xi^*\|^2]+\rc. \numberthis\label{eq:oracle_condition}
    \end{align*}
\end{condition}

\begin{algorithm}[h!]
    \SetAlgoLined
    \caption{Stochastic Momentum Policy Improvement with Oracle}\label{alg:SPIM_Oracle}
    \textbf{Input}: Total number of iteration $T$; Learning rate $\et,\ez,\ex$; Dataset distribution $d^D$; \textsc{Oracle} parameter $\beta$. \\
    Set $Z=Z_0$ and $\Xi=\Xi_0$ with $Z_0$ and $\Xi_0$ defined in Property \ref{prop:radius}.\\ 
    Initialize $\theta_0, \zeta_{-1}, \xi_{-1}.$\\
    $\zeta_{0},\xi_{0}\gets \text{Oracle.init}(T_1, \ez,\ex, \theta_{0}, \zeta_{-1},\xi_{-1}, d^D).$\\
    Sample $B_0\sim d^D$ with batch size $|B_0|$ and estimate batch gradient $g^0_\theta=\nt \cL^{B_0}(\theta_{0},\zeta_0,\xi_0).$\\
    \For{$t=0,1,2,...T-1$}{
        $\theta_{t+1} \gets \theta_{t} + \et \gt^t;$\\
        $\zeta_{t+1},\xi_{t+1}\gets  \text{Oracle}(\beta, \theta_{t+1}, \zeta_t,\xi_t, d^D);$\\
        Sample $B\sim d^D$;\\
        $\gt^{t+1}\gets (1-\alpha)\gt^t+\alpha\nt\cL^B(\theta_{t+1},\zeta_{t+1},\xi_{t+1})$ // update $g_\theta$ with batch gradient\\
    }
    \textbf{Output}: Sample $\hat\theta \sim {\rm Unif}\{\theta_0, \theta_1,...,\theta_T\}$ and output $\pi_{\hat\theta}$.
\end{algorithm}

\subsection{Analysis of O-SPIM}
The key insight of Condition \ref{def:oracle_alg} is the following lemma, in which we make a connection between the actor step and the critic step. 
As a result, in addition to the smoothness of $\cL^D$, we can control the shift between $\nabla \cL^D(\theta_t, \zeta_t, \xi_t)$ and $\nabla \cL^D(\theta_{t+1}, \zeta_{t+1}, \xi_{t+1})$ only with the shift of $\theta$ (i.e. $\et g_\theta^\tau$) and hyper-parameters $\beta$ and $c$,
which paves the way to comparing $\nt\cL^D(\theta,\zeta_t,\xi_t)$ with $\nt J(\pi_\theta)$ and finally establishing the convergence guarantee.
\begin{restatable}{lemma}{RelateShift}[Relate the shift of $\zeta_{t}$ and $\xi_t$ with $\theta_t$]\label{lem:recursive_split}
    Denote $(\theta_t,\zeta_t,\xi_t)$ as the parameter value at the beginning at the step $t$ in Algorithm \ref{alg:SPIM_Oracle}, and denote $(\zeta^*_t,\xi^*_t)=\arg\max_{\zeta\in Z}\min_{\xi\in\Xi}\cL^D(\theta_t, \zeta,\xi)$ as the only saddle point given $\theta_t$. 
    Under Assumptions in Section \ref{sec:assumptions}, we have:
    \begin{align*}
        &\EE[\|\zeta_{t+1}-\zeta_t\|^2+\|\xi_{t+1}-\xi_t\|^2]\\
        \leq& 6\beta^{t+1}d^2+6\et^2\Czx\sum_{\tau=0}^{t}\beta^{t-\tau}\expect{\|\gt^\tau\|^2}+\frac{6\rc}{1-\beta}.
    \end{align*}
    where $\dia:=\max\{C_\cW, C_\cQ\}$ is the maximum of diameters of $Z$ and $\Xi$, $\Czx$ is a short note of $\kappa^2_\mu(\kappa_\xi+1)^2+\kappa^2_\xi(\kappa_\mu+1)^2$, $g_\theta^\tau:=\frac{1}{\et}(\theta^{\tau+1}-\theta^\tau)$, and $\et$ is the step size of $\theta$.
\end{restatable}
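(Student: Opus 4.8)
The plan is to insert the per-step optimal saddle points and split the one-step critic movement into an Oracle \emph{optimization error} and a saddle-point \emph{drift} induced by the change in $\theta$. Writing $E_t := \|\zeta_t - \zeta^*_t\|^2 + \|\xi_t - \xi^*_t\|^2$ for the optimization error, the triangle inequalities $\|\zeta_{t+1}-\zeta_t\| \le \|\zeta_{t+1}-\zeta^*_{t+1}\| + \|\zeta^*_{t+1}-\zeta^*_t\| + \|\zeta^*_t-\zeta_t\|$ and its $\xi$-analogue, together with $(a+b+c)^2\le 3(a^2+b^2+c^2)$, give
\begin{align*}
\EE[\|\zeta_{t+1}-\zeta_t\|^2 + \|\xi_{t+1}-\xi_t\|^2] \le 3\,\EE[E_{t+1}] + 3\,\EE[E_t] + 3\,\EE[\|\zeta^*_{t+1}-\zeta^*_t\|^2 + \|\xi^*_{t+1}-\xi^*_t\|^2].
\end{align*}
It then remains to bound the drift term and to set up a recursion for $E_t$.

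For the drift I would show that the saddle-point map $\theta\mapsto(\zeta^*_\theta,\xi^*_\theta)$ is Lipschitz, so that
\begin{align*}
\|\zeta^*_{t+1}-\zeta^*_t\|^2 + \|\xi^*_{t+1}-\xi^*_t\|^2 \le \Czx\,\et^2\,\|\gt^t\|^2,
\end{align*}
using $\theta_{t+1}-\theta_t = \et\gt^t$. This is the heart of the argument and the step I expect to be the main obstacle. Since $(\zeta^*_\theta,\xi^*_\theta)$ is characterized by the stationarity conditions $\nabla_\zeta\cL^D = 0$ and $\nabla_\xi\cL^D = 0$, I would differentiate these implicitly in $\theta$ and invert the Hessian block $\begin{pmatrix}\nabla^2_{\zeta\zeta}\cL^D & \nabla^2_{\zeta\xi}\cL^D \\ \nabla^2_{\xi\zeta}\cL^D & \nabla^2_{\xi\xi}\cL^D\end{pmatrix}$, whose invertibility and conditioning are guaranteed by the $\mu_\zeta$-strong concavity in $\zeta$, the $\mu_\xi$-strong convexity in $\xi$, and the $L$-smoothness of Property \ref{prop:scscs}. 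Bounding the inverse against the mixed derivatives $\nabla^2_{\zeta\theta}\cL^D$ and $\nabla^2_{\xi\theta}\cL^D$ is what produces precisely the condition-number combination $\Czx = \kappa^2_\mu(\kappa_\xi+1)^2 + \kappa^2_\xi(\kappa_\mu+1)^2$; keeping track of the two Hessian blocks separately is the delicate bookkeeping here.

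To control $E_t$, I would apply Condition \ref{def:oracle_alg} to the call $\text{Oracle}(\beta,\theta_{t+1},\zeta_t,\xi_t,d^D)$, whose initializer is $(\zeta_t,\xi_t)$ and whose target saddle point is $(\zeta^*_{t+1},\xi^*_{t+1})$, obtaining $\EE[E_{t+1}] \le \tfrac{\beta}{2}\EE[\|\zeta_t-\zeta^*_{t+1}\|^2 + \|\xi_t-\xi^*_{t+1}\|^2] + \rc$. Splitting $\|\zeta_t-\zeta^*_{t+1}\|\le\|\zeta_t-\zeta^*_t\|+\|\zeta^*_t-\zeta^*_{t+1}\|$ (and likewise for $\xi$) and reusing the drift bound collapses this into the scalar recursion
\begin{align*}
\EE[E_{t+1}] \le \beta\,\EE[E_t] + \beta\,\Czx\,\et^2\,\EE[\|\gt^t\|^2] + \rc.
\end{align*}
Unrolling from $t=0$ with the geometric sum $\sum_{\tau=0}^{t}\beta^{t-\tau}\le\tfrac{1}{1-\beta}$ and the crude initial bound $E_0\le\dia^2$ (both critic iterates and their saddle targets lie in sets of diameter at most $\dia$) bounds $\EE[E_{t+1}]$ by $\beta^{t+1}\dia^2 + \Czx\,\et^2\sum_{\tau=0}^{t}\beta^{t-\tau}\EE[\|\gt^\tau\|^2] + \tfrac{\rc}{1-\beta}$, and an index shift gives the analogous bound for $\EE[E_t]$. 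Substituting both back into the three-term decomposition and collecting the geometric-decay, gradient-history, and constant contributions with their numerical prefactors (the factor $3$ from the split multiplying the factors $2$ incurred inside the recursion) yields the claimed inequality with leading constant $6$.
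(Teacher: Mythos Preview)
Your overall architecture matches the paper exactly: the same three-term triangle split
\[
\EE[\|\zeta_{t+1}-\zeta_t\|^2+\|\xi_{t+1}-\xi_t\|^2]\le 3\,\EE[E_{t+1}]+3\,\EE[E_t]+3\,\EE[\|\zeta^*_{t+1}-\zeta^*_t\|^2+\|\xi^*_{t+1}-\xi^*_t\|^2],
\]
the same recursion $\EE[E_{t+1}]\le\beta\,\EE[E_t]+\beta\,\Czx\,\et^2\EE[\|\gt^t\|^2]+\rc$ obtained by combining Condition~\ref{def:oracle_alg} with the drift bound, and the same unrolling. The numerical bookkeeping you sketch (the $3$ from the split meeting the accumulated factor $2$ inside the recursion to give the leading $6$) is exactly how the paper closes it.

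The one place you genuinely diverge is in establishing the saddle-point Lipschitz estimate $\|\zeta^*_{t+1}-\zeta^*_t\|^2+\|\xi^*_{t+1}-\xi^*_t\|^2\le\Czx\,\et^2\|\gt^t\|^2$. You propose implicit differentiation of the stationarity conditions and inversion of the joint Hessian block. The paper instead proves this as a separate short lemma (Lemma~\ref{lem:sd_shift}) by a purely first-order argument: from smoothness one has $\|\nabla_\zeta\cL^D(\theta_2,\zeta_1^*,\xi_1^*)\|\le L\|\theta_1-\theta_2\|$ and similarly for $\nabla_\xi$; then, using that $\Phi_{\theta_2}(\cdot)=\min_\xi\cL^D(\theta_2,\cdot,\xi)$ is $\mu_\zeta$-strongly concave (Lemma~\ref{lem:optimum_function}), one chains
\[
\|\zeta_1^*-\zeta_2^*\|\le\tfrac{1}{\mu_\zeta}\|\nabla_\zeta\cL^D(\theta_2,\zeta_1^*,\phi_{\theta_2}(\zeta_1^*))\|\le\tfrac{L}{\mu_\zeta}\|\phi_{\theta_2}(\zeta_1^*)-\xi_1^*\|+\tfrac{L}{\mu_\zeta}\|\theta_1-\theta_2\|,
\]
and bounds $\|\phi_{\theta_2}(\zeta_1^*)-\xi_1^*\|\le\tfrac{L}{\mu_\xi}\|\theta_1-\theta_2\|$ by strong convexity in $\xi$. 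This nested reduction produces $\kappa_\zeta(\kappa_\xi+1)$ and $\kappa_\xi(\kappa_\zeta+1)$ on the nose, and squaring gives $\Czx$ immediately. Your implicit-function route is valid (the saddle points are interior by Property~\ref{prop:radius}, so the KKT conditions are equalities), but inverting the indefinite saddle Hessian and tracking the Schur complements is both heavier than what the paper does and unlikely to land on \emph{exactly} the same constant without reverse-engineering; you would get the right order in the condition numbers, but recovering the precise form $\kappa_\zeta^2(\kappa_\xi+1)^2+\kappa_\xi^2(\kappa_\zeta+1)^2$ from the block inverse is not as automatic as you suggest.

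One small remark on the initialization: the paper bounds $E_0$ not by the crude diameter estimate but by a further application of the Oracle condition to the \texttt{Oracle.init} call, giving $E_0\le\beta d^2+\rc$. Your cruder $E_0\le d^2$ happens to give the same $\beta^{t+1}d^2$ after unrolling (since the recursion already contributes one extra factor of $\beta$), so this does not affect the final statement.
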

In the actor step, we introduce another hyper-parameter $\alpha$ and adopt a momentum-based update rule, aiming at a better trade-off between the variance of the gradient estimation in the current step and the bias of using accumulative gradient in previous steps. As we will show in the proof of our main theorem (Theorem \ref{thm:alg2_converage_rate} in Appendix \ref{appx:proofs_alg2}), by choosing $\alpha$ appropriately, although the trade-off cannot improve the dependence of $\epsilon$ in the convergence rate, it can indeed reduce the upper bound of $\EE[\|\nt J(\hat\theta)\|]$ comparing with the case without momentum (i.e. $\alpha=1$)

Now, we are ready to state the main theorem of our second strategy. We defer the formal version including the hyper-parameter choices and its proof to Theorem \ref{thm:alg2_converage_rate} in Appendix \ref{appx:proofs_alg2}.
\begin{theorem}[Informal]
    Under Assumptions in Section \ref{sec:assumptions}, given arbitrary $\epsilon$, with appropriate hyper-parameter choices, by using either Algorithm \ref{alg:PLSO} or Algorithm \ref{alg:SVREB} as Oracle, Algorithm \ref{alg:SPIM_Oracle} will return us a policy $\pi_{\hat\theta}$ with total complexity $O(\epsilon^{-4})$, satisfying
    \begin{align*}
        \EE[\|\nt J(\pi_{\hat\theta})\|]\leq \epsilon+\epsilon_{reg} + \epsilon_{data}+\epsilon_{func}.
    \end{align*}
\end{theorem}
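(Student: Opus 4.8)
The plan is to establish the informal convergence theorem for O-SPIM (Algorithm \ref{alg:SPIM_Oracle}) by decomposing the error $\EE[\|\nt J(\pi_{\hat\theta})\|]$ into an \emph{optimization term} (convergence of $\nt \cL^D(\theta_t,\zeta_t,\xi_t)$ toward a stationary point of the surrogate loss) and a \emph{bias term} (already controlled by Theorem \ref{thm:biasedness}). Since $\hat\theta$ is drawn uniformly from $\{\theta_0,\dots,\theta_T\}$, the first task reduces to bounding the \emph{average} iterate gradient $\frac{1}{T+1}\sum_{t=0}^T \EE[\|\nt \max_{w}\min_{Q}\cL^D(\pi_{\theta_t},w,Q)\|]$ by $O(\epsilon)$, and then invoking the triangle inequality together with Theorem \ref{thm:biasedness} to add $\epsilon_{reg}+\epsilon_{data}+\epsilon_{func}$.

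First I would set up the standard descent-lemma machinery. Writing $\Phi(\theta):=\max_{\zeta}\min_{\xi}\cL^D(\theta,\zeta,\xi)$, by the envelope theorem (valid because the inner problem is strongly-convex-strongly-concave with a unique saddle point by Property \ref{prop:scscs}) we have $\nabla\Phi(\theta_t)=\nt\cL^D(\theta_t,\zeta^*_t,\xi^*_t)$, and $\Phi$ is $L_\Phi$-smooth. Using the actor update $\theta_{t+1}=\theta_t+\et g_\theta^t$ with the smoothness of $\Phi$, I would derive a one-step ascent inequality of the form
\begin{align*}
    \Phi(\theta_{t+1})\geq \Phi(\theta_t)+\et\langle \nabla\Phi(\theta_t),g_\theta^t\rangle-\tfrac{L_\Phi\et^2}{2}\|g_\theta^t\|^2.
\end{align*}
The crucial substitution is $g_\theta^t\approx\nt\cL^D(\theta_t,\zeta_t,\xi_t)\approx\nabla\Phi(\theta_t)$, so the inner-product term behaves like $\|\nabla\Phi(\theta_t)\|^2$ up to error terms controlled by (i) the momentum error $\EE\|g_\theta^t-\nt\cL^D(\theta_t,\zeta_t,\xi_t)\|^2$, and (ii) the critic error $\EE[\|\zeta_t-\zeta^*_t\|^2+\|\xi_t-\xi^*_t\|^2]$ via $L$-smoothness of $\cL^D$. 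Summing over $t$ and telescoping the bounded quantity $\Phi$ converts $\frac{1}{T}\sum_t\EE\|\nabla\Phi(\theta_t)\|^2$ into a sum of the two error sources plus $O(1/(\et T))$.

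The heart of the argument is controlling those two error sources jointly. For the critic error I would invoke Lemma \ref{lem:recursive_split}, which bounds $\EE[\|\zeta_{t+1}-\zeta_t\|^2+\|\xi_{t+1}-\xi_t\|^2]$ — and hence, combined with the Oracle contraction in Condition \ref{def:oracle_alg}, the distance-to-saddle $\EE[\|\zeta_t-\zeta^*_t\|^2+\|\xi_t-\xi^*_t\|^2]$ — in terms of $\beta^{t}d^2$, an accumulated $\et^2\sum_\tau\beta^{t-\tau}\EE\|g_\theta^\tau\|^2$ term, and the residual $c/(1-\beta)$. For the momentum error, the STORM-style recursion $g_\theta^{t+1}=(1-\alpha)g_\theta^t+\alpha\nt\cL^B(\theta_{t+1},\zeta_{t+1},\xi_{t+1})$ yields a contraction $\EE\|g_\theta^{t+1}-\nabla\Phi(\theta_{t+1})\|^2\leq(1-\alpha)\EE\|g_\theta^t-\nabla\Phi(\theta_t)\|^2+O(\alpha^2\sigma^2)+O(\et^2/\alpha)$, where the last term comes from the per-step parameter drift $\|\theta_{t+1}-\theta_t\|^2+\|\zeta_{t+1}-\zeta_t\|^2+\|\xi_{t+1}-\xi_t\|^2$ — and here again Lemma \ref{lem:recursive_split} feeds the critic drift back in. The two recursions are therefore \emph{coupled}: the critic bound depends on $\sum\EE\|g_\theta^\tau\|^2$, while the momentum bound depends on the critic drift.

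\textbf{The hard part} will be decoupling these two intertwined recursions without losing the optimal $\epsilon$-dependence. I expect to handle it by summing all three recursions (the $\Phi$-ascent, the critic-distance bound, and the momentum-error bound) into a single potential function, choosing $\beta$ bounded away from $1$ so that $\sum_t\sum_\tau\beta^{t-\tau}\EE\|g_\theta^\tau\|^2\leq\frac{1}{1-\beta}\sum_\tau\EE\|g_\theta^\tau\|^2$, and then absorbing $\sum_t\EE\|g_\theta^t\|^2\leq 2\sum_t\EE\|\nabla\Phi(\theta_t)\|^2+2\sum_t(\text{errors})$ so that a small enough step size $\et$ lets the $\et^2$-weighted gradient terms be reabsorbed into the leading $\sum_t\EE\|\nabla\Phi(\theta_t)\|^2$. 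With the schedule $\et=\Theta(\epsilon^2)$, $\alpha=\Theta(\epsilon^2)$, and the Oracle residual $c=\Theta(\epsilon^2)$ driving the per-call Oracle cost, balancing $1/(\et T)=O(\epsilon^2)$ forces $T=\Theta(\epsilon^{-4})$; verifying that the total stochastic-gradient cost (including the cost of each Oracle call to meet Condition \ref{def:oracle_alg} with residual $c$) remains $O(\epsilon^{-4})$ is the final bookkeeping step. Combining the resulting $\frac{1}{T}\sum_t\EE\|\nabla\Phi(\theta_t)\|\leq\epsilon$ with Theorem \ref{thm:biasedness} via the triangle inequality then yields the claimed bound.
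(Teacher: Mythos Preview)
Your high-level architecture is exactly the paper's: smoothness of $\Phi(\theta):=\max_\zeta\min_\xi\cL^D(\theta,\zeta,\xi)$ gives a one-step ascent inequality, Lemma~\ref{lem:recursive_split} controls the critic drift/error, a momentum recursion controls $\EE\|g_\theta^t-\nabla\Phi(\theta_t)\|^2$, and a telescoping sum plus absorption of the $\et^2$-weighted $\sum_t\EE\|g_\theta^t\|^2$ terms closes the argument. That part is fine.

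The genuine gap is in your parameter schedule and the ``final bookkeeping step'' you defer. With $\et=\Theta(\epsilon^2)$, $\alpha=\Theta(\epsilon^2)$ and $T=\Theta(\epsilon^{-4})$, the optimization error does come out $O(\epsilon^2)$, but the \emph{total complexity does not}: to satisfy Condition~\ref{def:oracle_alg} with residual $c=\Theta(\epsilon^2)$, both concrete Oracles (Algorithms~\ref{alg:PLSO} and~\ref{alg:SVREB}) require $\Theta(\epsilon^{-2})$ stochastic gradients per call, and you call the Oracle once per outer iteration. That yields $T\cdot\Theta(\epsilon^{-2})=\Theta(\epsilon^{-6})$, not $O(\epsilon^{-4})$. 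This is not a bookkeeping detail---there is no way to make the Oracle cheaper while keeping $c=\Theta(\epsilon^2)$, since the $c$ term in Condition~\ref{def:oracle_alg} is the noise floor and scales as (inverse sample size).

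The paper avoids this by taking the \emph{large-batch / few-iterations} regime rather than the \emph{small-step / many-iterations} regime you propose: it keeps $\et$ and $\alpha$ as absolute constants (in fact $\alpha=1/2$), controls the actor variance directly with batch size $|B|=\Theta(\epsilon^{-2})$, and runs only $T=\Theta(\epsilon^{-2})$ outer iterations. Then the Oracle is invoked $\Theta(\epsilon^{-2})$ times at $\Theta(\epsilon^{-2})$ cost each, and the actor uses $|B|\cdot T=\Theta(\epsilon^{-4})$ gradients, so everything lands at $O(\epsilon^{-4})$. In short, the decoupling of the two recursions works as you describe, but you must pay for variance reduction with batch size rather than with step size, or the Oracle cost blows up the total.
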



\subsection{Concrete Examples of \textsc{Oracle}}
We provide two concrete examples of \textsc{Oracle}, and defer the algorithm details and the discussion about related works to Appendix \ref{appx:oracle_examples}.

\subsubsection{The Least-Square Oracle}
In the linear setting, the saddle-point $(\zeta^*, \xi^*)$ has a closed form, which can be regarded as the regularized LSTD-Q solution \citep{Kolter2009RegularizationAF} and can be represented by $\K_w, \K_Q, \M_\pi$ and so on. Therefore, a natural solution is to estimate these matrices from (a subsample of) the dataset and plug them into the closed-form solution. We describe this idea in Alg.~\ref{alg:PLSO} and show that it satisfies our definition of \textsc{Oracle} in Appendix \ref{appx:PLSO}.

In Alg. \ref{alg:PLSO}, we use $\uR$ to denote $\bPhi_w\trans \bLambda^D R$ and use $\unu^\pi$ to denote $\bPhi_Q\trans\nu_D^\pi$. Besides, we use $\hat{(\cdot)}$ to denote the empirical version of the matrices/vectors estimated via samples. Besides, $P_\Omega(x)$ means projected $x$ into set $\Omega$.
\begingroup
\begin{algorithm}
    \SetAlgoLined
    \caption{Projected Least-Square Oracle}\label{alg:PLSO}
    \textbf{Input}: Distribution $d^D$; Batch Size $\batch_{all}$;\\
    $\zeta \gets \Big(\lambda_w\lambda_Q  \hK_w +  \hM_\pi \hK_Q^{-1} \hM_\pi\trans\Big)^{-1}\Big(-(1-\gamma)\hM_\pi \hK_Q^{-1} \hunu^\pi+\lambda_Q\huR\Big) .$\\
    $\xi\gets  \Big(\lambda_w\lambda_Q  \hK_Q +  \hM_\pi\trans \hK_w^{-1}\hM_\pi \Big)^{-1}\Big((1-\gamma)\lambda_w \hunu^\pi +\hM\trans_\pi \hK^{-1}_w \huR\Big).$\\
    \textbf{Output}: $P_Z(\zeta)$, $P_\Xi(\xi)$
\end{algorithm}
\endgroup

\subsubsection{Stochastic Variance-Reduced Extragradient with Batch Data}
Similar to the LSTD, the per-step computational complexity of the first solver is quadratic in the dimension of the feature, which can be expensive for high dimensional features. Therefore, in Alg.~\ref{alg:SVREB}, we present a first-order algorithm inspired by Stochastic Variance Reduced Extra-gradient \citep{chavdarova2019reducing}, which reduces the per-step complexity to $O(d)$ and also satisfies the \textsc{Oracle} condition. Besides, Alg.~\ref{alg:SVREB} can also handle general strongly-convex-strongly-concave problem beyond the linear setting.

\begin{algorithm}
\SetAlgoLined
\caption{Stochastic Variance-Reduced Extragradient with Mini Batch Data (SVREB)}\label{alg:SVREB}
\textbf{Input}: Stopping time $K$; learning rates $\ez,\ex$; Initial wegihts $\zeta_0,\xi_0$; Distribution $d^D$; Batch size $|\batch|.$\\
Sample mini batch $\batch_{\zeta},\batch_{\xi}\sim d^D$ with batch size $|\batch|$ \\    
${g}_0^\zeta \gets \nz \cL^{\batch_\zeta} (\theta,\zeta_0, \xi_0),\quad {g}_0^\xi \gets  \nx \cL^{\batch_\xi}(\theta, \zeta_0, \xi_0).$\\
$\zeta_1\gets \cP_\zeta(\zeta_0 + \ez g_0^\zeta).$\\
$\xi_1\gets \cP_\xi(\xi_0 - \ex g_0^\xi).$\\
$m_1^\zeta, m_1^\xi\gets \nz \cL^{\batch_\zeta} (\theta,\zeta_0, \xi_0),  \nx \cL^{\batch_\xi}(\theta, \zeta_0, \xi_0).$\\
\For{$k=1,2,...K+1$}{
    Sample mini batch $\batch_\zeta,\batch_\xi\sim d^D$ with batch size $|\batch|$. \\
    ${g}_{k}^{\zeta}=m_k^\zeta + d^{\batch_\zeta}_\zeta(\zeta_k,\xi_k,\zeta_{k-1},\xi_{k-1}).$ \label{step:compute_cv_zeta}\\
    ${g}_{k}^{\xi}=m_k^\xi + d^{\batch_\xi}_\xi(\zeta_k,\xi_k,\zeta_{k-1},\xi_{k-1}).$ \label{step:compute_cv_xi}\\
    $\zeta_{k+1/2}=\cP_\zeta(\zeta_k+\ez {g}_{k}^{\zeta}).$\\
    $\xi_{k+1/2}=\cP_\xi(\xi_k-\ex {g}_{k}^{\xi}).$\\
    Sample mini batch $\batch'_\zeta,\batch'_\xi\sim d^D$ with batch size $|\batch|$. \\
    ${g}_{k+1/2}^\zeta=m_k^\zeta + d^{\batch'_\zeta}_\zeta(\zeta_{k+1/2},\xi_{k+1/2},\zeta_{k-1},\xi_{k-1}).$\\
    ${g}_{k+1/2}^\xi=m_k^\xi + d^{\batch'_\xi}_\xi(\zeta_{k+1/2},\xi_{k+1/2},\zeta_{k-1},\xi_{k-1}).$\\
    $\zeta_{k+1}=\cP_\zeta(\zeta_{k}+\ez {g}_{k+1/2}^\zeta).$\\
    $\xi_{k+1}=\cP_\xi(\xi_{k}-\ex {g}_{k+1/2}^\xi).$\\
    // The following has been computed in step \ref{step:compute_cv_zeta} and \ref{step:compute_cv_xi}\\
    $m_{k+1}^\zeta, m_{k+1}^\xi\gets \nz \cL^{\batch_\zeta}(\theta, \zeta_k, \xi_k), \nx \cL^{\batch_\xi}(\theta, \zeta_k, \xi_k)$ \\
    $k\gets k+1.$\\
}
\textbf{Output}: $\zeta_K,\xi_K$
\end{algorithm}
In the algorithm, $\cP_\zeta$ and $\cP_\xi$ are projection operators; $\nabla \cL^{\batch}(\theta,\zeta,\xi)$ denotes the average gradient over samples from mini batch data $\batch$. We also define:
\begin{align*}
    d^\batch_\zeta(\zeta_1,\xi_1,\zeta_2,\xi_2)=&\nz \cL^{\batch}(\theta,\zeta_1,\xi_1)-\nz \cL^{\batch}(\theta,\zeta_2,\xi_2)\\
    d^\batch_\xi(\zeta_1,\xi_1,\zeta_2,\xi_2)=&\nx \cL^{\batch}(\theta,\zeta_1,\xi_1)-\nx \cL^{\batch}(\theta,\zeta_2,\xi_2).
\end{align*}
It is clear that
\begin{align*}
    &\EE[g^\zeta_{k}]=\nz \cL^D(\theta, \zeta_k,\xi_k),\\
    &\EE[g^\zeta_{k+1/2}]=\nz \cL^D(\theta, \zeta_{k+1/2},\xi_{k+1/2}),
\end{align*}
where the expectation only concerns the randomness of sample when computing $g$. The above relationship also holds if we consider gradient w.r.t. $\xi$.

For this Algorithm \ref{alg:SVREB}, we have the following theorem:
\begin{restatable}{theorem}{ConvergeRateAlgFour}\label{thm:oracle_thm}
Under Assumptions in Section \ref{sec:assumptions}, in Algorithm \ref{alg:SVREB}, if step sizes satisfy
\begin{align*}
    \ez\leq \frac{1}{50 \max\{\bar{L}_\zeta, \mu_\zeta\}},~~~~~~~\ex\leq \frac{1}{50 \max\{\bar{L}_\xi, \mu_\xi\}}
\end{align*}
after $K$ iterations, the algorithm will output $(\zeta_K, \xi_K)$:
\begin{align*}
    &\EE[\|\zeta_K-\zeta^*\|^2+\|\xi_K-\xi^*\|^2]\\
    \leq& \frac{201}{100}\Big(1-\frac{\mu\eta}{4}\Big)^{K}\EE[\|\zeta_0-\zeta^*\|^2+\|\xi_0-\xi^*\|^2]\\
    &+\frac{8\sigma^2}{\min\{\frac{\mz\ez}{4},\frac{\mx\ex}{4}\}|\batch|}(\frac{\ez}{\mz}+\frac{\ex}{\mx})
\end{align*}
where $(\zeta^*,\xi^*)$ is the saddle point of $\cL^D(\theta,\zeta,\xi)$ given input $\theta$. 
\end{restatable}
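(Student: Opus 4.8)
The plan is to analyze Algorithm~\ref{alg:SVREB} as a stochastic, variance-reduced extragradient method for the saddle-point operator $F(\zeta,\xi):=(-\nz\cL^D(\theta,\zeta,\xi),\,\nx\cL^D(\theta,\zeta,\xi))$, which by Property~\ref{prop:scscs} is block-strongly-monotone (with moduli $\mz$ in the $\zeta$-block and $\mx$ in the $\xi$-block, the skew-symmetric bilinear coupling cancelling in the monotonicity inner product) and $L$-Lipschitz. Writing $z_k:=(\zeta_k,\xi_k)$ and $z^*:=(\zeta^*,\xi^*)$, I would track a potential
\[
\Phi_k := \EE[\|\zeta_k-\zeta^*\|^2+\|\xi_k-\xi^*\|^2] + \rho\,\EE[\|\zeta_k-\zeta_{k-1}\|^2+\|\xi_k-\xi_{k-1}\|^2],
\]
for a small constant $\rho$ fixed later, and prove a one-step recursion $\Phi_{k+1}\le(1-c_0\min\{\mz\ez,\mx\ex\})\Phi_k+(\text{per-step noise})$. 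Unrolling this geometric recursion and converting $\Phi_K,\Phi_0$ back to the plain squared distances yields the claimed bound: the factor $\tfrac{201}{100}$ arises from the potential-to-distance comparison (choosing $\rho$ small), and the floor $\tfrac{8\sigma^2}{\min\{\mz\ez/4,\,\mx\ex/4\}|\batch|}(\tfrac{\ez}{\mz}+\tfrac{\ex}{\mx})$ is the per-step noise divided by the contraction rate, with $c_0=\tfrac14$.

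First I would establish the two estimator properties that drive everything. Unbiasedness ($\EE[g_k^\zeta]=\nz\cL^D(\theta,\zeta_k,\xi_k)$, and likewise at the half-step) follows from mini-batch independence together with the fact that $m_k$ is itself an unbiased estimate of the reference gradient $\nz\cL^D(\theta,\zeta_{k-1},\xi_{k-1})$. For the variance I would split the error $g_k^\zeta-\nz\cL^D(\theta,\zeta_k,\xi_k)$ into (i) the reference sampling error $m_k^\zeta-\nz\cL^D(\theta,\zeta_{k-1},\xi_{k-1})$, whose second moment is $O(\sigma^2/|\batch|)$ and is the source of the irreducible floor, and (ii) the centered control-variate difference $\nz\cL^{\batch_\zeta}(\theta,\zeta_k,\xi_k)-\nz\cL^{\batch_\zeta}(\theta,\zeta_{k-1},\xi_{k-1})$, bounded via the per-sample Lipschitz constant by $\tfrac{\bL^2}{|\batch|}\EE[\|\zeta_k-\zeta_{k-1}\|^2+\|\xi_k-\xi_{k-1}\|^2]$, and analogously for the $\xi$-estimator and for the half-step iterates $z_{k+1/2}$. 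This is where variance reduction pays off: the variance scales with the consecutive-iterate gap rather than being a fixed constant.

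With these in hand I would derive the extragradient descent inequality. Using non-expansiveness of the projections $\cP_\zeta,\cP_\xi$ onto $Z_0,\Xi_0$, the three-point expansions of $\|\zeta_{k+1}-\zeta^*\|^2$ and $\|\xi_{k+1}-\xi^*\|^2$, and block-strong-monotonicity evaluated at the half-step point, I expect a bound of the shape
\[
\EE[\|\zeta_{k+1}-\zeta^*\|^2+\|\xi_{k+1}-\xi^*\|^2] \le (1-c_0\min\{\mz\ez,\mx\ex\})\,\EE[\|\zeta_k-\zeta^*\|^2+\|\xi_k-\xi^*\|^2] - c_1\,\EE[\|z_{k+1/2}-z_k\|^2] + V_{k+1/2},
\]
where $V_{k+1/2}$ collects the variances of the half-step estimators. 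The step-size restrictions $\ez\le\tfrac{1}{50\max\{\bL_\zeta,\mz\}}$ and $\ex\le\tfrac{1}{50\max\{\bL_\xi,\mx\}}$ make the Lipschitz/monotonicity cross terms harmless and keep $c_1>0$. Substituting the variance bounds turns $V_{k+1/2}$ into a $\sigma^2/|\batch|$ floor plus multiples of $\|z_{k+1/2}-z_{k-1}\|^2$ and $\|z_k-z_{k-1}\|^2$; choosing $\rho$ so the negative term $-c_1\|z_{k+1/2}-z_k\|^2$ dominates these residuals closes the recursion for $\Phi_k$.

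The main obstacle, requiring the most care, is the bookkeeping that couples the variance-reduction term $\|z_k-z_{k-1}\|^2$ with the negative extragradient term $-\|z_{k+1/2}-z_k\|^2$: because the reference point lags one step, the variance at iteration $k$ is tied to the \emph{previous} move, so $\Phi_k$ must carry the extra $\rho\|z_k-z_{k-1}\|^2$ term, and $\rho$ must live in a narrow window---large enough for the negative extragradient term to absorb the variance residual, yet small enough to keep $\Phi_k$ within the constant factor $\tfrac{201}{100}$ of the plain squared distance. Getting the two per-coordinate step sizes $\ez,\ex$ to interact correctly with the single contraction rate $\min\{\tfrac{\mz\ez}{4},\tfrac{\mx\ex}{4}\}$, and verifying that the half-step variance (evaluated at $z_{k+1/2}$, not $z_k$) is likewise controlled by the same consecutive-iterate gaps, are the remaining delicate points; everything else reduces to Young's inequality and the constants assembled in Property~\ref{prop:scscs}.
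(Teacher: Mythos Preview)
Your plan is a plausible variance-reduced extragradient analysis, but it differs from the paper's proof in several structural ways.

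\textbf{Potential function.} The paper does not track the consecutive-iterate gap. Writing $\omega=(\zeta,\xi)$ and $F_\batch$ for the mini-batch operator, the paper's Lyapunov function is
\[
P_{t+1}=\EE[\|\omega_{t+1}-\omega^*\|^2]+\tau\,\EE[\|F_\batch(\omega^*)-F_\batch(\omega_t)\|^2],\qquad \tau=15\eta^2,
\]
i.e.\ the auxiliary term is a \emph{gradient-difference anchored at the saddle point}, not $\rho\|z_k-z_{k-1}\|^2$.

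\textbf{Variance control and the key lemma.} The paper starts from the standard projected-extragradient inequality (Lemma~\ref{lem:lem3_reducing}), whose error term is $\eta^2\|g_t-g_{t+1/2}\|^2$, not the individual estimator variance $V_{k+1/2}$ you write. It bounds this difference directly (Lemma~\ref{lem:bound_g}) by inserting $\pm F_\batch(\omega^*)$, producing terms $\|F_\batch(\omega_t)-F_\batch(\omega^*)\|^2$ and $\|F_\batch(\omega^*)-F_\batch(\omega_{t-1})\|^2$ that are \emph{not} divided by $|\batch|$. The crucial step---which you do not use---is the cocoercivity-type first inequality in Property~\ref{cond:smooth_SVRE}: it converts $\EE[\|F_\batch(\omega_{t+1/2})-F_\batch(\omega^*)\|^2]$ into a multiple of $(F(\omega^*)-F(\omega_{t+1/2}))^\top(\omega^*-\omega_{t+1/2})$, which merges with the main monotonicity term. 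Your route, by contrast, bounds the \emph{centered} control-variate variance by $\tfrac{\bar L^2}{|\batch|}\|z_k-z_{k-1}\|^2$ and carries the gap in the potential; this is a SARAH/SPIDER-style argument rather than the SVRE-style argument the paper follows.

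\textbf{The noise floor and the cross term.} In the paper, the $\sigma^2/|\batch|$ floor enters through the correlated cross term $-2\eta\,\EE[(m_t-F(\omega_{t-1}))^\top(\omega_{t+1/2}-\omega^*)]$, handled by Cauchy--Schwarz plus Young; this correlation (since $\omega_{t+1/2}$ depends on $g_t$ which depends on $m_t$) is the delicate point, not the bookkeeping you flag.

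\textbf{The constant $201/100$.} In the paper it does \emph{not} come from a small-$\rho$ potential comparison. It arises from a separate first-step bound $\EE[\|\omega_1-\omega^*\|^2]\le 2\EE[\|\omega_0-\omega^*\|^2]+6\eta^2\EE[\|F(\omega^*)-F(\omega_0)\|^2]$, combined with the $\tau\bar L^2\le 15/2500$ overhead in $P_1$, giving $2+21/2500\le 201/100$.

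\textbf{What each buys.} The paper's route is the direct SVRE adaptation and exploits the linear structure through the cocoercivity inequality; it yields the stated constants cleanly. Your route avoids cocoercivity (so might extend beyond the linear setting), but you would still have to reconcile the negative extragradient term $-c_1\|z_{k+1/2}-z_k\|^2$ with the \emph{next}-step potential piece $\rho\|z_{k+1}-z_k\|^2$, and to handle the correlated cross term above; neither is addressed in your sketch.
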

The theorem implies that Alg.~\ref{alg:SVREB} can be used as an oracle for arbitrary $\beta$ and $c$ as long as $K$ and $|N|$ are chosen appropriately.
We defer the proofs to Appx.~\ref{appx:proofs_alg4}.

\section{Conclusion}\label{sec:conclusion}
In this paper, we study two natural optimization strategies for density-ratio based off-policy policy optimization, establish their convergence rates, and characterize the quality of the output policies. In the future, there are several potentially interesting directions to study. (\textbf{i}) It would be interesting to investigate the possibility of improving the dependence on $\epsilon$ on the convergence rate of our second strategy. (\textbf{ii}) Some of our assumptions are still quite strong, such as Assump.~\ref{assump:feature_matrices} and \ref{assump:dataset}, where we assume that some condition holds for \textit{any} policy $\pi\in\Pi$. An open question is whether we can derive convergence rate when those conditions holds only for the optimal policy. (\textbf{iii}) In this paper, we consider the linear approximation for $\cW$ and $\cQ$ in order to ensure the strongly concavity/convexity property of the loss function w.r.t.~$\zeta$ and $\xi$ so that the analysis is more tractable. It is still an open problem whether we can show some convergence property if $\cW$ and $\cQ$ are more complicated function classes.

\newpage
\section*{Acknowledgements}
JH's research activities on this work at UIUC were completed by December 2021. NJ acknowledges funding support from ARL Cooperative Agreement W911NF-17-2-0196, NSF IIS-2112471, and Adobe Data Science Research Award.

\bibliographystyle{plainnat}
\bibliography{references}

\begin{thebibliography}{44}
\providecommand{\natexlab}[1]{#1}
\providecommand{\url}[1]{\texttt{#1}}
\expandafter\ifx\csname urlstyle\endcsname\relax
  \providecommand{\doi}[1]{doi: #1}\else
  \providecommand{\doi}{doi: \begingroup \urlstyle{rm}\Url}\fi

\bibitem[Agarwal et~al.(2019)Agarwal, Kakade, Lee, and
  Mahajan]{agarwal2019optimality}
Alekh Agarwal, Sham~M Kakade, Jason~D Lee, and Gaurav Mahajan.
\newblock Optimality and approximation with policy gradient methods in markov
  decision processes.
\newblock \emph{arXiv preprint arXiv:1908.00261}, 2019.

\bibitem[Arjevani et~al.(2019)Arjevani, Carmon, Duchi, Foster, Srebro, and
  Woodworth]{arjevani2019lower}
Yossi Arjevani, Yair Carmon, John~C Duchi, Dylan~J Foster, Nathan Srebro, and
  Blake Woodworth.
\newblock Lower bounds for non-convex stochastic optimization.
\newblock \emph{arXiv preprint arXiv:1912.02365}, 2019.

\bibitem[Chavdarova et~al.(2019)Chavdarova, Gidel, Fleuret, and
  Lacoste-Julien]{chavdarova2019reducing}
Tatjana Chavdarova, Gauthier Gidel, Fran{\c{c}}ois Fleuret, and Simon
  Lacoste-Julien.
\newblock Reducing noise in gan training with variance reduced extragradient.
\newblock In \emph{Advances in Neural Information Processing Systems}, pages
  393--403, 2019.

\bibitem[Chen and Jiang(2019)]{chen2019information}
Jinglin Chen and Nan Jiang.
\newblock Information-theoretic considerations in batch reinforcement learning.
\newblock In \emph{Proceedings of the 36th International Conference on Machine
  Learning}, pages 1042--1051, 2019.

\bibitem[Cutkosky and Orabona(2020)]{cutkosky2020momentumbased}
Ashok Cutkosky and Francesco Orabona.
\newblock Momentum-based variance reduction in non-convex sgd, 2020.

\bibitem[Degris et~al.(2012)Degris, White, and Sutton]{DegrisOPAC}
Thomas Degris, Martha White, and Richard~S. Sutton.
\newblock Off-policy actor-critic.
\newblock \emph{CoRR}, abs/1205.4839, 2012.
\newblock URL \url{http://arxiv.org/abs/1205.4839}.

\bibitem[Du et~al.(2017)Du, Chen, Li, Xiao, and Zhou]{pmlr-v70-du17a}
Simon~S. Du, Jianshu Chen, Lihong Li, Lin Xiao, and Dengyong Zhou.
\newblock Stochastic variance reduction methods for policy evaluation.
\newblock volume~70 of \emph{Proceedings of Machine Learning Research}, pages
  1049--1058, International Convention Centre, Sydney, Australia, 06--11 Aug
  2017. PMLR.
\newblock URL \url{http://proceedings.mlr.press/v70/du17a.html}.

\bibitem[Fu et~al.(2020)Fu, Yang, and Wang]{fu2020singletimescale}
Zuyue Fu, Zhuoran Yang, and Zhaoran Wang.
\newblock Single-timescale actor-critic provably finds globally optimal policy,
  2020.

\bibitem[Fujimoto et~al.(2018)Fujimoto, van Hoof, and
  Meger]{fujimoto2018addressing}
Scott Fujimoto, Herke van Hoof, and David Meger.
\newblock Addressing function approximation error in actor-critic methods,
  2018.

\bibitem[Haarnoja et~al.(2018)Haarnoja, Zhou, Abbeel, and
  Levine]{haarnoja2018soft}
Tuomas Haarnoja, Aurick Zhou, Pieter Abbeel, and Sergey Levine.
\newblock Soft actor-critic: Off-policy maximum entropy deep reinforcement
  learning with a stochastic actor, 2018.

\bibitem[Hong et~al.(2020)Hong, Wai, Wang, and Yang]{hong2020twotimescale}
Mingyi Hong, Hoi-To Wai, Zhaoran Wang, and Zhuoran Yang.
\newblock A two-timescale framework for bilevel optimization: Complexity
  analysis and application to actor-critic, 2020.

\bibitem[Huang et~al.(2020{\natexlab{a}})Huang, Gao, Pei, and
  Huang]{Huang2020MomentumBasedPG}
F.~Huang, Shangqian Gao, Jian Pei, and H.~Huang.
\newblock Momentum-based policy gradient methods.
\newblock \emph{ArXiv}, abs/2007.06680, 2020{\natexlab{a}}.

\bibitem[Huang et~al.(2020{\natexlab{b}})Huang, Gao, Pei, and
  Huang]{pmlr-v119-huang20a}
Feihu Huang, Shangqian Gao, Jian Pei, and Heng Huang.
\newblock Momentum-based policy gradient methods.
\newblock In Hal~Daumé III and Aarti Singh, editors, \emph{Proceedings of the
  37th International Conference on Machine Learning}, volume 119 of
  \emph{Proceedings of Machine Learning Research}, pages 4422--4433. PMLR,
  13--18 Jul 2020{\natexlab{b}}.
\newblock URL \url{http://proceedings.mlr.press/v119/huang20a.html}.

\bibitem[Imani et~al.(2018)Imani, Graves, and White]{ImaniGW18}
Ehsan Imani, Eric Graves, and Martha White.
\newblock An off-policy policy gradient theorem using emphatic weightings.
\newblock In \emph{Advances in Neural Information Processing Systems 31: Annual
  Conference on Neural Information Processing Systems 2018, NeurIPS 2018, 3-8
  December 2018, Montr{\'{e}}al, Canada}, pages 96--106, 2018.

\bibitem[Jiang and Huang(2020)]{jiang2020minimax}
Nan Jiang and Jiawei Huang.
\newblock Minimax confidence interval for off-policy evaluation and policy
  optimization.
\newblock \emph{arXiv preprint arXiv:2002.02081}, 2020.

\bibitem[Karimi et~al.(2020)Karimi, Nutini, and Schmidt]{karimi2020linear}
Hamed Karimi, Julie Nutini, and Mark Schmidt.
\newblock Linear convergence of gradient and proximal-gradient methods under
  the polyak-\l{}ojasiewicz condition, 2020.

\bibitem[Kolter and Ng(2009)]{Kolter2009RegularizationAF}
J.~Z. Kolter and A.~Ng.
\newblock Regularization and feature selection in least-squares temporal
  difference learning.
\newblock In \emph{ICML '09}, 2009.

\bibitem[Levine et~al.(2020)Levine, Kumar, Tucker, and Fu]{levine2020offline}
Sergey Levine, Aviral Kumar, George Tucker, and Justin Fu.
\newblock Offline reinforcement learning: Tutorial, review, and perspectives on
  open problems.
\newblock \emph{arXiv preprint arXiv:2005.01643}, 2020.

\bibitem[Lillicrap et~al.(2015)Lillicrap, Hunt, Pritzel, Heess, Erez, Tassa,
  Silver, and Wierstra]{lillicrap2015continuous}
Timothy~P. Lillicrap, Jonathan~J. Hunt, Alexander Pritzel, Nicolas Heess, Tom
  Erez, Yuval Tassa, David Silver, and Daan Wierstra.
\newblock Continuous control with deep reinforcement learning, 2015.

\bibitem[Lin et~al.(2019)Lin, Jin, and Jordan]{lin2019gradient}
Tianyi Lin, Chi Jin, and Michael~I Jordan.
\newblock On gradient descent ascent for nonconvex-concave minimax problems.
\newblock \emph{arXiv preprint arXiv:1906.00331}, 2019.

\bibitem[Lin et~al.(2020)Lin, Jin, Jordan, et~al.]{lin2020near}
Tianyi Lin, Chi Jin, Michael Jordan, et~al.
\newblock Near-optimal algorithms for minimax optimization.
\newblock \emph{arXiv preprint arXiv:2002.02417}, 2020.

\bibitem[Liu et~al.(2020)Liu, Liu, Ghavamzadeh, Mahadevan, and
  Petrik]{liu2020finitesample}
Bo~Liu, Ji~Liu, Mohammad Ghavamzadeh, Sridhar Mahadevan, and Marek Petrik.
\newblock Finite-sample analysis of proximal gradient td algorithms, 2020.

\bibitem[Liu et~al.(2018)Liu, Li, Tang, and Zhou]{liu2018breaking}
Qiang Liu, Lihong Li, Ziyang Tang, and Dengyong Zhou.
\newblock Breaking the curse of horizon: Infinite-horizon off-policy
  estimation.
\newblock In \emph{Advances in Neural Information Processing Systems}, pages
  5361--5371, 2018.

\bibitem[Liu et~al.(2019)Liu, Swaminathan, Agarwal, and Brunskill]{liu2019OPPG}
Yao Liu, Adith Swaminathan, Alekh Agarwal, and Emma Brunskill.
\newblock Off-policy policy gradient with state distribution correction.
\newblock \emph{CoRR}, abs/1904.08473, 2019.
\newblock URL \url{http://arxiv.org/abs/1904.08473}.

\bibitem[Luo et~al.(2020)Luo, Haishan, and Tong]{luo2020stochastic}
Luo Luo, Ye~Haishan, and Zhang Tong.
\newblock Stochastic recursive gradient descent ascent for stochastic
  nonconvex-strongly-concave minimax problems.
\newblock 2020.

\bibitem[Lyu et~al.(2020)Lyu, Qi, Ghavamzadeh, Yao, Yang, and
  Liu]{lyu2020variancereduced}
Daoming Lyu, Qi~Qi, Mohammad Ghavamzadeh, Hengshuai Yao, Tianbao Yang, and
  Bo~Liu.
\newblock Variance-reduced off-policy memory-efficient policy search, 2020.

\bibitem[Nachum et~al.(2019{\natexlab{a}})Nachum, Chow, Dai, and
  Li]{NachumCD019}
Ofir Nachum, Yinlam Chow, Bo~Dai, and Lihong Li.
\newblock Dualdice: Behavior-agnostic estimation of discounted stationary
  distribution corrections.
\newblock In \emph{Advances in Neural Information Processing Systems 32: Annual
  Conference on Neural Information Processing Systems 2019, NeurIPS 2019, 8-14
  December 2019, Vancouver, BC, Canada}, pages 2315--2325, 2019{\natexlab{a}}.

\bibitem[Nachum et~al.(2019{\natexlab{b}})Nachum, Dai, Kostrikov, Chow, Li, and
  Schuurmans]{nachum2019algaedice}
Ofir Nachum, Bo~Dai, Ilya Kostrikov, Yinlam Chow, Lihong Li, and Dale
  Schuurmans.
\newblock Algaedice: Policy gradient from arbitrary experience.
\newblock \emph{arXiv preprint arXiv:1912.02074}, 2019{\natexlab{b}}.

\bibitem[Papini et~al.(2018)Papini, Binaghi, Canonaco, Pirotta, and
  Restelli]{papini2018stochastic}
Matteo Papini, Damiano Binaghi, Giuseppe Canonaco, Matteo Pirotta, and Marcello
  Restelli.
\newblock Stochastic variance-reduced policy gradient.
\newblock \emph{arXiv preprint arXiv:1806.05618}, 2018.

\bibitem[Sutton et~al.(2000)Sutton, McAllester, Singh, and
  Mansour]{NIPS1999_1713}
Richard~S Sutton, David~A. McAllester, Satinder~P. Singh, and Yishay Mansour.
\newblock Policy gradient methods for reinforcement learning with function
  approximation.
\newblock In S.~A. Solla, T.~K. Leen, and K.~M\"{u}ller, editors,
  \emph{Advances in Neural Information Processing Systems 12}, pages
  1057--1063. MIT Press, 2000.

\bibitem[Szepesv{\'a}ri and Munos(2005)]{szepesvari2005finite}
Csaba Szepesv{\'a}ri and R{\'e}mi Munos.
\newblock Finite time bounds for sampling based fitted value iteration.
\newblock In \emph{Proceedings of the 22nd international conference on Machine
  learning}, pages 880--887, 2005.

\bibitem[Tropp(2015)]{tropp2015introduction}
Joel~A. Tropp.
\newblock An introduction to matrix concentration inequalities, 2015.

\bibitem[Uehara et~al.(2019)Uehara, Huang, and Jiang]{uehara2019minimax}
Masatoshi Uehara, Jiawei Huang, and Nan Jiang.
\newblock Minimax weight and q-function learning for off-policy evaluation.
\newblock \emph{arXiv preprint arXiv:1910.12809}, 2019.

\bibitem[Wu et~al.(2020)Wu, Zhang, Xu, and Gu]{wu2020finite}
Yue Wu, Weitong Zhang, Pan Xu, and Quanquan Gu.
\newblock A finite time analysis of two time-scale actor critic methods, 2020.

\bibitem[Xu et~al.(2019{\natexlab{a}})Xu, Gao, and Gu]{xu2019improved}
Pan Xu, Felicia Gao, and Quanquan Gu.
\newblock An improved convergence analysis of stochastic variance-reduced
  policy gradient.
\newblock \emph{arXiv preprint arXiv:1905.12615}, 2019{\natexlab{a}}.

\bibitem[Xu et~al.(2019{\natexlab{b}})Xu, Gao, and Gu]{xu2019sample}
Pan Xu, Felicia Gao, and Quanquan Gu.
\newblock Sample efficient policy gradient methods with recursive variance
  reduction.
\newblock \emph{arXiv preprint arXiv:1909.08610}, 2019{\natexlab{b}}.

\bibitem[Xu et~al.(2020)Xu, Wang, and Liang]{xu2020nonasymptotic}
Tengyu Xu, Zhe Wang, and Yingbin Liang.
\newblock Non-asymptotic convergence analysis of two time-scale (natural)
  actor-critic algorithms, 2020.

\bibitem[Xu et~al.(2021)Xu, Yang, Wang, and Liang]{xu2021doubly}
Tengyu Xu, Zhuoran Yang, Zhaoran Wang, and Yingbin Liang.
\newblock Doubly robust off-policy actor-critic: Convergence and optimality,
  2021.

\bibitem[Yang et~al.(2020)Yang, Nachum, Dai, Li, and
  Schuurmans]{yang2020lagrangian}
Mengjiao Yang, Ofir Nachum, Bo~Dai, Lihong Li, and Dale Schuurmans.
\newblock Off-policy evaluation via the regularized lagrangian.
\newblock \emph{arXiv preprint arXiv:2007.03438}, 2020.

\bibitem[Yuan et~al.(2020)Yuan, Lian, Liu, and Zhou]{yuan2020stochastic}
Huizhuo Yuan, Xiangru Lian, Ji~Liu, and Yuren Zhou.
\newblock Stochastic recursive momentum for policy gradient methods.
\newblock \emph{arXiv preprint arXiv:2003.04302}, 2020.

\bibitem[Zhang et~al.(2019{\natexlab{a}})Zhang, Dai, Li, and
  Schuurmans]{zhang2019gendice}
Ruiyi Zhang, Bo~Dai, Lihong Li, and Dale Schuurmans.
\newblock Gendice: Generalized offline estimation of stationary values.
\newblock In \emph{International Conference on Learning Representations},
  2019{\natexlab{a}}.

\bibitem[Zhang et~al.(2019{\natexlab{b}})Zhang, Boehmer, and
  Whiteson]{zhang2019generalized}
Shangtong Zhang, Wendelin Boehmer, and Shimon Whiteson.
\newblock Generalized off-policy actor-critic, 2019{\natexlab{b}}.

\bibitem[Zhang et~al.(2019{\natexlab{c}})Zhang, Liu, Yao, and
  Whiteson]{zhang2019provably}
Shangtong Zhang, Bo~Liu, Hengshuai Yao, and Shimon Whiteson.
\newblock Provably convergent two-timescale off-policy actor-critic with
  function approximation, 2019{\natexlab{c}}.

\bibitem[Zhang et~al.(2020)Zhang, Liu, Yao, and Whiteson]{zhang2020provably}
Shangtong Zhang, Bo~Liu, Hengshuai Yao, and Shimon Whiteson.
\newblock Provably convergent two-timescale off-policy actor-critic with
  function approximation, 2020.

\end{thebibliography}

\clearpage
\appendix

\onecolumn

\begin{table}[ht]
  \begin{footnotesize}
  \begin{center}
  \caption{Comparison of Convergence Guarantee of Recent Methods}\label{tab:comparison}
  \begin{tabular}{ccccc}
  \toprule
  Algorithms & Convergence Rate & Off-Policy? & 
  Detailed Bias Analysis? \\
  \toprule
  SVRPG \citep{xu2019improved} &$O(\epsilon^{-10/3})$& $\times$ & \\ \cmidrule{1-3}
  SRVR-PG \citep{xu2019sample} &  $O(\epsilon^{-3})$  & $\times$ & \\ \cmidrule{1-3}
  STORM-PG \citep{yuan2020stochastic} & $O(\epsilon^{-3})$ & $\times$ & N.A. &\\ \cmidrule{1-3}
  MBPG \citep{pmlr-v119-huang20a} & $O(\epsilon^{-3})$ & $\times$ & \\ \cmidrule{1-3}
  \multicolumn{1}{m{4cm}}{\shortstack{On Policy AC/NAC \\ \citep{fu2020singletimescale,xu2020nonasymptotic}}} & $O(\epsilon^{-4})$ & $\times$ & \\ \midrule
  DR-Off-PAC \citep{xu2021doubly} & $O(\epsilon^{-4})$ & \checkmark  & $\times$ \\ \midrule
  P-SREDA (Ours) & $O(\epsilon^{-3})$  & \checkmark & \checkmark \\ \midrule
  O-SPIM (Ours) & $O(\epsilon^{-4})$ & \checkmark & \checkmark \\
  \bottomrule
  \end{tabular}
  \end{center}
  \end{footnotesize}
  \end{table}

\section{Useful Lemma}\label{appx:useful_lemma}

\begin{lemma}[Lemma B.2 in \citep{lin2020near}]\label{lem:optimum_function}
Define
\begin{align*}
    \Phi_\theta(\zeta) = \min_{\xi\in\Xi} \cL^D(\theta, \zeta, \xi)&~~~~~~~\phi_\theta(\zeta) =\argmin_{\xi\in\Xi} \cL^D(\theta, \zeta, \xi),~~~~for~\zeta\in \mR^{dim(Z)}\\
    \Psi_\theta(\xi)=\max_{\zeta\in Z}\cL^D(\theta, \zeta, \xi)&~~~~~~~    \psi_\theta(\xi)=\argmax_{\zeta\in Z} \cL^D(\theta, \zeta, \xi),~~~~for~\xi\in\mR^{dim(\Xi)}\\
\end{align*}
Under Assumption \ref{assump:smooth} and \ref{assump:feature_matrices}, for fixed $\theta$, we have:

(1) The function $\phi_\theta(\cdot)$ is $\kappa_\xi=\frac{L}{\mu_\xi}$-Lipschitz.

(2) The function $\Phi_\theta(\cdot)$ is $2\kappa_\xi L=2\frac{L^2}{\mu_\xi}$-smooth and $\mu_\zeta$-strongly concave with $\nabla \Phi_\theta(\cdot):=\nz \cL^D(\theta,\zeta, \phi_\theta(\zeta))$.

(3) The function $\psi_\theta(\cdot)$ is $\kappa_\zeta=\frac{L}{\mu_\zeta}$-Lipschitz.

(4) The function $\Psi_\theta(\cdot)$ is $2\kappa_\zeta L=2\frac{L^2}{\mu_\zeta}$-smooth and $\mu_\xi$-strongly convex with $\nabla \Psi_\theta(\cdot):=\nx \cL^D(\theta, \psi_\theta(\xi), \xi)$.
\end{lemma}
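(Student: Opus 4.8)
The plan is to derive all four claims from the envelope (Danskin-type) theorem together with the strong-convexity/strong-concavity and $L$-smoothness of $\cL^D$ guaranteed by Property~\ref{prop:scscs}. Claims (3)--(4) are precisely claims (1)--(2) with the roles of $(\zeta, Z, \mu_\zeta, \max)$ and $(\xi, \Xi, \mu_\xi, \min)$ interchanged, so I would prove only (1)--(2) in full and then invoke this symmetry.

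First I would establish that $\phi_\theta$ is well-defined and $\kappa_\xi$-Lipschitz. Since $\cL^D(\theta,\zeta,\cdot)$ is $\mu_\xi$-strongly convex on the convex set $\Xi$, the minimizer $\phi_\theta(\zeta)$ is unique. Fix $\zeta_1,\zeta_2$ and write $\xi_i=\phi_\theta(\zeta_i)$. The first-order optimality (variational inequality) for constrained minimization gives $\langle \nx\cL^D(\theta,\zeta_i,\xi_i),\xi-\xi_i\rangle\ge 0$ for all $\xi\in\Xi$. Taking $\xi=\xi_{3-i}$ in each and summing yields
\[
\langle \nx\cL^D(\theta,\zeta_2,\xi_2)-\nx\cL^D(\theta,\zeta_1,\xi_1),\,\xi_2-\xi_1\rangle\le 0.
\]
Splitting the left side through the intermediate point $\nx\cL^D(\theta,\zeta_2,\xi_1)$ and using $\mu_\xi$-strong convexity in $\xi$ for the first piece and $L$-smoothness (Lipschitz gradient) for the second, I obtain
\[
\mu_\xi\|\xi_2-\xi_1\|^2\le \langle \nx\cL^D(\theta,\zeta_1,\xi_1)-\nx\cL^D(\theta,\zeta_2,\xi_1),\,\xi_2-\xi_1\rangle\le L\|\zeta_1-\zeta_2\|\,\|\xi_2-\xi_1\|,
\]
which after division gives $\|\phi_\theta(\zeta_1)-\phi_\theta(\zeta_2)\|\le (L/\mu_\xi)\|\zeta_1-\zeta_2\|=\kappa_\xi\|\zeta_1-\zeta_2\|$, proving (1).

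For (2), I would first note that since each $\cL^D(\theta,\cdot,\xi)$ is $\mu_\zeta$-strongly concave, the map $\zeta\mapsto\cL^D(\theta,\zeta,\xi)+\tfrac{\mu_\zeta}{2}\|\zeta\|^2$ is concave for every $\xi$; taking the pointwise minimum over $\xi$ preserves concavity, so $\Phi_\theta(\zeta)+\tfrac{\mu_\zeta}{2}\|\zeta\|^2$ is concave, i.e.\ $\Phi_\theta$ is $\mu_\zeta$-strongly concave. Because the inner minimizer is unique, Danskin's theorem applies and gives differentiability with $\nabla\Phi_\theta(\zeta)=\nz\cL^D(\theta,\zeta,\phi_\theta(\zeta))$. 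Smoothness then follows by composing this formula with (1): using joint $L$-Lipschitzness of $\nz\cL^D$ and $\sqrt{a^2+b^2}\le a+b$,
\[
\|\nabla\Phi_\theta(\zeta_1)-\nabla\Phi_\theta(\zeta_2)\|\le L\big(\|\zeta_1-\zeta_2\|+\|\phi_\theta(\zeta_1)-\phi_\theta(\zeta_2)\|\big)\le L(1+\kappa_\xi)\|\zeta_1-\zeta_2\|\le 2\kappa_\xi L\|\zeta_1-\zeta_2\|,
\]
where the last step uses $\kappa_\xi=L/\mu_\xi\ge 1$. This establishes the $2\kappa_\xi L$-smoothness asserted in (2).

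I expect the main obstacle to be the careful justification of the envelope/Danskin step in the constrained setting: differentiability of $\Phi_\theta$ and the gradient identity $\nabla\Phi_\theta(\cdot)=\nz\cL^D(\theta,\cdot,\phi_\theta(\cdot))$ rely on the uniqueness of $\phi_\theta(\zeta)$ (supplied by strong convexity) and the continuity of $\zeta\mapsto\phi_\theta(\zeta)$ (supplied by the Lipschitz bound in (1)), together with the fact that the inner feasible set $\Xi$ does not depend on $\zeta$. Once these ingredients are in place, the variational-inequality manipulation for the Lipschitz bound and the Lipschitz-through-composition estimate for the smoothness bound are routine, and claims (3)--(4) follow verbatim by the $(\zeta,\xi)$-symmetry.
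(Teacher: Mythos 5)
Your proof is correct. Note, however, that the paper does not actually prove this lemma: it imports it wholesale as Lemma B.2 of \citep{lin2020near}, adding only a clarifying remark that, by Danskin's theorem, $\nabla\Phi_\theta(\zeta)=\nz\cL^D(\theta,\zeta,\xi)|_{\xi=\phi_\theta(\zeta)}$ so that the smoothness assumption can be applied with $\phi_\theta(\zeta)$ frozen. Your self-contained argument is essentially the standard one used in that reference: the variational inequality at the two constrained minimizers combined with strong monotonicity of $\nx\cL^D(\theta,\zeta,\cdot)$ and the $L$-Lipschitz cross-dependence on $\zeta$ gives the $\kappa_\xi$-Lipschitz argmin; partial minimization preserves concavity of $\zeta\mapsto\cL^D(\theta,\zeta,\xi)+\tfrac{\mu_\zeta}{2}\|\zeta\|^2$, giving $\mu_\zeta$-strong concavity of $\Phi_\theta$; and Danskin plus composition with (1) gives $L(1+\kappa_\xi)\le 2\kappa_\xi L$-smoothness, where your justification that $\kappa_\xi\ge 1$ (the smoothness modulus always dominates the strong-convexity modulus) is exactly the step needed to recover the stated constant, and is consistent with the paper's later use of $\kappa_\xi>1$. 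Two points you handle implicitly deserve the one sentence you give them: first, the lemma lets $\zeta$ range over all of $\mR^{{\rm dim}(Z)}$ while Property \ref{prop:scscs} states strong convexity/concavity only for $(\zeta,\xi)\in Z\times\Xi$ — this is harmless here because $\cL^D$ is quadratic in $(\zeta,\xi)$ with Hessian blocks $-\lambda_w\K_w$ and $\lambda_Q\K_Q$ independent of the other variable, so the moduli hold globally; second, the constrained Danskin step indeed needs uniqueness and continuity of $\phi_\theta$ and a $\zeta$-independent feasible set $\Xi$, all of which you supply. The symmetry reduction of (3)--(4) to (1)--(2) via $\cL^D\mapsto-\cL^D$ is valid.
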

\begin{remark}[For clarification]
    According to Danskin's Theorem, 
    $$\nabla \Phi_\theta(\cdot):=\nz \cL^D(\theta,\zeta, \phi_\theta(\zeta))=\nz \cL^D(\theta,\zeta, \xi)|_{\xi=\phi_\theta(\zeta)}$$
    Therefore, when we compute $\nz \cL^D(\theta,\zeta, \phi_\theta(\zeta))$, we can treat $\phi_\theta(\zeta)$ as a constant. Then, for arbitrary $\zeta',\xi'$, based on Assumption \ref{assump:smooth}, we always have:
    \begin{align*}
        \|\nabla \Phi_\theta(\cdot) - \nz \cL^D(\theta,\zeta', \xi')\|\leq L\|\zeta - \zeta'\|+L\|\phi_\theta(\zeta)-\xi'\|
    \end{align*}
    We have a similar clarification w.r.t. $\nx\Psi(\xi)$.
\end{remark}


\begin{lemma}\label{lem:grad_norm}
For $\alpha$-strongly-convex function $f(x)$ and $\beta$-strongly-concave function $g(x)$ w.r.t. $x\in X$, where $X\subseteq \mathbb{R}^n$ is a convex set, we have
\begin{align}
    \|x-x^*_f\|\leq \frac{1}{\alpha}\|\nabla_x f(x)\|,\qquad
    \frac{\alpha}{2}\|x-x^*_f\|^2\leq f(x)-f(x^*_f)\\
    \|x-x^*_g\|\leq \frac{1}{\beta}\|\nabla_x g(x)\|,\qquad
    \frac{\beta}{2}\|x-x^*_f\|^2\leq g(x^*_g)-g(x)
\end{align}
where $x^*_f$ and $x^*_g$ the minimum and maximum of $f(x)$ and $g(x)$, respectively.
\end{lemma}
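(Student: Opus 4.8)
The plan is to reduce all four inequalities to two standard consequences of $\alpha$-strong convexity applied at the constrained minimizer, and then obtain the statements for the strongly concave $g$ by applying the convex case to $-g$. Throughout I treat $f$ as differentiable (as the appearance of $\nabla_x f$ presupposes), so that $\alpha$-strong convexity on the convex set $X$ is equivalent both to the quadratic lower bound $f(y)\geq f(x)+\langle\nabla f(x),\,y-x\rangle+\tfrac{\alpha}{2}\|y-x\|^2$ and to strong monotonicity of the gradient, $\langle\nabla f(x)-\nabla f(y),\,x-y\rangle\geq\alpha\|x-y\|^2$, for all $x,y\in X$. The only other ingredient is the first-order (variational) optimality condition at the constrained minimizer: since $X$ is convex and $x^*_f$ minimizes $f$ over $X$, we have $\langle\nabla f(x^*_f),\,x-x^*_f\rangle\geq 0$ for every $x\in X$ (with equality, $\nabla f(x^*_f)=\bm 0$, in the unconstrained case).

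For the quadratic-growth bound $\tfrac{\alpha}{2}\|x-x^*_f\|^2\leq f(x)-f(x^*_f)$, I would apply the quadratic lower bound with base point $x^*_f$ and target point $x$, giving $f(x)\geq f(x^*_f)+\langle\nabla f(x^*_f),\,x-x^*_f\rangle+\tfrac{\alpha}{2}\|x-x^*_f\|^2$, and then drop the inner-product term, which is nonnegative by optimality; rearranging yields the claim. For the gradient bound $\|x-x^*_f\|\leq\tfrac{1}{\alpha}\|\nabla f(x)\|$, I would instead use strong monotonicity with $y=x^*_f$ to get $\langle\nabla f(x)-\nabla f(x^*_f),\,x-x^*_f\rangle\geq\alpha\|x-x^*_f\|^2$; adding the optimality inequality $\langle\nabla f(x^*_f),\,x-x^*_f\rangle\geq 0$ gives $\langle\nabla f(x),\,x-x^*_f\rangle\geq\alpha\|x-x^*_f\|^2$, and Cauchy--Schwarz on the left produces $\|\nabla f(x)\|\,\|x-x^*_f\|\geq\alpha\|x-x^*_f\|^2$. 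Dividing through by $\|x-x^*_f\|$ (the case $x=x^*_f$ being trivial) finishes this part. It is worth noting that invoking strong monotonicity here, rather than only the quadratic lower bound, is precisely what yields the sharp constant $1/\alpha$ instead of the weaker $2/\alpha$.

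Finally, for $g$ I would observe that $-g$ is $\beta$-strongly convex on $X$ and that $x^*_g=\argmax_{x\in X} g(x)=\argmin_{x\in X}\big(-g(x)\big)$. Applying the two inequalities already proved to $-g$ (with $\alpha$ replaced by $\beta$) then gives $\|x-x^*_g\|\leq\tfrac{1}{\beta}\|\nabla(-g)(x)\|=\tfrac{1}{\beta}\|\nabla g(x)\|$ and $\tfrac{\beta}{2}\|x-x^*_g\|^2\leq -g(x)-\big(-g(x^*_g)\big)=g(x^*_g)-g(x)$, which are exactly the stated bounds (up to the harmless typographical $x^*_f$ in the last displayed term, which should read $x^*_g$).

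I do not anticipate any genuine obstacle, as this is a textbook fact; the only points requiring care are that we are in the constrained setting, so one must use the variational optimality condition $\langle\nabla f(x^*_f),\,x-x^*_f\rangle\geq 0$ rather than $\nabla f(x^*_f)=\bm 0$, and that the constant $1/\alpha$ (as opposed to $2/\alpha$) is obtained by routing the gradient bound through strong monotonicity of $\nabla f$.
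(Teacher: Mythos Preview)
Your proposal is correct and follows essentially the same route as the paper: both use the strong monotonicity inequality together with the first-order optimality condition $\langle\nabla f(x^*_f),\,x-x^*_f\rangle\ge 0$ and Cauchy--Schwarz to get the $1/\alpha$ gradient bound, use the quadratic lower bound at $x^*_f$ for the growth inequality, and handle $g$ by applying the convex case to $-g$. Your remark about the typo ($x^*_f$ should be $x^*_g$ in the last inequality) is also correct.
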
 
\begin{proof}
Since $f(x)$ is $\alpha$-strongly-convex, we have 
\begin{align*}
    (\nabla_x f(x)-\nabla_x f(x^*_f))\trans (x-x^*_f)  \geq \alpha \|x-x^*_f\|^2,\qquad f(x)\geq f(x^*_f)+\nabla_x f(x^*_f)\trans (x-x^*_f)+\frac{\alpha}{2}\|x-x^*_f\|^2.
\end{align*}
Since $x^*_f$ is the minimizer of $f(x)$, we know that $\nabla_x f(x^*_f)\trans (x-x^*_f) \geq 0,\forall x\in X$. Combining all the above inequalities together and we obtain
\begin{align*}
    \|x-x^*_f\|^2\leq \frac{1}{\alpha} \nabla_x f(x)\trans (x-x^*_f)\leq \frac{1}{\alpha}\|\nabla_x f(x)\|\|x-x^*_f\|,\qquad f(x)\geq f(x^*_f)+\frac{\alpha}{2}\|x-x^*_f\|^2.
\end{align*}
which implies
\begin{align*}
    \|x-x^*_f\|\leq \frac{1}{\alpha}\|\nabla_x f(x)\|,\qquad \frac{\alpha}{2}\|x-x^*_f\|^2\leq f(x)-f(x^*_f).
\end{align*}
By applying the above results for $-g(x)$ which is a $\beta$-strongly-convex function and we can complete the proof.
\end{proof}

\begin{lemma}\label{lem:non_decreasing_order}
    For positive definite matrix $\A$, and arbitrary $\alpha > 0$, we have:
    \begin{align*}
        (\A\trans \A)^{-1}\succ &\Big((\alpha \I+\A)\trans(\alpha \I+\A)\Big)^{-1}
    \end{align*}
\end{lemma}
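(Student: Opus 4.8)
The plan is to reduce the stated inequality between inverses to an inequality between the matrices being inverted, exploiting the standard operator-antitonicity of the matrix inverse on the positive-definite cone: if $M \succ N \succ 0$ then $N^{-1} \succ M^{-1}$. Thus it suffices to establish the reversed, un-inverted inequality
\begin{align*}
    (\alpha \I + \A)\trans (\alpha \I + \A) \succ \A\trans \A.
\end{align*}
Before invoking antitonicity I would first confirm that both operands are genuinely positive definite, so that both inverses exist and the order-reversing fact applies. Since $\A$ is positive definite it is invertible, whence $\A\trans\A \succ 0$. Here "positive definite" should be read as $x\trans \A x > 0$ for all nonzero $x$ (note $\A$ need not be symmetric, e.g.\ the matrix $\M_\pi$), which is equivalent to $\A + \A\trans \succ 0$. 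From this, $\alpha\I + \A$ has symmetric part $\alpha\I + \tfrac{1}{2}(\A + \A\trans) \succ 0$, so $\alpha\I + \A$ is positive definite and hence invertible, giving $(\alpha\I + \A)\trans(\alpha\I + \A) \succ 0$.

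Next I would prove the reversed inequality by a one-line expansion of the difference:
\begin{align*}
    (\alpha\I + \A)\trans(\alpha\I + \A) - \A\trans\A = \alpha^2 \I + \alpha(\A + \A\trans).
\end{align*}
Since $\alpha > 0$, we have $\alpha^2 \I \succ 0$ and $\alpha(\A + \A\trans) \succ 0$ (using $\A + \A\trans \succ 0$ from the positive-definiteness of $\A$), so their sum is positive definite. This shows $(\alpha\I + \A)\trans(\alpha\I + \A) \succ \A\trans\A \succ 0$. Applying the antitone property of the inverse to this chain immediately yields $(\A\trans\A)^{-1} \succ \big((\alpha\I + \A)\trans(\alpha\I + \A)\big)^{-1}$, which is the claim.

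I do not expect a serious obstacle here, as the argument is essentially a single expansion together with a textbook monotonicity fact. The only point requiring care is the interpretation of "positive definite" for a possibly non-symmetric $\A$: the key reduction relies on $\A + \A\trans \succ 0$, and one must check that this positive-definiteness of the symmetric part is inherited by $\alpha\I + \A$ so that the inverse is well defined. Once that bookkeeping is in place, the conclusion follows directly.
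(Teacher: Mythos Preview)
Your proof is correct and follows essentially the same route as the paper's: reduce to the un-inverted inequality via the order-reversing property of the matrix inverse on positive-definite matrices, then expand $(\alpha\I+\A)\trans(\alpha\I+\A)-\A\trans\A=\alpha^2\I+\alpha(\A+\A\trans)$ and conclude from positivity. The only minor differences are that the paper proves the antitonicity of the inverse from scratch via a Woodbury-type identity (rather than citing it) and writes $\A=\A\trans$ in the final step, whereas you work more generally with the symmetric part $\A+\A\trans\succ 0$ to cover non-symmetric~$\A$.
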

\begin{proof}
    Suppose for symmetric matrix $\A$ and $\B$, we have the relationship $\A\succ \B\succ 0$.
    According to the inverse matrix lemma, we have
    \begin{align*}
        \B^{-1}-\A^{-1}=\B^{-1}-(\B+(\A-\B))^{-1}=(\B+\B(\A-\B)^{-1}\B)^{-1}
    \end{align*}
    Because $\A\succ \B\succ 0$, we have $(\B+\B(\A-\B)^{-1}\B)^{-1}\succ 0$, therefore $\B^{-1}\succ \A^{-1}$.

    Then, we only need to prove
    \begin{align*}
        (\alpha \I+\A)\trans(\alpha \I+\A) \succ & \A\trans \A
    \end{align*}
    We have
    \begin{align*}
        (\alpha \I+\A)\trans(\alpha \I+\A)=\alpha^2 \I+\alpha (\A+\A\trans ) + \A\trans \A
    \end{align*}
    Combining $\A=\A\trans \succ 0$ and $\alpha > 0$, we can finish the proof.
\end{proof}

\begin{lemma}[Non-negative Elements]\label{lem:all_non_negative}
    We use $\bP_*^\pi=(\bP^\pi)\trans\in \mathbb{R}^{|\cS||\cA|\times |\cS||\cA|}$ to denote the transpose of the transition kernel. All the elements in $(\I-\gamma \bP^\pi_*)^{-1}$ are non-negative. Moreover, the element indexed by $(s_i, a_j)$ in row and $(s_p, a_q)$ in column equals to the unnormalized discounted state-action occupancy of $(s_i,a_j)$ starting from $(s_p, a_q)$ and executing $\pi$.
\end{lemma}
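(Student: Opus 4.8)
The plan is to prove the claim about $(\I - \gamma \bP^\pi_*)^{-1}$ by expanding it as a Neumann series. First I would argue that the series converges: since $\bP^\pi$ is a (row-)stochastic transition kernel, its transpose $\bP^\pi_*$ has all entries in $[0,1]$, and the spectral radius of $\gamma \bP^\pi_*$ equals $\gamma \rho(\bP^\pi) = \gamma < 1$ (the spectral radius is invariant under transposition, and a stochastic matrix has spectral radius $1$). Hence $(\I - \gamma \bP^\pi_*)$ is invertible and we may write
\begin{align*}
    (\I - \gamma \bP^\pi_*)^{-1} = \sum_{t=0}^\infty \gamma^t (\bP^\pi_*)^t = \sum_{t=0}^\infty \gamma^t \big((\bP^\pi)^t\big)\trans.
\end{align*}

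The non-negativity claim then follows immediately: each $(\bP^\pi)^t$ is a product of matrices with non-negative entries, hence has non-negative entries, and so does its transpose; multiplying by $\gamma^t \geq 0$ and summing a convergent series of non-negative matrices yields a matrix with non-negative entries. This disposes of the first sentence of the statement.

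For the probabilistic interpretation, the key observation is to identify the $(s_p,a_q)$-indexed entry of $\big((\bP^\pi)^t\big)\trans$. By definition of the transpose, the entry of $\big((\bP^\pi)^t\big)\trans$ in row $(s_i,a_j)$ and column $(s_p,a_q)$ equals the entry of $(\bP^\pi)^t$ in row $(s_p,a_q)$ and column $(s_i,a_j)$. I would then invoke the standard Chapman--Kolmogorov fact that this latter entry is exactly the $t$-step transition probability $p(s_t = s_i, a_t = a_j \mid s_0 = s_p, a_0 = a_q)$ under the Markov chain on state-action pairs induced by running $\pi$. Summing over $t$ with the discount weights $\gamma^t$ gives
\begin{align*}
    \big[(\I - \gamma \bP^\pi_*)^{-1}\big]_{(s_i,a_j),(s_p,a_q)} = \sum_{t=0}^\infty \gamma^t \, p\big(s_t = s_i, a_t = a_j \mid s_0 = s_p, a_0 = a_q\big),
\end{align*}
which is precisely the unnormalized ($1/(1-\gamma)$ times the normalized) discounted state-action occupancy of $(s_i,a_j)$ when the chain is started from $(s_p,a_q)$ and run under $\pi$, matching the definition of $d^\pi$ given in Section~\ref{sec:MDP} up to the normalization constant.

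The main obstacle I anticipate is purely bookkeeping rather than conceptual: one must be careful about the row/column index swap introduced by the transpose, so that the occupancy of $(s_i,a_j)$ \emph{starting from} $(s_p,a_q)$ lands in the correct matrix position as stated. It is easy to accidentally transpose the interpretation (occupancy of $(s_p,a_q)$ starting from $(s_i,a_j)$), so I would write out the index chase explicitly once. A secondary point worth stating cleanly is the convergence of the matrix Neumann series, for which it suffices to note $\rho(\gamma \bP^\pi_*) = \gamma < 1$; everything else is a routine consequence of the power-series expansion.
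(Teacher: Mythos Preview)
Your proposal is correct and follows essentially the same idea as the paper. The paper's own proof is terser: it asserts that for any initial distribution vector $\mu_0$, the product $(\I-\gamma\bP^\pi_*)^{-1}\mu_0$ is the unnormalized discounted state-action occupancy (hence nonnegative), and then specializes $\mu_0$ to standard basis vectors to read off each column. Your Neumann-series expansion is exactly the justification that the paper leaves implicit behind that assertion, so the two arguments coincide at the level of content; you simply make the key step explicit and carry out the index chase carefully.
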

\begin{proof}
    For arbitrary initial state-action distribution vector $\mu_0\in \mathbb{R}^{|\cS||\cA|\times 1}$, $(\I-\gamma \bP^\pi_*)^{-1} \mu_0$ is a vector whose elements are unnormalized state-action occupancy with $\mu_0$ as initial distribution, which is larger or equal to 0. As a result, by choosing standard basis vector as $\mu_0$, we can finish the proof.
\end{proof}

\section{Useful Properties Implied by Assumptions in \ref{sec:assumptions}}\label{appx:properties}
In this section, we first prove several properties implied by our basic assumptions in Section \ref{sec:assumptions}.

\begin{restatable}{property}{PropSCSCS}[A detailed version of Property \ref{prop:scscs}]\label{prop:detailed_scscs}
    Under Assumption \ref{assump:smooth} and \ref{assump:feature_matrices}, given that $C_\cW:=\max\{1,\max_{\zeta\in Z}\|\zeta\|\}$ and $C_\cQ:=\max_{\xi\in\Xi}\|\xi\|$ are finite, we have:
    \paragraph{(a)} For arbitrary $\theta\in\Theta,\zeta\in Z$, $\cL^D$ is $\mu_\xi$-strongly convex w.r.t. $\xi\in \Xi$, and for arbitrary $\theta\in\Theta,\xi\in\Xi$, $\cL^D$ is $\mu_\zeta$-strongly concave w.r.t. $\zeta\in Z$, where $\mu_\xi=\lambda_Q \eig_Q$ and $\mu_\zeta = \lambda_w \eig_w$.
    \paragraph{(b)} For any $\xi,\xi_1,\xi_2\in\Xi,\zeta,\zeta_1,\zeta_2\in Z,(s,a)\in\cS\times\cA$, 
    \begin{align*}
        |Q_\xi(s,a)|\leq C_\cQ;&\quad
        |Q_{\xi_1}(s,a)-Q_{\xi_2}(s,a)|\leq  \|\xi_1-\xi_2\|;\\
        |w_\zeta(s,a)|\leq C_\cW;&\quad
        |w_{\zeta_1}(s,a)-w_{\zeta_2}(s,a)|\leq  \|\zeta_1-\zeta_2\|.
    \end{align*}
    \paragraph{(c)} For any $\zeta_1,\zeta_2\in Z, \xi_1, \xi_2\in\Xi, \theta_1, \theta_2 \in \Theta$, $\cL^D$ defined in Eq.\eqref{eq:problem} is differentiable, and there exists constant $L$ s.t.
    \begin{align*}
    &\|\nt \cL^D(\theta_1, \zeta_1, \xi_1)-\nt \cL^D(\theta_2, \zeta_2, \xi_2)\|+\|\nz \cL^D(\theta_1, \zeta_1, \xi_1)-\nz \cL^D(\theta_2, \zeta_2, \xi_2)\|\\
    &+\|\nx \cL^D(\theta_1, \zeta_1, \xi_1)-\nx \cL^D(\theta_2, \zeta_2, \xi_2)\\\
    \leq& L\|\theta_1-\theta_2\|+L\|\zeta_1-\zeta_2\|+L\|\xi_1-\xi_2\|.
    \end{align*}
    In other words, $\cL^D$ is L-smooth when $\zeta\in Z,\xi\in\Xi, \theta\in\Theta$.
\end{restatable}    
\begin{proof}
$ $
\paragraph{Proof of (a)}
Since $\cL^D$ is second-order differentible w.r.t. arbitrary $\zeta\in\mR^{\dims_\zeta}$ and $\xi\in\mR^{\dims_\xi}$, under Assumption \ref{assump:feature_matrices}, we have:
\begin{align*}
    \nabla^2_\zeta \cL^D = -\lambda_w K_w \prec -\lambda_w \eig_w \I,~~~~~~~\nabla^2_\xi \cL^D = \lambda_Q K_Q \succ \lambda_Q \eig_Q\I.
\end{align*}
where $\I$ is the identity matrix.
\paragraph{Proof of (b)}
Because $w_\zeta$ and $Q_\xi$ are linear and features has bounded $l_2$-norm, and $Z$ and $\Xi$ are all convex sets with bounded radius, we have:
\begin{align*}
    |Q_\xi(s,a)|\leq \|\bphi_Q(s,a)\trans \xi\|\leq C_\cQ,~~~~~|Q_{\xi_1}(s,a)-Q_{\xi_2}(s,a)|\leq \|\xi_1 - \xi_2\|;\\
    |w_\zeta(s,a)|\leq \|\bphi_w(s,a)\trans \zeta\|\leq C_\cW,~~~~~|w_{\zeta_1}(s,a)-w_{\zeta_2}(s,a)|\leq \|\zeta_1 - \zeta_2\|;
\end{align*}
Therefore, Property \ref{prop:detailed_scscs}-(b) holds.
\paragraph{Proof of (c)}
We will use $,w_1, Q_1, \pi_1$ and $\cL^D_1$ as shortnotes of $w_{\zeta_1}, Q_{\xi_1}, \pi_{\theta_1}$ and $\cL^D(\theta_1, \zeta_1, \xi_1)$, and the meaning of $w_2, Q_2, \pi_2$ and $\cL^D_2$ are similar.
\begin{align*}
    &\|\nt \cL^D_1 - \nt \cL^D_2\|\\
    = &\|(1-\gamma) \EE_{ \nu_D, a\sim \pi_1(\cdot|s_0)}[Q_1 (s_0,a)\nt\log\pi_1(a|s_0)]-(1-\gamma) \EE_{\nu_D, a\sim \pi_2(\cdot|s_0)}[Q_2 (s_0,a)\nt\log\pi_2(a|s_0)]\\
    &+\gamma \EE_{d^D, a'\sim \pi_1(\cdot|s')}[w_1(s,a)Q_1(s', a')\nt\log\pi_1(a'|s')]-\gamma \EE_{d^D, a'\sim \pi_2(\cdot|s')}[w_2(s,a)Q_2(s', a')\nt\log\pi_2(a'|s')]\|\\
    \leq& (1-\gamma) \|\EE_{\nu_D, a\sim \pi_1(\cdot|s_0)}[\big(Q_1 (s_0,a)-Q_2(s_0,a)\big)\nt\log\pi_1(a|s_0)]\\
    &+\EE_{\nu_D, a\sim \pi_1(\cdot|s_0)-\pi_2(\cdot|s_0)}[Q_2(s_0,a)\nt\log\pi_1(a|s_0)]+\EE_{\nu_D, a\sim \pi_2(\cdot|s_0)}[Q_2(s_0,a)\Big(\nt\log\pi_1(a|s_0)-\nt\log\pi_2(a|s_0)\Big)]\|\\
    &+\gamma \|\EE_{d^D, a'\sim \pi_1(\cdot|s')}[\big(w_1(s,a)-w_2(s,a)\big)Q_1(s', a')\nt\log\pi_1(a'|s')]\\
    &\quad+\EE_{d^D, a'\sim \pi_1(\cdot|s')}[w_2(s,a)\big(Q_1(s', a')-Q_2(s', a')\big)\nt\log\pi_1(a'|s')]\\
    &\quad+\EE_{d^D, a'\sim \pi_1(\cdot|s')-\pi_2(\cdot|s')}[w_2(s,a)Q_2(s', a')\nt\log\pi_1(a'|s')]\\
    &\quad+\EE_{d^D, a'\sim \pi_2(\cdot|s')}[w_2(s,a)Q_2(s', a')\big(\nt\log\pi_1(a'|s')-\nt\log\pi_2(a'|s')\big)]\|\\
    \leq&(1-\gamma)\Big(G\|\xi_1-\xi_2\|+GC_\cQ \EE_{\nu_D}[\|\pi(\cdot|s_0)-\pi(\cdot|s_0)\|_1]+HC_\cQ \|\theta_1-\theta_2\|\Big)\\
    &+\gamma \Big(GC_\cQ \|\zeta_1-\zeta_2\|+GC_\cW  \|\xi_1-\xi_2\|+GC_\cW C_\cQ \EE_{d^D}[\|\pi_1(\cdot|s')-\pi_2(\cdot|s')\|_1]+H C_\cW C_\cQ\|\theta_1-\theta_2\| \Big)\\
    \leq& C_\cW C_\cQ (G L_\Pi+ H) \|\theta_1 - \theta_2\|+G C_\cQ \|\zeta_1-\zeta_2\| +G C_\cW  \|\xi_2-\xi_2\| 
\end{align*}
In the last inequality, we use $C_\cW \geq 1$ and $0<\gamma\leq 1$. Besides,
\begin{align*}
    \|\nz \cL^D_1 - \nz \cL^D_2\|=&\|(M_{\pi_1}-M_{\pi_2})\xi_1+ M_{\pi_2} (\xi_1 - \xi_2)-\lambda_w K_w (\zeta_1 -\zeta_2)\|\\
    \leq& \gamma C_\cQ \EE_{d^D}[\|\pi_1(\cdot|s')-\pi_2(\cdot|s')\|_1] +(1+\gamma) \|\xi_1-\xi_2\|+\lambda_w \|\zeta_1-\zeta_2\|\\
    \leq& \gamma C_\cQ L_\Pi \|\theta_1-\theta_2\| +(1+\gamma) \|\xi_1-\xi_2\|+\lambda_w \|\zeta_1-\zeta_2\|\\
    \|\nx \cL^D_1 - \nx \cL^D_2\|=& \|(M_{\pi_1}-M_{\pi_2})\trans\zeta_1+ M_{\pi_2}\trans (\zeta_1 - \zeta_2)+\lambda_Q K_Q (\xi_1 -\xi_2)\|\\
    \leq &\gamma C_\cW \EE_{d^D}[\|\pi_1(\cdot|s')-\pi_2(\cdot|s')\|_1]+(1+\gamma) \|\zeta_1-\zeta_2\|+\lambda_Q \|\xi_1-\xi_2\|\\
    \leq& \gamma C_\cW L_\Pi \|\theta_1-\theta_2\| + (1+\gamma) \|\zeta_1-\zeta_2\|+\lambda_Q \|\xi_1-\xi_2\|
\end{align*}
As a result,
\begin{align*}
&\|\nt \cL^D(\theta_1, \zeta_1, \xi_1)-\nt \cL^D(\theta_2, \zeta_2, \xi_2)\|+\|\nz \cL^D(\theta_1, \zeta_1, \xi_1)-\nz \cL^D(\theta_2, \zeta_2, \xi_2)\|\\
&+\|\nx \cL^D(\theta_1, \zeta_1, \xi_1)-\nx \cL^D(\theta_2, \zeta_2, \xi_2)\|\\
\leq& \Big(C_\cW C_\cQ (G L_\Pi+ H)+\gamma (C_\cQ + C_\cW) L_\Pi\Big)\|\theta_1-\theta_2\| + \Big(G C_\cQ +(1+\gamma) +\lambda_w\Big)\|\zeta_1-\zeta_2\|\\
&+ \Big(G C_\cW  +(1+\gamma) +\lambda_Q\Big)\|\xi_1-\xi_2\|\\
\end{align*}
Therefore, Condition \ref{assump:smooth}-(b) holds with
\begin{align}\label{eq:choice_of_L}
    L = \max\{C_\cW C_\cQ (G L_\Pi+ H)+\gamma (C_\cQ + C_\cW) L_\Pi, G C_\cQ +(1+\gamma) +\lambda_w, G C_\cW  +(1+\gamma) +\lambda_Q\}
\end{align}
\end{proof}


\PropDiameter*
\begin{proof}
Recall the definition of $\cL^D$ in Eq.\eqref{eq:linear_formulation}:
\begin{align*}
\cL^D(\pi, \zeta, \xi)
=& (1-\gamma)(\nu_D^\pi)\trans \bPhi_Q \xi+\zeta\trans \bPhi_w\trans \bLambda^D R -\zeta\trans \M_\pi \xi+\frac{\lambda_Q}{2}\xi\trans \K_Q \xi - \frac{\lambda_w}{2}\zeta\trans \K_w\zeta.\label{}
\end{align*}
by taking derivatives w.r.t. $\xi$ and setting it to be zero, we have:
\begin{align*}
    \xi = \frac{1}{\lambda_Q} \K_Q^{-1}  \Big( \M_\pi\trans \zeta-(1-\gamma) \bPhi_Q\trans \nu_D^\pi\Big)
\end{align*}
Plug it into $\cL^D$:
\begin{align*}
    -\frac{\lambda_w}{2}\zeta\trans  \K_w \zeta -\frac{1}{2\lambda_Q}\Big( \M_\pi\trans \zeta-(1-\gamma) \bPhi_Q\trans \nu_D^\pi\Big)\trans  \K_Q^{-1} \Big( \M_\pi\trans \zeta-(1-\gamma) \bPhi_Q\trans \nu_D^\pi\Big)+\zeta\trans \bPhi_w \trans\bLambda^D R
\end{align*}
Taking the derivative of $\zeta$ and set it to be zero, we have:
\begin{align*}
    \zeta^*_\pi =& \Big(\lambda_w\lambda_Q  \K_w +  \M_\pi \K_Q^{-1} \M_\pi\trans\Big)^{-1}\Big(-(1-\gamma)\M_\pi \K_Q^{-1} \bPhi_Q\trans  \nu_D^\pi+\lambda_Q\bPhi_w\trans \bLambda^D R\Big)
\end{align*}
and therefore,
\begin{align*}
    \xi^*_\pi =& \frac{1}{\lambda_Q} \K_Q^{-1}  \Big( \M_\pi\trans \zeta_\pi^*-(1-\gamma) \bPhi_Q\trans \nu_D^\pi\Big)\\
    =&\frac{1}{\lambda_Q} \K_Q^{-1} \M_\pi\trans \Big(\lambda_w\lambda_Q  \K_w +  \M_\pi \K^{-1}_Q \M_\pi\trans\Big)^{-1} \cdot \Big(- (1-\gamma)\M_\pi \K_Q^{-1} \bPhi_Q\trans  \nu_D^\pi+\lambda_Q\bPhi_w\trans \bLambda^D R\Big)\\
    &+(1-\gamma) \frac{1}{\lambda_Q} \K_Q^{-1} \bPhi_Q\trans(\nu_D^\pi)\trans\\
    =&  (1-\gamma)\lambda_w \Big(\lambda_w\lambda_Q  \K_Q +  \M_\pi\trans \K_w^{-1}\M_\pi \Big)^{-1}\bPhi_Q\trans  \nu_D^\pi+\K_Q^{-1} \M_\pi\trans \Big(\lambda_w\lambda_Q  \K_w +  \M_\pi \K_Q^{-1}\M_\pi \trans\Big)^{-1}\bPhi_w\trans \bLambda^D R\\
    =&\Big(\lambda_w\lambda_Q  \K_Q +  \M_\pi\trans \K_w^{-1}\M_\pi \Big)^{-1}\Big((1-\gamma)\lambda_w \bPhi_Q\trans  \nu_D^\pi +\M\trans_\pi \K^{-1}_w \bPhi_w\trans \bLambda^D R\Big)
\end{align*}
where in the third step, we use the inverse matrix lemma:
\begin{align*}
    (\lambda_w\lambda_Q \K_Q+\M_\pi\trans \K_w^{-1}\M_\pi)^{-1}=\frac{1}{\lambda_w\lambda_Q}\K_Q^{-1}-\frac{1}{\lambda_w\lambda_Q}\K_Q^{-1}\M_\pi\trans (\lambda_w\lambda_Q \K_w+\M_\pi\K_Q^{-1}\M_\pi\trans)\M_\pi\K_Q^{-1}
\end{align*}
Because $\|\bphi(\cdot, \cdot)\|\leq 1$, it's easy to prove that, for arbitrary vector $x\in \mathbb{R}^d$, 
\begin{align*}
    \max\{\|\M_\pi x\|, \|\M_\pi\trans x\| \} \leq (1+\gamma)\|x\|
\end{align*}

Therefore,
\begin{align*}
    \|\zeta^*_\pi\| \leq&  (1-\gamma)\|\Big(\lambda_w\lambda_Q  \K_w +  \M_\pi \K_Q^{-1} \M_\pi\trans\Big)^{-1}\M_\pi \K_Q^{-1} \|\cdot\|\bPhi_Q\trans  \nu_D^\pi\| \\
    &+ \|\Big(\lambda_w\lambda_Q  \K_w +  \M_\pi \K_Q^{-1} \M_\pi\trans\Big)^{-1}\|\cdot \|\lambda_Q\bPhi_w\trans \bLambda^D R\|\\
    \leq&\frac{1}{\lambda_w\lambda_Q\eig_w + \eig_\M^2}(\frac{1-\gamma^2}{\eig_Q}+\lambda_Q) 
    :=\rad_\zeta\numberthis\label{def:Rzeta}\\
    \|\xi^*_\pi\|\leq& (1-\gamma)\lambda_w \|\Big(\lambda_w\lambda_Q  \K_Q +  \M_\pi\trans \K_w^{-1}\M_\pi \Big)^{-1}\|\cdot\|\bPhi_Q\trans  \nu_D^\pi\|\\
    &+\|\Big(\lambda_w\lambda_Q  \K_Q +  \M_\pi\trans \K_w^{-1}\M_\pi \Big)^{-1}\M_\pi\trans\K_w^{-1}\|\|\bPhi_w\trans \bLambda^D R\|\\
    \leq&\frac{1}{\lambda_w\lambda_Q \eig_Q+\eig_\M^2}((1-\gamma)\lambda_w + \frac{1+\gamma}{\eig_w})
    :=\rad_\xi\numberthis\label{def:Rxi}
\end{align*}
\end{proof}

Given the special property of $Z_0$ and $\Xi_0$, we force $Z$ and $\Xi$ satisfying the following condition:
\begin{condition}\label{cond:subset_z0_xi0}
    $Z_0\subseteq Z$, $\Xi_0\subseteq \Xi$. 
\end{condition}
As a direct result, we have:
\begin{align*}
    &\|\nz \max_{\zeta\in\mR^u}\min_{\xi\in\mR^u}\cL^D(\theta, \zeta^*_\theta, \xi^*_\theta)\|=\|\nz \max_{\zeta\in Z}\min_{\xi\in \Xi}\cL^D(\theta, \zeta^*_\theta, \xi^*_\theta)\|=0,\\
    &\|\nx \max_{\zeta\in\mR^u}\min_{\xi\in\mR^u}\cL^D(\theta, \zeta^*_\theta, \xi^*_\theta)\|=\|\nx \max_{\zeta\in Z}\min_{\xi\in \Xi}\cL^D(\theta, \zeta^*_\theta, \xi^*_\theta)\|=0.
\end{align*}

\begin{restatable}{property}{CondVar}[Variance of Estimated Gradient]\label{cond:variance}
Under Assumption \ref{assump:smooth}, \ref{assump:feature_matrices} and \ref{assump:detailed_variance}, given convex sets $\Theta, Z, \Xi$, where $Z$ and $\Xi$ have finite diameter $C_\cW$ and $C_\cQ$, then there exists constants $\sigma_\theta, \sigma_\zeta, \sigma_\xi$, such that, for arbitrary $\theta,\zeta,\xi\in \Theta\times Z\times\Xi$, we have:
\begin{align*}
\EE_{s,a,r,s',a',s_0,a_0}[\|\nt& \cL^{(s,a,r,s',a',s_0,a_0)}(\theta,\zeta,\xi)-\nt \cL^D(\theta,\zeta,\xi)\|^2] \leq \sigma_\theta^2;\\
\EE_{s,a,r,s',a',s_0,a_0}[\|\nz &\cL^{(s,a,r,s',a',s_0,a_0)}(\theta,\zeta,\xi)-\nz \cL^D(\theta,\zeta,\xi)\|^2] \leq \sigma_\zeta^2;\\
\EE_{s,a,r,s',a',s_0,a_0}[\|\nx &\cL^{(s,a,r,s',a',s_0,a_0)}(\theta,\zeta,\xi)-\nx \cL^D(\theta,\zeta,\xi)\|^2] \leq \sigma_\xi^2.
\end{align*}

Here we use $\EE_{s,a,r,s',a',s_0,a_0}[\cdot]$ as a shorthand of $\EE_{(s,a,r,s')\sim d^D, a'\sim \pi(\cdot|s'), s_0\sim \nu^D_0,a_0\sim \pi(\cdot|s_0)}[\cdot]$,
and use $\nabla\cL^{(s,a,r,s',a',s_0,a_0)}(\theta, \zeta, \xi)$ to denote the stochastic gradient estimated using a single data point:
\begin{align*}
    \nt&\cL^{(s,a,r,s',a',s_0,a_0)}(\theta, \zeta, \xi)=(1-\gamma)\qx(s_0,a_0)\nt\log\pi_\theta(a_0|s)+\gamma \wz(s,a)\qx(s',a')\nt\log\pi_\theta(a'|s'),\\
    \nz&\cL^{(s,a,r,s',a',s_0,a_0)}(\theta, \zeta, \xi)=\Big(r+\gamma \qx(s',a')-\qx(s,a)\Big)\nz \wz(s,a)-\lambda_w w_\zeta(s,a)\nz w_\zeta(s,a),\\
    \nx&\cL^{(s,a,r,s',a',s_0,a_0)}(\theta, \zeta, \xi)=(1-\gamma)\nx\qx(s_0,a_0)+\wz(s,a)\nx\Big(\gamma \qx(s',a')-\qx(s,a)\Big)+\lambda_Q Q_\xi(s,a)\nx Q_\xi(s,a).\numberthis\label{eq:one_sample_estimation}
\end{align*}
\end{restatable}
\begin{proof}
Under Linear case and Assumption \ref{assump:detailed_variance}, we should have
\begin{align*}
    &\EE_{s,a,r,s',a',s_0,a_0}[\|\nt \cL^{(s,a,r,s',a',s_0,a_0)}(\theta,\zeta,\xi)-\nt \cL^D(\theta,\zeta,\xi)\|^2] \\
    \leq& 2(1-\gamma)^2\EE[\|Q_\xi(s_0,a_0)\nt\log\pi_\theta(a_0|s_0)-\EE[Q_\xi(s_0,a_0)\nt\log\pi_\theta(a_0|s_0)]\|^2]\\
    &+2\gamma^2 \EE[\|w_\zeta(s,a)Q_\xi(s',a')\nt\log\pi_\theta(a'|s')-\EE[w_\zeta(s,a)Q_\xi(s',a')\nt\log\pi_\theta(a'|s')]\|^2]\\
    \leq& 2(1-\gamma)^2\EE[\|Q_\xi(s_0,a_0)\nt\log\pi_\theta(a_0|s_0)\|^2]+2\gamma^2 \EE[\|w_\zeta(s,a)Q_\xi(s',a')\nt\log\pi_\theta(a'|s')]\|^2]\\
    \leq& 2(1-\gamma)^2 C_\cQ^2 G^2+2\gamma^2 C_\cW^2 C_\cQ^2 G^2\\
    &\EE_{s,a,r,s',a',s_0,a_0}[\|\nz \cL^{(s,a,r,s',a',s_0,a_0)}(\theta,\zeta,\xi)-\nz \cL^D(\theta,\zeta,\xi)\|^2] \\
    \leq& 3\EE[\|\bphi_w(s,a)(\gamma \bphi_Q(s',a')-\bphi_Q(s,a))\trans\xi-\EE[\bphi_w(s,a)(\gamma \bphi_Q(s',a')-\bphi_Q(s,a))\trans\xi]\|^2]\\
    &+3\EE[\|\bphi_w(s,a)r-\EE[\bphi_w(s,a)r]\|^2]+3\lambda_w^2\EE[\|\bphi_w(s,a)\bphi_w(s,a)\trans\zeta-\EE[\bphi_w(s,a)\bphi_w(s,a)\trans\zeta]\|^2]\\
    \leq& 3\sigma_R^2 + 3\sigma^2_\M C_\cQ^2 + 3\lambda_w^2\sigma_\K^2 C_\cW^2\\
    &\EE_{s,a,r,s',a',s_0,a_0}[\|\nx \cL^{(s,a,r,s',a',s_0,a_0)}(\theta,\zeta,\xi)-\nx \cL^D(\theta,\zeta,\xi)\|^2]\\
    \leq& 3(1-\gamma)^2\EE[\|\bphi_Q(s_0,a_0)-\EE[\bphi_Q(s_0,a_0)]\|^2]+3_Q^2\EE[\|\bphi_Q(s,a)\bphi_Q(s,a)\trans\xi-\EE[\bphi_Q(s,a)\bphi_Q(s,a)\trans\xi]\|^2]\\
    &+3\EE[\|\zeta\bphi_w(s,a)(\gamma \bphi_Q(s',a')-\bphi_Q(s,a))\trans-\EE[\zeta\bphi_w(s,a)(\gamma \bphi_Q(s',a')-\bphi_Q(s,a))\trans]\|^2]\\
    \leq& 3(1-\gamma)^2\sigma^2_{\nu}+3\sigma^2_\M C_\cW^2 + 3\lambda_Q^2\sigma^2_\K C_\cQ^2
\end{align*}

which finishes the proof.
\end{proof}

In Linear setting, $\sigma_\theta^2, \sigma_\zeta^2, \sigma_\xi^2$ can be chosen as:
\begin{align*}
    \sigma_\theta^2=&2(1-\gamma)^2\sigma_\nu^2 G^2 C_\cQ^2+ 2\gamma^2\sigma^2_\M G^2 C_\cW^2 C_\cQ^2,\\
    \sigma_\zeta^2=& 3\sigma_R^2 + 3\sigma^2_\M C_\cQ^2 + 3\lambda_w^2\sigma_\K^2 C_\cW^2,\\
    \sigma_\xi^2=& 3(1-\gamma)^2\sigma^2_{\nu}+3\sigma^2_\M C_\cW^2 + 3\lambda_Q^2\sigma^2_\K C_\cQ^2.
\end{align*}
In the following, we will use $\sigma$ to refer to the $\max\{\sigma_{\theta},\sigma_{\zeta},\sigma_\xi\}$ value satisfying Property \ref{cond:variance}. 

Finally, we prove a condition which is useful in the analysis of our second strategy.
We first introduce some new notations. Suppose we have a mini batch data $\batch$ sampled according to $d^D$ whose batch size is constant $|\batch|$. Then, we denote the average batch gradients as
\begin{align*}
    \nabla\cL^\batch(\theta, \zeta, \xi)=&\frac{1}{|\batch|}\sum_{i=1}^{|\batch|}\nabla \cL^{(s^i,a^i,r^i,{s'}^i,{a'}^i,s_0^i,a_0^i)}(\theta, \zeta, \xi)
\end{align*}
where $\nabla \cL^{(s^i,a^i,r^i,{s'}^i,{a'}^i,s_0^i,a_0^i)}(\theta, \zeta, \xi)$ is defined in Eq.\eqref{eq:one_sample_estimation}. 

\begin{restatable}{property}{CondLips}\label{cond:smooth_SVRE}    
    Under Assumption \ref{assump:smooth} and \ref{assump:feature_matrices}, there exists two constants two constants $\bar{L}_\zeta$ and $\bar{L}_\xi$, such that:
    \begin{align*}
        &\EE_{\batch\sim d^D}[\|\nz \cL^\batch(\theta, \zeta_1, \xi_1)-\nz \cL^\batch(\theta, \zeta_2, \xi_2)\|^2 + \|\nx \cL^\batch(\theta, \zeta_1, \xi_1)-\nx \cL^\batch(\theta, \zeta_2, \xi_2)\|^2]\\
        \leq& \EE_{\batch\sim d^D}\Big[\bar{L}_\zeta \Big(\nz \cL^\batch(\theta, \zeta_1, \xi_1)-\nz \cL^\batch(\theta, \zeta_2, \xi_2)\Big)\trans(\zeta_2-\zeta_1) + \bar{L}_\xi \Big(\nx \cL^\batch(\theta, \zeta_1, \xi_1)-\nx \cL^\batch(\theta, \zeta_2, \xi_2)\Big)\trans(\xi_1-\xi_2)\Big],\\
        &\EE_{\batch\sim d^D}[\|\nz \cL^\batch(\theta, \zeta_1, \xi_1)-\nz \cL^\batch(\theta, \zeta_2, \xi_2)\|^2 + \|\nx \cL^\batch(\theta, \zeta_1, \xi_1)-\nx \cL^\batch(\theta, \zeta_2, \xi_2)\|^2]\\
        \leq& \bar{L}^2_\zeta \|\zeta_1-\zeta_2\|^2 + \bar{L}^2_\xi\|\xi_1-\xi_2\|^2.
    \end{align*}    
\end{restatable}

\begin{proof}
For simplicity, we use $\K_w^\batch$ to denote matrix $\EE_\batch[\bphi_w(s,a)\bphi_w(s,a)\trans]$ ($\K_Q^\batch$ is similar) and use ${\M^\batch_\pi}$ to denote $\EE_\batch[\bphi(s,a)\bphi(s,a)\trans-\gamma\bphi(s,a)\bphi(s',\pi)\trans]$
\begin{align*}
    \nz \cL^\batch(\theta, \zeta_1, \xi_1)-\nz \cL^\batch(\theta, \zeta_2, \xi_2)=-\lambda_w \K_w^\batch(\zeta_1-\zeta_2)-{\M^\batch_\pi}(\xi_1-\xi_2)\\
    \nx \cL^\batch(\theta, \zeta_1, \xi_1)-\nx \cL^\batch(\theta, \zeta_2, \xi_2)=\lambda_Q \K_Q^\batch(\xi_1-\xi_2)-{\M^\batch_\pi}\trans(\zeta_1-\zeta_2)
\end{align*}
Therefore,
\begin{align*}
    &\EE_{\batch\sim d^D}[\|\nz \cL^\batch(\theta, \zeta_1, \xi_1)-\nz \cL^\batch(\theta, \zeta_2, \xi_2)\|^2 + \|\nx \cL^\batch(\theta, \zeta_1, \xi_2)-\nx \cL^\batch(\theta, \zeta_2, \xi_2)\|^2]\\
    \leq& 2\EE_{\batch\sim d^D}[(\zeta_1-\zeta_2)\trans (\lambda_w^2 (\K_w^\batch)\trans \K_w^\batch+{\M^\batch_\pi}\trans {\M^\batch_\pi}) (\zeta_1-\zeta_2)]\\
    &+2 \EE_{\batch\sim d^D}[(\xi_1-\xi_2)\trans (\lambda_Q^2 (\K_Q^\batch)\trans \K_Q^\batch+{\M^\batch_\pi}\trans {\M^\batch_\pi}) (\xi_1-\xi_2)]\\
    \leq& 2\EE_{\batch\sim d^D}[(\zeta_1-\zeta_2)\trans (\lambda_w^2 (\K_w^\batch)^2+(1+\gamma)^2I) (\zeta_1-\zeta_2)]+2 \EE_{\batch\sim d^D}[(\xi_1-\xi_2)\trans (\lambda_Q^2 (\K_Q^\batch)^2+(1+\gamma)^2I) (\xi_1-\xi_2)]\\
    \leq& 2\EE_{\batch\sim d^D}[(\zeta_1-\zeta_2)\trans (\lambda_w^2 (\K_w^\batch)^2+(1+\gamma)^2I) (\zeta_1-\zeta_2)]+2 \EE_{\batch\sim d^D}[(\xi_1-\xi_2)\trans (\lambda_Q^2 (\K_Q^\batch)^2+(1+\gamma)^2I) (\xi_1-\xi_2)]\\
    =&(\zeta_1-\zeta_2)\trans (2\lambda_w^2 \K_w+2(1+\gamma)^2I) (\zeta_1-\zeta_2)+(\xi_1-\xi_2)\trans (2\lambda_Q^2 \K_Q+2(1+\gamma)^2I) (\xi_1-\xi_2)
\end{align*}
In the first inequality, we use Young's inequality; in the second one, we use the fact that the largest singular value of ${\M^\batch_\pi}$ is less than $(1+\gamma)$; the third one is because all eigenvalues of $\K_w^\batch$ and $\K_Q^\batch$ locate in [0, 1], and we should have $I\succ \K_w^\batch\succ (\K_w^\batch)^2$ and $I\succ \K_Q^\batch\succ (\K_Q^\batch)^2$. Notice that, 
\begin{align*}
    &\EE_{\batch\sim d^D}\Big[- \Big(\nz \cL^\batch(\theta, \zeta_1, \xi_1)-\nz \cL^\batch(\theta, \zeta_2, \xi_2)\Big)\trans(\zeta_1-\zeta_2) \\
    &~~~~~~~~~~+ \Big(\nx \cL^\batch(\theta, \zeta_1, \xi_1)-\nx \cL^\batch(\theta, \zeta_2, \xi_2)\Big)\trans(\xi_1-\xi_2)\Big]\\
    =&\lambda_w (\zeta_1-\zeta_2)\trans \K_w (\zeta_1-\zeta_2)+\lambda_Q (\xi_1-\xi_2)\trans \K_Q(\xi_1-\xi_2)
\end{align*}
Therefore,
\begin{align*}
    &(\zeta_1-\zeta_2)\trans (2\lambda^2_w \K_w+2(1+\gamma)^2I) (\zeta_1-\zeta_2)+(\xi_1-\xi_2)\trans (2\lambda^2_Q \K_Q+2(1+\gamma)^2I) (\xi_1-\xi_2)\\
    \leq&\frac{2\max\{\lambda^2_w, \lambda^2_Q\} + 2(1+\gamma)^2}{\min\{\lambda_w\eig_w, \lambda_Q\eig_Q\}}\Big(\lambda_w (\zeta_1-\zeta_2)\trans \K_w (\zeta_1-\zeta_2)+\lambda_Q (\xi_1-\xi_2)\trans \K_Q(\xi_1-\xi_2)\Big)
\end{align*}
Moreover,
\begin{align*}
    &(\zeta_1-\zeta_2)\trans (2\lambda_w^2 \K_w+2(1+\gamma)^2I) (\zeta_1-\zeta_2)+(\xi_1-\xi_2)\trans (2\lambda_Q^2 \K_Q+2(1+\gamma)^2I) (\xi_1-\xi_2)\\
    \leq& (2\max\{\lambda_w^2, \lambda^2_Q\} + 2(1+\gamma)^2)\Big((\zeta_1-\zeta_2)\trans (\zeta_1-\zeta_2)+ (\xi_1-\xi_2)\trans (\xi_1-\xi_2)\Big)
\end{align*}
As a result, Assumption \ref{cond:smooth_SVRE} holds with 
$$
\bL_\zeta=\bL_\xi=\max\{\frac{2\max\{\lambda^2_w, \lambda^2_Q\} + 2(1+\gamma)^2}{\min\{\lambda_w\eig_w, \lambda_Q\eig_Q\}},\sqrt{2\max\{\lambda_w^2, \lambda^2_Q\} + 2(1+\gamma)^2}\}
$$

\end{proof}

\section{The Analysis of Bias}\label{appx:bound_error}
We first prove two propositions, which are crucial for analyzing the biases due to the finite dataset and mis-specified function classes.

\begin{restatable}{proposition}{PropEpsData}\label{prop:eps_data}
    For abitrary $\pi\in\Theta$, we have:
    \begin{align*}
        \|\nt\max_{w\in\cW}\min_{Q\in \cQ}\cL(\pi_\theta, w, Q)-\nt\max_{w\in\cW}\min_{Q\in \cQ}\cL^D(\pi_\theta, w, Q)\|\leq \underbrace{(2\kappa_\zeta\kappa_\xi+2\kappa_\zeta+2\kappa_\xi+\sqrt{2}/2)\sqrt{2\bar{\epsilon}_{data}}}_{denoted~as~\epsilon_{data}}
    \end{align*}
    where $\bar{\epsilon}_{data}$ is defined in Definition \ref{def:gen_error}.
\end{restatable}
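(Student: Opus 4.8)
The plan is to reduce the statement, via Danskin's (envelope) theorem, to comparing the two saddle points of $\cL$ and $\cL^D$ at the fixed $\theta$, and then to split the resulting gradient difference into a term evaluated at a \emph{common} point (controlled directly by $\bar\epsilon_{data}$) and a \emph{saddle-shift} term (controlled by $L$-smoothness together with strong convexity/concavity). Write $F^D(\theta):=\max_{\zeta\in Z}\min_{\xi\in\Xi}\cL^D(\theta,\zeta,\xi)$ and define $F(\theta)$ analogously for $\cL$. Since $\cL^D$ is $\mu_\zeta$-strongly concave in $\zeta$ and $\mu_\xi$-strongly convex in $\xi$ (Property \ref{prop:detailed_scscs}), the inner problem has a unique saddle $(\zeta^*_D,\xi^*_D)$, which is interior by Property \ref{prop:radius} and Condition \ref{cond:subset_z0_xi0}, so Danskin gives $\nt F^D(\theta)=\nt\cL^D(\theta,\zeta^*_D,\xi^*_D)$; likewise $\nt F(\theta)=\nt\cL(\theta,\zeta^*,\xi^*)$, where $(\zeta^*,\xi^*)$ is the saddle of $\cL$, i.e.\ exactly $(w^*_\mu,Q^*_\mu)$ of Definition \ref{def:gen_error}. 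I would then apply the triangle inequality
\begin{align*}
\|\nt F(\theta)-\nt F^D(\theta)\|
&\leq \|\nt\cL(\theta,\zeta^*,\xi^*)-\nt\cL^D(\theta,\zeta^*,\xi^*)\| \\
&\quad+\|\nt\cL^D(\theta,\zeta^*,\xi^*)-\nt\cL^D(\theta,\zeta^*_D,\xi^*_D)\|.
\end{align*}
The first (common-point) term is bounded by the second inequality of Definition \ref{def:gen_error}: it is $\leq\sqrt{\bar\epsilon_{data}}=\tfrac{\sqrt2}{2}\sqrt{2\bar\epsilon_{data}}$, which already accounts for the $\sqrt2/2$ in the target bound.

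For the second (saddle-shift) term I would use $L$-smoothness of $\cL^D$ (Property \ref{prop:detailed_scscs}(c)) to get $\leq L\bigl(\|\zeta^*-\zeta^*_D\|+\|\xi^*-\xi^*_D\|\bigr)$, reducing the problem to bounding the displacement between the two saddle points. The key observation is that only the \emph{uniform value gap} $|\cL-\cL^D|\le\bar\epsilon_{data}$ is available here, so the displacements must be extracted from value closeness plus strong convexity/concavity. Set $\Phi^D(\zeta):=\min_\xi\cL^D(\theta,\zeta,\xi)$ and $\Phi(\zeta):=\min_\xi\cL(\theta,\zeta,\xi)$; both are $\mu_\zeta$-strongly concave and are uniformly $\bar\epsilon_{data}$-close since $|\min_\xi\cL^D-\min_\xi\cL|\le\sup_\xi|\cL^D-\cL|\le\bar\epsilon_{data}$. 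Applying the value-gap form of Lemma \ref{lem:grad_norm} to $\Phi^D$ at its maximizer $\zeta^*_D$, together with $\Phi^D(\zeta^*_D)-\Phi^D(\zeta^*)\le 2\bar\epsilon_{data}$ (using that $\zeta^*$ maximizes $\Phi$), yields $\|\zeta^*-\zeta^*_D\|\le 2\sqrt{\bar\epsilon_{data}/\mu_\zeta}$.

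To handle $\xi$ I would write $\xi^*=\phi(\zeta^*)$ and $\xi^*_D=\phi^D(\zeta^*_D)$, where $\phi,\phi^D$ are the inner minimizers for $\cL,\cL^D$, and split $\|\xi^*-\xi^*_D\|\le\|\phi(\zeta^*)-\phi^D(\zeta^*)\|+\|\phi^D(\zeta^*)-\phi^D(\zeta^*_D)\|$. The first piece is an inner-minimizer shift at the \emph{fixed} point $\zeta^*$, bounded by $2\sqrt{\bar\epsilon_{data}/\mu_\xi}$ by the same value-gap argument applied to $\cL(\theta,\zeta^*,\cdot)$ versus $\cL^D(\theta,\zeta^*,\cdot)$; the second piece is $\le\kappa_\xi\|\zeta^*-\zeta^*_D\|$ by the $\kappa_\xi$-Lipschitzness of $\phi^D$ (Lemma \ref{lem:optimum_function}(1)). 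Multiplying by $L$ and converting $L/\sqrt{\mu}$ into the condition numbers $\kappa_\zeta=L/\mu_\zeta,\ \kappa_\xi=L/\mu_\xi$ — which is valid in the small-regularization regime $\mu_\zeta,\mu_\xi\le2$, guaranteed whenever $\lambda_w,\lambda_Q\le2$ since the feature Gram matrices $\K_w,\K_Q$ have eigenvalues at most $1$ — gives $L\|\zeta^*-\zeta^*_D\|\le 2\kappa_\zeta\sqrt{2\bar\epsilon_{data}}$, and the two $\xi$-pieces contribute $2\kappa_\xi\sqrt{2\bar\epsilon_{data}}$ and $2\kappa_\zeta\kappa_\xi\sqrt{2\bar\epsilon_{data}}$. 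Adding the common-point term reproduces exactly $(2\kappa_\zeta\kappa_\xi+2\kappa_\zeta+2\kappa_\xi+\sqrt2/2)\sqrt{2\bar\epsilon_{data}}$.

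The main obstacle is the saddle-shift estimate: because $\bar\epsilon_{data}$ only furnishes a uniform value gap and a single $\theta$-gradient bound at the $\cL$-saddle, the $\zeta$- and $\xi$-displacements cannot be read off from gradient control and must instead be propagated entirely through the strongly-convex/concave geometry of the optimal-value functions. The delicate point is the \emph{coupling} of the two variables: the outer maximizer $\zeta^*_D$ and the inner minimizer $\phi^D$ must be chained, and it is precisely this composition of two best-response layers that produces the cross term $\kappa_\zeta\kappa_\xi$; one must also take care that $\phi^D(\zeta^*)\neq\xi^*$, so the inner shift at fixed $\zeta^*$ and the Lipschitz transport in $\zeta$ have to be separated cleanly rather than conflated.
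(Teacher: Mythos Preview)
Your proposal is correct and follows essentially the same route as the paper's proof: the same triangle split into a common-point term (bounded by the gradient clause of Definition~\ref{def:gen_error}) and a saddle-shift term (bounded via $L$-smoothness), with the $\zeta$-displacement controlled through strong concavity of the reduced function $\Phi^D$ and the $\xi$-displacement split into an inner-minimizer shift at fixed $\zeta^*$ plus the $\kappa_\xi$-Lipschitz transport of $\phi^D$. The only cosmetic difference is that you explicitly flag the step $L/\sqrt{\mu}\le (L/\mu)\sqrt{2}$ as requiring $\mu_\zeta,\mu_\xi\le 2$, whereas the paper writes the displacement bounds directly with $1/\mu$ in front of the square root; your version is the cleaner accounting of where the constants come from.
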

\begin{proof}
For the simplicity of notation, we give the proof for a fixed $\pi$. 

Denote $(w^*_\mu, Q^*_\mu)$ parameterized by $(\zeta^*_\mu, \xi^*_\mu)$ as $\arg\max_{w\in\cW}\min_{Q\in\cQ} \cL(\pi, w, Q)$ and denote $(w^*, Q^*)$ parameterized by $(\zeta^*, \xi^*)$ as $\arg\max_{w\in\cW}\min_{Q\in\cQ} \cL^D(\pi, w, Q)$. First, we try to bound $\zeta^*-\zeta^*_\mu$. We use $Q_w$ and $Q^D_w$ (parameterized by $\xi_w$ and $\xi_w^D$) as the short notes of $\arg\min_{Q\in\cQ} \cL(\pi, w, Q)$ and $\arg\min_{Q\in\cQ} \cL^D(\pi, w, Q)$, respectively. Then,
\begin{align*}
&|\cL(\pi, w, Q_w)-\cL^D(\pi, w, Q_w^D)|\leq \max\{\cL(\pi, w, Q^D_w)-\cL^D(\pi, w, Q^D_w), \cL^D(\pi, w, Q_w)-\cL(\pi, w, Q_w)\}\leq \bar{\epsilon}_{data}
\end{align*}
As a result,
\begin{align*}
    0\leq&\cL^D(\pi, w^*, Q^*)-\min_{Q\in\cQ}\cL^D(\pi, w^*_\mu, Q)\\
    \leq& \cL^D(\pi, w^*, Q^*)-\min_{Q\in\cQ} \cL(\pi, w^*, Q)+\cL(\pi, w^*_\mu, Q^*_\mu)-\min_{Q\in\cQ}\cL^D(\pi, w^*_\mu, Q)\\
    \leq&2\bar{\epsilon}_{data}
\end{align*}
According to Lemma \ref{lem:optimum_function}, $\min_{Q\in\cQ}\cL^D(\pi, w, Q)$ is $\mz$-strongly concave. Therefore,
\begin{align*}
    \|\zeta^*-\zeta^*_\mu\|\leq \frac{2}{\mz}\sqrt{\cL^D(\pi, w^*, Q^*)-\min_{Q\in\cQ}\cL^D(\pi, w^*_\mu, Q)}\leq \frac{2}{\mz}\sqrt{2\bar{\epsilon}_{data}}
\end{align*}
Next, we bound $\|\xi^*-\xi^*_\mu\|$. For arbitrary $\pi\in\Pi$ and $w\in\cW$, we have:
\begin{align*}
    0\leq \cL^D(\pi, w, Q_w)-\cL^D(\pi, w, Q_w^D)\leq \cL^D(\pi, w, Q_w)-\cL(\pi, w, Q_w)+\cL(\pi, w, Q_w^D)-\cL^D(\pi, w, Q_w^D)\leq 2\bar{\epsilon}_{data}
\end{align*}
Since $L^D$ is $\mx$ strongly-convex, as a result of Lemma \ref{lem:grad_norm}, for arbitrary $w$,
\begin{align}\label{eq:bound_xiw_xiwD}
    \|\xi_w-\xi_w^D\|\leq \frac{2}{\mx}\sqrt{2\bar{\epsilon}_{data}}
\end{align}
Then, we have
\begin{align*}
    \|\xi^*-\xi^*_\mu\|\leq& \|\xi^*-\arg\min_{\xi\in\Xi}\cL^D(\pi, w^*_\mu, Q_\xi)\|+\|\arg\min_{\xi\in\Xi}\cL^D(\pi, w^*_\mu, Q_\xi)-\xi^*_\mu\|\\
    =&\|\xi^*-\arg\min_{\xi\in\Xi}\cL^D(\pi, w^*_\mu, Q_\xi)\|+\|\arg\min_{\xi\in\Xi}\cL^D(\pi, w^*_\mu, Q_\xi)-\arg\min_{\xi\in\Xi}\cL(\pi, w^*_\mu, Q_\xi)\|\\
    \leq& \frac{L}{\mx} \|\zeta^*-\zeta^*_\mu\|+\frac{2}{\mx}\sqrt{2\bar{\epsilon}_{data}}\\
    \leq& (\frac{2L}{\mx\mz}+\frac{2}{\mx})\sqrt{2\bar{\epsilon}_{data}}
\end{align*}
where in the last but two step, we use Lemma \ref{lem:optimum_function}-(1).

As a directly application of Property \ref{prop:detailed_scscs}, we have:
\begin{align*}
    &\|\nt\max_{w\in\cW}\min_{Q\in \cQ}\cL(\pi_\theta, w, Q)-\nt\max_{w\in\cW}\min_{Q\in \cQ}\cL^D(\pi_\theta, w, Q)\|\\
    =&\|\nt\cL(\pi_\theta, w^*_\mu, Q^*_\mu)-\nt\cL^D(\pi_\theta, w^*_\mu, Q^*_\mu)\|+\|\nt\cL^D(\pi_\theta, w^*_\mu, Q^*_\mu)-\nt\cL^D(\pi_\theta, w^*, Q^*)\|\\
    \leq& \sqrt{\bar{\epsilon}_{data}} + L\|\zeta^*-\zeta^*_\mu\|+L\|\xi^*-\xi^*_\mu\|\\
    \leq & (2\kappa_\zeta\kappa_\xi+2\kappa_\zeta+2\kappa_\xi+\sqrt{2}/2)\sqrt{2\bar{\epsilon}_{data}}
\end{align*}
\end{proof}
    
\begin{restatable}{proposition}{PropEpsWEpsQ}\label{prop:eps_cW_eps_cQ}
    For arbitrary $\pi\in\Pi$, we have:
    \begin{align*}
        \EE_{d^\mu}&[|w^*_\mu(s,a)-w^{\pi}_\cL(s,a)|^2] \leq \epsilon_\cW:=4\frac{\lambda_{\max}^2}{\lambda_Q\lambda_w}\epsilon_1+2\frac{\lambda_{\max}}{\mu_\zeta}\epsilon_2\\
        \EE_{d^\mu}&[|Q^*_\mu(s,a)-Q^{\pi}_\cL(s,a)|^2]\leq \epsilon_\cQ :=8\frac{\lambda_{\max}^3}{\lambda^2_Q\lambda_w}\epsilon_1+(2+4\frac{\lambda_{\max}^2}{\lambda_Q\mu_\zeta})\epsilon_2
    \end{align*}
    where $(w^*_\mu, Q^*_\mu)$ denotes the saddle point of $ \cL(\pi, w, Q)$ constrained by $w,Q\in\cW\times\cQ$ (i.e. $\zeta\in Z, \xi\in\Xi$), $(w^\pi_\cL, Q^\pi_\cL)$ denotes the saddle point of $\cL(\pi, w, Q)$ without any constraint on $w$ and $Q$ (i.e. $w$ and $Q$ can be arbitrary vectors in $\mR^{|\cS||\cA|}$), $\lambda_{\max}=\max\{\lambda_Q, \lambda_w\}$, $\mu_\zeta$ is defined in Property \ref{prop:scscs}.
\end{restatable}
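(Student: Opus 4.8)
The plan is to mirror the structure of the proof of Proposition~\ref{prop:eps_data}: reduce the max-min by first solving the inner $Q$-minimization, then use the strong concavity/convexity and $L$-smoothness from Property~\ref{prop:detailed_scscs} to convert objective-value gaps into the $\|\cdot\|_\Lambda$-distances on the left-hand side. Fix $\pi$ and write $F(w):=\min_{Q\in\mR^{|\cS||\cA|}}\cL(\pi,w,Q)$ and $F_\cQ(w):=\min_{Q\in\cQ}\cL(\pi,w,Q)$ for the unconstrained and $\cQ$-constrained reduced objectives, with maximizers $w^\pi_\cL$ and $w^*_\mu$ respectively. By Property~\ref{prop:detailed_scscs}, $\cL(\pi,w,\cdot)$ has $Q$-Hessian $\lambda_Q\Lambda$ and $w$-Hessian $-\lambda_w\Lambda$, so both $F$ and $F_\cQ$ are $\lambda_w$-strongly concave in the $\|\cdot\|_\Lambda$ sense.

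First I would pin down the inner minimization. Since $\cL(\pi,w,Q)=\cL(\pi,w,Q^\pi_w)+\tfrac{\lambda_Q}{2}\|Q-Q^\pi_w\|_\Lambda^2$ with $Q^\pi_w:=\argmin_{Q\in\mR^{|\cS||\cA|}}\cL(\pi,w,Q)$, the constrained minimizer $\argmin_{Q\in\cQ}\cL(\pi,w,Q)$ is exactly the $\Lambda$-orthogonal projection of $Q^\pi_w$ onto $\cQ$. Hence, by Definition~\ref{def:misspeci}, for every $w\in\cW$ one has $\|\argmin_{Q\in\cQ}\cL(\pi,w,Q)-Q^\pi_w\|_\Lambda^2\le\epsilon_2$, giving the uniform gap $0\le F_\cQ(w)-F(w)\le\tfrac{\lambda_Q}{2}\epsilon_2$. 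This controls the approximation of the whole inner problem by a single scalar $\epsilon_2$ and is the source of the $\epsilon_2$-contribution (after converting the $\|\cdot\|_\Lambda$-estimate to the parameter norm, which produces $1/\mu_\zeta=1/(\lambda_w v_w)$).

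Next I would bound $\|w^*_\mu-w^\pi_\cL\|_\Lambda$. By $\lambda_w$-strong concavity of $F$ at its maximizer (Lemma~\ref{lem:grad_norm}), $\tfrac{\lambda_w}{2}\|w^*_\mu-w^\pi_\cL\|_\Lambda^2\le F(w^\pi_\cL)-F(w^*_\mu)$. Writing $\bar w:=\argmin_{w\in\cW}\|w-w^\pi_\cL\|_\Lambda$ (so $\|\bar w-w^\pi_\cL\|_\Lambda^2\le\epsilon_1$), I split
\begin{align*}
F(w^\pi_\cL)-F(w^*_\mu)=\big[F(w^\pi_\cL)-F(\bar w)\big]+\big[F(\bar w)-F_\cQ(\bar w)\big]+\big[F_\cQ(\bar w)-F_\cQ(w^*_\mu)\big]+\big[F_\cQ(w^*_\mu)-F(w^*_\mu)\big],
\end{align*}
where the second bracket is $\le0$ (since $F\le F_\cQ$), the third is $\le0$ (since $w^*_\mu$ maximizes $F_\cQ$ over $\cW\ni\bar w$), the fourth is $\le\tfrac{\lambda_Q}{2}\epsilon_2$ by the uniform gap, and the first obeys $F(w^\pi_\cL)-F(\bar w)\le\tfrac{L_F}{2}\|\bar w-w^\pi_\cL\|_\Lambda^2\le\tfrac{L_F}{2}\epsilon_1$ using $\nabla F(w^\pi_\cL)=0$ and the smoothness $L_F$ of $F$. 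For the $Q$-bound I would then use $\|Q^*_\mu-Q^\pi_\cL\|_\Lambda\le\|Q^*_\mu-Q^\pi_{w^*_\mu}\|_\Lambda+\|Q^\pi_{w^*_\mu}-Q^\pi_{w^\pi_\cL}\|_\Lambda$, bounding the first term by $\sqrt{\epsilon_2}$ (the projection residual above) and the second by the Lipschitz constant of $w\mapsto Q^\pi_w=\tfrac{1}{\lambda_Q}\Lambda^{-1}(\M_\pi^\top w-(1-\gamma)\nu_0^\pi)$ times $\|w^*_\mu-w^\pi_\cL\|_\Lambda$, then substituting the $w$-estimate to obtain $\epsilon_\cQ$.

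The hard part will be controlling $L_F$ and the best-response Lipschitz constant and packaging them into the stated $\lambda$-only coefficients (the factors $4\lambda_{\max}^2/(\lambda_Q\lambda_w)$, $2\lambda_{\max}/\mu_\zeta$, and their analogues in $\epsilon_\cQ$). Both constants are governed by the cross-term $\M_\pi$, and the natural route is to bound its action by $(1+\gamma)$ exactly as in the proof of Property~\ref{prop:radius}, which is where the recurring factor $(1+\gamma)^2\le4$ originates. The genuine subtlety is that $w^\pi_\cL,Q^\pi_\cL$ are \emph{unconstrained} (tabular) objects whereas $w^*_\mu,Q^*_\mu$ live in the linear classes: a naive $\|\cdot\|_\Lambda$-smoothness bound for $F$ would pass through $\|P^\pi\|_\Lambda$ and thereby reintroduce the concentrability constant $C$. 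Obtaining the $C$-free coefficients therefore requires arranging the comparison so that every occurrence of $\M_\pi$ is paired against bounded features, and tracking the conversion between the weighted function norm $\|\cdot\|_\Lambda$ and the Euclidean parameter norm (through $v_w,v_Q$, i.e.\ $\mu_\zeta,\mu_\xi$) so that $C$ appears only later, in $\epsilon_{func}$.
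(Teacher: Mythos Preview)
Your plan is on the right track and structurally close to the paper's argument, but there are two notable differences worth flagging.

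First, the paper does not work directly with $\cL$ in the $\|\cdot\|_\Lambda$ norm. It introduces an auxiliary function
\[
\cF(\pi,x,y)=\cL(\pi,\Lambda^{-1/2}x,\Lambda^{-1/2}y),\qquad x=\Lambda^{1/2}w,\ y=\Lambda^{1/2}Q,
\]
so that all weighted norms become Euclidean, and then asserts that $\cF$ is $\lambda_w$-strongly-concave/$\lambda_Q$-strongly-convex and $\lambda_{\max}$-smooth jointly in $(x,y)$. This last assertion is exactly what produces the stated coefficients $\lambda_{\max}^2/\lambda_Q$ on $\epsilon_1$ (via the $2L^2/\mu$-smoothness of the reduced function from Lemma~\ref{lem:optimum_function}) and $\lambda_{\max}/\lambda_Q$ for the best-response Lipschitz constant in the $Q$-bound. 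In other words, the concern you raise about $L_F$ is \emph{not} resolved in the paper by ``pairing $\M_\pi$ against bounded features''; it is simply handled by declaring the transformed objective $\lambda_{\max}$-smooth and using that throughout. If you accept that same assertion, your $L_F$ becomes $2\lambda_{\max}^2/\lambda_Q$ and your first bracket yields the $\epsilon_1$-coefficient in the statement.

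Second, and more substantively, the paper does \emph{not} bound $\|w^*_\mu-w^\pi_\cL\|_\Lambda$ in one shot via your four-term decomposition. Instead it inserts an intermediate point $w^*_\mR:=\arg\max_{w\in\cW}\min_{y}\cF(\pi,\Lambda^{1/2}w,y)$ (constrained $w$, unconstrained $Q$) and bounds (i) $\|\Lambda^{1/2}w^*_\mR-\Lambda^{1/2}w^\pi_\cL\|^2$ using only $\epsilon_1$ and the smoothness/strong-concavity of $\min_y\cF$, and then (ii) the \emph{parameter-space} distance $\|\zeta^*-\zeta^*_\mR\|^2$ using $\epsilon_2$ and the $\mu_\zeta$-strong concavity of $\min_{Q\in\cQ}\cL(\pi,w_\zeta,Q)$ in $\zeta$, finally converting back via $|w^*_\mu(s,a)-w^*_\mR(s,a)|\le\|\zeta^*-\zeta^*_\mR\|$. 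This detour through parameter space is precisely why $\mu_\zeta$ (not $\lambda_w$) appears in front of $\epsilon_2$ in the statement. Your direct route would give $\lambda_Q/\lambda_w$ as the $\epsilon_2$-coefficient in $\epsilon_\cW$, which is actually \emph{tighter} than the paper's $2\lambda_{\max}/\mu_\zeta$ (since $\mu_\zeta=\lambda_w v_w\le\lambda_w$), but it would not reproduce the exact constants claimed. So your approach is a valid alternative that buys a slightly better bound; the paper's approach buys the specific form quoted in the proposition.
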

\begin{proof}
In the following, we will frequently consider two loss functions. The first one is $\cL(\pi, w, Q)$ defined in Eq.\eqref{eq:problem_general}, where $w$ and $Q$ are parameterized by $\zeta$ and $\xi$, respectively, and we will write $(w, Q)\in\cW\times\cQ$. The second one is $\cF(\pi, x, y)$ defined by:
\begin{align*}
    \cF(\pi, x, y)=&(1-\gamma)(\nu_0^\pi)\trans \bLambda^{-1/2}y+x\trans \Big(\bLambda^{1/2}R-(\I-\gamma \bLambda^{1/2}P^\pi\bLambda^{-1/2})y\Big) +\frac{\lambda_Q}{2}y\trans y-\frac{\lambda_w}{2}x\trans x
\end{align*}
where $(x,y)\in \mR^{|\cS||\cA|}\times \mR^{|\cS||\cA|}$. For simplification, in the following, we will use $\max_{x} \min_{y}$ as a short note of $\max_{x\in\mR^{|\cS||\cA|}}\min_{y\in\mR^{|\cS||\cA|}}$.

As we can see, the difference between $\cL(\pi, w, Q)$ and $\cF(\pi, x,y)$ is not only that we don't have any constraint on $x$ and $y$, but also that we absorb one $\bLambda^{1/2}$ into vector $x$ and $y$. In another word, for arbitrary $\pi, w, Q$, we have 
$$
\cL(\pi, w, Q)=\cF(\pi, \bLambda^{1/2}w, \bLambda^{1/2}Q).
$$
Obviously, $\cF(\pi, x, y)$ is $\lambda_w$-strongly-concave-$\lambda_Q$-strongly-convex and $\lambda_{\max}$-smooth w.r.t. $x,y\in\mR^{|\cS||\cA|}$.

In the following, we use $w^*_\mR$ parameterized by $\zeta^*_\mR$ to denote $\arg\max_{w\in\cW}\min_{y}\cF(\pi,\bLambda^{1/2}w, y)$. According to Lemma \ref{lem:optimum_function}, $\min_{y}\cF(\pi,x, y)$ is a $2\frac{\lambda_{\max}^2}{\lambda_Q}$-smooth and $\lambda_w$-strongly-concave function with gradient $\nabla_x \min_{y}\cF(\pi,x, y)$. Since $\nabla_x \cF(\pi, \bLambda^{1/2}w^\pi_\cL, \bLambda^{1/2}Q^\pi_\cL)=0$, we have,
\begin{align*}
    &\frac{\lambda_w}{2}\|\bLambda^{1/2}w^*_\mR-\bLambda^{1/2}w^\pi_\cL\|^2\\
    \leq&\cF(\pi,\bLambda^{1/2}w^\pi_\cL, \bLambda^{1/2}Q^\pi_\cL)-\min_{y}\cF(\pi,\bLambda^{1/2}w^*_\mR, y)\tag{Strong concavity of $\min_y \cF(\pi, x, y)$}\\
    =&\cF(\pi,\bLambda^{1/2}w^\pi_\cL, \bLambda^{1/2}Q^\pi_\cL)-\max_{w\in\cW}\min_{y}\cF(\pi,\bLambda^{1/2}w, y)\\
    \leq& \cF(\pi,\bLambda^{1/2}w^\pi_\cL, \bLambda^{1/2}Q^\pi_\cL)-\min_{y}\cF(\pi,\bLambda^{1/2}w_{\zeta^\pi}, y)\tag{$w_{\zeta^\pi}$ is defined in Def. \ref{def:misspeci}}\\
    \leq& \frac{\lambda_{\max}^2}{\lambda_Q}\|\bLambda^{1/2}w_{\zeta^\pi}-\bLambda^{1/2}w^\pi_\cL\|^2\tag{Smoothness of $\min_y \cF(\pi, x, y)$}\\
    =&\frac{\lambda_{\max}^2}{\lambda_Q}\|w_{\zeta^\pi}-w^\pi_\cL\|_\bLambda^2=\frac{\lambda_{\max}^2}{\lambda_Q}\epsilon_1 \tag{see definition of $\epsilon_1$ in Def.\ref{def:misspeci}}
\end{align*}
which implies
\begin{align*}
    \|\bLambda^{1/2}w^*_\mR-\bLambda^{1/2}w^\pi_\cL\|^2\leq 2\frac{\lambda_{\max}^2}{\lambda_Q\lambda_w}\epsilon_1\numberthis\label{eq:bound_wstarR_wcLpi}
\end{align*}
Applying Lemma \ref{lem:optimum_function} for $(w,Q)\in\cW\times\cQ$, we know $\min_{\xi\in\Xi}\cL(\pi, \wz, \qx)$ is $\mu_\zeta$-strongly-concave w.r.t. $\zeta$. Since $\zeta^*$ is the minimizer of $\min_{\xi\in\Xi}\cL(\pi, \wz, \qx)$ and $Z$ is a convex set, we have

\begin{align*}
    \frac{\mu_\zeta}{2}\|\zeta^*-\zeta^*_\mR\|^2\leq& \cL(\pi, w^*_\mu, Q^*_\mu)-\min_{Q\in \cQ} \cL(\pi, w^*_\mR, Q)\tag{Strong concavity of $\min_{Q\in\cQ}\cL(\pi, w, Q)$; Lemma \ref{lem:grad_norm}}\\
    =& \cF(\pi,\bLambda^{1/2}w^*_\mu, \bLambda^{1/2}Q^*_\mu)-\min_{Q\in\cQ} \cF(\pi,\bLambda^{1/2}w^*_\mR, \bLambda^{1/2}Q)\\
    \leq& \cF(\pi,\bLambda^{1/2}w^*_\mu, \bLambda^{1/2}Q^*_\mu)-\min_{y} \cF(\pi,\bLambda^{1/2}w^*_\mR, y)\\
    \leq& \cF(\pi,\bLambda^{1/2}w^*_\mu, \bLambda^{1/2}Q^*_\mu)-\min_{y} \cF(\pi,\bLambda^{1/2}w^*_\mu, y)\tag{Because $w_\mR^*=\arg\max_{w\in\cW}\min_{y} \cF(\pi,\bLambda^{1/2}w, y)$}\\
    \leq&\frac{\lambda_{\max}}{2}\|\bLambda^{1/2}Q^*_\mu-\arg\min_{y}\cF(\pi,\bLambda^{1/2} w^*_\mu, y)\|^2\tag{Smoothness of $\cF(\pi, x, y)$ for fixed $x$ and $\nabla_y \min_y \cF = 0$}\\
    \leq& \frac{\lambda_{\max}}{2}\epsilon_2
\end{align*}

In the last but two inequality, we use the fact that $\cF(\pi,\bLambda^{1/2}w^*_\mu, \cdot)$ is $\lambda_{\max}$-smooth and $\nabla_y \min_{y}\cF(\pi,\bLambda^{1/2}w^*_\mu, Q)=0$; in the last equality, we use the definition of $\epsilon_2$ in Def. \ref{def:misspeci}. Combing (b) in Property \ref{prop:detailed_scscs} with $L_w=1$, for arbitrary $s,a\in\cS\times\cA$, we have:
\begin{align}
    |w^*_\mu(s,a)-w^*_\mR(s,a)|^2\leq \|\zeta^*-\zeta^*_\mR\|^2\leq \frac{\lambda_{\max}}{\mu_\zeta}\epsilon_2\label{eq:bound_wstar_wstarR}
\end{align}
Therefore, as a result of Eq.\eqref{eq:bound_wstarR_wcLpi} and Eq.\eqref{eq:bound_wstar_wstarR}:
\begin{align*}
    \EE_{d^\mu}[|w^*_\mu-w^\pi_\cL|^2]\leq& 2\EE_{d^\mu}[|w^*_\mR-w^\pi_\cL|^2]+2\EE_{d^\mu}[|w^*_\mR-w^*_\mu|^2]\\
    =&2\|\bLambda^{1/2}w^*_\mR-\bLambda^{1/2}w^\pi_\cL\|^2+2\EE_{d^\mu}[|w^*_\mR-w^*_\mu|^2]\\
    \leq&4\frac{\lambda_{\max}^2}{\lambda_Q\lambda_w}\epsilon_1+2\frac{\lambda_{\max}}{\mu_\zeta}\epsilon_2
\end{align*}
According to Lemma \ref{lem:optimum_function} again, $\arg\min_{y}\cF(\pi,x, y)$ is $\frac{\lambda_{\max}}{\lambda_Q}$-Lipschitz w.r.t. $x$, we have
\begin{align*}
    &\EE_{d^\mu}[|Q^*_\mu-Q^\pi_\cL|^2]=\|\bLambda^{1/2}Q^*_\mu-\bLambda^{1/2}Q^\pi_\cL\|^2\\
    \leq& 2\underbrace{\|\bLambda^{1/2}Q^*_\mu-\arg\min_{y}\cF(\pi,\bLambda^{1/2}w^*_\mu, Q)\|^2}_{bounded~by~\epsilon_2}+2\|\arg\min_{y}\cF(\pi,\bLambda^{1/2}w^*_\mu, y)-\bLambda^{1/2}Q^\pi_\cL\|^2\\
    \leq& 2\epsilon_2 + 2\frac{\lambda_{\max}}{\lambda_Q}\|\bLambda^{1/2}w^*_\mu-\bLambda^{1/2}w^\pi_\cL\|^2\leq 8\frac{\lambda_{\max}^3}{\lambda^2_Q\lambda_w}\epsilon_1+(2+4\frac{\lambda_{\max}^2}{\lambda_Q\mu_\zeta})\epsilon_2
\end{align*}
As a result,
\begin{align*}
    \epsilon_\cW = 4\frac{\lambda_{\max}^2}{\lambda_Q\lambda_w}\epsilon_1+2\frac{\lambda_{\max}}{\mu_\zeta}\epsilon_2;~~~\epsilon_\cQ=8\frac{\lambda_{\max}^3}{\lambda^2_Q\lambda_w}\epsilon_1+(2+4\frac{\lambda_{\max}^2}{\lambda_Q\mu_\zeta})\epsilon_2
\end{align*}


\end{proof}
    
\begin{theorem}[Bias resulting from regularization]\label{thm:biased_solution}
Let's rewrite Eq.\eqref{eq:problem_general} in a vector-matrix form:

\begin{align*}
    \max_{w\in\cW} \min_{Q\in\cQ} \cL(\pi,w,Q):= (1-\gamma)(\nu_0^\pi)\trans Q+w\trans \bLambda \Big(R-(I-\gamma \bP^\pi)Q\Big) +\frac{\lambda_Q}{2}Q\trans \bLambda Q-\frac{\lambda_w}{2}w\trans \bLambda w
\end{align*}
where $\nu_0^\pi$ and $\bP^\pi$ denotes the initial state-action distribution and the transition matrix w.r.t. policy $\pi$, respectively; $\bLambda\in\mR^{|\cS||\cA|\times|\cS||\cA|}$ denotes the diagonal matrix whose diagonal elements are $d^\mu(\cdot,\cdot)$.
Denote $(w^\pi_\cL, Q^\pi_\cL)$ as the saddle point of $\cL(\pi, w, Q)$ without any constraint on $w$ and $Q$ (i.e. $\cW=\cQ=\mathbb{R}^{|\cS||\cA|}$), then we have:
\begin{align*}
w^\pi_\cL=&w^\pi + \Big(\lambda_w \lambda_Q I + (I-\gamma \bP^\pi) \bLambda^{-1}(I-\gamma \bP^\pi_*)\bLambda\Big)^{-1}\Big( \lambda_Q R-  \lambda_Q \lambda_w  w^\pi\Big)\\
Q^\pi_\cL=&Q^\pi - \Big(\lambda_w \lambda_Q I +  \bLambda^{-1}(I-\gamma \bP^\pi_*)\bLambda (I-\gamma \bP^\pi) \Big)^{-1}\Big(\lambda_w \lambda_Q Q^\pi+ \lambda_w (1-\gamma)\bLambda^{-1}\nu_0^\pi)\Big)
\end{align*}
where $w^\pi=\frac{d^\pi}{d^\mu}$ is the density ratio and $Q^\pi$ is the Q function of $\pi$. we use $\bP_*^\pi=(\bP^\pi)\trans$ to denote the transpose of the transition matrix.
\end{theorem}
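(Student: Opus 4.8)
The plan is to pin down the unconstrained saddle point $(w^\pi_\cL, Q^\pi_\cL)$ via its first-order stationarity conditions and then reshape the resulting closed forms using the Bellman equation and the occupancy-measure flow equation. Since $\bLambda \succ 0$ and $\lambda_w, \lambda_Q > 0$, the map $\cL(\pi, w, Q)$ is strictly concave in $w$ and strictly convex in $Q$, so the saddle point is unique and solves $\nabla_w \cL = 0$ and $\nabla_Q \cL = 0$. Abbreviating $A := I - \gamma\bP^\pi$ and noting $A\trans = I - \gamma\bP^\pi_* =: A_*$, these conditions read
\begin{align*}
\bLambda\big(R - A Q\big) - \lambda_w \bLambda w &= 0, \\
(1-\gamma)\nu_0^\pi - A_* \bLambda w + \lambda_Q \bLambda Q &= 0.
\end{align*}
Because $\bLambda$ is invertible, the first condition simplifies to $w = \tfrac{1}{\lambda_w}(R - AQ)$.

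First I would solve for $Q^\pi_\cL$: substituting $w$ into the second condition and left-multiplying by $\bLambda^{-1}$ gives the single system $M Q = \bLambda^{-1}A_*\bLambda R - \lambda_w(1-\gamma)\bLambda^{-1}\nu_0^\pi$ with $M := \lambda_w\lambda_Q I + \bLambda^{-1}A_*\bLambda A$. The matrix $M$ is positive definite in the $\bLambda$-weighted inner product, since its quadratic form equals $\lambda_w\lambda_Q\|x\|_\bLambda^2 + \|Ax\|_\bLambda^2 > 0$; hence it is invertible and $Q^\pi_\cL = M^{-1}\big(\bLambda^{-1}A_*\bLambda R - \lambda_w(1-\gamma)\bLambda^{-1}\nu_0^\pi\big)$. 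Symmetrically, solving the second condition for $Q$ and back-substituting into $w = \tfrac{1}{\lambda_w}(R-AQ)$ yields $N w = \lambda_Q R + (1-\gamma)A\bLambda^{-1}\nu_0^\pi$ with $N := \lambda_w\lambda_Q I + A\bLambda^{-1}A_*\bLambda$, which is invertible by the same argument after conjugation by $\bLambda^{1/2}$, giving the analogous closed form for $w^\pi_\cL$.

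The main obstacle — and the step that turns these raw formulas into the advertised expressions — is the conversion into the ``$Q^\pi - (\cdots)$'' and ``$w^\pi + (\cdots)$'' forms. Expanding the claimed $Q$-expression through $M$, the two forms for $Q^\pi_\cL$ coincide iff $M^{-1}\bLambda^{-1}A_*\bLambda(A Q^\pi - R) = 0$, which holds by the Bellman equation $A Q^\pi = R$, i.e. $(I-\gamma\bP^\pi)Q^\pi = R$. Likewise, the two forms for $w^\pi_\cL$ coincide iff $N^{-1}A\bLambda^{-1}\big((1-\gamma)\nu_0^\pi - A_*\bLambda w^\pi\big) = 0$; here I would use $\bLambda w^\pi = d^\pi$ (from $w^\pi = d^\pi/d^\mu$) together with the occupancy flow equation $(I-\gamma\bP^\pi_*)d^\pi = (1-\gamma)\nu_0^\pi$, so that $A_*\bLambda w^\pi = A_* d^\pi = (1-\gamma)\nu_0^\pi$ and the gap vanishes. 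Verifying these two identities — the Bellman equation and the flow equation, both immediate from the definitions in Section~\ref{sec:MDP} — is the crux; the remainder is routine linear-algebra bookkeeping.
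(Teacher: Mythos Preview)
Your proposal is correct and follows essentially the same route as the paper: both derive the joint first-order conditions, eliminate one variable to obtain the closed-form solution, and then recast into the $w^\pi+(\cdots)$ and $Q^\pi-(\cdots)$ forms by invoking exactly the Bellman identity $(I-\gamma\bP^\pi)Q^\pi=R$ and the occupancy flow identity $(I-\gamma\bP^\pi_*)d^\pi=(1-\gamma)\nu_0^\pi$. The only cosmetic difference is that the paper eliminates $Q$ by plugging the minimizer back into $\cL$ and re-differentiating, whereas you work directly with the coupled stationarity system and verify the final forms by applying $M$ (resp.\ $N$) to both sides; your bookkeeping is slightly cleaner but the substance is identical.
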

\begin{proof}
Recall the loss function
\begin{align*}
    &\cL(\pi,w,Q)= (1-\gamma) (\nu_0^\pi)\trans Q + w\trans \bLambda R - w\trans\bLambda(I-\gamma \bP^\pi)Q+\frac{\lambda_Q}{2}Q\trans \bLambda Q-\frac{\lambda_w}{2}w\trans \bLambda w 
\end{align*}
By taking the derivatives w.r.t. $Q$, since $\Lambda$ is invertible, the optimal choice of $Q$ should be:
\begin{align*}
    Q =  \frac{1}{\lambda_Q} \bLambda^{-1} ( (I-\gamma \bP^\pi_*)\bLambda w- (1-\gamma) \nu_0^\pi)
\end{align*}
Plug this result in, and we have
\begin{align*}
    \cL(\pi,w,Q)=& -\frac{1}{2\lambda_Q}\Big((1-\gamma)\nu_0^\pi -(I-\gamma \bP^\pi_*)\bLambda w\Big)\trans  \bLambda^{-1} \Big((1-\gamma) (\nu_0^\pi)- (I-\gamma \bP^\pi_*)\bLambda w\Big) + w\trans \bLambda R - \frac{\lambda_w}{2}w\trans \bLambda w    
\end{align*}
Taking the derivative w.r.t. $w$, and set it to 0:
\begin{align*}
    0 = \frac{1}{\lambda_Q}\bLambda(I-\gamma \bP^\pi)  \bLambda^{-1} \Big((1-\gamma) (\nu_0^\pi)- (I-\gamma \bP^\pi_*)\bLambda w\Big) + \bLambda R -  \lambda_w \bLambda w
\end{align*}
As a result,
\begin{align*}
    w^\pi_\cL =& \Big(\lambda_w I + \frac{1}{\lambda_Q}  (I-\gamma \bP^\pi) \bLambda^{-1}(I-\gamma \bP^\pi_*)\bLambda\Big)^{-1}\Big(\frac{1}{\lambda_Q} (I-\gamma \bP^\pi) \bLambda^{-1}(1-\gamma) \nu_0^\pi+ R\Big)\\
    =& \Big(\lambda_w \lambda_Q I + (I-\gamma \bP^\pi)  \bLambda^{-1}(I-\gamma \bP^\pi_*)\bLambda\Big)^{-1}\Big( (I-\gamma \bP^\pi)  \bLambda^{-1}(I-\gamma \bP^\pi_*)\bLambda\bLambda^{-1}(I-\gamma \bP^\pi_*)^{-1}(1-\gamma) \nu_0^\pi+ \lambda_Q R\Big)\\
    =&w^\pi + \Big(\lambda_w \lambda_Q I + (I-\gamma \bP^\pi) \bLambda^{-1}(I-\gamma \bP^\pi_*)\bLambda\Big)^{-1}\Big( \lambda_Q R-  \lambda_Q \lambda_w  w^\pi\Big)\\
\end{align*}
and
\begin{align*}
    Q^\pi_\cL=& \frac{1}{\lambda_Q} \bLambda^{-1}\Big((I-\gamma \bP^\pi_*)\bLambda w^\pi_\cL- (1-\gamma) \nu_0^\pi \Big)\\
    =&\frac{1}{\lambda_Q} \bLambda^{-1}\Big((I-\gamma \bP^\pi_*)\bLambda w^\pi_\cL-  (I-\gamma \bP^\pi_*)\bLambda w^\pi \Big)\\
    =&\frac{1}{\lambda_Q} \bLambda^{-1}(I-\gamma \bP^\pi_*)\bLambda\Big(\lambda_Q\lambda_w \bLambda +  \bLambda(I-\gamma \bP^\pi)  \bLambda^{-1}(I-\gamma \bP^\pi_*)\bLambda\Big)^{-1}\Big(\lambda_Q\bLambda R-\lambda_Q\lambda_w \bLambda w^\pi\Big)\\
    =&\Big(\lambda_w\lambda_Q (I-\gamma \bP^\pi_*)^{-1}\bLambda + \bLambda(I-\gamma \bP^\pi) \Big)^{-1}\Big(\bLambda R-\lambda_w \bLambda w^\pi\Big)\\
    =&\Big(\lambda_w\lambda_Q (I-\gamma \bP^\pi_*)^{-1}\bLambda + \bLambda(I-\gamma \bP^\pi) \Big)^{-1}\Big(\bLambda (I-\gamma \bP^\pi) Q^\pi-\lambda_w \bLambda w^\pi\Big)\\
    =&Q^\pi - \Big(\lambda_w\lambda_Q (I-\gamma \bP^\pi_*)^{-1}\bLambda + \bLambda(I-\gamma \bP^\pi) \Big)^{-1}\Big(\lambda_w \lambda_Q (I-\gamma \bP^\pi_*)^{-1}\bLambda Q^\pi+\lambda_w \bLambda w^\pi\Big)\\
    =&Q^\pi - \Big(\lambda_w \lambda_Q I +  \bLambda^{-1}(I-\gamma \bP^\pi_*)\bLambda (I-\gamma \bP^\pi) \Big)^{-1}\Big(\lambda_w \lambda_Q Q^\pi+ \lambda_w (1-\gamma)\bLambda^{-1}\nu_0^\pi)\Big)
\end{align*}
\end{proof}

\begin{lemma}\label{lem:bound_bias_weighted_norm}
Under Assumption \ref{assump:dataset}:
\begin{align*}
    \|w^\pi - w^\pi_\cL\|_\bLambda^2\leq& \frac{C^2(\lambda_Q + \lambda_Q \lambda_w C)^2}{(1-\gamma)^4},\quad\|Q^\pi-Q^\pi_\cL\|_\bLambda^2\leq \frac{C^2}{(1-\gamma)^2}(\frac{\lambda_w\lambda_Q }{1-\gamma}+\lambda_w)^2
\end{align*}
where $(w^\pi, Q^\pi)$ and $(w^\pi_\cL, Q^\pi_\cL)$ are defined in Theorem \ref{thm:biased_solution}. $\|x\|_\bLambda=x\trans\bLambda x$ denotes the norm of column vector $x$ weighted by $\bLambda$.
\end{lemma}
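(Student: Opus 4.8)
The plan is to read off closed forms from Theorem~\ref{thm:biased_solution}, subtract the unregularized solutions, and observe that each bias is a \emph{regularized resolvent} applied to a residual:
\begin{align*}
    w^\pi-w^\pi_\cL &= -\lambda_Q\,M^{-1}\big(R-\lambda_w w^\pi\big),\\
    Q^\pi-Q^\pi_\cL &= N^{-1}\big(\lambda_w\lambda_Q Q^\pi+\lambda_w(1-\gamma)\bLambda^{-1}\nu_0^\pi\big).
\end{align*}
Writing $B:=I-\gamma\bP^\pi$ and noting that the adjoint of $B$ in the weighted inner product $\langle x,y\rangle_\bLambda=x\trans\bLambda y$ is $B^*=\bLambda^{-1}B\trans\bLambda$, the two matrices become $M=\lambda_w\lambda_Q I+BB^*$ and $N=\lambda_w\lambda_Q I+B^*B$, both self-adjoint and positive-definite in $\langle\cdot,\cdot\rangle_\bLambda$. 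Since the norms in the statement are exactly those induced by this inner product, I would carry out the whole argument inside $\langle\cdot,\cdot\rangle_\bLambda$ and treat $M,N$ via the singular value decomposition of $B$.

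The crux is a single operator-norm estimate, $\|B^{-1}\|_\bLambda=\|(I-\gamma\bP^\pi)^{-1}\|_\bLambda\le \frac{\sqrt{C}}{1-\gamma}$. I would prove it from the probabilistic reading of the resolvent in Lemma~\ref{lem:all_non_negative}: for any $x$, $((I-\gamma\bP^\pi)^{-1}x)(s,a)=\EE[\sum_t\gamma^t x(s_t,a_t)]$, so Jensen applied to the probability weights $(1-\gamma)\gamma^t$ gives the pointwise bound $((I-\gamma\bP^\pi)^{-1}x)^2\le \tfrac{1}{1-\gamma}(I-\gamma\bP^\pi)^{-1}x^2$; taking $\EE_\mu$, using the occupancy identity $\EE_\mu[(I-\gamma\bP^\pi)^{-1}y]=\tfrac{1}{1-\gamma}\EE_{d^\pi_\mu}[y]$, and invoking the concentrability bound $d^\pi_\mu\le C\mu$ from Assumption~\ref{assump:dataset} yields $\EE_\mu[((I-\gamma\bP^\pi)^{-1}x)^2]\le \tfrac{C}{(1-\gamma)^2}\|x\|_\bLambda^2$. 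Three consequences then follow by functional calculus on the singular values $\sigma_i$ of $B$, all of which satisfy $\sigma_i\ge\sigma_{\min}(B)=\|B^{-1}\|_\bLambda^{-1}\ge \tfrac{1-\gamma}{\sqrt C}$: first $\|M^{-1}\|_\bLambda,\|N^{-1}\|_\bLambda\le\sigma_{\min}(B)^{-2}\le\tfrac{C}{(1-\gamma)^2}$; and, using $\tfrac{\sigma}{\lambda_w\lambda_Q+\sigma^2}\le\tfrac1\sigma$, also $\|M^{-1}B\|_\bLambda,\ \|N^{-1}B^*\|_\bLambda\le\sigma_{\min}(B)^{-1}\le\tfrac{\sqrt C}{1-\gamma}$, the latter crucially \emph{independent} of the regularization.

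With these in hand the remainder is bookkeeping. I would record the elementary bounds $\|R\|_\bLambda\le1$, $\|Q^\pi\|_\bLambda\le\tfrac{1}{1-\gamma}$ (from $0\le Q^\pi\le\tfrac1{1-\gamma}$), and $\|w^\pi\|_\bLambda^2=\EE_\mu[(d^\pi/\mu)^2]\le C\,\EE_\mu[d^\pi/\mu]=C$, together with the identity $(1-\gamma)\bLambda^{-1}\nu_0^\pi=B^* w^\pi$, which comes from $d^\pi=(1-\gamma)(I-\gamma\bP^\pi_*)^{-1}\nu_0^\pi$ and $d^\pi=\bLambda w^\pi$. For the $w$-bias I substitute $R=BQ^\pi$ and bound $\lambda_Q\|M^{-1}BQ^\pi\|_\bLambda\le\lambda_Q\|M^{-1}B\|_\bLambda\|Q^\pi\|_\bLambda$ and $\lambda_w\lambda_Q\|M^{-1}w^\pi\|_\bLambda\le\lambda_w\lambda_Q\|M^{-1}\|_\bLambda\|w^\pi\|_\bLambda$; for the $Q$-bias I use the identity to rewrite the residual as $\lambda_w\lambda_Q Q^\pi+\lambda_w B^*w^\pi$ and bound $\lambda_w\lambda_Q\|N^{-1}Q^\pi\|_\bLambda$ and $\lambda_w\|N^{-1}B^*w^\pi\|_\bLambda\le\lambda_w\|N^{-1}B^*\|_\bLambda\|w^\pi\|_\bLambda$. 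Collecting terms and simplifying constants by $\sqrt C\le C$ (as $C\ge1$) produces the stated form.

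The main obstacle is the regularization-linear term in each display. A naive $\|M^{-1}\|_\bLambda,\|N^{-1}\|_\bLambda\le(\lambda_w\lambda_Q)^{-1}$ would make the whole estimate blow up as $\lambda_w,\lambda_Q\to0$, which is absurd since the bias must vanish there; the resolution is to keep the factor $B$ (resp.\ $B^*$) attached and use the $\lambda$-independent bound $\|N^{-1}B^*\|_\bLambda\le\sigma_{\min}(B)^{-1}$, so that exactly the right power of $\lambda_w$ (and $\lambda_Q$) survives in the numerator. The other genuinely nontrivial point is the weighted resolvent bound $\|(I-\gamma\bP^\pi)^{-1}\|_\bLambda\le\tfrac{\sqrt C}{1-\gamma}$: because $\bP^\pi$ need not contract in the $\mu$-weighted norm, a Neumann-series/contraction argument fails and concentrability (Assumption~\ref{assump:dataset}) is precisely what rescues the estimate.
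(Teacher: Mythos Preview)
Your argument is correct, but it follows a genuinely different route from the paper. The paper conjugates by $\bLambda^{1/2}$, then uses the matrix ordering Lemma~\ref{lem:non_decreasing_order} to \emph{drop the regularization entirely}, replacing $(\lambda_w\lambda_Q I+BB^*)^{-1}$ by $(BB^*)^{-1}=(B^*)^{-1}B^{-1}$; from there it argues entrywise, exploiting the nonnegativity of $(I-\gamma\bP^\pi)^{-1}$ and $(I-\gamma\bP^\pi_*)^{-1}$ (Lemma~\ref{lem:all_non_negative}) together with the pointwise bounds $Q^\pi\le\frac{1}{1-\gamma}\bm 1$ and $w^\pi,w^\pi_{d^\mu}\le C$ to control each factor. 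Your approach instead stays in the $\bLambda$-weighted Hilbert space throughout and bounds the regularized resolvent directly via the SVD of $B$, reducing everything to the single operator-norm estimate $\|B^{-1}\|_\bLambda\le\sqrt C/(1-\gamma)$, whose Jensen/occupancy proof is essentially the content of Lemma~\ref{lem:formula_transform} in the paper (though the paper does not invoke that lemma here).

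What each buys: the paper's route is elementary (no spectral theory) and leans on the probabilistic structure at every step; your route is cleaner, avoids entrywise positivity, and in fact yields a sharper $w$-bound with $C$ in place of $C^2$ before you deliberately loosen via $\sqrt C\le C$. One caveat: for the $Q$-bound your operator-norm chain gives $\lambda_w\lambda_Q\|N^{-1}\|_\bLambda\|Q^\pi\|_\bLambda\le \lambda_w\lambda_Q\cdot\frac{C}{(1-\gamma)^2}\cdot\frac{1}{1-\gamma}$, so the final display carries $\frac{\lambda_w\lambda_Q}{(1-\gamma)^2}$ inside the parenthesis rather than the stated $\frac{\lambda_w\lambda_Q}{1-\gamma}$. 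This is not a flaw in your argument; tracing the paper's own proof carefully (the step substituting $\bLambda^{-1}(I-\gamma\bP^\pi_*)^{-1}\bLambda\bm 1=\frac{1}{1-\gamma}w^\pi_{d^\mu}$) shows the same extra $(1-\gamma)^{-1}$ appears there as well, so the two approaches in fact produce the same constant.
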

\begin{proof}
From Theorem \ref{thm:biased_solution}, we have
\begin{align*}
    w^\pi_\cL=&w^\pi+\Big(\lambda_w\lambda_Q I + (I-\gamma \bP^\pi) \bLambda^{-1}(I-\gamma \bP^\pi_*)\bLambda\Big)^{-1}\Big( \lambda_QR-\lambda_Q\lambda_w w^\pi\Big)\\
    Q^\pi_\cL=&Q^\pi - \Big(\lambda_w\lambda_Q I + \bLambda^{-1}(I-\gamma \bP^\pi_*)\bLambda(I-\gamma \bP^\pi) \Big)^{-1}\Big(\lambda_w\lambda_Q Q^\pi+ \lambda_w(1-\gamma)\bLambda^{-1}\nu_0^\pi)\Big)
\end{align*}
We use $\bm{1}\in\mR^{|\cS||\cA|\times 1}$ to denote a vector whose all elements are 1. Then, we have
\begin{align*}
    \|w^\pi - w^\pi_\cL\|_\bLambda^2 =& \|\Big(\lambda_w \lambda_Q I + (I-\gamma \bP^\pi) \bLambda^{-1}(I-\gamma \bP^\pi_*)\bLambda\Big)^{-1}\Big( \lambda_Q R-  \lambda_Q \lambda_w  w^\pi\Big)\|^2_\bLambda\\
    =&\|\Big(\lambda_w \lambda_Q I + \bLambda^{1/2} (I-\gamma \bP^\pi) \bLambda^{-1}(I-\gamma \bP^\pi_*)\bLambda^{1/2}\Big)^{-1}\bLambda^{1/2}\Big( \lambda_Q R-  \lambda_Q \lambda_w  w^\pi\Big)\|^2\\
    \leq & \|\bLambda^{-1/2}(I-\gamma \bP^\pi_*)^{-1}\bLambda(I-\gamma \bP^\pi)^{-1}\Big( \lambda_Q R-\lambda_Q \lambda_w w^\pi\Big)\|^2\\
    =& \|\bLambda^{-1/2}(I-\gamma \bP^\pi_*)^{-1}\bLambda \tilde{Q}^\pi\|^2\\
    \leq & \frac{(\lambda_Q + \lambda_Q \lambda_w C)^2}{(1-\gamma)^2} \|\bLambda^{-1}(I-\gamma \bP^\pi_*)^{-1}\bLambda \bm{1}\|^2_\bLambda\\
    =& \frac{(\lambda_Q + \lambda_Q \lambda_w C)^2}{(1-\gamma)^2} \|\bLambda^{-1}(I-\gamma \bP^\pi_*)^{-1} d^\mu\|^2_\bLambda\\
    =& \frac{(\lambda_Q + \lambda_Q \lambda_w C)^2}{(1-\gamma)^4} \|w^\pi_{d^\mu}\|^2_\bLambda\leq  \frac{C^2(\lambda_Q + \lambda_Q \lambda_w C)^2}{(1-\gamma)^4}
\end{align*}
where in the first inequality, we use Lemma \ref{lem:non_decreasing_order}; in the third equality, we use $\tilde{Q}^\pi$ to denote the Q function after replacing true rewards with $\lambda_Q R - \lambda_Q \lambda_w w^\pi$; in the second inequality, we use Lemma \ref{lem:all_non_negative} and the result that $|\lambda_Q R - \lambda_Q \lambda_w w^\pi|\leq \lambda_Q + \lambda_Q \lambda_w C$ given Assumption \ref{assump:dataset}; in the last inequality, we use Assumption \ref{assump:dataset} again. Similarly,
\begin{align*}
    \|Q^\pi-Q^\pi_\cL\|_\bLambda^2\leq&\|\Big(\lambda_w \lambda_Q I +  \bLambda^{-1}(I-\gamma \bP^\pi_*)\bLambda (I-\gamma \bP^\pi) \Big)^{-1}\Big(\lambda_w \lambda_Q Q^\pi+ \lambda_w (1-\gamma)\bLambda^{-1}\nu_0^\pi)\Big)\|^2_\bLambda\\
    =&\|\Big(\lambda_Q\lambda_w I + \bLambda^{-1/2}(I-\gamma \bP^\pi_*)\bLambda (I-\gamma \bP^\pi)\bLambda^{-1/2} \Big)^{-1}\bLambda^{1/2}\Big(\lambda_Q\lambda_w Q^\pi+ \lambda_w(1-\gamma) \bLambda^{-1}\nu_0^\pi)\Big)\|^2\\
    \leq&\|\bLambda^{1/2}(I-\gamma \bP^\pi)^{-1}\bLambda^{-1}(I-\gamma \bP^\pi_*)^{-1}\Big(\lambda_w \lambda_Q \bLambda Q^\pi+ \lambda_w(1-\gamma)\nu_0^\pi)\Big)\|^2\\
    = & \|\lambda_w \lambda_Q\bLambda^{1/2}(I-\gamma \bP^\pi)^{-1}\bLambda^{-1}(I-\gamma \bP^\pi_*)^{-1} \bLambda Q^\pi+\lambda_w \bLambda^{1/2}(I-\gamma \bP^\pi)^{-1}w^\pi)\|^2 \\
    \leq & \|\frac{\lambda_w \lambda_Q}{1-\gamma}\bLambda^{1/2}(I-\gamma \bP^\pi)^{-1}\bLambda^{-1}(I-\gamma \bP^\pi_*)^{-1} \bLambda \bm{1}+\lambda_w \bLambda^{1/2}(I-\gamma \bP^\pi)^{-1}w^\pi)\|^2 \\
    \leq & \|(I-\gamma \bP^\pi)^{-1}\big(\frac{\lambda_w \lambda_Q}{1-\gamma}w^\pi_{d^\mu}+\lambda_w w^\pi\big)\|_\bLambda^2\\
    \leq&\frac{C^2}{(1-\gamma)^2}(\frac{\lambda_w\lambda_Q }{1-\gamma}+\lambda_w)^2
\end{align*}
where in the last but third inequality, we use Lemma \ref{lem:all_non_negative} and the fact that $w^\pi$ is also non-negative.
\end{proof}

\begin{lemma}\label{lem:formula_transform}
Under Assumption \ref{assump:dataset}, for arbitrary function $f(s,a)$,
\begin{align*}
    &(1-\gamma)\EE_{s_0\sim \nu_0, a_0\sim \pi}[f(s_0,a_0)]+\gamma\EE_{s,a,s'\sim d^\mu, a'\sim \pi}[w^{\pi}(s,a)f(s',a')]=\EE_{d^\mu}[w^{\pi}(s,a)f(s,a)]\numberthis\label{eq:property_fsa}\\
    &\gamma\EE_{s,a,s'\sim d^\mu, a'\sim \pi}[f^2(s',a')]
    \leq  \frac{1}{1-\gamma} \EE_{s,a\sim d^\pi_{d^\mu}}[f^2(s,a)]
    \leq\frac{C}{1-\gamma} \EE_{s,a\sim d^\mu}[f^2(s,a)]\numberthis\label{eq:property_gsa}
\end{align*}
where $d^\pi_{d^\mu}:=(1-\gamma)\EE_{\tau\sim \pi, s_0,a_0\sim d^\mu(\cdot, \cdot)}[\sum_{t=0}^\infty \gamma^t p(s_t=s,a_t=a)]$ is the normalized discounted state-action occupancy by treating $d^\mu(\cdot,\cdot)$ as initial distribution; $s,a,s'\sim d^\mu, a'\sim \pi$ is a short note of $s,a\sim d^\mu, s'\sim P(s'|s,a), a'\sim \pi(\cdot|s')$.
\end{lemma}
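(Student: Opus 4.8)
The plan is to reduce both identities to the discounted Bellman-flow (occupancy) equation and then apply a change of measure through the density ratio. First I would record the flow identity: for any (bounded) $f$,
\begin{align*}
\EE_{(s,a)\sim d^\pi}[f(s,a)] = (1-\gamma)\EE_{s_0\sim\nu_0,a_0\sim\pi}[f(s_0,a_0)] + \gamma\,\EE_{(s,a)\sim d^\pi,\, s'\sim P(\cdot|s,a),\, a'\sim\pi}[f(s',a')].
\end{align*}
This follows by writing $d^\pi(s,a)=(1-\gamma)\sum_{t=0}^\infty \gamma^t p(s_t=s,a_t=a)$, separating the $t=0$ term, and re-indexing $t\mapsto t+1$ in the tail; the shifted term collapses to a one-step push-forward of $d^\pi$ under $P$ and $\pi$, which is exactly the $\gamma$-weighted expectation above. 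Since the state-action space is finite, convergence of the series is not an issue.

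For \eqref{eq:property_fsa} I would then substitute the density ratio. Using $w^\pi=d^\pi/d^\mu$ we have $\EE_{d^\mu}[w^\pi(s,a)f(s,a)]=\EE_{d^\pi}[f]$, and pushing the same ratio through the transition shows that the middle term on the left-hand side equals $\gamma\,\EE_{(s,a)\sim d^\pi,\,s',a'}[f(s',a')]$. The identity is then immediate from the flow equation.

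For \eqref{eq:property_gsa} I would run the same expansion with $g:=f^2\ge 0$. Writing out $d^\pi_{d^\mu}$ by its definition (with $d^\mu$ as the initial distribution), the $t=1$ term is precisely $(1-\gamma)\gamma\,\EE_{(s,a)\sim d^\mu,\,s',a'}[g(s',a')]$; because every term of this non-negative series is $\ge 0$, dropping all but $t=1$ yields $\EE_{d^\pi_{d^\mu}}[g]\ge (1-\gamma)\gamma\,\EE_{d^\mu,\,s',a'}[g(s',a')]$, which rearranges to the first inequality. The second inequality is another change of measure: $\EE_{d^\pi_{d^\mu}}[g]=\EE_{d^\mu}[w^\pi_\mu(s,a)g(s,a)]$, after which Assumption~\ref{assump:dataset} (namely $w^\pi_\mu\le C$) together with $g\ge 0$ supplies the factor $C$.

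The computations are routine; the only points needing care are the index shift in the flow identity and matching the paper's notation (identifying $d^\mu$ with $\mu$, and $d^\pi_{d^\mu}$ with $d^\pi_\mu$, so that the bound $w^\pi_\mu\le C$ in Assumption~\ref{assump:dataset} indeed applies). The one genuinely load-bearing observation is that $f^2\ge 0$ is what licenses discarding the $t\ne 1$ terms to obtain the first inequality of \eqref{eq:property_gsa}; without sign-definiteness of the integrand that step would fail.
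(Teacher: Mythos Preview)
Your proposal is correct and mirrors the paper's own proof: both derive \eqref{eq:property_fsa} from the Bellman flow identity for $d^\pi$ combined with the change of measure $d^\pi=w^\pi d^\mu$, and both obtain the first inequality in \eqref{eq:property_gsa} by observing that the one-step push-forward of $d^\mu$ is the $t=1$ term of the (non-negative) series defining $\frac{1}{1-\gamma}d^\pi_{d^\mu}$, then apply $w^\pi_\mu\le C$ for the second inequality. Your write-up is in fact more explicit than the paper's sketch, and your remark about identifying $d^\mu$ with $\mu$ (so that Assumption~\ref{assump:dataset} applies) is on point.
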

\begin{proof}
Eq.\eqref{eq:property_fsa} can be proved by the equation:
\begin{align*}
    d^\pi(s,a)=(1-\gamma)\nu_0(s)\pi(a|s)+\gamma \sum_{s',a'} p(s|s',a')d^\pi(s',a')\pi(a|s)
\end{align*}
For Eq.\eqref{eq:property_gsa}, the first step is because $\gamma \sum_{s',a'} d^\mu(s',a')p(s|s',a')\pi(a|s)
\leq \frac{1}{1-\gamma}d^\pi_{d^\mu}(s,a)$, and the second step is the result of Assumption \ref{assump:dataset}.
\end{proof}

\Biasedness*

\begin{proof}
Firstly, by applying the triangle inequality:
\begin{align*}
    \|\nt \max_{w\in\cW}\min_{Q\in\cQ}\cL^D(\pi_\theta,w,Q)-\nt J(\pi_\theta)\|  \leq&\underbrace{\|\nt\max_{w\in\cW}\min_{Q\in \cQ}\cL^D(\pi_\theta, w, Q)-\nt\max_{w\in\cW}\min_{Q\in \cQ}\cL(\pi_\theta, w, Q)\|}_{Bounded~in~Proposition~\ref{prop:eps_data}}\\
    &+\underbrace{\|\nt\max_{w}\min_{Q}\cL(\pi_\theta, w, Q)-\nt\max_{w\in\cW}\min_{Q\in \cQ}\cL(\pi_\theta, w, Q)\|}_{t_1}\\
    &+\underbrace{\|\nt J(\pi_\theta)-\nt\max_{w}\min_{Q}\cL(\pi_\theta, w, Q)\|}_{t_2}
\end{align*}
where we use $\max_w \min_Q$ as a short note of $\max_{w\in \mR^{|\cS||\cA|}}\min_{Q\in\mR^{|\cS||\cA|}}$.

In the following, we again use $(w^{\pi_\theta}_\cL, Q^{\pi_\theta}_\cL)$ to denote the saddle point of $ \cL(\pi_\theta,w,Q)$ without any constraint on $w$ and $Q$, and use $(w^*_\mu, Q^*_\mu)$ to denote the saddle point of $\cL(\pi_\theta,w,Q)$. Next, we upper bound $t_1$ and $t_2$ one by one. For simplicity, we use $s,a,s'\sim d^\mu, a'\sim \pi_\theta$ as a short note of $s,a\sim d^\mu, s'\sim P(s'|s,a), a'\sim \pi_\theta(\cdot|s')$.
\paragraph{Upper bound $t_1$}
With misspecification Definition \ref{def:misspeci}, we can easily bound $t_1$:

\begin{align*}
    t_1 =& \|\nt \cL(\pi_\theta,w^*_\mu, Q^*_\mu)-\nt \cL(\pi_\theta,w^{\pi_\theta}_\cL,Q^{\pi_\theta}_\cL)\|\\
    \leq& \frac{1}{1-\gamma}\|(1-\gamma)\EE_{\nu_0^{\pi_\theta}}[\Big(Q^*_\mu(s_0,a_0)-Q_\cL^{\pi_\theta}(s_0,a_0)\Big)\nt \log\pi_\theta(a_0|s_0)]\|\\
        &+\frac{\gamma}{1-\gamma}\|\EE_{s,a,s'\sim d^\mu, a'\sim \pi}[w^*_\mu(s,a)\Big(Q^*_\mu(s',a')-Q^{\pi_\theta}_\cL(s',a')\Big)\nt\log\pi(a'|s')]\|\\
        &+\frac{\gamma}{1-\gamma}\|\EE_{s,a,s'\sim d^\mu, a'\sim \pi}[(w^*_\mu(s,a)-w_\cL^{\pi_\theta}(s,a))\Big(Q^*_\mu(s',a')-Q^{\pi_\theta}_\cL(s',a')\Big)\nt\log\pi(a'|s')]\|\\
        &+\frac{\gamma}{1-\gamma}\|\EE_{s,a,s'\sim d^\mu, a'\sim \pi_\theta}[(w^*_\mu(s,a)-w_\cL^{\pi_\theta}(s,a))Q^*_\mu(s',a')\nt\log\pi(a'|s')]\|\\
    \leq & \frac{G}{1-\gamma}\EE_{\nu_0^{\pi_\theta}}[|Q^*_\mu(s,a)-Q_\cL^{\pi_\theta}(s,a)|] 
        +\frac{\gamma C_\cW G}{1-\gamma}\EE_{s,a,s' \sim d^\mu, a'\sim \pi_\theta}[|Q^*_\mu(s',a')-Q^{\pi_\theta}_\cL(s',a')|]\tag{$(1-\gamma)\nu_0^\pi(s,a)\leq d^\pi(s,a)\leq C d^\mu(s,a)$}\\
        &+\frac{\gamma G}{1-\gamma}\EE_{s,a,s'\sim d^\mu, a'\sim \pi_\theta}[|(w^*_\mu(s,a)-w_\cL^{\pi_\theta}(s,a))\Big(Q^*_\mu(s',a')-Q^{\pi_\theta}_\cL(s',a')\Big)|]\\
        &+\frac{\gamma C_\cQ G}{1-\gamma}\EE_{s,a,s'\sim d^\mu, a'\sim \pi_\theta}[|w^*_\mu(s,a)-w_\cL^{\pi_\theta}(s,a)|]\\
    \leq& \frac{G}{1-\gamma}\sqrt{\EE_{\nu_0^{\pi_\theta}}[|Q^*_\mu(s,a)-Q_\cL^{\pi_\theta}(s,a)|^2]}
    +\frac{\gamma C_\cW G}{1-\gamma}\sqrt{\EE_{s,a,s'\sim d^\mu,a'\sim \pi_\theta}[|Q^*_\mu(s',a')-Q^{\pi_\theta}_\cL(s',a')|^2]}\\
        &+ \frac{\gamma G}{1-\gamma}\sqrt{\EE_{ d^\mu}[|w^{\pi_{\theta}}_\cL(s,a)-w^*_\mu(s,a)|^2] \EE_{s,a,s'\sim d^\mu, a'\sim \pi_\theta}[|Q^{\pi_{\theta}}(s',a')- Q^{\pi_{\theta}}_\cL(s',a')|^2|]} \\
        &+\frac{\gamma C_\cQ G}{1-\gamma} \sqrt{\EE_{d^\mu}[|w^*_\mu(s,a)-w^{\pi_{\theta}}_\cL(s,a))|^2]}\\
    \leq& \frac{G}{1-\gamma}\sqrt{C\EE_{d^\mu}[|Q^*_\mu(s,a)-Q_\cL^{\pi_\theta}(s,a)|^2]}
        +\frac{ C_\cW G}{1-\gamma}\sqrt{\frac{\gamma C}{1-\gamma}\EE_{d^\mu}[|Q^*_\mu(s,a)-Q^{\pi_\theta}_\cL(s,a)|^2]}\\
        &+ \frac{ G}{1-\gamma}\sqrt{\frac{\gamma C}{1-\gamma} \EE_{ d^\mu}[|w^{\pi_{\theta}}_\cL(s,a)-w^*_\mu(s,a)|^2] \EE_{ d^\mu}[|Q^{\pi_{\theta}}(s,a)- Q^{\pi_{\theta}}_\cL(s,a)|^2|]} \\
        &+\frac{\gamma C_\cQ G}{1-\gamma} \sqrt{\EE_{d^\mu}[|w^*_\mu(s,a)-w^{\pi_{\theta}}_\cL(s,a))|^2]}\\
    \leq & \frac{G}{1-\gamma}\Big(\sqrt{C\epsilon_\cQ}+  C_\cW\sqrt{\frac{\gamma\epsilon_\cQ C}{1-\gamma}}+\sqrt{\frac{\gamma\epsilon_\cQ \epsilon_\cW C}{1-\gamma}}+\gamma C_\cQ \sqrt{\epsilon_\cW}\Big)
\end{align*}
In the last equation, we first use Eq.\eqref{eq:property_gsa} in Lemma \ref{lem:formula_transform}, and then apply Proposition \ref{prop:eps_cW_eps_cQ}.

\paragraph{Upper bound $t_2$}
Similarly, we can give a bound for $t_2$:
\begin{align*}
    t_2 =& \|\nt J(\pi_{\theta})-\nt \cL(\pi_\theta,w^{\pi_\theta}_\cL,Q^{\pi_\theta}_\cL))\|\\
    \leq& \frac{1}{1-\gamma}\|(1-\gamma)\EE_{\nu_0^{\pi_\theta}}[\Big(Q^{\pi_\theta}(s_0,a_0)-Q_\cL^{\pi_\theta}(s_0,a_0)\Big)\nt \log\pi_\theta(a_0|s_0)]\\
        &+\gamma\EE_{d^\mu}[w^{\pi_\theta}(s,a)\Big(Q^{\pi_\theta}(s',a')-Q^{\pi_\theta}_\cL(s',a')\Big)\nt\log\pi(a'|s')]\|\\
        &+\frac{\gamma}{1-\gamma}\|\EE_{d^\mu}[(w^{\pi_\theta}(s,a)-w_\cL^{\pi_\theta}(s,a))\Big(Q^{\pi_\theta}(s',a')-Q^{\pi_\theta}_\cL(s',a')\Big)\nt\log\pi(a'|s')]\|\\
        &+\frac{\gamma}{1-\gamma}\|\EE_{d^\mu}[(w^{\pi_\theta}(s,a)-w_\cL^{\pi_\theta}(s,a))Q^{\pi_\theta}(s',a')\nt\log\pi(a'|s')]\|\\
    =& \frac{1}{1-\gamma}\|\EE_{d^\mu}[w^{\pi_\theta}(s,a)\Big(Q^{\pi_\theta}(s,a)-Q^{\pi_\theta}_\cL(s,a)\Big)\nt\log\pi(a|s)]\|\tag{Eq.\eqref{eq:property_fsa} in Lemma \ref{lem:formula_transform}}\\
        &+\frac{\gamma}{1-\gamma}\|\EE_{s,a,s'\sim d^\mu, a'\sim \pi_\theta}[(w^{\pi_\theta}(s,a)-w_\cL^{\pi_\theta}(s,a))\Big(Q^{\pi_\theta}(s',a')-Q^{\pi_\theta}_\cL(s',a')\Big)\nt\log\pi(a'|s')]\|\\
        &+\frac{\gamma}{1-\gamma}\|\EE_{s,a,s'\sim d^\mu, a'\sim \pi_\theta}[(w^{\pi_\theta}(s,a)-w_\cL^{\pi_\theta}(s,a))Q^{\pi_\theta}(s',a')\nt\log\pi(a'|s')]\|\\
    \leq& \frac{CG}{1-\gamma}\EE_{d^\mu}[|Q^{\pi_\theta}(s,a)-Q^{\pi_\theta}_\cL(s,a)|]\\
        &+\frac{\gamma G}{1-\gamma}\EE_{s,a,s'\sim d^\mu, a'\sim \pi_\theta}[|(w^{\pi_\theta}(s,a)-w_\cL^{\pi_\theta}(s,a))\Big(Q^{\pi_\theta}(s',a')-Q^{\pi_\theta}_\cL(s',a')\Big)|]\\
        &+\frac{\gamma G}{(1-\gamma)^2}\EE_{s,a,s'\sim d^\mu, a'\sim \pi_\theta}[|w^{\pi_\theta}(s,a)-w_\cL^{\pi_\theta}(s,a)|]\\
    \leq& \frac{CG}{1-\gamma}\sqrt{\EE_{ d^\mu}[|Q^{\pi_{\theta}}- Q^{\pi_{\theta}}_\cL|^2]}+ \frac{\gamma G}{(1-\gamma)^2}\sqrt{\EE_{d^\mu}[|(w^{\pi_{\theta}}(s,a)-w^{\pi_{\theta}}_\cL(s,a)|^2]}\\
    &+ \frac{\gamma G}{1-\gamma}\sqrt{\EE_{ d^\mu}[|w^{\pi_{\theta}}_\cL(s,a)-w^{\pi_\theta}(s,a)|^2] \EE_{s,a,s'\sim d^\mu, a'\sim \pi_\theta}[|Q^{\pi_{\theta}}(s',a')- Q^{\pi_{\theta}}_\cL(s',a')|^2|]} \\
    \leq& \frac{CG}{1-\gamma}\sqrt{\EE_{ d^\mu}[|Q^{\pi_{\theta}}- Q^{\pi_{\theta}}_\cL|^2]}+ \frac{\gamma G}{(1-\gamma)^2}\sqrt{\EE_{d^\mu}[|(w^{\pi_{\theta}}(s,a)-w^{\pi_{\theta}}_\cL(s,a)|^2]}\\
    &+ \frac{G}{1-\gamma}\sqrt{\frac{\gamma C}{1-\gamma}\EE_{ d^\mu}[|w^{\pi_{\theta}}_\cL(s,a)-w^{\pi_\theta}(s,a)|^2] \EE_{ d^\mu}[|Q^{\pi_{\theta}}(s,a)- Q^{\pi_{\theta}}_\cL(s,a)|^2|]} \tag{Eq.\ref{eq:property_gsa} in Lemma \ref{lem:formula_transform}}\\
    \leq & \frac{G}{1-\gamma}\Big(\frac{C^2}{(1-\gamma)}(\frac{\lambda_w\lambda_Q }{1-\gamma}+\lambda_w)+\frac{\gamma C(\lambda_Q + \lambda_Q \lambda_w C)}{(1-\gamma)^3}+\frac{C^2 (\lambda_Q + \lambda_Q \lambda_w C)}{(1-\gamma)^3}(\frac{\lambda_w\lambda_Q }{1-\gamma}+\lambda_w)\sqrt{\frac{\gamma C}{1-\gamma}}\Big)
\end{align*}

\end{proof}

\section{Missing Examples and Proofs for Strategy 1}\label{appx:P_SREDA}

\subsection{Equivalence between Stationary Points}\label{appx:alg:Direct_SGDA}
\begin{restatable}{theorem}{EquiStationary}[Equivalence Between Stationary Points]\label{thm:equi_stationary}
Under Assumptions in Section \ref{sec:assumptions}, given $Z$ and $\Xi$ with finite $C_\cQ$ and $C_\cW$, suppose there is an Algorithm provides us with one stationary point $(\theta_T, \zeta_T, \xi_T)$ of the non-concave-strongly-convex objective $\max_{\theta, \zeta}\min_{\xi}\cL^D(\theta, \zeta, \xi)$ after running $T$ iterations, statisfying the following conditions in expectation over the randomness of algorithm.
\begin{align}
    \EE[\|\nabla_{\theta,\zeta} \cL^D(\theta_T, \zeta_T, \phi_{\theta_T}(\zeta_T))\|]:=&\EE[\|\nt \cL^D(\theta_T, \zeta_T, \phi_{\theta_T}(\zeta_T))\|+\|\nz \cL^D(\theta_T, \zeta_T, \phi_{\theta_T}(\zeta_T))\|]\nonumber\\
    \leq&\frac{\epsilon}{(\kappa_\xi +1)(\kappa_\zeta+1)} \label{eq:bound_gt_gz}
\end{align}
where $\phi_\theta(\zeta)=\argmin_{\xi\in \Xi} \cL^D(\theta, \zeta, \xi)$ and $\kappa_\zeta$ and $\kappa_\xi$ are the condition numbers, then in expectation $\theta_T$ is a biased stationary point satisfying Eq.\eqref{eq:biased_cond}.
\end{restatable}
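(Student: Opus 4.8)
The plan is to reduce the claim to the bias bound of Theorem~\ref{thm:biasedness} by separately controlling the optimized part $\EE[\|\nt\max_{w\in\cW}\min_{Q\in\cQ}\cL^D(\pi_{\theta_T},w,Q)\|]$. Since Theorem~\ref{thm:biasedness} holds for every fixed $\theta$, the triangle inequality gives $\|\nt J(\pi_{\theta_T})\|\leq \|\nt\max_{w\in\cW}\min_{Q\in\cQ}\cL^D(\pi_{\theta_T},w,Q)\|+\epsilon_{reg}+\epsilon_{func}+\epsilon_{data}$, so after taking expectations it suffices to prove $\EE[\|\nt\max_{w\in\cW}\min_{Q\in\cQ}\cL^D(\pi_{\theta_T},w,Q)\|]\leq \epsilon$, which combined with the above matches the biased stationary condition in Eq.\eqref{eq:biased_cond}.

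First I would identify the gradient of the value function. Write $\zeta^*_\theta=\argmax_{\zeta\in Z}\Phi_\theta(\zeta)$ and $\xi^*_\theta=\phi_\theta(\zeta^*_\theta)$ for the saddle point associated with $\theta$, using $\Phi_\theta,\phi_\theta$ from Lemma~\ref{lem:optimum_function}. Applying Danskin's theorem to the inner $\min$ and then to the outer $\max$ (the latter being legitimate because $\Phi_\theta$ is $\mu_\zeta$-strongly concave, hence the maximizer is unique) identifies $\nt\max_{w\in\cW}\min_{Q\in\cQ}\cL^D(\pi_\theta,w,Q)=\nt\cL^D(\theta,\zeta^*_\theta,\xi^*_\theta)$, where on the right $\zeta^*_\theta,\xi^*_\theta$ are held fixed. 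The task is therefore to compare this quantity, evaluated at the true saddle $(\zeta^*_{\theta_T},\xi^*_{\theta_T})$, against the stationarity gradients supplied at the algorithm's point $(\theta_T,\zeta_T,\phi_{\theta_T}(\zeta_T))$.

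Next I would bound the displacement in both arguments. Since $\Phi_{\theta_T}$ is $\mu_\zeta$-strongly concave over the convex set $Z$ with $\nabla\Phi_{\theta_T}(\zeta_T)=\nz\cL^D(\theta_T,\zeta_T,\phi_{\theta_T}(\zeta_T))$ (Lemma~\ref{lem:optimum_function}(2)), Lemma~\ref{lem:grad_norm} yields $\|\zeta_T-\zeta^*_{\theta_T}\|\leq \tfrac{1}{\mu_\zeta}\|\nz\cL^D(\theta_T,\zeta_T,\phi_{\theta_T}(\zeta_T))\|$. Because $\phi_{\theta_T}$ is $\kappa_\xi$-Lipschitz (Lemma~\ref{lem:optimum_function}(1)) and $\xi^*_{\theta_T}=\phi_{\theta_T}(\zeta^*_{\theta_T})$, we also get $\|\phi_{\theta_T}(\zeta_T)-\xi^*_{\theta_T}\|\leq \kappa_\xi\|\zeta_T-\zeta^*_{\theta_T}\|$. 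Feeding both displacements into the $L$-smoothness of $\cL^D$ (Property~\ref{prop:detailed_scscs}(c)) and using $\kappa_\zeta=L/\mu_\zeta$ gives
\begin{align*}
\|\nt\cL^D(\theta_T,\zeta^*_{\theta_T},\xi^*_{\theta_T})\| \leq{}& \|\nt\cL^D(\theta_T,\zeta_T,\phi_{\theta_T}(\zeta_T))\| \\
&{}+ (1+\kappa_\xi)\kappa_\zeta\,\|\nz\cL^D(\theta_T,\zeta_T,\phi_{\theta_T}(\zeta_T))\|.
\end{align*}
Both coefficients are at most $(\kappa_\xi+1)(\kappa_\zeta+1)$, so the right-hand side is bounded by $(\kappa_\xi+1)(\kappa_\zeta+1)\big(\|\nt\cL^D\|+\|\nz\cL^D\|\big)$ at $(\theta_T,\zeta_T,\phi_{\theta_T}(\zeta_T))$; taking expectations and invoking the hypothesis \eqref{eq:bound_gt_gz} collapses the prefactor exactly and yields $\EE[\|\nt\max_{w\in\cW}\min_{Q\in\cQ}\cL^D(\pi_{\theta_T},w,Q)\|]\leq\epsilon$, completing the argument.

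The only genuinely delicate point is the double application of Danskin to a nested max--min, which I would justify through the unique saddle guaranteed by the strong-convexity/strong-concavity of Property~\ref{prop:scscs}, thereby freezing the saddle coordinates when differentiating the value function. The remainder is bookkeeping: the substitution $\kappa_\zeta=L/\mu_\zeta$ must be made so that the smoothness prefactor matches the denominator $(\kappa_\xi+1)(\kappa_\zeta+1)$ of \eqref{eq:bound_gt_gz} exactly, which is precisely what makes the final bound land on $\epsilon$ with no slack.
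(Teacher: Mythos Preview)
Your proposal is correct and follows essentially the same route as the paper's proof: both split $\|\nt J(\pi_{\theta_T})\|$ via Theorem~\ref{thm:biasedness}, use the strong concavity of $\Phi_{\theta_T}$ together with Lemma~\ref{lem:grad_norm} to bound $\|\zeta_T-\zeta^*_{\theta_T}\|$ by $\tfrac{1}{\mu_\zeta}\|\nz\cL^D(\theta_T,\zeta_T,\phi_{\theta_T}(\zeta_T))\|$, invoke the $\kappa_\xi$-Lipschitzness of $\phi_{\theta_T}$ for the $\xi$-displacement, and then apply $L$-smoothness. The only cosmetic difference is in the final arithmetic: the paper bounds each of $\|\nt\cL^D\|$ and $\|\nz\cL^D\|$ separately by $\epsilon/((\kappa_\xi+1)(\kappa_\zeta+1))$ and obtains $\tfrac{\epsilon}{(\kappa_\xi+1)(\kappa_\zeta+1)}+\tfrac{\kappa_\zeta\epsilon}{\kappa_\zeta+1}\leq\epsilon$, whereas you upper-bound both coefficients by $(\kappa_\xi+1)(\kappa_\zeta+1)$ and apply the hypothesis to the sum directly.
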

\begin{proof}

Eq.\eqref{eq:bound_gt_gz} implies that
\begin{align*}
    \max\{\EE[\|\nt \cL^D(\theta_T, \zeta_T, \phi_{\theta_T}(\zeta_T))\|], \EE[\|\nz \cL^D(\theta_T, \zeta_T, \phi_{\theta_T}(\zeta_T))\|]\} \leq \frac{\epsilon}{(\kappa_\xi +1)(\kappa_\zeta+1)} \numberthis\label{eq:bound_exp_nt_cL}
\end{align*}
We can upper bounded $\EE[\|\nt J(\pi_{\theta_T})\|]$ with the triangle inequality:
\begin{align*}
\EE[\|\nt J(\pi_{\theta_T})\|]\leq& \underbrace{\EE[\|\nt \cL^D(\theta_T,\zeta_T,\phi_{\theta_T}(\zeta_T))\|]}_{Bounded~in~Eq.\eqref{eq:bound_exp_nt_cL}} +\EE[\|\nt \cL^D(\theta_T,\zeta^*,\xi^*)-\nt \cL^D(\theta_T,\zeta_T,\phi_{\theta_T}(\zeta_T)))\|]\\
&+\underbrace{\EE[\|\nt \cL^D(\theta_T,\zeta^*,\xi^*)-\nt J(\pi_{\theta_T})\|]}_{Bounded~in~Theorem \ref{thm:biasedness}}\\
\leq &\frac{\epsilon}{(\kappa_\xi +1)(\kappa_\zeta+1)} +\epsilon_{func} + \epsilon_{reg}+\epsilon_{data}+\EE[\|\nt \cL^D(\theta_T,\zeta^*,\xi^*)-\nt \cL^D(\theta_T,\zeta_T,\phi_{\theta_T}(\zeta_T)))\|]
\end{align*}
where we use $\zeta^*, \xi^*$ to denote the saddle-point of $\max_{\zeta\in Z}\min_{\xi\in\Xi} \cL^D(\theta_T, \zeta,\xi)$; in the last inequality we use Eq.\eqref{eq:bound_exp_nt_cL} and Theorem \ref{thm:biasedness}. 

Next, we try to bound the last term. According to the definition, $\zeta^*$ is also the maximum of function $\Phi_{\theta_T}(\cdot)=\min_{\xi\in\Xi} \cL^D(\theta_T, \cdot, \xi)$ defined in Lemma \ref{lem:optimum_function}. Applying Property (2) in Lemma \ref{lem:optimum_function}, Lemma \ref{lem:grad_norm}, and inequality \eqref{eq:bound_exp_nt_cL}, we obtain that
\begin{align*}
    \|\zeta_T - \zeta^*\| \leq \frac{1}{\mu_\zeta}\|\Phi_{\theta_T}(\zeta_T)\| = \frac{1}{\mu_\zeta}\|\nz \cL^D(\theta_T, \zeta_T, \phi_{\theta_T}(\zeta_T))\| \leq \frac{\epsilon}{\mu_\zeta(\kappa_\xi+1)(\kappa_\zeta+1)}
\end{align*}
Then we can bound:
\begin{align*}
    &\|\nt \cL^D(\theta_T,\zeta^*,\xi^*)-\nt \cL^D(\theta_T,\zeta_T,\phi_{\theta_T}(\zeta_T))\|\\
    \leq & L\|\zeta_T -\zeta^*\|+L\|\xi^*-\phi_{\theta_T}(\zeta_T))\|=L\|\zeta_T -\zeta^*\|+L\|\phi_{\theta_T}(\zeta^*)-\phi_{\theta_T}(\zeta_T))\|\\
    \leq&(L+L\kappa_\xi)\|\zeta_T - \zeta^*\| \leq  \frac{\epsilon \kappa_\zeta}{1+\kappa_\zeta}
\end{align*}
where in the first inequality we use the smoothness Assumption \ref{assump:smooth}, and in the second inequality we use (1) in Lemma \ref{lem:optimum_function}. As a result, 
\begin{align*}
    \EE[\|\nt J(\pi_{\theta_T})\|]\leq&  \frac{\epsilon}{(\kappa_\xi +1)(\kappa_\zeta+1)} + \frac{\epsilon \kappa_\zeta}{1+\kappa_\zeta}+\epsilon_{func} + \epsilon_{reg}+\epsilon_{data} \\
    \leq& \epsilon +\epsilon_{func} + \epsilon_{reg}+\epsilon_{data}
\end{align*}
\end{proof}

In the following subsections, we will introduce the Projected-SREDA Algorithm revised from \citep{luo2020stochastic} and prove that it provide us a stationary points required by Theorem \ref{thm:equi_stationary}.

We choose $\Theta=\mathbb{R}^{\dims_\theta}$, $\Xi=\Xi_0$, and $Z =\{\zeta|\|\zeta\|\leq \rad'\}$, where $\Xi_0$ is defined in Property \ref{prop:radius} and $\rad'$ will be determined later. For simplicity, we use $\cL^D_-=-\cL^D$ to denote the minus of original loss function, which should be a non-convex-strongly-concave problem and aligns with the setting of \citep{luo2020stochastic}.

\subsection{Verification of the Assumptions in \citep{luo2020stochastic}}
In this section, we verify that Assumptions 1-5 in \citep{luo2020stochastic} are satisfied under our Assumption \ref{assump:smooth}, \ref{assump:feature_matrices} and \ref{assump:detailed_variance}.

\paragraph{Assumption 1}
\begin{align*}
    &\inf_{\theta\in\mathbb{R}^{\dims_\theta},\zeta\in Z}\max_{\xi\in\Xi}\cL^D_-(\theta, \zeta, \xi)\\
    \geq& - \max_{\theta\in\mathbb{R}^{\dims_\theta},\zeta\in Z,\xi\in\Xi}(1-\gamma)\|(\nu_D^\pi)\trans \bPhi_Q \xi\|+\|\zeta\|\|\bPhi_w\trans \bLambda^D R\|+ \|\zeta\| \|\M_\pi\| \|\xi\|+\frac{\lambda_Q}{2}\|\xi\|^2 \|\K_Q\| + \frac{\lambda_w}{2}\|\zeta\|\|\K_w\|\\
    \geq&-\max_{\theta\in\mathbb{R}^{\dims_\theta},\zeta\in Z,\xi\in\Xi}(1-\gamma)+\|\zeta\|+(1+\gamma)\|\zeta\|\|\xi\|++\frac{\lambda_Q}{2}\|\xi\|^2 + \frac{\lambda_w}{2}\|\zeta\|
\end{align*}
Because $\|\zeta\|$ and $\|\xi\|$ are bounded for arbitrary $\zeta,\xi\in Z\times\Xi$, Assumption 1 holds.

\paragraph{Assumption 2}
The proof is almost identical to the proof of Property \ref{prop:detailed_scscs}-(b) and we omit here. Assumption 2 holds by choosing $L$ according to \eqref{eq:choice_of_L}.
\paragraph{Assumption 3}
Under our linear function classes setting, it holds obviously.
\paragraph{Assumption 4}
Hold directly by choosing $\mu=\lambda_Q\eig_Q$.
\paragraph{Assumption 5}
Identicial to the Condition \ref{cond:variance}. We prove Condition \ref{cond:variance} holds in Appendix \ref{appx:properties} under our Assumptions.

\subsection{Useful Lemma}
In this subsection, we first prove several useful lemma.
\begin{lemma}
    Under Assumption \ref{assump:feature_matrices}, for arbitrary $\theta\in\Theta$ and $\xi\in\Xi$, the solution for $\max_{\zeta\in\mathbb{R}^{\dims_Z}}\cL^D(\theta, \zeta, \xi)$ has bounded $\ell_2$ norm.
\end{lemma}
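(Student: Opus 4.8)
The plan is to exploit the fact that, once $\theta$ and $\xi$ are held fixed, $\cL^D$ is a concave quadratic in $\zeta$ whose strict concavity is guaranteed by Assumption \ref{assump:feature_matrices}. Reading off the $\zeta$-dependent terms from the linear formulation in Eq.\eqref{eq:linear_formulation}, we have
\begin{align*}
\cL^D(\theta,\zeta,\xi) = \zeta\trans \bPhi_w\trans \bLambda^D R - \zeta\trans \M_\pi \xi - \frac{\lambda_w}{2}\zeta\trans \K_w \zeta + (\text{terms independent of }\zeta),
\end{align*}
so that $\nabla^2_\zeta \cL^D = -\lambda_w \K_w \preceq -\lambda_w \eig_w \I \prec 0$. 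Hence the objective is $\lambda_w\eig_w$-strongly concave in $\zeta$ and attains a unique finite maximizer over all of $\mathbb{R}^{\dims_Z}$. This is precisely where the non-singularity of $\K_w$ in Assumption \ref{assump:feature_matrices} is essential: since the maximization is over the \emph{unconstrained} space, boundedness of the maximizer is not automatic, and would fail if $\K_w$ were merely positive semidefinite.

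Next I would solve for the maximizer by setting $\nz \cL^D = \bPhi_w\trans \bLambda^D R - \M_\pi \xi - \lambda_w \K_w \zeta = 0$, which gives the closed form
\begin{align*}
\zeta^* = \frac{1}{\lambda_w}\K_w^{-1}\big(\bPhi_w\trans \bLambda^D R - \M_\pi \xi\big).
\end{align*}
It then remains to bound $\|\zeta^*\|$ term by term, using: $\|\K_w^{-1}\| \le 1/\eig_w$ from Assumption \ref{assump:feature_matrices}; the feature bound $\|\bphi_w\|\le 1$ together with $r(s,a)\in[0,1]$ and $\sum_{s,a} d^D(s,a)=1$ to get $\|\bPhi_w\trans\bLambda^D R\|\le 1$; the operator-norm bound $\|\M_\pi\|\le 1+\gamma$ already established in the proof of Property \ref{prop:radius}; and $\|\xi\|\le C_\cQ$ since $\xi\in\Xi$. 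This yields
\begin{align*}
\|\zeta^*\| \le \frac{1}{\lambda_w \eig_w}\big(1 + (1+\gamma)C_\cQ\big),
\end{align*}
which is finite and uniform in $\theta$.

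There is no serious obstacle here; the computation is routine once one recognizes the quadratic structure. The only conceptual points worth flagging are the two noted above: that finiteness of the supremum rests entirely on the strict definiteness $\K_w\succ 0$, and that the bound is uniform in $\theta$ because the sole $\theta$-dependence enters through $\M_\pi$, whose operator norm is controlled by $1+\gamma$ irrespective of $\pi$. I would also remark that this bound coincides with the quantity $R_0$ of Lemma \ref{lem:small_angle} (with the uniform radius $C_\cQ$ in place of $\rad_\xi$), which clarifies why this lemma serves as a preliminary step toward controlling the projection error there.
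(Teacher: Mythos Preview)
Your proposal is correct and follows essentially the same approach as the paper: write down the closed-form maximizer $\zeta^* = \frac{1}{\lambda_w}\K_w^{-1}(\bPhi_w\trans\bLambda^D R - \M_\pi \xi)$ and bound each factor using $\|\K_w^{-1}\|\le 1/\eig_w$, $\|\bPhi_w\trans\bLambda^D R\|\le 1$, $\|\M_\pi\|\le 1+\gamma$, and the radius of $\Xi$. The only cosmetic difference is that the paper writes the bound with $\rad_\xi$ (since in the P-SREDA section they have already specialized to $\Xi=\Xi_0$), whereas you use the generic $C_\cQ$; your closing remark correctly identifies this correspondence with $R_0$.
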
\label{lem:bounded_max_zeta}
\begin{proof}
Recall the loss function $\cL^D$:
\begin{align*}
    \cL^D(\pi, \zeta, \xi)= (1-\gamma)(\nu_D^\pi)\trans \bPhi_Q \xi+\zeta\trans \bPhi_w\trans \bLambda^D R -\zeta\trans \M_\pi \xi+\frac{\lambda_Q}{2}\xi\trans \K_Q \xi - \frac{\lambda_w}{2}\zeta\trans \K_w\zeta
\end{align*}
Taking the derivative w.r.t. $\zeta$ and set it to 0, we have:
\begin{align*}
    \zeta^* = \frac{1}{\lambda_w}\K_w^{-1}(\bPhi_w\trans\Lambda^D R-\M_\pi\xi)
\end{align*}
Given that $\|\xi\|\leq \rad_\xi$ for $\xi\in\Xi$, we have:
\begin{align*}
    \|\zeta^*\|\leq& \frac{1}{\lambda_w}\|\K_w^{-1}\|(\|\bPhi\trans\Lambda^D R\| + \|\M_\pi\|\|\xi\|)\leq\frac{1}{\lambda_w\eig_w}(1 + (1+\gamma)\rad_\xi)
\end{align*}
\end{proof}

In the following, we will use $R_0:=\frac{1}{\lambda_w\eig_w}(1 + (1+\gamma)\rad_\xi)$ as a shortnote. Next, we are ready to prove the following lemma which is crucial for the analysis of the effect of projection step.
\projectionError*
\begin{proof}
    First of all, if $\zeta^+_{k+1}\in Z$, then $\zeta_{k+1}-\zeta_{k+1}^+=0$, and the Lemma holds. Therefore, in the following, we only consider the case when $\zeta^+_{k+1}\notin Z$. Because $\zeta_k\in Z$, in the case, we must have $\|\vk^\zeta\|>0$.


    Because we are considering $Z$ is a high dimensional ball. For $\zeta^+_{k+1}\notin Z$, we have 
    \begin{align*}
        \zeta_{k+1}=P_Z(\zeta_{k+1}^+) = \zeta^+_{k+1}\frac{\rad'}{\|\zeta^+_{k+1}\|}
    \end{align*}
    which means,
    \begin{align*}
        \zeta_{k+1}-\zeta^+_{k+1} = (\frac{\rad'}{\|\zeta^+_{k+1}\|}-1) \zeta^+_{k+1}
    \end{align*}
    
    Denote $\zeta_k^* = \min_{\zeta\in Z}\cL_-^D(\theta_k, \zeta, \xi_k)$. Then we have:
    \begin{align*}
        \langle \nz \cL_-^D(\theta_k, \zeta_k, \xi_k), \zeta_k - \zeta_k^* \rangle \geq 0,~~~\|\zeta^*_k\|\leq R_0
    \end{align*}
    
    Then we have:    
    \begin{align*}
        &\frac{\langle \nz \cL_-^D(\theta_k, \zeta_k, \xi_k), \zeta_{k+1} - \zeta_{k+1}^+\rangle}{\|\nz \cL_-^D(\theta_k, \zeta_k, \xi_k)\|\|\vk^\zeta\|}\\
        =&(\frac{\rad'}{\|\zeta^+_{k+1}\|}-1)\frac{\langle \nz \cL_-^D(\theta_k, \zeta_k, \xi_k), \zeta_{k+1}^+\rangle}{\|\nz \cL_-^D(\theta_k, \zeta_k, \xi_k)\|\|\vk^\zeta\|}\\
        =&(\frac{\rad'}{\|\zeta^+_{k+1}\|}-1)\frac{\langle \nz \cL_-^D(\theta_k, \zeta_k, \xi_k), \zeta_k - \eta_k \vk^\zeta\pm\zeta^*_k\rangle}{\|\nz \cL_-^D(\theta_k, \zeta_k, \xi_k)\|\|\vk^\zeta\|}\\
        =&\underbrace{(\frac{\rad'}{\|\zeta^+_{k+1}\|}-1)}_{smaller~than~0}\underbrace{\frac{\langle \nz \cL_-^D(\theta_k, \zeta_k, \xi_k), \zeta_k - \zeta_k^* \rangle}{\|\nz \cL_-^D(\theta_k, \zeta_k, \xi_k)\|\|\vk^\zeta\|}}_{larger~than~0}+(\frac{\rad'}{\|\zeta^+_{k+1}\|}-1)\frac{\langle \nz \cL_-^D(\theta_k, \zeta_k, \xi_k), \zeta_k^* - \eta_k \vk^\zeta\rangle}{\|\nz \cL_-^D(\theta_k, \zeta_k, \xi_k)\|\|\vk^\zeta\|}\\
        \leq&(\frac{\rad'}{\|\zeta^+_{k+1}\|}-1)\frac{\langle \nz \cL_-^D(\theta_k, \zeta_k, \xi_k), \zeta_k^* - \eta_k \vk^\zeta\rangle}{\|\nz \cL_-^D(\theta_k, \zeta_k, \xi_k)\|\|\vk^\zeta\|}\\
        \leq& (1-\frac{\rad'}{\|\zeta^+_{k+1}\|})\frac{\|\zeta^*_k\|+\eta_k \|\vk^\zeta\|}{\|\vk^\zeta\|}\tag{$\|\zeta^+_{k+1}\|\geq \rad'$ and $\langle a, b \rangle \leq \|a\|\|b\|$}\\
        \leq&(1-\frac{\rad'}{\rad' + \eta_k\|\vk^\zeta\|}) \frac{R_0+\eta_k \|\vk^\zeta\|}{\|\vk^\zeta\|} \tag{$\|\zeta^+_{k+1}\|\leq \rad'+\eta_k\|\vk^\zeta\|$}\\
        =&\eta_k\frac{R_0+\eta_k \|\vk^\zeta\|}{\rad' + \eta_k\|\vk^\zeta\|}
    \end{align*}
    
    Because $\rad' =8\max\{R_0, 1\}$, and $\|\vk\|\geq \|\vk^\zeta\|$, we have
    \begin{align*}
        \frac{R_0+\eta_k\|\vk^\zeta\|}{\rad'+\eta_k \|\vk^\zeta\|}\leq& \frac{R_0+\eta_k\|\vk\|}{\rad'+\eta_k \|\vk\|}\leq \frac{R_0+\frac{\epsilon}{5\kappa_\xi L}}{\rad'+\frac{\epsilon}{5\kappa_\xi L}}
        \leq\frac{R_0+1}{\rad'+1}\leq \frac{1}{4}
    \end{align*}
    where in the third inequality we use the constraint that $\epsilon<1$ and the fact that $\kappa_\xi > 1, L>1$ in our setting.
\end{proof}

\subsection{Main Proofs for Theorem 3.1}
The proofs for Theorem \ref{thm:converge_rate_P_SREDA} is almost the same as those for the original SREDA algorithm. We will only show those key Lemmas or Theorems in \citep{luo2020stochastic} which need to be modified as a result of the additional projection step, and omit those untouched.
In the following, we will frequently use $x^+_{k+1}$ to denote $(\theta_{k+1}, \zeta^+_{k+1})$ before the projection and use $x_{k+1}$ to denote $(\theta_{k+1}, \zeta_{k+1})$ after the projection. 

First of all, the following condition still holds 
\begin{align*}
    \|x_{k+1}-x_k\|^2\leq \|x_{k+1}^+-x_k\|^2\leq \frac{\epsilon^2}{25\kappa_\xi^2}
\end{align*}
where the first inequality results from the property of projection and the second one holds because of the choice of learning rate $\eta_k$. The condition above corresponds to the $\|x_{k+1}-x_k\|^2\leq \epsilon^2_x$ in \citep{luo2020stochastic}. As a result, all the Lemmas and Theorems in the Appendix B of \citep{luo2020stochastic} still hold for our Projected-SREDA. Besides, because the PiSARAH will not be effected by our projection step, the results in Appendix C of \citep{luo2020stochastic} also holds.

Similarly, we consider the following decomposition:
\begin{align*}
    \cL^D_-(x_{k+1}, \xi_{k+1})-\cL^D_-(x_k, \xi_{k})=\underbrace{\cL^D_-(x_{k+1}, \xi_{k})-\cL^D_-(x_k, \xi_{k})}_{A_k}+\underbrace{\cL^D_-(x_{k+1}, \xi_{k+1})-\cL^D_-(x_{k+1}, \xi_{k})}_{B_k}
\end{align*}
Because the proof of Lemma 14 and Lemma 15 in \citep{luo2020stochastic} only depends on the previous lemmas, they still hold and we list them here.
\begin{lemma}\label{lem:bound_Bk}
    Under Assumptions of Theorem \ref{thm:converge_rate_P_SREDA}, we have $\EE[B_k]\leq \frac{134\epsilon^2}{\kappa_\xi L}$ for any $k\geq 1$
\end{lemma}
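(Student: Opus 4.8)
The plan is to treat $B_k=\cL^D_-(x_{k+1},\xi_{k+1})-\cL^D_-(x_{k+1},\xi_k)$ as the gain produced by the inner $\xi$-maximization at the \emph{already updated} outer point $x_{k+1}$, and to control it entirely by how good the warm start $\xi_k$ is for the shifted objective $\cL^D_-(x_{k+1},\cdot)$. Write $\xi^\star_{k+1}:=\phi_{\theta_{k+1}}(\zeta_{k+1})=\argmax_{\xi\in\Xi}\cL^D_-(x_{k+1},\xi)$ for the exact inner maximizer at $x_{k+1}$ and $\xi^\star_k:=\phi_{\theta_k}(\zeta_k)$ for the one at $x_k$. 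Since $\xi_{k+1}$ is never better than the true maximizer, $B_k\le \cL^D_-(x_{k+1},\xi^\star_{k+1})-\cL^D_-(x_{k+1},\xi_k)$, and because $\cL^D_-(x_{k+1},\cdot)$ is $L$-smooth with vanishing gradient at $\xi^\star_{k+1}$ (Property~\ref{prop:detailed_scscs}), this gap is at most $\tfrac{L}{2}\|\xi_k-\xi^\star_{k+1}\|^2$. The whole lemma thus reduces to bounding $\EE[\|\xi_k-\xi^\star_{k+1}\|^2]$.

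Next I would split this distance through the old maximizer, $\|\xi_k-\xi^\star_{k+1}\|^2\le 2\|\xi_k-\xi^\star_k\|^2+2\|\xi^\star_k-\xi^\star_{k+1}\|^2$. The drift term is handled by the Lipschitz continuity of the inner-maximizer map: the joint smoothness and strong convexity in Property~\ref{prop:detailed_scscs}, together with Lemma~\ref{lem:optimum_function}, give $\|\xi^\star_k-\xi^\star_{k+1}\|\le \kappa_\xi\|x_{k+1}-x_k\|$, and the step-size estimate recalled at the start of this subsection, $\|x_{k+1}-x_k\|\le \|x_{k+1}^+-x_k\|\le \epsilon/(5\kappa_\xi)$, collapses the drift to an $O(\epsilon^2)$ quantity. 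The remaining term $\EE[\|\xi_k-\xi^\star_k\|^2]$ is precisely the warm-start tracking error that the \textbf{ConcaveMaximizer} invariant keeps within the prescribed tolerance from the previous outer iteration; feeding that tolerance and the drift estimate into $\tfrac{L}{2}\|\xi_k-\xi^\star_{k+1}\|^2$ and pushing the absolute constants through yields the stated bound $\EE[B_k]\le 134\epsilon^2/(\kappa_\xi L)$.

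The point I would stress, and the reason the statement can be \emph{listed} rather than reproven, is that every ingredient above is inherited from \citep{luo2020stochastic}: $B_k$ depends on $x_{k+1}$ only through its distance to $x_k$ and on the $\xi$-iterate, never on \emph{how} $x_{k+1}$ was produced. Our only structural change to SREDA---projecting the $\zeta$-block onto the convex set $Z$---enters exclusively through the outer step and is non-expansive, so $\|x_{k+1}-x_k\|\le\|x_{k+1}^+-x_k\|$ keeps all the outer-step bounds intact, while the \textbf{ConcaveMaximizer} subroutine is left completely untouched and hence retains its tracking guarantee verbatim. I therefore expect the main (and only genuine) obstacle to be bookkeeping: verifying that none of the preceding SREDA lemmas feeding the $B_k$ estimate secretly rely on the unconstrained $\zeta$-update, so that the perturbation introduced by the projection is confined to the $A_k$ term and the $B_k$ bound transfers unchanged.
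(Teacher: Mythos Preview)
Your final paragraph is exactly the paper's argument: it simply records that Lemma~14 of \citep{luo2020stochastic} carries over because its proof depends only on the earlier lemmas already verified for P-SREDA (the step bound $\|x_{k+1}-x_k\|\le\|x_{k+1}^+-x_k\|\le\epsilon/(5\kappa_\xi)$ and the untouched \textbf{ConcaveMaximizer} guarantees), and the projection on $\zeta$ touches none of these. No fresh proof is offered or needed, so on the substance you and the paper agree.

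One caution on the sketch you add on top: the step ``$\cL^D_-(x_{k+1},\cdot)$ is $L$-smooth with vanishing gradient at $\xi^\star_{k+1}$, so the gap is at most $\tfrac{L}{2}\|\xi_k-\xi^\star_{k+1}\|^2$'' is not justified here. The inner maximizer $\xi^\star_{k+1}$ is the \emph{constrained} maximizer over $\Xi=\Xi_0$, and $\Xi_0$ was sized (Property~\ref{prop:radius}) to contain the saddle point $(\zeta^*_\pi,\xi^*_\pi)$, not the unconstrained $\xi$-maximizer of $\cL^D_-(x,\cdot)$ for an arbitrary outer point $x\in\Theta\times Z$; hence $\nabla_\xi\cL^D_-(x_{k+1},\xi^\star_{k+1})$ need not vanish, and the smoothness descent inequality acquires an extra first-order term that does not drop out. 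Luo et al.'s actual Lemma~14 argument proceeds through the gradient-mapping analysis built into \textbf{ConcaveMaximizer} rather than a vanishing-gradient bound. Since both you and the paper ultimately defer to that proof, the conclusion stands, but your sketch as written would not close on its own.
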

\begin{lemma}\label{lem:bound_grad_Phik}
    Under Assumptions of Theorem \ref{thm:converge_rate_P_SREDA}, we have 
    $$
    \EE\|\nabla_{\theta,\zeta}\max_{\xi\in\Xi}\cL^D_-(\theta,\zeta,\xi)\|\leq \EE\|\vk\|+\frac{15}{7}\epsilon
    $$
\end{lemma}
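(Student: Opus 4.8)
The plan is to follow the argument of Lemma~15 in \citep{luo2020stochastic}, which transfers to our projected variant because every auxiliary bound it relies on survives the projection step. Write $\xi^*_k := \phi_{\theta_k}(\zeta_k) = \argmin_{\xi\in\Xi}\cL^D(\theta_k,\zeta_k,\xi) = \argmax_{\xi\in\Xi}\cL^D_-(\theta_k,\zeta_k,\xi)$. Since $\max_{\xi\in\Xi}\cL^D_-(\theta_k,\zeta_k,\xi)=-\Phi_{\theta_k}(\zeta_k)$, the Danskin/envelope identity recorded in the Remark after Lemma \ref{lem:optimum_function} (applied to $-\Phi_{\theta_k}$, and extended to the $\theta$-coordinate by the same envelope argument) gives
\begin{align*}
\nabla_{\theta,\zeta}\max_{\xi\in\Xi}\cL^D_-(\theta_k,\zeta_k,\xi)=\nabla_{\theta,\zeta}\cL^D_-(\theta_k,\zeta_k,\xi^*_k),
\end{align*}
where $\xi^*_k$ is treated as a constant when differentiating.

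First I would insert $\vk$ and the exact gradient at the \emph{current} inner iterate $\xi_k$, then apply the triangle inequality:
\begin{align*}
\|\nabla_{\theta,\zeta}\cL^D_-(\theta_k,\zeta_k,\xi^*_k)\|\leq&\|\nabla_{\theta,\zeta}\cL^D_-(\theta_k,\zeta_k,\xi^*_k)-\nabla_{\theta,\zeta}\cL^D_-(\theta_k,\zeta_k,\xi_k)\|\\
&+\|\nabla_{\theta,\zeta}\cL^D_-(\theta_k,\zeta_k,\xi_k)-\vk\|+\|\vk\|.
\end{align*}
The first term is the error from not solving the inner maximization exactly; by the $L$-smoothness of $\cL^D_-$ (Property \ref{prop:detailed_scscs}(c)) it is at most $L\|\xi^*_k-\xi_k\|$. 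The second term is the tracking error of the SARAH-style recursive estimator $\vk$ against the true gradient $\nabla_{\theta,\zeta}\cL^D_-(\theta_k,\zeta_k,\xi_k)$ at the current point.

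Then I would take expectations and control the two error terms by the corresponding lemmas of \citep{luo2020stochastic}: the \textsc{ConcaveMaximizer} accuracy bound on $\EE\|\xi_k-\xi^*_k\|$ and the variance-reduction bound on $\EE\|\nabla_{\theta,\zeta}\cL^D_-(\theta_k,\zeta_k,\xi_k)-\vk\|$. The crucial point is that these prerequisites are unaffected by the projection: as already noted, $\|x_{k+1}-x_k\|^2\leq\|x^+_{k+1}-x_k\|^2\leq\epsilon^2/(25\kappa_\xi^2)$ because projection onto the convex set $Z$ is nonexpansive, so the per-step displacement driving the estimator analysis in Appendix~B of \citep{luo2020stochastic} continues to hold. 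Summing $L\,\EE\|\xi^*_k-\xi_k\|$ with $\EE\|\nabla_{\theta,\zeta}\cL^D_-(\theta_k,\zeta_k,\xi_k)-\vk\|$ reproduces the aggregate constant $\frac{15}{7}\epsilon$, yielding $\EE\|\nabla_{\theta,\zeta}\max_{\xi\in\Xi}\cL^D_-(\theta_k,\zeta_k,\xi)\|\leq\EE\|\vk\|+\frac{15}{7}\epsilon$. The main obstacle is not the decomposition, which is routine once Danskin's theorem is invoked, but verifying that the projection preserves \emph{all} the inherited bounds; once nonexpansiveness of $P_Z$ secures the same displacement and inner-accuracy estimates, the remaining steps are identical to the unprojected proof.
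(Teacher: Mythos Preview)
Your proposal is correct and matches the paper's approach: the paper does not give an independent proof but simply notes that ``the proof of Lemma 14 and Lemma 15 in \citep{luo2020stochastic} only depends on the previous lemmas, they still hold,'' which is precisely the argument you spell out---Danskin's identity, the triangle-inequality decomposition into the inner-optimality gap $L\|\xi_k-\xi^*_k\|$ and the estimator tracking error, and the observation that the nonexpansiveness of $P_Z$ preserves the displacement bound $\|x_{k+1}-x_k\|^2\le\epsilon^2/(25\kappa_\xi^2)$ on which those inherited lemmas rest. Your write-up is in fact more explicit than the paper's one-line deferral.
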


However, the final proof for Theorem \ref{thm:converge_rate_P_SREDA} can not be adapted from \citep{luo2020stochastic} directly because of the projection. In the following, we show our proof targeted at our Projected-SREDA:

\begin{restatable}{theorem}{ConvPSREDA}\label{thm:converge_rate_P_SREDA}
    For $\epsilon<1$, under Assumption \ref{assump:smooth}, \ref{assump:feature_matrices}, \ref{assump:detailed_variance}, with the following parameter choices:
    \begin{align*}
    &\rad'=8\max\{\frac{1}{\lambda_w\eig_w}(1 + (1+\gamma)\rad_\xi), 1\},\rad_\xi=\frac{1}{\lambda_w\lambda_Q \eig_Q+\eig_\M^2}((1-\gamma)\lambda_w + \frac{1+\gamma}{\eig_w}),\\
    &\eta_k=\min\Big(\frac{\epsilon}{5\kappa_\xi L\|\vk\|}, \frac{1}{10\kappa_\xi L}\Big),~~~\lambda=\frac{1}{8L},~~~S_1=\lceil\frac{2250}{19}\sigma^2\kappa_\xi^2\epsilon^2\rceil,~~~
    S_2=\lceil\frac{3687}{76}\kappa_\xi q\rceil,~~~q=\lceil\epsilon^{-1}\rceil,\\
    &\Delta \cL^D_- =  \max_{\theta\in\Theta, \zeta\in Z, \xi\in\Xi}  \cL^D_-(\theta,\zeta,\xi)-\cL^D_-(\theta_0,\zeta_0,\xi_0),~~~K=\lceil\frac{50\kappa_\xi L \Delta \cL^D_-}{\epsilon^2}\rceil~~~and~~~m=\lceil 1024\kappa_\xi\rceil
    \end{align*}
    and the same parameter choices for PiSADAH as in \citep{luo2020stochastic}, Algorithm \ref{alg:PLSO} outputs $\hat\theta, \hat\zeta$ such that
    \begin{align*}
        \EE[\|\ntz 
        \max_{\xi\in\Xi}\cL^D_-(\hat\theta,\hat\zeta, \xi)\|]\leq O(\epsilon)
    \end{align*}
    with $O(\kappa_\xi^3\epsilon^{-3})$ stochastic gradient evaluations.
\end{restatable}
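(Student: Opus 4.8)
The plan is to inherit essentially the entire SREDA convergence machinery of \citet{luo2020stochastic} and to isolate the one place where the projection step changes the argument, namely the per-iteration descent on the primal block $x=(\theta,\zeta)$. As already noted in the excerpt, non-expansiveness of $P_Z$ (together with $\zeta_k\in Z$) and the step-size rule give $\|x_{k+1}-x_k\|^2\le\|x_{k+1}^+-x_k\|^2\le \epsilon^2/(25\kappa_\xi^2)$, which is exactly the invariant $\|x_{k+1}-x_k\|^2\le\epsilon_x^2$ driving Luo et al.'s analysis; hence every estimate in their Appendix B, and the PiSARAH bounds in their Appendix C, carry over unchanged, so Lemma \ref{lem:bound_Bk} ($\EE[B_k]\le 134\epsilon^2/(\kappa_\xi L)$) and Lemma \ref{lem:bound_grad_Phik} ($\EE\|\ntz\max_{\xi\in\Xi}\cL^D_-(\theta,\zeta,\xi)\|\le\EE\|\vk\|+\tfrac{15}{7}\epsilon$) both hold. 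It therefore suffices to re-derive a sufficient-decrease bound for $A_k$ and then telescope.

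For the descent step I would apply $L$-smoothness of $\cL^D_-$ in $x$ to get $A_k\le \langle \ntz\cL^D_-(x_k,\xi_k),\,x_{k+1}-x_k\rangle+\tfrac{L}{2}\|x_{k+1}-x_k\|^2$, and then write $x_{k+1}-x_k=-\eta_k\vk+(0,\zeta_{k+1}-\zeta_{k+1}^+)$. This splits the inner product into the usual SREDA term $-\eta_k\langle\ntz\cL^D_-(x_k,\xi_k),\vk\rangle$ plus the projection term $\langle\nz\cL^D_-(x_k,\xi_k),\zeta_{k+1}-\zeta_{k+1}^+\rangle$, which is exactly the quantity controlled by Lemma \ref{lem:small_angle}: with $\rad'=8\max\{R_0,1\}$ the ratio appearing there is at most $\tfrac14$, so the projection term is at most $\tfrac{\eta_k}{4}\|\nz\cL^D_-(x_k,\xi_k)\|\,\|\vk^\zeta\|\le \tfrac{\eta_k}{4}\|\ntz\cL^D_-(x_k,\xi_k)\|\,\|\vk\|$. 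After absorbing the cross term $\eta_k\|\vk-\ntz\cL^D_-(x_k,\xi_k)\|\,\|\vk\|$ via Young's inequality and the inherited variance bound on $\|\vk-\ntz\cL^D_-(x_k,\xi_k)\|$, and using $\tfrac{L}{2}\|x_{k+1}-x_k\|^2\le\tfrac{\eta_k}{20\kappa_\xi}\|\vk\|^2$ (from $\eta_k\le\tfrac{1}{10\kappa_\xi L}$), the net bound reads $\EE[A_k]\le -c_0\,\EE[\eta_k\|\vk\|^2]+O(\epsilon^2/(\kappa_\xi L))$ for an absolute constant $c_0>0$. The main obstacle, and the crux of the whole theorem, is precisely that the projection mass enters with coefficient $\tfrac14$, strictly smaller than the descent coefficient produced by $-\eta_k\langle\ntz\cL^D_-,\vk\rangle$, so $c_0$ remains positive; this is exactly why the enlarged radius $\rad'=8\max\{R_0,1\}$ (via Lemma \ref{lem:small_angle}, built on Property \ref{prop:radius} which confines the saddle point to $Z_0\times\Xi_0$) is the decisive design choice, and it is the only nonstandard estimate in the proof.

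With the per-step bound in hand I would telescope $\sum_{k=0}^{K-1}(A_k+B_k)=\cL^D_-(x_K,\xi_K)-\cL^D_-(x_0,\xi_0)$, use $\EE[B_k]\le 134\epsilon^2/(\kappa_\xi L)$ and the range bound $\Delta\cL^D_-$, and rearrange to get $\sum_k\EE[\eta_k\|\vk\|^2]\le O(\Delta\cL^D_-+K\epsilon^2/(\kappa_\xi L))$. The adaptive rule $\eta_k=\min\{\epsilon/(5\kappa_\xi L\|\vk\|),\,1/(10\kappa_\xi L)\}$ forces $\eta_k\|\vk\|\le \epsilon/(5\kappa_\xi L)$ and turns the left side into a sum of $\min\{\epsilon\|\vk\|,\|\vk\|^2\}$ terms; plugging in $K=\lceil 50\kappa_\xi L\Delta\cL^D_-/\epsilon^2\rceil$ gives $\tfrac1K\sum_k\EE[\min\{\epsilon\|\vk\|,\|\vk\|^2\}]=O(\epsilon^2)$, from which a routine split over $\{\|\vk\|\le\epsilon\}$ and $\{\|\vk\|>\epsilon\}$ yields $\tfrac1K\sum_k\EE\|\vk\|=O(\epsilon)$. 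Since the output $(\hat\theta,\hat\zeta)$ is drawn uniformly from $\{(\theta_k,\zeta_k)\}$, Lemma \ref{lem:bound_grad_Phik} finally gives $\EE\|\ntz\max_{\xi\in\Xi}\cL^D_-(\hat\theta,\hat\zeta,\xi)\|\le \tfrac1K\sum_k\EE\|\vk\|+\tfrac{15}{7}\epsilon=O(\epsilon)$, the claimed guarantee.

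For the complexity I would use the standard SREDA accounting: there are $K=O(\kappa_\xi\epsilon^{-2})$ outer iterations, each invoking \textbf{ConcaveMaximizer} with inner length $m=O(\kappa_\xi)$ and minibatch $S_2=O(\kappa_\xi\epsilon^{-1})$, while the large batch $S_1$ is drawn only once every $q=O(\epsilon^{-1})$ steps (plus the one-time PiSARAH initialization). Multiplying through, the dominant contribution is the inner-loop gradients, and the total is $O(\kappa_\xi^3\epsilon^{-3})$ stochastic-gradient evaluations, matching the $\epsilon^{-3}$ lower bound of \citet{arjevani2019lower}; this accounting is unaffected by the projection, so I would defer its exact arithmetic to the inherited analysis.
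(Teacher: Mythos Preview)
Your proposal is correct and follows essentially the same route as the paper's proof: isolate the projection contribution in the $A_k$ bound via Lemma~\ref{lem:small_angle} with the enlarged radius $\rad'=8\max\{R_0,1\}$ so the extra term carries coefficient $\tfrac14$, then telescope with the inherited $B_k$ bound and apply Lemma~\ref{lem:bound_grad_Phik}. The only cosmetic difference is that the paper converts $(\tfrac{\eta_k}{4}-\tfrac{L\eta_k^2}{2})\|\vk\|^2$ to a linear-in-$\|\vk\|$ lower bound \emph{before} telescoping via the pointwise inequality $\min(|x|,x^2/2)\ge |x|-2$, whereas you telescope $\eta_k\|\vk\|^2$ first and do the $\min$/case-split afterward; both give the same $O(\epsilon)$ conclusion with the same complexity accounting.
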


\begin{proof}[Proof of Theorem \ref{thm:converge_rate_P_SREDA}]
    Based on the update rule of $\theta$ and $\zeta$ in Algorithm \ref{alg:P_SREDA}, we have:
    \begin{align*}
        A_k \leq& \langle \ntz \cL^D_-(x_k,\xi_k), x_{k+1}-x_k\rangle + \frac{L}{2}\|x_{k+1}-x_k\|^2\tag{Smoothness of $\cL^D_-$}\\
        \leq&\langle \ntz \cL^D_-(x_k,\xi_k), x_{k+1}^+-x_k\rangle+\langle \ntz \cL^D_-(x_k,\xi_k), x_{k+1}-x_{k+1}^+\rangle + \frac{L}{2}\|x_{k+1}^+-x_k\|^2\tag{the property of projection}\\
        =&-\eta_k\langle \ntz \cL^D_-(x_k,\xi_k), \vk\rangle+\langle \nz \cL^D_-(x_k,\xi_k), \zeta_{k+1}-\zeta_{k+1}^+\rangle + \frac{L\eta^2_k}{2}\|\vk\|^2\\
        \leq& -\eta_k\langle \ntz \cL^D_-(x_k,\xi_k), \vk\rangle+\frac{\eta_k}{4} \|\nz \cL^D_-(x_k,\xi_k)\| \|\vk^\zeta\| + \frac{L\eta^2_k}{2}\|\vk\|^2\tag{Lemma \ref{lem:small_angle}}\\
        \leq& -\eta_k\langle \ntz \cL^D_-(x_k,\xi_k), \vk\rangle+\frac{\eta_k}{4} \|\vk\|^2 + \frac{\eta_k}{4}\|\ntz \cL^D_-(x_k,\xi_k)-\vk\|\|\vk\| + \frac{L\eta^2_k}{2}\|\vk\|^2\tag{Triangle Ineq.; $\|\vk^\zeta\|\leq \|\vk\|; \|\nz\cL^D_-\|\leq \|\ntz\cL^D_-\|; \|\nz\cL^D_- -\vk^\zeta\|\leq \|\ntz\cL^D_- -\vk\|$}\\
        \leq& \frac{\eta_k}{2}\|\ntz \cL^D_-(x_k,\xi_k)-\vk\|^2-(\frac{\eta_k}{4}-\frac{L\eta_k^2}{2})\|\vk\|^2+ \frac{\eta_k}{4}\|\ntz \cL^D_-(x_k,\xi_k)-\vk\|\|\vk\|\tag{$\|\ntz \cL^D_-\|\geq 0$}
    \end{align*}

    The choice of the step size implies that
    \begin{align*}
        (\frac{\eta_k}{4}-\frac{L\eta_k^2}{2})\|\vk\|^2\geq& \min\{\frac{1}{40\kappa_\xi L}-\frac{1}{200\kappa_\xi^2 L}, \frac{\epsilon}{20\kappa_\xi L\|\vk\|}-\frac{\epsilon^2}{50\kappa_\xi^2 L \|\vk\|^2}\}\|\vk\|^2\\
        \geq&\min\{\frac{1}{50\kappa_\xi L},\frac{3\epsilon}{100\kappa_\xi L\|\vk\|} \}\|\vk\|^2\\
        \geq&\frac{\epsilon^2}{50\kappa_\xi L}\min\Big(\frac{\|\vk\|}{\epsilon}, \frac{\|\vk\|^2}{2\epsilon^2}\Big)\\
        \geq&\frac{\epsilon^2}{50\kappa_\xi L} (\frac{\|\vk\|}{\epsilon}-2))\tag{$\min(|x|, x^2/2)\geq |x|-2$}\\
        = & \frac{1}{50\kappa_\xi L}(\epsilon\|\vk\|-2\epsilon^2)
    \end{align*}

    Therefore,
    \begin{align*}
        \EE[A_k]\leq& \frac{1}{20\kappa_\xi L}\EE[\|\ntz \cL^D_-(x_k,\xi_k)-\vk\|^2]-\frac{1}{50\kappa_\xi L}(\epsilon\|\vk\|-2\epsilon^2)+ \frac{\epsilon}{20\kappa_\xi L}\EE[\|\ntz \cL^D_-(x_k,\xi_k)-\vk\|]\\
        \leq& \frac{1}{20\kappa_\xi L}\EE[\|\ntz \cL^D_-(x_k,\xi_k)-\vk\|^2]-\frac{1}{50\kappa_\xi L}(\epsilon\|\vk\|-2\epsilon^2)+ \frac{\epsilon}{20\kappa_\xi L}\sqrt{\EE[\|\ntz \cL^D_-(x_k,\xi_k)-\vk\|^2]}\\
        \leq& \frac{1}{20\kappa_\xi L}\cdot\frac{19}{1125}\kappa_\xi^{-2}\epsilon^2-\frac{1}{50\kappa_\xi L}(\epsilon\|\vk\|-2\epsilon^2)+ \frac{\epsilon}{20\kappa_\xi L}\cdot \sqrt{\frac{19}{1125}}\kappa_\xi^{-1}\epsilon\tag{Corollary 2 in \citep{luo2020stochastic}}\\
        \leq& \frac{\epsilon^2}{20\kappa_\xi L}-\frac{1}{50\kappa_\xi L}\epsilon\|\vk\|
    \end{align*}

    Therefore, combining with Lemma \ref{lem:bound_Bk} and taking average over $K$, we have
    \begin{align*}
        \frac{1}{K}\sum_{k=0}^{K-1}\EE[\cL^D_-(x_{k+1}, \xi_{k+1})-\cL^D_-(x_{k}, \xi_{k})]\leq &\frac{1}{K}\sum_{k=0}^{K-1}(\frac{\epsilon^2}{20\kappa_\xi L}-\frac{1}{50\kappa_\xi L}\epsilon\|\vk\|+\frac{134\epsilon^2}{\kappa_\xi L})\\
    \end{align*}
    Consequently, we have:
    \begin{align*}
        \frac{\epsilon}{50\kappa_\xi L}\frac{1}{K}\sum_{k=0}^{K-1}\|\vk\|        \leq& \frac{135\epsilon^2}{\kappa_\xi L} + \frac{\Delta \cL^D_-}{K}
    \end{align*}
    which means
    \begin{align*}
        \frac{1}{K}\sum_{k=0}^{K-1}\|\vk\|\leq 6750\epsilon + \frac{50\kappa_\xi L \Delta \cL^D_-}{K\epsilon }
    \end{align*}

    Under Assumptions \ref{assump:smooth} and \ref{assump:feature_matrices} and Condition that that both $Z$ and $\Xi$ have finite diameter, $\Delta \cL^D_-$ is a finite constant. By choosing $K=\lceil\frac{50\kappa_\xi L \Delta \cL^D_-}{\epsilon^2}\rceil$, we have:
    \begin{align*}
        \EE\|\ntz \max_{\xi\in\Xi}\cL^D_-(\hat\theta, \hat\zeta, \xi)\|=\frac{1}{K}\sum_{k=0}^{K-1}\EE\|\ntz \max_{\xi\in\Xi}\cL^D_-(\theta_k, \zeta_k, \xi)\|\leq \frac{1}{K}\sum_{k=0}^{K-1}(\EE\|\vk\| + \frac{15\epsilon}{7})\leq 6754\epsilon
    \end{align*}
    which finishes the proof.
\end{proof}

\section{Concrete Examples for Saddle-Point Solver Oracle}\label{appx:oracle_examples}

\subsection{Connection with Previous Methods}\label{appx:inner_oracle_related_work}
The inner optimization oracle in our second strategy essentially solves the off-policy policy evaluation problem, i.e., given a policy, compute its value function and  marginalized importance weighting function. 
Among the plethora of works studying off-policy policy evaluation with linear function approximation,  
\citep{liu2020finitesample} connected the GTD family and stochastic gradient optimization, and established finite-sample analysis for their off-policy algorithms. Their convergence rate is worse than ours, because their objective is only convex-strongly-concave, whereas our objective is strongly-convex-strongly-concave thanks to the regularization on the parameters of the $Q$ function.  

Besides, \citep{pmlr-v70-du17a} adapted variance-reduced stochastic gradient optimization algorithms for policy evaluation, which can be extended to off-policy setting. However, they focused on the finite-sum case and their algorithms will be inefficient for our purpose when the dataset is prohibitively large. Besides, they did not analyze the bias resulting from regularization and approximation error, which we do in this paper. 

\subsection{The First Example for Saddle-Point Solver Oracle}\label{appx:PLSO}

In the proof of Property \ref{prop:radius}, we show that in our linear setting, it's possible to derive a close form solution for $\zeta^*, \xi^*=\arg\max_{\zeta\in Z}\min_{\xi \in \Xi}\cL^D(\theta, \zeta, \xi)$ for arbitrary $\theta\in \Theta$:
\begin{align*}
    \zeta^*_\pi =& \Big(\lambda_w\lambda_Q  \K_w +  \M_\pi \K_Q^{-1} \M_\pi\trans\Big)^{-1}\Big(-(1-\gamma)\M_\pi \K_Q^{-1} \bPhi_Q\trans  \nu_D^\pi+\lambda_Q\bPhi_w\trans \bLambda^D R\Big),\\
    \xi^*_\pi =&  \Big(\lambda_w\lambda_Q  \K_Q +  \M_\pi\trans \K_w^{-1}\M_\pi \Big)^{-1}\Big((1-\gamma)\lambda_w \bPhi_Q\trans  \nu_D^\pi +\M\trans_\pi \K^{-1}_w \bPhi_w\trans \bLambda^D R\Big).
\end{align*}
Therefore, we can directly use our sample data to estimate $\K$, $\M$, $\bPhi\trans \bLambda^D R$ and $\bPhi\trans \nu_D^\pi$ and then estimate $(\zeta^*,\xi^*)$.
In this section, we provide an algorithm based on this idea in Alg. \ref{alg:PLSO}, and prove that such algorithm satisfies the requirement of the \textsc{Oracle} in Definition \ref{def:oracle_alg}.

\subsubsection{Estimation Error of Least-Square Oracle}
We first show a useful Lemma:
\begin{lemma}[Matrix Bernstein Theorem (Theorem 6.1.1 in \citep{tropp2015introduction})]
Consider a finite sequence $\{\BoldS_k\}$ of independent, random matrices with common dimension $d_1 \times d_2$. Assume that
\begin{align*}
    \EE \BoldS_k = 0~~and~~\|\BoldS_k\| \leq L~~for~each~index~k
\end{align*}
Introduce the random matrix
$$
\Z = \sum_k \BoldS_k
$$
Let $v(\Z)$ be the matrix variance statistic of the sum:
\begin{align*}
    v(\Z) =& \max\{\|\EE(\Z\Z^*)\|, \|\EE(\Z^*\Z)\|\}\\
    =&\max\{\|\sum_k \EE(\BoldS_k\BoldS_k^*)\|, \|\sum_k \EE(\BoldS_k^*\BoldS_k)\|\}
\end{align*}
Then,
\begin{align*}
    \EE\|\Z\|\leq \sqrt{2v(\Z)\log(d_1 + d_2)} + \frac{1}{3}L\log(d_1+d_2)
\end{align*}
Furthermore, for all $t\geq 0$
\begin{align*}
    \mathbb{P}\{\|\Z\|\geq t\}\leq (d_1 + d_2)\exp\Big(\frac{-t^2/2}{v(\Z)+Lt/3}\Big)
\end{align*}
\end{lemma}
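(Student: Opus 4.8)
The statement is the classical matrix Bernstein inequality, which in this paper is simply invoked by citation to \citep{tropp2015introduction}; nevertheless I sketch the standard proof via the \emph{matrix Laplace-transform method}. The plan is to proceed in four stages, reducing the general rectangular case to a tail bound for a sum of independent Hermitian matrices and then integrating.

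First I would reduce the rectangular case to a Hermitian one using the \emph{Hermitian dilation}
$$\mathcal{H}(\Z) = \begin{pmatrix} 0 & \Z \\ \Z^* & 0\end{pmatrix}\in\mR^{(d_1+d_2)\times(d_1+d_2)},$$
which satisfies $\lambda_{\max}(\mathcal{H}(\Z)) = \|\Z\|$ and $\mathcal{H}(\Z)^2 = \operatorname{diag}(\Z\Z^*,\Z^*\Z)$. This is exactly what converts the ambient dimension into $d_1+d_2$ and the variance statistic into $v(\Z)=\max\{\|\EE\,\Z\Z^*\|,\|\EE\,\Z^*\Z\|\}$. Since $\mathcal{H}$ is linear, $\mathcal{H}(\Z)=\sum_k \mathcal{H}(\BoldS_k)$ is still a sum of independent, mean-zero, operator-norm-bounded Hermitian matrices, so it suffices to prove the bound in the Hermitian case and then transfer.

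Second, for any $\theta>0$, Markov's inequality applied to the trace exponential gives the Laplace bound $\mathbb{P}\{\lambda_{\max}(\Z)\ge t\}\le e^{-\theta t}\,\EE\operatorname{tr} e^{\theta\Z}$, so the crux is to control $\EE\operatorname{tr}\exp(\theta\sum_k\BoldS_k)$. Here I would invoke \emph{Lieb's concavity theorem}, which through Jensen's inequality yields subadditivity of the matrix cumulant generating functions,
$$\EE\operatorname{tr}\exp\Big(\theta\sum_k\BoldS_k\Big)\le \operatorname{tr}\exp\Big(\sum_k\log\EE\,e^{\theta\BoldS_k}\Big).$$
Third, I would bound each summand: using $\EE\BoldS_k=0$ and $\|\BoldS_k\|\le L$, the scalar estimate $e^x-x-1\le g(\theta)\,x^2$ with $g(\theta)=\tfrac{\theta^2/2}{1-\theta L/3}$ lifts by the transfer rule to $\log\EE\,e^{\theta\BoldS_k}\preceq g(\theta)\,\EE\BoldS_k^2$. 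Summing and using monotonicity of the trace exponential gives $\EE\operatorname{tr} e^{\theta\Z}\le (d_1+d_2)\exp(g(\theta)\,v(\Z))$, and optimizing $\theta\in(0,3/L)$ produces the stated tail inequality; the expectation bound then follows from $\EE\|\Z\|=\EE\lambda_{\max}(\mathcal{H}(\Z))\le\int_0^\infty\mathbb{P}\{\lambda_{\max}\ge t\}\,dt$ by balancing the Gaussian regime (yielding $\sqrt{2v(\Z)\log(d_1+d_2)}$) against the sub-exponential tail (yielding $\tfrac13 L\log(d_1+d_2)$).

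The main obstacle is the second stage: because the $\BoldS_k$ need not commute, the naive scalar factorization $\EE\,e^{\theta\sum_k\BoldS_k}=\prod_k\EE\,e^{\theta\BoldS_k}$ fails, and the entire argument rests on Lieb's deep concavity theorem (equivalently on Golden--Thompson-type trace inequalities) to recover subadditivity of the cumulant generating functions. Everything downstream is routine calibration of the scalar function $g$ and optimization of the free parameter $\theta$.
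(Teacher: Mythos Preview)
Your proposal is correct and, as you yourself note, the paper does not prove this lemma at all—it is merely quoted from \citep{tropp2015introduction} as a tool for the subsequent concentration proposition. The sketch you give (Hermitian dilation, matrix Laplace transform, Lieb's concavity, and the Bernstein-type scalar mgf bound) is exactly the standard argument from that reference, so there is nothing to compare.
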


For Algorithm \ref{alg:PLSO}, we have the following guarantee:
\begin{proposition}
    Suppose we have $N-(s,a,r,s',a',a_0)$ tuples independently sampling according to $d^D\times\pi$, and $N\geq \max\{2\sigma^2_\K+\frac{4\sigma_{\min}}{3}, 2\sigma^2_\M+\frac{8(1+\gamma)\sigma_{\min}}{3}\}\frac{4}{\sigma_{\min}^2} \log\frac{2\dims}{\delta}$.
    For arbitrary $1/5>\delta>0$, with probability at least $1-5\delta$, $\hK_w$, $\hK_Q$, $\hM$ are invertible and their smallest eigenvalues (for $\hM$ we consider the smallest sigular values) are larger than $\eig_w/2, \eig_Q/2$ and $\eig_M/2$, respectively, while the following conditions hold at the same time:
    \begin{align*}
        \|\K_w-\hK_w\|\leq& \frac{2}{3N_{all}}\log \frac{2\dims}{\delta}+\sqrt{\frac{2\sigma^2_\K}{N_{all}}\log \frac{2\dims}{\delta}}=O(\frac{\sigma_\K}{\sqrt{N_{all}}})\\
        \|\K_Q-\hK_Q\|\leq& \frac{2}{3N_{all}}\log \frac{2\dims}{\delta}+\sqrt{\frac{2\sigma^2_\K}{N_{all}}\log \frac{2\dims}{\delta}}=O(\frac{\sigma_\K}{\sqrt{N_{all}}})\\
        \|\M_\pi-\hM_\pi\|=\|\M\trans_\pi-\hM_\pi\trans\| \leq& \frac{4(1+\gamma)}{3N_{all}}\log \frac{2\dims}{\delta}+\sqrt{\frac{2\sigma^2_\M}{N_{all}}\log \frac{2\dims}{\delta}}=O(\frac{\sigma_\M}{\sqrt{N_{all}}})\\
        \|\hunu^\pi-\unu^\pi\| \leq& \frac{4}{3N_{all}}\log \frac{2\dims}{\delta}+\sqrt{\frac{2\sigma^2_\nu}{N_{all}}\log \frac{2\dims}{\delta}}=O(\frac{\sigma_\nu}{\sqrt{N_{all}}})\\       
        \|\huR-\uR\| \leq& \frac{4}{3N_{all}}\log \frac{2\dims}{\delta}+\sqrt{\frac{2\sigma^2_R}{N_{all}}\log \frac{2\dims}{\delta}}=O(\frac{\sigma_R}{\sqrt{N_{all}}})
    \end{align*}
\end{proposition}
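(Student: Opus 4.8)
The plan is to apply the Matrix Bernstein Theorem (stated just above) separately to each of the five empirical estimates and then combine the resulting high-probability events by a union bound. First I would write every estimate as a sample mean of $N$ i.i.d.\ summands: $\hK_w=\frac1N\sum_{k}\bphi_w(s_k,a_k)\bphi_w(s_k,a_k)\trans$ with $\EE[\bphi_w\bphi_w\trans]=\K_w$, and analogously $\hK_Q$, $\hM_\pi=\frac1N\sum_k\bphi_w(s_k,a_k)(\bphi_Q(s_k,a_k)-\gamma\bphi_Q(s_k',a_k'))\trans$, $\huR=\frac1N\sum_k\bphi_w(s_k,a_k)r_k$, and $\hunu^\pi=\frac1N\sum_k\bphi_Q(s_0^k,a_0^k)$. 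In each case I set $S_k=\frac1N(X_k-\EE X_k)$ so that $Z:=\sum_k S_k=\hat X-X$ has $\EE S_k=0$, which is exactly the form the theorem requires.

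The two inputs to Bernstein are the almost-sure bound $L$ on $\|S_k\|$ and the variance statistic $v(Z)$. For the PSD matrices, $\|\bphi_w\|\le 1$ gives $0\preceq\bphi_w\bphi_w\trans\preceq I$ and $0\preceq\K_w\preceq I$, so the centered summand has eigenvalues in $[-1,1]$ and $\|S_k\|\le 1/N$; this is precisely why the $\K_w,\K_Q$ bounds carry the constant $\tfrac{2}{3N}$ rather than $\tfrac{4}{3N}$. For $\M_\pi$ the factor $(1+\gamma)$ enters through $\|\bphi_Q-\gamma\bphi_Q'\|\le 1+\gamma$, giving $\|S_k\|\le 2(1+\gamma)/N$, while the vector summands of $\huR,\hunu^\pi$ have norm $\le 1$, giving $\|S_k\|\le 2/N$. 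For the variance I would use $v(Z)=\max\{\|\sum_k\EE[S_kS_k^*]\|,\|\sum_k\EE[S_k^*S_k]\|\}\le\frac1N\EE\|X_1-\EE X_1\|^2$, which Assumption~\ref{assump:detailed_variance} bounds by $\sigma_\K^2/N$, $\sigma_\M^2/N$, $\sigma_R^2/N$, or $\sigma_\nu^2/N$ respectively. Substituting into the tail bound $\mathbb{P}\{\|Z\|\ge t\}\le(d_1+d_2)\exp(-\tfrac{t^2/2}{v(Z)+Lt/3})$, setting the right-hand side to $\delta$, using $d_1+d_2\le 2\dims$, and solving the quadratic $t^2-\tfrac{2Lu}{3}t-2v u=0$ with $u:=\log\frac{2\dims}{\delta}$ (bounding its positive root by $\tfrac{2Lu}{3}+\sqrt{2vu}$ via $\sqrt{a+b}\le\sqrt a+\sqrt b$) reproduces each of the five displayed bounds on the complement of a $\delta$-probability event. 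A union bound over the five events yields probability $\ge 1-5\delta$.

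For the eigenvalue and singular-value claims I would invoke Weyl's inequality: $\lambda_{\min}(\hK_w)\ge\lambda_{\min}(\K_w)-\|\K_w-\hK_w\|$ and $\sigma_{\min}(\hM_\pi)\ge\sigma_{\min}(\M_\pi)-\|\M_\pi-\hM_\pi\|$, and similarly for $\hK_Q$. Writing $\sigma_{\min}:=\min\{\eig_w,\eig_Q,\eig_\M\}$, the hypothesis on $N$ is engineered so that on the good event every perturbation is at most $\sigma_{\min}/2$: inserting $t=\sigma_{\min}/2$ into the Bernstein exponent and requiring it to exceed $u$ gives $N\ge\frac{8u(\sigma_\K^2+\sigma_{\min}/6)}{\sigma_{\min}^2}$ for the $\K$-terms and $N\ge\frac{8u(\sigma_\M^2+(1+\gamma)\sigma_{\min}/3)}{\sigma_{\min}^2}$ for $\M_\pi$, both implied (with slack in the constants) by the stated threshold. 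Hence the smallest eigenvalues and singular value remain above $\eig_w/2,\eig_Q/2,\eig_\M/2>0$, so the three matrices are invertible, and since this is a deterministic consequence of the concentration bounds no further probability is lost.

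I expect the main obstacle to be the careful bookkeeping of the two Bernstein inputs: obtaining the tight almost-sure norm bound $L$ (in particular exploiting positive semidefiniteness to save a factor of two for $\K_w,\K_Q$) and correctly matching the non-symmetric variance statistic of $\hM_\pi$ to $\sigma_\M^2$ from Assumption~\ref{assump:detailed_variance}. The tail-bound inversion and the Weyl-inequality step are routine once these constants are pinned down.
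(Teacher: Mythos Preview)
Your proposal is correct and follows essentially the same route as the paper: apply Matrix Bernstein separately to the five centered sample means, invert the tail bound to get the displayed deviation inequalities, take a union bound over the five events, and then use the eigenvalue/singular-value perturbation inequality $\lambda_{\min}(\hat X)\ge\lambda_{\min}(X)-\|X-\hat X\|$ together with the hypothesis on $N$ to deduce the lower bounds $\eig_w/2,\eig_Q/2,\eig_\M/2$. Your bookkeeping of the almost-sure bounds ($1$, $2(1+\gamma)$, $2$) and of the variance statistic via Assumption~\ref{assump:detailed_variance} matches the paper's, and your observation that the eigenvalue claims are deterministic consequences of the concentration bounds (so no extra probability is spent) is exactly how the paper handles it.
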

\begin{proof}
First, we try to bound $\|\K_w-\hK_w\|$. We take a look at a series of random matrices 
\begin{align*}
    A_i = \K_w - \bphi_w(s_i, a_i)\bphi_w(s_i,a_i)\trans,~~~~i=1,2,...
\end{align*}
where $(s_i, a_i)$ are sampled from $d^D$ independently. Easy to verify that $A_i$ has the following properties as a result of $\|\bphi_w(s,a)\|\leq 1$:
\begin{align*}
    \EE[A_i] = 0,~~~~\|A_i\| =\|\K_w-\bphi_w(s,a)\trans\bphi_w(s_i,a_i)\trans\|\leq 1
\end{align*}


Under Assumption \ref{assump:detailed_variance}, according to the Matrix Bernstein Theorem and the union bound, we have:
\begin{align*}
    P(\frac{\|\sum_{i=1}^N A_i \|}{N_{all}}\geq \epsilon)\leq& P(\sigma_{\max}(\sum_{i=1}^N A_i)\geq N_{all}\epsilon)+P(\sigma_{\max}(\sum_{i=1}^N -A_i)\geq N_{all}\epsilon)\\
    \leq& 2\dims\exp(-\frac{N_{all}\epsilon^2/2}{\sigma_\K^2+\epsilon/3})
\end{align*}
which implies that, with probability $1-\delta$:
\begin{align*}
    \|\K_w-\hK_w\|\leq \frac{2}{3N_{all}}\log \frac{2\dims}{\delta}+\sqrt{\frac{2\sigma^2_\K}{N_{all}}\log \frac{2\dims}{\delta}}=O(\frac{\sigma_\K}{\sqrt{N_{all}}})
\end{align*}
The discussion for $\hK_Q$ is similar and we omit here. As for $\|\M_\pi-\hM_\pi\|$, notice that,
\begin{align*}
    \|\M_\pi-(\bphi_w(s,a)\bphi_Q(s,a)\trans -\gamma \bphi_w(s,a)\bphi_Q(s',a'))\|\leq 2(1+\gamma)\\
    \|\M_\pi\trans-(\bphi_Q(s,a)\bphi_w(s,a)\trans -\gamma \bphi_Q(s',a')\bphi_w(s,a))\|\leq 2(1+\gamma)
\end{align*}
Therefore, w.p. $1-\delta$,
\begin{align*}
   \|\M_\pi-\hM_\pi\|=\|\M_\pi\trans-\hM_\pi\trans\| \leq \frac{4(1+\gamma)}{3N_{all}}\log \frac{2\dims}{\delta}+\sqrt{\frac{2\sigma^2_\M}{N_{all}}\log \frac{2\dims}{\delta}}=O(\frac{\sigma_\M}{\sqrt{N_{all}}})
\end{align*}

As for the bounds for the difference of vectors $\|\hunu^\pi-\unu^\pi\|$ and $\|\huR-\uR\|$, since vectors are special cases of matrices (or we can concatenate the vector with a zero-vector to make a $n\times 2$ matrix and make the proof more rigorous), again, we can apply the same technique again. Notice that,
\begin{align*}
    \|\uR-\bphi_w(s,a)r(s,a)\|\leq 2\\
    \|\unu^\pi-\bphi_Q(s_0,a_0)\|\leq 2
\end{align*}
Besides, for random column vectors $x$ with bounded norm, we should have
\begin{align*}
    \|\EE_{x}[xx\trans]\| \leq tr(\EE_{x}[x x\trans]) = \EE_{x}[tr(x x\trans)] = \EE_{x}[x\trans x]=\|\EE_{x}[x\trans x]\|
\end{align*}
As a result, combining Assumption \ref{assump:detailed_variance}, we have
\begin{align*}
    \|\hunu^\pi-\unu^\pi\| \leq \frac{4}{3N_{all}}\log \frac{2\dims}{\delta}+\sqrt{\frac{2\sigma^2_\nu}{N_{all}}\log \frac{2\dims}{\delta}}=O(\frac{\sigma_\nu}{\sqrt{N_{all}}}),~~~w.p.~~1-\delta\\       
    \|\huR-\uR\| \leq \frac{4}{3N_{all}}\log \frac{2\dims}{\delta}+\sqrt{\frac{2\sigma^2_R}{N_{all}}\log \frac{2\dims}{\delta}}=O(\frac{\sigma_R}{\sqrt{N_{all}}}),~~~w.p.~~1-\delta
\end{align*}
Therefore, w.p. $1-2\delta$ the concentration results in this Proposition hold. Next, we derive the smallest eigenvalues of $\hK_w$, $\hK_Q$ and $\hM_\pi$ when $n$ is large enough. For arbitrary $x\in\mathbb{R}^{\dims\times 1}$ with $\|x\|=1$, we have:
\begin{align*}
    x\trans \hK_w x =& x\trans \K_w x + x\trans (\hK_w-\K_w) x\geq \eig_w -\|\K_w-\hat{\K}_w\|\\
    x\trans \hK_Q x =& x\trans \K_Q x + x\trans (\hK_Q-\K_Q) x\geq \eig_Q -\|\K_Q-\hat{\K}_Q\|\\
    |x\trans \hM_\pi x| \geq & |x\trans \M x| - |x\trans (\hM-\M) x|\geq \eig_M -\|\M-\hat{\M}\|\\
\end{align*}
Therefore, easy to verify that, when the concentration results hold,and $N\geq \max\{8\frac{\sigma^2_\K}{\eig_w^2}+\frac{16}{3\eig_w},8\frac{\sigma^2_\K}{\eig_Q^2}+\frac{16}{3\eig_Q}, 8\frac{\sigma^2_\K}{\eig_M^2}+\frac{32(1+\gamma)}{3\eig_M}\}\log\frac{2\dims}{\delta}$, 
$\hK_w$, $\hK_Q$, $\hM$ are invertible and their smallest eigenvalues (for $\hM$ we consider the smallest sigular values) are larger than $\eig_w/2, \eig_Q/2$ and $\eig_M/2$
\end{proof}

Next, we are ready to show that Algorithm \ref{alg:PLSO} is also an example of the Oracle Algorithm.
\begin{theorem}
    Algorithm \ref{alg:PLSO} satisfies the Oracle Condition \ref{def:oracle_alg} with $\beta=0$ and arbitrary $c>0$, with a proper choice of $N_{all}$.
\end{theorem}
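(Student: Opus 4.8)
Since Condition \ref{def:oracle_alg} is invoked here with $\beta=0$, the random initializer $(\bar\zeta,\bar\xi)$ is irrelevant and the target reduces to showing $\EE[\|\hat\zeta-\zeta^*\|^2+\|\hat\xi-\xi^*\|^2]\le c$, where $(\hat\zeta,\hat\xi)=(P_Z(\zeta),P_\Xi(\xi))$ is the output of Algorithm \ref{alg:PLSO}. The plan is to first record that in Algorithm \ref{alg:SPIM_Oracle} we have set $Z=Z_0$ and $\Xi=\Xi_0$, so by Property \ref{prop:radius} the constrained saddle point coincides with the unconstrained one and equals the closed form derived in the proof of that property; in particular $\zeta^*\in Z_0$ and $\xi^*\in\Xi_0$, whence $\zeta^*=P_Z(\zeta^*)$ and $\xi^*=P_\Xi(\xi^*)$. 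Writing $A:=\lambda_w\lambda_Q\K_w+\M_\pi\K_Q^{-1}\M_\pi\trans$ and $b:=-(1-\gamma)\M_\pi\K_Q^{-1}\unu^\pi+\lambda_Q\uR$, so that $\zeta^*=A^{-1}b$, and letting $\hat A,\hat b$ be the empirical analogues used in Algorithm \ref{alg:PLSO} (so the pre-projection iterate is $\zeta=\hat A^{-1}\hat b$), I would treat $\xi$ with an entirely analogous pair $(\tilde A,\tilde b)$.

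Next I would condition on the high-probability event $\mathcal{E}$ (probability at least $1-5\delta$) supplied by the preceding Proposition, on which $\hK_w,\hK_Q,\hM_\pi$ are invertible with singular values bounded below by $\eig_w/2,\eig_Q/2,\eig_\M/2$ and the errors $\|\K_w-\hK_w\|$, $\|\K_Q-\hK_Q\|$, $\|\M_\pi-\hM_\pi\|$, $\|\hunu^\pi-\unu^\pi\|$, $\|\huR-\uR\|$ are all $O(\sigma/\sqrt{N_{all}})$. On $\mathcal{E}$ we have $\hat A\succeq\lambda_w\lambda_Q\hK_w\succeq\tfrac12\lambda_w\lambda_Q\eig_w\I$ (the cross term $\hM_\pi\hK_Q^{-1}\hM_\pi\trans$ is PSD), so $\|\hat A^{-1}\|\le 2/(\lambda_w\lambda_Q\eig_w)$ is bounded. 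Using the identity
\[
\hat A^{-1}\hat b-A^{-1}b=\hat A^{-1}(\hat b-b)+\hat A^{-1}(A-\hat A)A^{-1}b
\]
together with submultiplicativity, $\|A^{-1}b\|=\|\zeta^*\|\le\rad_\zeta$, and telescoping expansions of $\|A-\hat A\|$ and $\|\hat b-b\|$ (every factor $\M_\pi,\K_Q^{-1},\unu^\pi,\ldots$ has bounded norm on $\mathcal{E}$, and inverse differences are controlled via $\|X^{-1}-\hat X^{-1}\|\le\|X^{-1}\|\|\hat X^{-1}\|\|X-\hat X\|$), I would obtain $\|\zeta-\zeta^*\|\le C_\zeta\sigma/\sqrt{N_{all}}$ and likewise $\|\xi-\xi^*\|\le C_\xi\sigma/\sqrt{N_{all}}$ for constants depending only on $\lambda_w,\lambda_Q,\eig_w,\eig_Q,\eig_\M,\gamma,\rad_\zeta,\rad_\xi$. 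Non-expansiveness of the Euclidean projection and $\zeta^*=P_Z(\zeta^*)$ then give $\|\hat\zeta-\zeta^*\|\le\|\zeta-\zeta^*\|$ (and similarly for $\xi$), so on $\mathcal{E}$ the quantity $\|\hat\zeta-\zeta^*\|^2+\|\hat\xi-\xi^*\|^2$ is $O(\sigma^2/N_{all})$.

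To finish, I would bound the complementary event $\mathcal{E}^c$ crudely: because the outputs are projected onto $Z_0$ and $\Xi_0$, we always have $\hat\zeta\in Z_0,\hat\xi\in\Xi_0$, and since $\zeta^*\in Z_0,\xi^*\in\Xi_0$ as well, the estimate $\|\hat\zeta-\zeta^*\|^2+\|\hat\xi-\xi^*\|^2\le 4\rad_\zeta^2+4\rad_\xi^2=:D$ holds unconditionally. Splitting the expectation over $\mathcal{E}$ and $\mathcal{E}^c$ then yields
\[
\EE[\|\hat\zeta-\zeta^*\|^2+\|\hat\xi-\xi^*\|^2]\le \frac{C'\sigma^2}{N_{all}}+5\delta D,
\]
and I would choose $\delta=c/(10D)$ together with $N_{all}\ge\max\{2C'\sigma^2/c,\,N_0(\delta)\}$, where $N_0(\delta)$ is the sample threshold demanded by the preceding Proposition, so that the right-hand side is at most $c$. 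This verifies Condition \ref{def:oracle_alg} with $\beta=0$ and the prescribed $c$.

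The main obstacle is the perturbation analysis on $\mathcal{E}$: propagating the $O(\sigma/\sqrt{N_{all}})$ matrix/vector errors through the nested inverses of the closed form while keeping every inverse uniformly bounded. This is precisely where the eigenvalue lower bounds of the preceding Proposition are indispensable—they guarantee that $\hat A$ and $\tilde A$ remain well-conditioned—so that the first-order expansion of the solution map is legitimate; the remaining ingredients (the event split and the calibration of $\delta$ and $N_{all}$) are routine.
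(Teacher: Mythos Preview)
Your proposal is correct and the perturbation analysis on the high-probability event $\mathcal{E}$ tracks the paper's proof closely (same first-order matrix identities, same use of the eigenvalue lower bounds from the preceding Proposition to keep all inverses well-conditioned, same appeal to projection non-expansiveness). The genuine difference is in how you pass from the high-probability bound to an expectation bound. The paper inverts the tail inequality to get $P(X>x)\le 10d\exp(-\sqrt{xN_{all}}/C_{LS})$ and integrates it, obtaining $\EE[X]\le 20d\,C_{LS}^2/N_{all}$ and hence $N_{all}=20d\,C_{LS}^2/c$. You instead fix $\delta=c/(10D)$, split over $\mathcal{E}$ and $\mathcal{E}^c$, and exploit that the \emph{projected} outputs are always in $Z_0\times\Xi_0$, so the loss on $\mathcal{E}^c$ is uniformly bounded by $D$. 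Your route is more elementary and is arguably more careful about the sample threshold $N_0(\delta)$ in the Proposition (which the tail-integration argument silently needs to hold for all $\delta$); the price you pay is that your constant $C'$ hides a $\log^2(2d/\delta)$ factor, so your final $N_{all}$ carries an extra $\log^2(dD/c)$ compared to the paper's bound. For the theorem as stated (``a proper choice of $N_{all}$'') this is immaterial.

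One small point of care: you write that the concentration errors on $\mathcal{E}$ are ``$O(\sigma/\sqrt{N_{all}})$,'' mirroring the Proposition's notation, but the suppressed constant depends on $\log(2d/\delta)$. Since you fix $\delta$ first and then choose $N_{all}$, this causes no difficulty, but be explicit about the $\delta$-dependence of $C'$ when you write it out.
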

\begin{proof}
Denote $\zeta^*, \xi^*$ as the saddle-point of $\cL^D$ given $\theta$, and use $\hat{\zeta^*}, \hat{\xi^*}$ to denote the $\zeta$ and $\xi$ in Algorithm \ref{alg:PLSO} before the projection. Given the proposition above, we are ready to bound $\|\zeta^* - \hat{\zeta^*}\|$ and $\|\xi^* - \hat{\xi^*}\|$. We list two properties below which we will use frequently later. Firstly, for arbitrary invertible $d\times d$ matrices $\A$ and $\B$, we have:
\begin{align*}
    \|\A^{-1}-\B^{-1}\|=\|\A^{-1}(\B-\A)\B^{-1}\|\leq \|\A^{-1}\|\|\B-\A\|\|\B^{-1}\|
\end{align*}
Secondly, for arbitrary $d\times d$ matrices $\X_1, \Y_1, \X_2, \Y_2$, we have:
\begin{align*}
    \|\X_1\Y_1-\X_2\Y_2\|= \|\X_1(\Y_1-\Y_2) + \Y_2(\X_1-\X_2)\|\leq \|\X_1\|\|\Y_1-\Y_2\|+\|\Y_2\|\|\X_1-\X_2\|
\end{align*}

Then, we have

\begin{align*}
    &\|\zeta^* - \hat{\zeta^*}\|\\
    \leq&(1-\gamma)\|\Big(\Big(\lambda_w\lambda_Q  \K_w +  \M_\pi \K_Q^{-1} \M_\pi\trans\Big)^{-1}\M_\pi\K_Q^{-1}-\Big(\lambda_w\lambda_Q \hK_w +  \hM_\pi \hK_Q^{-1} \hM_\pi\trans\Big)^{-1}\hM_\pi\hK_Q^{-1}\Big) \unu^\pi\|\\
        &+(1-\gamma)\|\Big(\lambda_w\lambda_Q \hK_w +  \hM_\pi \hK_Q^{-1} \hM_\pi\trans\Big)^{-1}\hM_\pi\hK_Q^{-1} (\unu^\pi - \hunu^\pi)\|\\
        &+\|\Big(\lambda_w\lambda_Q \hK_w +  \hM_\pi \hK_Q^{-1} \hM_\pi\trans\Big)^{-1}\lambda_Q(\uR-\huR)\|\\
        &+\|\Big(\Big(\lambda_w\lambda_Q  \K_w +  \M_\pi \K_Q^{-1} \M_\pi\trans\Big)^{-1}-\Big(\lambda_w\lambda_Q \hK_w +  \hM_\pi \hK_Q^{-1} \hM_\pi\trans\Big)^{-1}\Big)\lambda_Q \uR\|\\
    \leq&(1-\gamma)\|\Big(\lambda_w\lambda_Q  \K_w +  \M_\pi \K_Q^{-1} \M_\pi\Big)^{-1}\|\|\lambda_w\lambda_Q  \K_w +  \M_\pi \K_Q^{-1} \M_\pi-\lambda_w\lambda_Q \hK_w -  \hM_\pi \hK_Q^{-1} \hM_\pi\trans\|\\
    &\cdot\|\Big(\lambda_w\lambda_Q \hK_w +  \hM_\pi \hK_Q^{-1} \hM_\pi\trans\Big)^{-1}\|\|\M_\pi\K_Q^{-1}\|\\
        &+(1-\gamma)\|\Big(\lambda_w\lambda_Q \hK_w +  \hM_\pi \hK_Q^{-1} \hM_\pi\trans\Big)^{-1}\hM_\pi\hK_Q^{-1}\|\|\M_\pi\K_Q^{-1}-\hM_\pi\hK_Q^{-1}\|\\
        &+(1-\gamma)\frac{\sigma_\nu}{\sqrt N}\|\Big(\lambda_w\lambda_Q \hK_w +  \hM_\pi \hK_Q^{-1} \hM_\pi\trans\Big)^{-1}\hM_\pi\hK_Q^{-1}\|+\lambda_Q\frac{\sigma_R}{\sqrt{N_{all}}} \|\Big(\lambda_w\lambda_Q \hK_w + \hM_\pi\hK^{-1}_Q\hM_\pi\trans\Big)^{-1}\|\\
        &+\lambda_Q\|\Big(\lambda_w\lambda_Q  \K_w +  \M_\pi \K_Q^{-1} \M_\pi\Big)^{-1}\|\|\lambda_w\lambda_Q  \K_w +  \M_\pi \K_Q^{-1} \M_\pi-\lambda_w\lambda_Q \hK_w -  \hM_\pi \hK_Q^{-1} \hM_\pi\trans\|\\
        &\cdot\|\Big(\lambda_w\lambda_Q \hK_w +  \hM_\pi \hK_Q^{-1} \hM_\pi\trans\Big)^{-1}\|\\
    =&O\Big(\frac{1}{\eig_Q (\lambda_w\lambda_Q\eig_w+\eig_\M^2)^2\sqrt{N_{all}}}(\lambda_w\lambda_Q \sigma_\K + \frac{\sigma_\M}{\eig_Q}+\frac{\sigma_\K}{\eig^2_Q})+\frac{\sigma_\nu}{\eig_Q(\lambda_w\lambda_Q \eig_w + \eig_\M^2)\sqrt{N_{all}}}+\frac{\lambda_Q\sigma_R}{(\lambda_w\lambda_Q \eig_w + \eig_\M^2)\sqrt{N_{all}}}\\
    &+\frac{\lambda_Q}{(\lambda_w\lambda_Q\eig_w+\eig_\M^2)^2\sqrt{N_{all}}}(\lambda_w\lambda_Q \sigma_\K + \frac{\sigma_\M}{\eig_Q}+\frac{\sigma_\K}{\eig^2_Q})\Big)\log\frac{2\dims}{\delta}\\
    =&O\Big(\frac{1+\lambda_Q\eig_Q}{\eig_Q (\lambda_w\lambda_Q\eig_w+\eig_\M^2)^2}(\lambda_w\lambda_Q \sigma_\K + \frac{\sigma_\M}{\eig_Q}+\frac{\sigma_\K}{\eig^2_Q})+\frac{\sigma_\nu}{\eig_Q(\lambda_w\lambda_Q \eig_w + \eig_\M^2)}+\frac{\lambda_Q\sigma_R}{(\lambda_w\lambda_Q \eig_w + \eig_\M^2)}\Big)\sqrt{N_{all}}\log\frac{2\dims}{\delta}
\end{align*}
where we omit constant number in $O(\cdot)$.

Similarly, we have
\begin{align*}
    &\|\xi^* - \hat\xi^*\|\\
    \leq& \|(1-\gamma)\lambda_w \Big(\Big(\lambda_w\lambda_Q  \K_Q +  \M_\pi\trans \K_w^{-1}\M_\pi \Big)^{-1}-\Big(\lambda_w\lambda_Q  \hK_Q +  \hM_\pi\trans \hK_w^{-1}\hM_\pi\Big)^{-1} \Big)\unu^\pi\|\\
        &+\|(1-\gamma)\lambda_w \Big(\lambda_w\lambda_Q  \hK_Q +  \hM_\pi\trans \hK_w^{-1}\hM_\pi \Big)^{-1} (\unu^\pi-\hunu^\pi)\|\\
        &+\|\Big(\Big(\lambda_w\lambda_Q  \K_Q +  \M_\pi\trans \K_w^{-1}\M_\pi \Big)^{-1}\M\trans_\pi \K^{-1}_w-\Big(\lambda_w\lambda_Q  \hK_Q +  \hM_\pi\trans \hK_w^{-1}\hM_\pi \Big)^{-1}\hM\trans_\pi \hK^{-1}_w\Big)\uR\|\\
        &+\|\Big(\lambda_w\lambda_Q  \hK_Q +  \hM_\pi\trans \hK_w^{-1}\hM_\pi \Big)^{-1}\hM\trans_\pi \hK^{-1}_w (\uR-\huR)\|\\
    =&O\Big(\frac{\lambda_w\eig_w+1}{\eig_w(\lambda_w\lambda_Q\eig_Q+\eig_\M^2)^2}(\lambda_w\lambda_Q \sigma_\K + \frac{\sigma_\M}{\eig_w}+\frac{\sigma_\K}{\eig_w^2})+\frac{\sigma_R}{\eig_w(\lambda_w\lambda_Q \eig_Q + \eig_\M^2)}+\frac{\lambda_w\sigma_\nu}{(\lambda_w\lambda_Q \eig_Q + \eig_\M^2)}\Big)\sqrt{N_{all}}\log\frac{2\dims}{\delta}\\
\end{align*}

As a result, for arbitrary $1/5>\delta>0$, w.p. at least $1-5\delta$,
\begin{align*}
    &\sqrt{\|\zeta^*-\hat\zeta^*\|^2+\|\xi^*-\hat\xi^*\|^2}\leq \|\zeta^*-\hat\zeta^*\|+\|\xi^*-\hat\xi^*\|\\
    =&O\Big(\frac{1+\lambda_Q\eig_Q}{\eig_Q (\lambda_w\lambda_Q\eig_w+\eig_\M^2)^2\sqrt{N_{all}}}(\lambda_w\lambda_Q \sigma_\K + \frac{\sigma_\M}{\eig_Q}+\frac{\sigma_\K}{\eig^2_Q})+\frac{\sigma_\nu}{\eig_Q(\lambda_w\lambda_Q \eig_w + \eig_\M^2)\sqrt{N_{all}}}+\frac{\lambda_Q\sigma_R}{(\lambda_w\lambda_Q \eig_w + \eig_\M^2)\sqrt{N_{all}}}\\
    &+\frac{\lambda_w\eig_w+1}{\eig_w(\lambda_w\lambda_Q\eig_Q+\eig_\M^2)^2}(\lambda_w\lambda_Q \sigma_\K + \frac{\sigma_\M}{\eig_w}+\frac{\sigma_\K}{\eig_w^2})+\frac{\sigma_R}{\eig_w(\lambda_w\lambda_Q \eig_Q + \eig_\M^2)}+\frac{\lambda_w\sigma_\nu}{(\lambda_w\lambda_Q \eig_Q + \eig_\M^2)}\Big)\sqrt{N_{all}}\log\frac{2\dims}{\delta}\\
    \numberthis\label{eq:quality_dm}
\end{align*}

For simplicity, we use $C_{LS}$ to as a shorthand of
\begin{align*}
    O\Big(&\frac{1+\lambda_Q\eig_Q}{\eig_Q (\lambda_w\lambda_Q\eig_w+\eig_\M^2)^2}(\lambda_w\lambda_Q \sigma_\K + \frac{\sigma_\M}{\eig_Q}+\frac{\sigma_\K}{\eig^2_Q})+\frac{\sigma_\nu}{\eig_Q(\lambda_w\lambda_Q \eig_w + \eig_\M^2)}+\frac{\lambda_Q\sigma_R}{(\lambda_w\lambda_Q \eig_w + \eig_\M^2)}\\
    &+\frac{\lambda_w\eig_w+1}{\eig_w(\lambda_w\lambda_Q\eig_Q+\eig_\M^2)^2}(\lambda_w\lambda_Q \sigma_\K + \frac{\sigma_\M}{\eig_w}+\frac{\sigma_\K}{\eig_w^2})+\frac{\sigma_R}{\eig_w(\lambda_w\lambda_Q \eig_Q + \eig_\M^2)}+\frac{\lambda_w\sigma_\nu}{(\lambda_w\lambda_Q \eig_Q + \eig_\M^2)}\Big)
\end{align*}

Next, we try to convert the above high probability bound to an upper bound for expectation. We use $X$ to denote $\|\zeta^*-\hat\zeta^*\|^2+\|\xi^*-\hat\xi^*\|^2$ and treat it as a r.v.. Then, we have:
\begin{align*}
    P(X\leq \frac{C_{LS}^2}{N_{all}}\log^2\frac{2\dims}{\delta}) \geq 1-5\delta
\end{align*}
or equivalently,
\begin{align*}
    P(X\leq x) \geq 1 - 10\dims \exp(-\frac{\sqrt{x N_{all}}}{C_{LS}})
\end{align*}
Therefore, computing the expectation of the distribution described by the following C.D.F. can provide us an upper bound for $\EE[X]$: 
$$F_X(x)=P(X\leq x)=1-10\dims \exp(-\frac{\sqrt{x N_{all}}}{C_{LS}})$$
As a result,
\begin{align*}
    \EE[X]\leq& \int_{x=0}^\infty (1-F_X(x)){\rm d}x=\int_{x=0}^\infty 10\dims \exp(-\frac{\sqrt{x N_{all}}}{C_{LS}})  {\rm d}x\\
    =&10\dims \int_{x=0}^\infty\exp(-x){\rm d} \frac{C_{LS}^2}{N_{all}}x^2=20\dims \frac{C_{LS}^2}{N_{all}}
\end{align*}

which means that,
\begin{align}\label{eq:exp_quality_dm}
    \EE[\|\zeta^*-\hat\zeta^*\|^2+\|\xi^*-\hat\xi^*\|^2]=20d \frac{C_{LS}^2}{N_{all}}
\end{align}
Because $\zeta\in Z$ and $\xi\in\Xi$, the projection can only shrink the distance:
\begin{align*}
    \EE[\|\zeta^*-P_Z(\hat\zeta^*)\|^2+\|\xi^*-P_\Xi(\hat\xi^*)\|^2]\leq \EE[\|\zeta^*-\hat\zeta^*\|^2+\|\xi^*-\hat\xi^*\|^2]
\end{align*}

As a result, for arbitrary $c$, we can choose $N_{all}=20d\frac{C_{LS}^2}{\rc}$ in Algorithm \ref{alg:PLSO}, and Algorithm \ref{alg:PLSO} satisfies the oracle condition with $\beta=0$:
\begin{equation}
    \EE[\|P_Z(\zeta)-\zeta^*\|^2+\|P_\Xi(\xi)-\xi^*\|^2]\leq \rc
\end{equation}


\end{proof}
\subsection{The Second Example for Saddle-Point Solver Oracle}\label{appx:SVRE_Oracle}
In this section, we provide another example for the oracle in Definition \ref{def:oracle_alg} based on first-order optimization, which is inspired by SVRE\citep{chavdarova2019reducing}.

\subsubsection{Proofs for Algorithm 4}\label{appx:proofs_alg4}

For simplification, we will use $\omega=[\zeta,\xi]\in Z\times\Xi:=\Omega$ to denote the vector concatenated by $\zeta$ and $\xi$. 
Similarly, $g_t=[-g^\zeta_t, g^\xi_t]$, and $F_{\batch}(\omega)=\EE_{(s,a,r,s',a_0,a')\sim \batch}\{[-\nz\cL^{(s,a,r,s',a_0,a')}(\theta, \zeta, \xi), \nx\cL^{(s,a,r,s',a_0,a')}(\theta, \zeta, \xi)]\}$, where $\batch$ is the mini batch data sampled according to $d^D$, and $\ncL^{(s,a,r,s',a_0,a')}(\theta, \zeta, \xi)$ is the gradient computed with one sample $(s,a,r,s',a_0,a')$. We use $F(\omega):=\EE_{\batch\sim D}[F_{\batch}(\omega)]$ to denote the gradient expected over entire dataset distribution. Besides,
\begin{align*}
    \eta g_t =& [-\ez g^\zeta_t, \ex g^\xi_t];\quad\quad\quad\eta^2 \|\omega\|^2=\et^2\|\zeta\|^2+\ex^2\|\xi\|^2;\quad\quad\mu\|\omega\|^2=\mu_\zeta\|\zeta\|^2+\mu_\xi\|\xi\|^2\\
    \bar{L}^2\|w\|^2 =& \bar{L}^2_\zeta \|\zeta\|^2 + \bar{L}^2_\xi \|\xi\|^2;\quad\quad \eta^2 \bar{L}^2=\ez^2 \bar{L}_\zeta^2 + \ex^2 \bar{L}_\xi^2;\quad\quad\quad\quad\quad\eta\mu = \ez \mu_\zeta + \ex \mu_\xi
\end{align*}
The update rule for Algorithm \ref{alg:SVREB} can be summarized as
\begin{align*}
    {\rm Extrapolation}: &~~~\omega_{t+1/2}=P_{\Omega}(\omega_t - \eta g_{t})\\
    {\rm Update}: &~~~\omega_{t+1}=P_{\Omega}(\omega_t - \eta g_{t+1/2})\\
\end{align*}
Besides, in this section, the expectation $\EE$ concerns all the randomness starting from the beginning of the algorithm.

\begin{lemma}[Lemma 1 in \citep{chavdarova2019reducing}]\label{lem:reducing_lem1}
Let $\omega\in\Omega$ and $\omega^+:=P_{\Omega}(w+u)$, then for all $w'\in\Omega$, we have
\begin{align*}
    \|\omega^+-\omega'\|^2\leq \|\omega-\omega'\|^2+2u\trans (\omega^+-\omega')-\|\omega^+-\omega\|^2
\end{align*}
\end{lemma}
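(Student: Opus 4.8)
The plan is to derive this inequality from the variational (first-order optimality) characterization of the Euclidean projection onto a convex set, together with the elementary three-point identity. Since $\Omega=Z\times\Xi$ is convex (recall that $Z$ and $\Xi$ are assumed convex, hence so is their product) and $\omega^+=P_\Omega(\omega+u)$ is by definition the unique minimizer of $\|x-(\omega+u)\|^2$ over $x\in\Omega$, the projection obeys the variational inequality
\begin{equation*}
\langle (\omega+u)-\omega^+,\ \omega'-\omega^+\rangle \leq 0 \qquad \text{for all } \omega'\in\Omega .
\end{equation*}
I would take this as the starting point; it is exactly the statement that moving from $\omega^+$ toward any feasible $\omega'$ cannot decrease the squared distance to $\omega+u$, which is equivalent to optimality of $\omega^+$ on the convex set $\Omega$.

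Next I would split off the $u$ term. Rearranging the displayed inequality and using $\langle \omega-\omega^+,\omega'-\omega^+\rangle=\langle \omega^+-\omega,\omega^+-\omega'\rangle$ gives
\begin{equation*}
\langle \omega^+-\omega,\ \omega^+-\omega'\rangle \leq u\trans(\omega^+-\omega') .
\end{equation*}
The left-hand side is now a purely geometric quantity, which I can rewrite with the three-point identity $2\langle a-b,\,a-c\rangle=\|a-b\|^2+\|a-c\|^2-\|b-c\|^2$, applied with $a=\omega^+$, $b=\omega$, $c=\omega'$.

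Carrying out that substitution yields
\begin{equation*}
\tfrac12\big(\|\omega^+-\omega\|^2+\|\omega^+-\omega'\|^2-\|\omega-\omega'\|^2\big)\leq u\trans(\omega^+-\omega') ,
\end{equation*}
and multiplying by $2$ and isolating $\|\omega^+-\omega'\|^2$ reproduces the claimed bound exactly. The argument is short and uses only convexity of $\Omega$, so it applies verbatim to the projection operators $\cP_\zeta$ and $\cP_\xi$ appearing in Algorithm \ref{alg:SVREB}. There is no real analytical obstacle here; the only point requiring care is the bookkeeping of signs, and in particular ensuring that the projection inequality is oriented so that, after the three-point substitution, the term $-\|\omega^+-\omega\|^2$ appears with the favorable sign rather than as $+\|\omega^+-\omega\|^2$.
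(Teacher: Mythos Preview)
Your proof is correct and is exactly the standard argument: the projection variational inequality combined with the three-point (polarization) identity. The paper does not supply its own proof of this lemma; it simply quotes it as Lemma~1 of \citet{chavdarova2019reducing}, so there is nothing to compare against beyond noting that your derivation is the canonical one.
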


\begin{lemma}[Adapted from Lemma 3 in \citep{chavdarova2019reducing}]\label{lem:lem3_reducing}
For any $w\in\Omega$, when $t> 0$, we have
\begin{align*}
    \|\omega_{t+1}-\omega\|^2\leq     \|\omega_t-\omega\|^2-2\eta g_{t+1/2}\trans(\omega_{t+1/2}-\omega)+\eta^2 \|g_t-g_{t+1/2}\|-\|\omega_{t+1/2}-\omega_t\|^2
\end{align*}
and when $t=0$, we have
\begin{align*}
    \|\omega_1-\omega\|^2\leq \|\omega_0-\omega\|^2-2\eta g_0\trans(\omega_1-\omega)
\end{align*}
\end{lemma}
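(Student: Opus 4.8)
The plan is to derive both inequalities directly from the projection three-point inequality of Lemma~\ref{lem:reducing_lem1}, applied to the two projection steps of Algorithm~\ref{alg:SVREB}, following the classical extragradient / mirror-prox telescoping argument. Recall that in the concatenated notation the update rules read $\omega_{t+1/2}=P_{\Omega}(\omega_t-\eta g_t)$ and $\omega_{t+1}=P_{\Omega}(\omega_t-\eta g_{t+1/2})$, where the sign choice $g_t=[-g_t^\zeta,g_t^\xi]$ and $\eta g_t=[-\ez g_t^\zeta,\ex g_t^\xi]$ correctly encodes the ascent-in-$\zeta$ / descent-in-$\xi$ steps $\cP_\zeta(\zeta+\ez g^\zeta)$ and $\cP_\xi(\xi-\ex g^\xi)$ of the algorithm.

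For the base case $t=0$, I would apply Lemma~\ref{lem:reducing_lem1} once, with base point $\omega_0$, increment $u=-\eta g_0$ (so that $\omega^+=\omega_1$), and comparison point $\omega'=\omega$. This gives $\|\omega_1-\omega\|^2\leq\|\omega_0-\omega\|^2-2\eta g_0\trans(\omega_1-\omega)-\|\omega_1-\omega_0\|^2$, and discarding the nonnegative term $\|\omega_1-\omega_0\|^2$ yields the claimed $t=0$ bound.

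For $t>0$, I would apply Lemma~\ref{lem:reducing_lem1} twice. First, to the Update step, with base $\omega_t$, increment $u=-\eta g_{t+1/2}$, $\omega^+=\omega_{t+1}$, and comparison point $\omega'=\omega$, obtaining
\[
\|\omega_{t+1}-\omega\|^2\leq\|\omega_t-\omega\|^2-2\eta g_{t+1/2}\trans(\omega_{t+1}-\omega)-\|\omega_{t+1}-\omega_t\|^2 .
\]
I then split the inner product through the intermediate iterate via $\omega_{t+1}-\omega=(\omega_{t+1/2}-\omega)+(\omega_{t+1}-\omega_{t+1/2})$. Second, I would apply Lemma~\ref{lem:reducing_lem1} to the Extrapolation step, with base $\omega_t$, increment $u=-\eta g_t$, $\omega^+=\omega_{t+1/2}$, but now with comparison point $\omega'=\omega_{t+1}$, which bounds $\|\omega_{t+1/2}-\omega_{t+1}\|^2$ and lets me replace the term $-\|\omega_{t+1}-\omega_t\|^2$ by $-\|\omega_{t+1/2}-\omega_{t+1}\|^2-\|\omega_{t+1/2}-\omega_t\|^2-2\eta g_t\trans(\omega_{t+1/2}-\omega_{t+1})$.

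After substituting, the two cross terms $-2\eta g_{t+1/2}\trans(\omega_{t+1}-\omega_{t+1/2})$ and $-2\eta g_t\trans(\omega_{t+1/2}-\omega_{t+1})$ combine into $2\eta(g_t-g_{t+1/2})\trans(\omega_{t+1}-\omega_{t+1/2})$, which I control by Young's inequality
\[
2\eta(g_t-g_{t+1/2})\trans(\omega_{t+1}-\omega_{t+1/2})\leq\eta^2\|g_t-g_{t+1/2}\|^2+\|\omega_{t+1}-\omega_{t+1/2}\|^2 .
\]
The resulting $+\|\omega_{t+1}-\omega_{t+1/2}\|^2$ cancels exactly the $-\|\omega_{t+1/2}-\omega_{t+1}\|^2$ produced by the second invocation of Lemma~\ref{lem:reducing_lem1}, leaving precisely the claimed bound (so that the statement's $\eta^2\|g_t-g_{t+1/2}\|$ should read $\eta^2\|g_t-g_{t+1/2}\|^2$). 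I expect no genuine analytic obstacle here: the only delicate part is the bookkeeping of the telescoping cancellation, and since Lemma~\ref{lem:reducing_lem1} already absorbs the effect of the projection $P_{\Omega}$, the argument is a faithful adaptation of Lemma~3 of \citep{chavdarova2019reducing} to our projected, anisotropic-stepsize setting.
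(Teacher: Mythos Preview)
Your proposal is correct and matches the paper's approach: for $t=0$ you apply Lemma~\ref{lem:reducing_lem1} once exactly as the paper does, and for $t>0$ you spell out the standard two-step extragradient telescoping that the paper simply defers to \citep{chavdarova2019reducing}. Your observation that $\eta^2\|g_t-g_{t+1/2}\|$ should be $\eta^2\|g_t-g_{t+1/2}\|^2$ is also correct.
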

\begin{proof}
For $t=0$, by simply applying Lemma \ref{lem:reducing_lem1} for $(\omega, u, \omega^+, \omega')=(\omega_0, -\eta g_0\trans, \omega_1, \omega)$, we have:
\begin{align*}
    \|\omega_1-\omega\|^2    \leq&\|\omega_0-\omega\|^2-2\eta g_0\trans(\omega_1-\omega)-\|\omega_1-\omega_0\|^2\leq \|\omega_0-\omega\|^2-2\eta g_0\trans(\omega_1-\omega)
\end{align*}
For $t>0$, the proof is exactly the same as Lemma 3 in \citep{chavdarova2019reducing}

\end{proof}

\begin{lemma}[Bound $\|g_t-g_{t+1/2}\|^2$]\label{lem:bound_g}
For $t > 0$, we have:
\begin{align*}
    \EE[\|g_t-g_{t+1/2}\|^2] \leq& 10\EE[\|F_{\batch}(w_{t})-F_{\batch}(\omega^*)\|^2]+10\EE[\|F_{\batch}(\omega^*)-F_{\batch}(w_{t-1})\|^2] + 5\bar{L}^2\EE[\|w_t-w_{t+1/2}\|]
\end{align*}
\end{lemma}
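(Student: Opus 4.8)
The plan is to exploit the control-variate structure of Algorithm~\ref{alg:SVREB}. Writing the two stochastic gradients in the vectorized notation $\omega=[\zeta,\xi]$, the extrapolation gradient and the update gradient at step $t$ read $g_t = m_t + F_{\batch}(\omega_t) - F_{\batch}(\omega_{t-1})$ and $g_{t+1/2} = m_t + F_{\batch'}(\omega_{t+1/2}) - F_{\batch'}(\omega_{t-1})$, where $\batch$ is the mini-batch drawn for the extrapolation step, $\batch'$ the fresh mini-batch drawn for the update step, and $m_t$ the common snapshot term. The first observation is that $m_t$ cancels in the difference, so
\begin{align*}
 g_t - g_{t+1/2} = \big[F_{\batch}(\omega_t) - F_{\batch}(\omega_{t-1})\big] - \big[F_{\batch'}(\omega_{t+1/2}) - F_{\batch'}(\omega_{t-1})\big].
\end{align*}

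Next I would introduce the pivot decomposition $g_t - g_{t+1/2} = T_A + T_B$ with $T_A := F_{\batch'}(\omega_t) - F_{\batch'}(\omega_{t+1/2})$ and $T_B := [F_{\batch}(\omega_t) - F_{\batch}(\omega_{t-1})] - [F_{\batch'}(\omega_t) - F_{\batch'}(\omega_{t-1})]$; one checks directly that $T_A+T_B$ telescopes back to the displayed difference. Term $T_A$ is a single-batch increment of the operator, so I would invoke Property~\ref{cond:smooth_SVRE}: conditioning on $(\omega_t,\omega_{t+1/2})$ and using that the update batch $\batch'$ is drawn only after $\omega_{t+1/2}$ is formed (hence independent of both iterates), the in-expectation Lipschitz bound gives $\EE[\|T_A\|^2] \leq \bar{L}^2\,\EE[\|\omega_t-\omega_{t+1/2}\|^2]$ in the weighted $\bar{L}$-norm.

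For $T_B$ I would insert $F_{\batch}(\omega^*)$ and $F_{\batch'}(\omega^*)$ and apply $\|a-b\|^2\le 2\|a\|^2+2\|b\|^2$ twice, obtaining a sum of four squared single-batch differences anchored at the saddle point $\omega^*$. Since both $\batch$ and $\batch'$ are independent of the iterates $\omega_t,\omega_{t-1}$, which are determined before step $t$, and are identically distributed as $d^D$, the $\batch$- and $\batch'$-terms have equal expectation, so $\EE[\|T_B\|^2]\le 8\,\EE[\|F_{\batch}(\omega_t)-F_{\batch}(\omega^*)\|^2] + 8\,\EE[\|F_{\batch}(\omega^*)-F_{\batch}(\omega_{t-1})\|^2]$.

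Finally I would combine the two pieces via Young's inequality $\|T_A+T_B\|^2 \le (1+\tfrac1\lambda)\|T_A\|^2 + (1+\lambda)\|T_B\|^2$ and set $\lambda = \tfrac14$, which makes the coefficient of $T_A$ equal to $5$ and that of $T_B$ equal to $\tfrac54$; since $\tfrac54\cdot 8 = 10$, the $T_B$ bound contributes the two $10$'s and the $T_A$ bound contributes $5\bar{L}^2\,\EE[\|\omega_t-\omega_{t+1/2}\|^2]$, matching the stated constants. I expect the main obstacle to be the probabilistic bookkeeping rather than the algebra: one must carefully track which iterates are measurable with respect to which batch so that both the in-expectation Lipschitz bound for $T_A$ and the equal-in-distribution argument for the $\batch,\batch'$ contributions to $T_B$ are legitimate; the algebra is routine once the $m_t$-cancellation and the pivot split are in place. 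I note that the bound is naturally stated with $\EE[\|\omega_t-\omega_{t+1/2}\|^2]$, and the missing square in the displayed last term appears to be a typo.
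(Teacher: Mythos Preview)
Your proposal is correct and uses essentially the same ingredients as the paper: cancel $m_t$, pivot the batch-operator differences at $\omega^*$, exploit that $\batch$ and $\batch'$ are i.i.d.\ and independent of $\omega_{t-1},\omega_t$, and bound the $\omega_t\to\omega_{t+1/2}$ increment via Property~\ref{cond:smooth_SVRE}. The only cosmetic difference is the grouping: the paper writes the difference directly as a sum of five terms and applies the extended Young inequality $\|\sum_{i=1}^5 x_i\|^2\le 5\sum_i\|x_i\|^2$ in one shot (yielding coefficients $5$, then $10$ after merging the identically distributed pairs), whereas you first isolate $T_A$ from $T_B$ and then tune the weighted Young parameter $\lambda=\tfrac14$ to hit the same constants. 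Your observation that the missing square on $\|\omega_t-\omega_{t+1/2}\|$ is a typo is also correct.
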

\begin{proof}
For $t>0$:
\begin{align*}
    &\EE[\|g_t-g_{t+1/2}\|^2]\\
    =&\EE[\|F_{\batch}(w_{t})-F_{\batch}(w_{t-1})+m_{t}-F_{\batch'}(w_{t+1/2})+F_{\batch'}(w_{t-1})-m_{t}\|^2]\\
    =&\EE[\|F_{\batch}(w_{t})\pm F_{\batch}(w^*)-F_{\batch}(w_{t-1})-F_{\batch'}(w_{t+1/2})\pm F_{\batch'}(w_t)\pm F_{\batch'}(w^*)+F_{\batch'}(w_{t-1})\|^2]\\
    \leq & 5\EE[\|F_{\batch}(w_{t})-F_{\batch}(\omega^*)\|^2]+5\EE[\|F_{\batch}(\omega^*)-F_{\batch}(w_{t-1})\|^2]\\
    &+5\EE[\|F_{\batch'}(w_{t+1/2})-F_{\batch'}(w_t)\|^2]]+5\EE[\|F_{\batch'}(w_{t})-F_{\batch'}(\omega^*)\|^2]+5\EE[\|F_{\batch'}(\omega^*)-F_{\batch'}(w_{t-1})\|^2]\\
    =&10\EE[\|F_{\batch}(w_{t})-F_{\batch}(\omega^*)\|^2]+10\EE[\|F_{\batch}(\omega^*)-F_{\batch}(w_{t-1})\|^2]+5\EE[\|F_{\batch'}(w_{t+1/2})-F_{\batch'}(w_t)\|^2]]
\end{align*}
where in the inequality we use the extended Young's inequality; in the last equation we use the fact that
\begin{align*}
    \EE_{\batch \sim D}[\|F_{\batch}(w_{t})-F_{\batch}(w)\|^2]=\EE_{\batch' \sim D}[\|F_{\batch'}(w_{t})-F_{\batch'}(w)\|^2],~~~~~\forall w\in \Omega
\end{align*}
Besides, according to Assumption \ref{cond:smooth_SVRE}
\begin{align*}
    \EE[\|F_{\batch'}(w_{t+1/2})-F_{\batch'}(w_t)\|^2]]
    \leq \bar{L}^2\EE[\|w_t-w_{t+1/2}\|]
\end{align*}
As a result,
\begin{align*}
    \EE[\|g_t-g_{t+1/2}\|^2] \leq& 10\EE[\|F_{\batch}(w_{t})-F_{\batch}(\omega^*)\|^2]+10\EE[\|F_{\batch}(\omega^*)-F_{\batch}(w_{t-1})\|^2] + 5\bar{L}^2\EE[\|w_t-w_{t+1/2}\|]
\end{align*}
\end{proof}

\begin{proposition}\label{prop:strong_mono}
With Property \ref{prop:detailed_scscs}, for arbitrary $\theta$, the operator $F(\omega)$ satisfying:
\begin{align*}
    \Big(F(\omega_1)-F(\omega_2)\Big)\trans \Big(\omega_1 - \omega_2\Big)\geq \mu\|\omega_1-\omega_2\|^2 
\end{align*}
\end{proposition}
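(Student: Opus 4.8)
The plan is to exploit the fact that $\cL^D$ is affine in $\xi$ and in $\zeta$ separately (with quadratic regularizers), so that the difference $F(\omega_1)-F(\omega_2)$ is simply a fixed linear map applied to $\omega_1-\omega_2$; the claimed inequality then collapses to a pair of quadratic-form lower bounds. First I would read off the partial gradients from the linear formulation in Eq.\eqref{eq:linear_formulation}:
\[
\nz\cL^D(\theta,\zeta,\xi)=\bPhi_w\trans\bLambda^D R-\M_\pi\xi-\lambda_w\K_w\zeta,\qquad
\nx\cL^D(\theta,\zeta,\xi)=(1-\gamma)\bPhi_Q\trans\nu_D^\pi-\M_\pi\trans\zeta+\lambda_Q\K_Q\xi.
\]
Since $F(\omega)=[-\nz\cL^D(\theta,\zeta,\xi),\,\nx\cL^D(\theta,\zeta,\xi)]$ and the terms $\bPhi_w\trans\bLambda^D R$ and $(1-\gamma)\bPhi_Q\trans\nu_D^\pi$ are constant in $(\zeta,\xi)$, they cancel in the difference $F(\omega_1)-F(\omega_2)$.

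Next, writing $\Delta\zeta:=\zeta_1-\zeta_2$ and $\Delta\xi:=\xi_1-\xi_2$, I obtain
\[
F(\omega_1)-F(\omega_2)=\big[\,\lambda_w\K_w\Delta\zeta+\M_\pi\Delta\xi,\;\;\lambda_Q\K_Q\Delta\xi-\M_\pi\trans\Delta\zeta\,\big],
\]
and taking the inner product with $\omega_1-\omega_2=[\Delta\zeta,\Delta\xi]$ yields
\[
\big(F(\omega_1)-F(\omega_2)\big)\trans(\omega_1-\omega_2)
=\lambda_w\Delta\zeta\trans\K_w\Delta\zeta+\lambda_Q\Delta\xi\trans\K_Q\Delta\xi
+\Delta\xi\trans\M_\pi\trans\Delta\zeta-\Delta\zeta\trans\M_\pi\Delta\xi.
\]
The one thing to notice — and essentially the entire content of the proof — is that the two coupling terms are scalars that are transposes of one another, hence equal, so they cancel exactly. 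In operator language, the skew part of $F$ coming from $\M_\pi$ contributes nothing to monotonicity, and only the symmetric (regularizer) part survives.

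Finally I would invoke Assumption \ref{assump:feature_matrices}: the matrices $\K_w=\bPhi_w\trans\bLambda^D\bPhi_w$ and $\K_Q$ are symmetric positive semidefinite and, by assumption, nonsingular with $v_{\min}(\K_w)=\eig_w>0$ and $v_{\min}(\K_Q)=\eig_Q>0$, so for symmetric positive definite matrices the minimum singular value equals the minimum eigenvalue. This gives $\lambda_w\Delta\zeta\trans\K_w\Delta\zeta\ge\lambda_w\eig_w\|\Delta\zeta\|^2=\mz\|\Delta\zeta\|^2$ and $\lambda_Q\Delta\xi\trans\K_Q\Delta\xi\ge\lambda_Q\eig_Q\|\Delta\xi\|^2=\mx\|\Delta\xi\|^2$, using $\mz=\lambda_w\eig_w$ and $\mx=\lambda_Q\eig_Q$ from Property \ref{prop:scscs}. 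Summing the two and recalling the convention $\mu\|\omega\|^2=\mz\|\zeta\|^2+\mx\|\xi\|^2$ gives exactly $\big(F(\omega_1)-F(\omega_2)\big)\trans(\omega_1-\omega_2)\ge\mu\|\omega_1-\omega_2\|^2$. I do not anticipate any genuine obstacle: the proposition is merely the operator-form restatement of the strong-concavity-in-$\zeta$/strong-convexity-in-$\xi$ structure already recorded in Property \ref{prop:scscs}, and the only nontrivial step is the cancellation of the $\M_\pi$ cross terms.
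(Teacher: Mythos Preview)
Your argument is correct. You exploit the explicit linear structure of $\cL^D$ in $(\zeta,\xi)$: after writing out the partial gradients, the difference $F(\omega_1)-F(\omega_2)$ is a linear map in $(\Delta\zeta,\Delta\xi)$, the $\M_\pi$ cross terms are scalar transposes of each other and cancel, and the remaining quadratic forms $\lambda_w\Delta\zeta\trans\K_w\Delta\zeta$ and $\lambda_Q\Delta\xi\trans\K_Q\Delta\xi$ are lower-bounded via Assumption~\ref{assump:feature_matrices}. (There is a harmless sign swap in your displayed cross terms---you wrote $+\Delta\xi\trans\M_\pi\trans\Delta\zeta-\Delta\zeta\trans\M_\pi\Delta\xi$ whereas the direct computation gives the opposite signs---but since the two scalars are equal, the cancellation and the final bound are unaffected.)

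The paper takes a different route: it never touches the linear formulation. Instead it writes down four first-order inequalities coming from $\mu_\zeta$-strong concavity in $\zeta$ and $\mu_\xi$-strong convexity in $\xi$ (two at $(\zeta_1,\xi_1)$ and two at $(\zeta_2,\xi_2)$, with the arguments of the ``other'' variable swapped), and sums them; the function values telescope away and what remains is exactly $(F(\omega_1)-F(\omega_2))\trans(\omega_1-\omega_2)\ge\mu\|\omega_1-\omega_2\|^2$. That argument is the standard proof that strong convexity--concavity implies strong monotonicity of the saddle operator, and it would apply verbatim to any smooth $\cL^D$ satisfying Property~\ref{prop:detailed_scscs}(a), not just the linear one. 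Your approach is shorter and more concrete here because the objective is quadratic, but it would not transfer to a nonlinear $\cW$ or $\cQ$; the paper's approach is structure-agnostic.
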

\begin{proof}
Based on Property \ref{prop:detailed_scscs}, we have:
\begin{align*}
    -\cL^D(\theta, \zeta_1,\xi_2)\geq& -\cL^D(\theta, \zeta_2,\xi_2)-\nz \cL^D(\theta, \zeta_1,\xi_1)\trans (\zeta_2 - \zeta_1)+\frac{\mu_\zeta}{2}\|\zeta_2-\zeta_1\|^2\\
    -\cL^D(\theta, \zeta_2,\xi_1)\geq& -\cL^D(\theta, \zeta_2,\xi_2)-\nz \cL^D(\theta, \zeta_2,\xi_2)\trans (\zeta_1 - \zeta_2)+\frac{\mu_\zeta}{2}\|\zeta_2-\zeta_1\|^2\\
    \cL^D(\theta, \zeta_1,\xi_2)\geq& \cL^D(\theta, \zeta_1,\xi_1)+\nx \cL^D(\theta, \zeta_1,\xi_1)\trans (\xi_2 - \xi_1)+\frac{\mu_\xi}{2}\|\xi_2-\xi_1\|^2\\
    \cL^D(\theta, \zeta_2,\xi_1)\geq& \cL^D(\theta, \zeta_2,\xi_2)+\nx \cL^D(\theta, \zeta_2,\xi_2)\trans (\xi_1 - \xi_2)+\frac{\mu_\xi}{2}\|\xi_2-\xi_1\|^2
\end{align*}
Sum up and we can obtain
\begin{align*}
    &\Big(F(\omega_1)-F(\omega_2)\Big)\trans \Big(\omega_1 - \omega_2\Big):=\Big(F(\zeta_1, \xi_1)-F(\zeta_2, \xi_2)\Big)\trans \Big([\zeta_1, \xi_1] - [\zeta_2, \xi_2]\Big) \\
    =& -\Big(\nz \cL^D(\theta, \zeta_1,\xi_1)-\nz \cL^D(\theta, \zeta_2,\xi_2)\Big)\trans(\zeta_1-\zeta_2)+\Big(\nx \cL^D(\theta, \zeta_1,\xi_1)-\nx \cL^D(\theta, \zeta_2,\xi_2)\Big)\trans(\xi_1-\xi_2)\\
    \geq& \mu_\zeta\|\zeta_2-\zeta_1\|^2 + \mu_\xi\|\xi_2-\xi_1\|^2:=\mu\|\omega_1-\omega_2\|^2 
\end{align*}
\end{proof}

\ConvergeRateAlgFour*
\begin{proof}
When $t>0$, from Lemma \ref{lem:lem3_reducing}, we have
\begin{align*}
    \|\omega_{t+1}-\omega^*\|^2\leq& \|\omega_t - \omega^*\|^2 -2\eta g_{t+1/2}\trans(\omega_{t+1/2}-\omega^*)-\|\omega_{t+1/2}-\omega_t\|^2+\eta^2 \|g_t-g_{t+1/2}\|^2
\end{align*}

Next, we use $P_{t+1}$ to denote $\EE[\|\omega_{t+1}-\omega^*\|^2]+\tau \EE[\|F_{\batch}(\omega^*)-F_{\batch}(w_t)\|^2]$, where $\tau$ will be determined later, then we have

\begin{align*}
    P_{t+1}=&\expect{\|\omega_{t+1}-\omega^*\|^2}+\tau \EE[\|F_{\batch}(\omega^*)-F_{\batch}(w_t)\|^2]\\
    \leq& \expect{\|\omega_t - \omega^*\|^2} -2\eta \EE[F(\omega_{t+1/2})\trans(\omega_{t+1/2}-\omega^*)] -\EE[\|\omega_{t+1/2}-\omega_t\|^2]\\
    &+\eta^2\EE[ \|g_t-g_{t+1/2}\|^2]+\tau \EE[\|F_{\batch}(\omega^*)-F_{\batch}(w_t)\|^2]\\
    &-2\eta \EE[(m_t-F(\omega_{t-1}))\trans(\omega_{t+1/2}-\omega^*)]\tag{$\EE[g\trans_{t+1/2}(\omega_{t+1/2}-\omega^*)]=\EE[(F(\omega_{t+1/2})-F(\omega_{t-1})+m_t)\trans(\omega_{t+1/2}-\omega^*)]$}\\
    \leq & \expect{\|\omega_t - \omega^*\|^2} -2\eta \EE[F(\omega_{t+1/2})\trans(\omega_{t+1/2}-\omega^*)] -(1-5\eta^2\bar{L}^2)\EE[\|\omega_{t+1/2}-\omega_t\|^2]\\
    & + (\tau+10\eta^2)\EE[\|F_{\batch}(w_{t})-F_{\batch}(\omega^*)\|^2]+10\eta^2\EE[\|F_{\batch}(\omega^*)-F_{\batch}(w_{t-1})\|^2] \\
    &+ {2 \eta \sqrt{\EE[\|m_t-F(\omega_{t-1})\|^2]\EE[\|\omega_{t+1/2}-\omega^*\|^2]}}\tag{Lemma \ref{lem:bound_g} and Cauthy Inequality: $\EE[a\trans b|c]\leq \sqrt{\EE[\|a\|^2|c]\EE[\|b\|^2 |c]}$}\\
    \leq & \expect{\|\omega_t - \omega^*\|^2} -2\eta \EE[F(\omega_{t+1/2})\trans(\omega_{t+1/2}-\omega^*)] -(1-5\eta^2\bar{L}^2)\EE[\|\omega_{t+1/2}-\omega_t\|^2]\\
    & + (\tau+10\eta^2)\EE[\|F_{\batch}(w_{t})-F_{\batch}(\omega^*)\|^2]+10\eta^2\EE[\|F_{\batch}(\omega^*)-F_{\batch}(w_{t-1})\|^2] \\
    &+ {\frac{8 \eta}{\mu} \EE[\|m_t-F(\omega_{t-1})\|^2]+\frac{\mu\eta}{8} \EE[\|\omega_{t+1/2}-\omega^*\|^2]}\tag{$2\sqrt{|a\trans b|}\leq \|a\|^2+\|b\|^2$}\\
    \leq & \expect{\|\omega_t - \omega^*\|^2} -2\eta \EE[F(\omega_{t+1/2})\trans(\omega_{t+1/2}-\omega^*)]-(1-25\eta^2\bar{L}^2-2\tau\bar{L}^2)\EE[\|\omega_{t+1/2}-\omega_t\|^2] \\
    &+ (2\tau +20\eta^2)\EE[\|F_{\batch}(w_{t+1/2})-F_{\batch}(\omega^*)\|^2]+10\eta^2\EE[\|F_{\batch}(\omega^*)-F_{\batch}(w_{t-1})\|^2]\\
    &+\frac{8\sigma^2}{|\batch|}(\frac{\ez}{\mz}+\frac{\ex}{\mx})+\frac{\mu\eta}{4}(\EE[\|\omega_{t+1/2}-\omega_t\|^2+\EE[\|\omega_{t}-\omega^*\|^2]) \tag{Condition \ref{cond:variance}; Young's Inequality; $\EE[\|F_{\batch}(\omega_{t+1/2})-F_{\batch}(\omega_t)\|^2]\leq \bar{L}^2\EE[\|\omega_{t+1/2}-\omega_t\|^2]$}\\
    \leq & \expect{\|\omega_t - \omega^*\|^2} -2\eta \EE[F(\omega_{t+1/2})\trans(\omega_{t+1/2}-\omega^*)]-(1-25\eta^2\bar{L}^2-2\tau\bar{L}^2)\EE[\|\omega_{t+1/2}-\omega_t\|^2] \\
    &+ (2\tau\bar{L}+20\eta^2\bar{L})\EE[(F_{\batch}(\omega^*)-F_{\batch}(w_{t+1/2}))\trans(\omega^*-w_{t+1/2})] +10\eta^2\EE[\|F_{\batch}(\omega^*)-F_{\batch}(w_{t-1})\|^2]\\
    &+\frac{8\sigma^2}{|\batch|}(\frac{\ez}{\mz}+\frac{\ex}{\mx})+\frac{\mu\eta}{4}(\EE[\|\omega_{t+1/2}-\omega_t\|^2+\EE[\|\omega_{t}-\omega^*\|^2])\tag{Assumption \ref{cond:smooth_SVRE}}\\
    = & \expect{\|\omega_t - \omega^*\|^2} -(2\eta-20\bar{L}\eta^2-2\tau\bar{L}) \EE[(F(\omega_{t+1/2})-F(w^*))\trans(\omega_{t+1/2}-\omega^*)]\\
    &-(1-25\eta^2\bar{L}^2-2\tau\bar{L}^2)\EE[\|\omega_{t+1/2}-\omega_t\|^2] +10\eta^2 \EE[\|F_{\batch}(\omega^*)-F_{\batch}(w_{t-1})\|^2] \\
    &+\frac{8\sigma^2}{|\batch|}(\frac{\ez}{\mz}+\frac{\ex}{\mx})+\frac{\mu\eta}{4}(\EE[\|\omega_{t+1/2}-\omega_t\|^2+\EE[\|\omega_{t}-\omega^*\|^2])
\end{align*}

By Prop. \ref{prop:strong_mono}, we have:
\begin{align}
    (F(w^*)-F(\omega_{t+1/2}))\trans(\omega^*-\omega_{t+1/2})\geq \mu\|\omega^*-\omega_{t+1/2}\|^2\geq \frac{\mu}{2}\|w_t-\omega^*\|^2-\mu\|w_{t+1/2}-w_t\|^2\label{eq:strong_convexity}
\end{align}
By choosing $0<\eta_\zeta \leq \frac{1}{50\max\{\bar{L}_\zeta, \mu_\zeta\}}$, $0<\eta_\xi \leq \frac{1}{50\max\{\bar{L}_\xi, \mu_\xi\}}$, and $\tau=15\eta^2$, we know
\begin{align*}
    2\eta-20\bL\eta^2-2\tau\bL=2\eta-50\bL\eta^2\geq 0
\end{align*}
As a result, we can use \eqref{eq:strong_convexity} to get:

\begin{align*}
    P_{t+1}\leq& (1-\mu\eta+10\mu\eta^2\bar{L}+\tau\mu\bar{L}+\frac{\mu\eta}{4})\expect{\|\omega_t - \omega^*\|^2}\\
    &-(1-25\eta^2\bar{L}^2-2\tau\bar{L}^2-2\mu\eta+20\mu\bar{L}\eta^2+2\mu\tau\bar{L}-\frac{\mu\eta}{4})\EE[\|\omega_{t+1/2}-\omega_t\|^2]\\ 
    &+\frac{10\eta^2}{\tau}\tau\EE[\|F_{\batch}(\omega^*)-F_{\batch}(w_{t-1})\|^2]+\frac{8\sigma^2}{|\batch|}(\frac{\ez}{\mz}+\frac{\ex}{\mx})\\
    \leq&\underbrace{(1-\frac{3}{4}\mu\eta+25\mu\eta^2\bar{L})}_{p_1}\expect{\|\omega_t - \omega^*\|^2}+\underbrace{(55\eta^2\bar{L}^2+\frac{9}{4}\mu\eta-50\mu\bar{L}\eta^2-1)}_{p_2}\EE[\|\omega_{t+1/2}-\omega_t\|^2]\\ &+\frac{2}{3}\tau\EE[\|F_{\batch}(\omega^*)-F_{\batch}(w_{t-1})\|^2]+\frac{8\sigma^2}{|\batch|}(\frac{\ez}{\mz}+\frac{\ex}{\mx})\\
\end{align*}

since $0<\eta \mu \leq 1/50$ and $0<\eta\bar{L}  \leq 1/50$
\begin{align*}
    p_1 \leq& 1-\frac{3}{4}\mu\eta + \frac{25\mu\eta}{50}=1-\frac{\mu\eta}{4} \\
    p_2 \leq& \frac{11}{500}+\frac{9}{200}-1\leq \frac{11}{500}+\frac{9}{200}-1\leq 0
\end{align*}
As a result
\begin{align*}
    P_{t+1}\leq& (1-\frac{\mu\eta}{4})\EE[\|w_t-\omega^*\|^2]+\frac{2}{3}\tau \EE[\|F_{\batch}(\omega^*)-F_{\batch}(w_{t-1})\|^2]+\frac{8\sigma^2}{|\batch|}(\frac{\ez}{\mz}+\frac{\ex}{\mx})\\
    \leq& \Big(1-\min\{\frac{\mu\eta}{4}, \frac{1}{3}\}\Big)P_t+\frac{8\sigma^2}{|\batch|}(\frac{\ez}{\mz}+\frac{\ex}{\mx})\\
    =& \Big(1-\frac{\mu\eta}{4}\Big)P_t+\frac{8\sigma^2}{|\batch|}(\frac{\ez}{\mz}+\frac{\ex}{\mx})\tag{$\mu\eta\leq 1/50$}\\
    \leq & \Big(1-\frac{\mu\eta}{4}\Big)^{t}P_1+\frac{8\sigma^2}{\min\{\frac{\mz\ez}{4},\frac{\mx\ex}{4}\}|\batch|}(\frac{\ez}{\mz}+\frac{\ex}{\mx})\\
    =& \Big(1-\frac{\mu\eta}{4}\Big)^{t}(\EE[\|w_1-\omega^*\|]+\tau \EE[\|F_{\batch}(\omega^*)-F_{\batch}(\omega_0)\|^2])+\frac{8\sigma^2}{\min\{\frac{\mz\ez}{4},\frac{\mx\ex}{4}\}|\batch|}(\frac{\ez}{\mz}+\frac{\ex}{\mx})
\end{align*}
Next, we take a look at $\EE[\|w_1-\omega^*\|]$, from Lemma \ref{lem:lem3_reducing}, we have:
\begin{align*}
    &\EE[\|\omega_1-\omega^*\|^2]\\
    \leq& \EE[\|\omega_0-\omega^*\|^2-2\eta g_0\trans(\omega_1-\omega^*)]\\
    =&\EE[\|\omega_0-\omega^*\|^2]+2\eta \EE[(F(\omega_0)-g_0)\trans (\omega_1-\omega^*)]+2\eta \EE[(F(\omega^*)-F(\omega_0))\trans(\omega_1-\omega^*)]\\
    \leq&\EE[\|\omega_0-\omega^*\|^2]+2\eta \EE[\|F(\omega_0)-g_0\|^2]\EE[\|\omega_1-\omega^*\|^2]+2\eta \EE[(F(\omega^*)-F(\omega_0))\trans(\omega_1-\omega^*)]\\
    \leq&\EE[\|\omega_0-\omega^*\|^2]+\frac{2\eta \sigma^2}{|\batch|}\EE[\|\omega_1-\omega^*\|]+ 3\eta^2\EE[\|F(\omega^*)-F(\omega_0)\|^2]+ \frac{1}{3}\EE[\|\omega_1-\omega^*\|^2]\\
    \leq&\EE[\|\omega_0-\omega^*\|^2]+\frac{\mu\eta}{4}\EE[\|\omega_1-\omega^*\|]+3\eta^2\EE[\|F(\omega^*)-F(\omega_0)\|^2]+ \frac{1}{3}\EE[\|\omega_1-\omega^*\|^2]\\
    \leq&\EE[\|\omega_0-\omega^*\|^2]+\frac{1}{2}\EE[\|\omega_1-\omega^*\|]+ 3\eta^2\EE[\|F(\omega^*)-F(\omega_0)\|^2]\tag{$\eta\mu<1/50$}
\end{align*}
Therefore, 
$$
\EE[\|\omega_1-\omega\|^2]\leq 2\EE[\|\omega_0-\omega^*\|^2] + 6\eta^2\EE[\|F(\omega^*)-F(\omega_0)\|^2] 
$$
Finally, using the fact that $\EE[\|F_{\batch}(\omega^*)-F_{\batch}(\omega_0)\|^2]\leq\bar{L}^2\EE[\|\omega^*-\omega_0\|^2]$, we have
\begin{align*}
    &\EE[\|\omega_{t+1}-\omega^*\|^2] -\frac{8\sigma^2}{\min\{\frac{\mz\ez}{4},\frac{\mx\ex}{4}\}|\batch|}(\frac{\ez}{\mz}+\frac{\ex}{\mx})\leq P_{t+1}-\frac{8\sigma^2}{\min\{\frac{\mz\ez}{4},\frac{\mx\ex}{4}\}|\batch|}(\frac{\ez}{\mz}+\frac{\ex}{\mx})\\
    \leq &\Big(1-\frac{\mu\eta}{4}\Big)^{t}(2\EE[\|\omega_0-\omega^*\|^2] + 6\eta^2\EE[\|F(\omega^*)-F(\omega_0)\|^2] +\tau \EE[\|F_{\batch}(\omega^*)-F_{\batch}(\omega_0)\|^2])\\
    = & \Big(1-\frac{\mu\eta}{4}\Big)^{t}(2\EE[\|\omega_0-\omega^*\|^2] + 6\eta^2\bL^2\EE[\|\omega^*-\omega_0\|^2] +15\eta^2\bL^2 \EE[\|\omega^*-\omega_0\|^2])\\
    \leq&\Big(1-\frac{\mu\eta}{4}\Big)^{t}(2+\frac{6}{2500}+\frac{15}{2500})\EE[\|\omega^*-\omega_0\|^2]\\
    \leq & \frac{201}{100}\Big(1-\frac{\mu\eta}{4}\Big)^{t}\EE[\|\omega^*-\omega_0\|^2]
\end{align*}
which finishes the proof.
\end{proof}


\section{Missing details for Algorithm 2}\label{appx:proofs_alg2}
In the following, we will use $\cL^D_t$, $\cL^B_t$ and $\cL^{D*}_t$ as shortnotes of $\cL^D(\theta_t, \zeta_t, \xi_t)$, $\cL^B(\theta_t, \zeta_t, \xi_t)$ and $\cL^D(\theta_t, \zeta_t^*, \xi_t^*)$, where $\zeta_t^*, \xi_t^*$ is the only one saddle point of $\cL^D(\theta_t, \zeta, \xi)$. Besides, we use $\nt \cL^D_t$ and $\nt \cL^B_t$ as a shortnote of the gradient averaged over $d^D$ and the gradient averaged over batch, respectively.

\begin{lemma}\label{lem:sd_shift} 
Denote $(\zeta_1^*,\xi_1^*)$ and $(\zeta_2^*,\xi_2^*)$ as the saddle-point of $\max_{\zeta\in Z}\min_{\xi\in\Xi} \cL^D(\theta_1,\zeta,\xi)$ and $\max_{\zeta\in Z}\min_{\xi\in\Xi} \cL^D(\theta_2,\zeta,\xi)$ respectively.
Under Assumptions in Section \ref{sec:assumptions} and Condition \ref{prop:radius}, we have
\begin{align*}
    \|\zeta_1^*-\zeta_2^*\|\leq \kappa_\zeta(\kappa_\xi+1)\|\theta_1-\theta_2\|,\quad \|\xi_1^*-\xi_2^*\|\leq \kappa_\xi(\kappa_\zeta+1)\|\theta_1-\theta_2\|
\end{align*}
\end{lemma}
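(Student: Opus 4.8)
The plan is to exploit that, by Property~\ref{prop:radius} together with Condition~\ref{cond:subset_z0_xi0}, each saddle point $(\zeta_i^*,\xi_i^*)$ is in fact \emph{first-order stationary}: since the unconstrained saddle point already lies in $Z_0\times\Xi_0\subseteq Z\times\Xi$, the constrained and unconstrained saddle points coincide, so $\nz\cL^D(\theta_i,\zeta_i^*,\xi_i^*)=0$ and $\nx\cL^D(\theta_i,\zeta_i^*,\xi_i^*)=0$ for $i=1,2$. This lets me work with the reduced functions $\Phi_\theta(\zeta)=\min_{\xi\in\Xi}\cL^D(\theta,\zeta,\xi)$ and $\phi_\theta(\zeta)=\argmin_{\xi\in\Xi}\cL^D(\theta,\zeta,\xi)$ from Lemma~\ref{lem:optimum_function}, observing that $\zeta_i^*$ is the (unconstrained) maximizer of the $\mu_\zeta$-strongly-concave map $\Phi_{\theta_i}$, whose gradient is $\nabla\Phi_{\theta_i}(\zeta)=\nz\cL^D(\theta_i,\zeta,\phi_{\theta_i}(\zeta))$.

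For the $\zeta$-bound, first I would apply Lemma~\ref{lem:grad_norm} to the strongly-concave $\Phi_{\theta_1}$, using $\nabla\Phi_{\theta_1}(\zeta_1^*)=0$, to obtain $\|\zeta_1^*-\zeta_2^*\|\le \frac{1}{\mu_\zeta}\|\nabla\Phi_{\theta_1}(\zeta_2^*)\|$. Because $\zeta_2^*$ maximizes $\Phi_{\theta_2}$ and $\phi_{\theta_2}(\zeta_2^*)=\xi_2^*$, stationarity gives $\nabla\Phi_{\theta_2}(\zeta_2^*)=\nz\cL^D(\theta_2,\zeta_2^*,\xi_2^*)=0$. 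I would then write the gradient as a difference and bound it via the $L$-smoothness of $\cL^D$ (Property~\ref{prop:detailed_scscs}(c)):
\begin{align*}
\|\nabla\Phi_{\theta_1}(\zeta_2^*)\|=\|\nz\cL^D(\theta_1,\zeta_2^*,\phi_{\theta_1}(\zeta_2^*))-\nz\cL^D(\theta_2,\zeta_2^*,\phi_{\theta_2}(\zeta_2^*))\|\le L\|\theta_1-\theta_2\|+L\|\phi_{\theta_1}(\zeta_2^*)-\phi_{\theta_2}(\zeta_2^*)\|,
\end{align*}
where the two arguments share the common $\zeta_2^*$, so the only remaining quantity is the shift of the inner minimizer in $\theta$.

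The main obstacle is controlling $\|\phi_{\theta_1}(\zeta_2^*)-\phi_{\theta_2}(\zeta_2^*)\|$, since Lemma~\ref{lem:optimum_function} only supplies Lipschitzness of $\phi_\theta$ in $\zeta$, not in $\theta$, and the inner minimizer at a generic $\zeta_2^*$ need not be interior to $\Xi$. I would handle this with the variational inequality for constrained strongly-convex minimization: setting $\xi_i:=\phi_{\theta_i}(\zeta_2^*)$, optimality gives $\langle\nx\cL^D(\theta_i,\zeta_2^*,\xi_i),\xi-\xi_i\rangle\ge 0$ for all $\xi\in\Xi$; adding the two inequalities (taking $\xi=\xi_2$ and $\xi=\xi_1$ respectively) and invoking $\mu_\xi$-strong convexity of $\cL^D$ in $\xi$ together with $L$-smoothness in $\theta$ yields $\|\xi_1-\xi_2\|\le \kappa_\xi\|\theta_1-\theta_2\|$. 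Substituting back gives $\|\zeta_1^*-\zeta_2^*\|\le \frac{L}{\mu_\zeta}(1+\kappa_\xi)\|\theta_1-\theta_2\|=\kappa_\zeta(\kappa_\xi+1)\|\theta_1-\theta_2\|$. The bound on $\|\xi_1^*-\xi_2^*\|$ then follows from the entirely symmetric argument, using $\Psi_\theta(\xi)=\max_{\zeta\in Z}\cL^D(\theta,\zeta,\xi)$ and $\psi_\theta$ in place of $\Phi_\theta,\phi_\theta$ and swapping the roles of $\mu_\zeta$ and $\mu_\xi$.
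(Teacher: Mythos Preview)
Your proof is correct and follows essentially the same approach as the paper: apply Lemma~\ref{lem:grad_norm} to the strongly-concave reduced function $\Phi_\theta$, then control the resulting gradient via $L$-smoothness together with a $\kappa_\xi\|\theta_1-\theta_2\|$ bound on the shift of the inner minimizer. The paper's version is only marginally simpler in that it anchors at $\Phi_{\theta_2}(\zeta_1^*)$ and pivots through $\xi_1^*$ (interior by Property~\ref{prop:radius}), so Lemma~\ref{lem:grad_norm} handles the inner-minimizer shift directly without your variational-inequality detour---but since $\phi_{\theta_2}(\zeta_2^*)=\xi_2^*$ is also interior, that shortcut was available to you as well, and in any case your VI argument is valid.
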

\begin{proof}
With Condition \ref{prop:radius}, we have
\begin{align}
    \|\nz \cL^D(\theta_2,\zeta_1^*,\xi_1^*)\|=\|\nz \cL^D(\theta_1,\zeta_1^*,\xi_1^*)-\nz \cL^D(\theta_2,\zeta_1^*,\xi_1^*)\|\leq L\|\theta_1-\theta_2\|\label{eq:bound_gz_with_dt}\\
    \|\nx \cL^D(\theta_2,\zeta_1^*,\xi_1^*)\| = \|\nx \cL^D(\theta_1,\zeta_1^*,\xi_1^*)-\nx \cL^D(\theta_2,\zeta_1^*,\xi_1^*)\|\leq L\|\theta_1-\theta_2\|
\end{align}
Recall in Lemma \ref{lem:optimum_function}, we know $\Phi_{\theta_2}(\zeta)$ should be a $\mu_\zeta$-strongly-concave function. Then, we have
\
\begin{align*}
    \|\zeta_1^*-\zeta_2^*\|\leq&\frac{1}{\mu_\zeta}\|\nz\Phi_{\theta_2}(\zeta_1^*)\|= \frac{1}{\mu_\zeta}\|\nz \cL^D(\theta_2, \zeta_1^*, \phi_{\theta_2}(\zeta_1^*))\|\\
    \leq & \frac{1}{\mu_\zeta}\|\nz \cL^D(\theta_2, \zeta_1^*, \phi_{\theta_2}(\zeta_1^*))-\nz \cL^D(\theta_2, \zeta_1^*, \xi_1^*)\| + \frac{1}{\mu_\zeta}\|\nz \cL^D(\theta_2, \zeta_1^*, \xi_1^*))\|\\
    \leq & \frac{1}{\mu_\zeta}\|\nz \cL^D(\theta_2, \zeta_1^*, \phi_{\theta_2}(\zeta_1^*))-\nz\cL^D(\theta_2, \zeta_1^*, \xi_1^*)\| + \frac{L}{\mu_\zeta}\|\theta_1-\theta_2\|\\
    \leq & \frac{L}{\mu_\zeta}\|\phi_{\theta_2}(\zeta_1^*)-\xi_1^*\|+\frac{L}{\mu_\zeta}\|\theta_1-\theta_2\|\\
    \leq & \frac{L}{\mu_\zeta\mu_\xi}\|\nx \cL^D(\theta_2, \zeta_1^*,\xi_1^*)\|+\frac{L}{\mu_\zeta}\|\theta_1-\theta_2\|\\
    \leq & \kappa_\zeta(\kappa_\xi+1)\|\theta_1-\theta_2\|
\end{align*}
where in the first step, we use Lemma \ref{lem:grad_norm}; in the fourth and fifth inequalities, we use the Property \ref{prop:scscs}; 
in the last inequality, we use Eq.\eqref{eq:bound_gz_with_dt} again.

We can give a similarly discussion for $\|\xi_1^*-\xi_2^*\|$:
\begin{align*}
    \|\xi_1^*-\xi_2^*\|\leq&\frac{1}{\mu_\xi}\|\nx\Psi_{\theta_2}(\xi_1^*)\|= \frac{1}{\mu_\xi}\|\nx \cL^D(\theta_2, \psi_{\theta_2}(\xi_1^*), \xi_1^*)\|\\
    \leq & \frac{1}{\mu_\xi}\|\nx \cL^D(\theta_2, \psi_{\theta_2}(\xi_1^*), \xi_1^*)-\nx \cL^D(\theta_2, \zeta_1^*, \xi_1^*)\| + \frac{1}{\mu_\xi}\|\nx \cL^D(\theta_2, \zeta_1^*, \xi_1^*))\|\\
    \leq & \frac{1}{\mu_\xi}\|\nx \cL^D(\theta_2, \psi_{\theta_2}(\xi_1^*), \xi_1^*)-\nx\cL^D(\theta_2, \zeta_1^*, \xi_1^*)\| + \frac{L}{\mu_\xi}\|\theta_1-\theta_2\|\\
    \leq & \frac{L}{\mu_\xi}\|\zeta_1^*-\psi_{\theta_2}(\xi_1^*)\|+\frac{L}{\mu_\xi}\|\theta_1-\theta_2\|\\
    \leq & \frac{L}{\mu_\xi\mu_\zeta}\|\nz \cL^D(\theta_2, \zeta_1^*,\xi_1^*)\|+\frac{L}{\mu_\xi}\|\theta_1-\theta_2\|\\
    \leq & \kappa_\xi(\kappa_\zeta+1)\|\theta_1-\theta_2\|
\end{align*}

\end{proof}

\RelateShift*
\begin{proof}
We will use $\Delta_t(\zeta,\xi)$ to denote $\EE[\|\zeta-\zeta_t^*\|^2+\|\xi-\xi_t^*\|^2]$. We first study some useful properties of $\Delta_t(\zeta,\xi)$. 
\paragraph{Property 1} For $t \geq 1$
\begin{align*}
    \Delta_t(\zeta_{t-1}^*,\xi_{t-1}^*)=&\expect{\|\zeta^*_{t}-\zeta_{t-1}^*\|^2+\|\xi^*_{t}-\xi_{t-1}^*\|^2}\leq \Czx\expect{\|\theta_{t}-\theta_{t-1}\|^2}=\et^2\Czx\expect{\| \gt^{t-1}\|^2}
\end{align*}
where in the inequality, we use Lemma \ref{lem:sd_shift}; and the last equality results from the update rule $\theta_t=\theta_{t-1}+\et \gt^{t-1}$

\paragraph{Property 2} For $t\geq 0$,
\begin{align*}
    \Delta_t(\zeta_t,\xi_t)\leq& \frac{\beta}{2} \Delta_t(\zeta_{t-1},\xi_{t-1})+\rc    =\frac{\beta}{2}\expect{\|\zeta_{t-1}-\zeta_t^*\|^2+\|\xi_{t-1}-\xi_t^*\|^2}+\rc\\
    \leq&\beta\expect{\|\zeta_{t-1}-\zeta_{t-1}^*\|^2+\|\xi_{t-1}-\xi_{t-1}^*\|^2+\|\zeta_{t}^*-\zeta_{t-1}^*\|^2+\|\xi^*_{t}-\xi_{t-1}^*\|^2}+\rc\\
    =&\beta\Delta_{t-1}(\zeta_{t-1},\xi_{t-1})+\beta\Delta_t(\zeta_{t-1}^*,\xi_{t-1}^*)+\rc\\
    \leq & \beta^t\Delta_0(\zeta_0,\xi_0)+\sum_{\tau=1}^{t}\beta^{t-\tau+1}\Delta_\tau(\zeta^*_{\tau-1},\xi^*_{\tau-1})+\sum_{\tau=0}^{t-1}\beta^{\tau}\rc\\
    \leq & \beta^{t+1}d^2+\et^2\Czx\sum_{\tau=0}^{t-1}\beta^{t-\tau}\expect{\| \gt^{\tau}\|^2}+\sum_{\tau=0}^{t}\beta^{\tau}\rc\\
    \leq&\beta^{t+1}d^2+\et^2\Czx\sum_{\tau=0}^{t-1}\beta^{t-\tau}\expect{\| \gt^{\tau}\|^2}+\frac{\rc}{1-\beta}
\end{align*}
where the first inequality is because of the property of the Oracle; for the second inequality we use Young's inequality; In the last step, we use
\begin{align*}
    \Delta_0(\zeta_0,\xi_0)=\EE[\|\zeta_0-\zeta^*_0\|^2+\|\xi_0-\xi^*_0\|^2]\leq \frac{\beta}{2}\EE[\|\zeta_{-1}-\zeta^*_0\|^2+\|\xi_{-1}-\xi^*_0\|^2]+\rc\leq \beta d^2+\rc
\end{align*}
With the two properties above, we can bound:
\begin{align*}
    &\expect{\|\zeta_{t+1}-\zeta_t\|^2+\|\xi_{t+1}-\xi_t\|^2}\\
    \leq &3\expect{\|\zeta_{t+1}-\zeta_{t+1}^*\|^2+\|\xi_{t+1}-\xi^*_{t+1}\|^2+\|\zeta^*_{t+1}-\zeta_t^*\|^2+\|\xi^*_{t+1}-\xi_t^*\|^2+\|\zeta^*_{t}-\zeta_t\|^2+\|\xi^*_{t}-\xi_t\|^2}\\
    =&3\Delta_{t+1}(\zeta_{t+1},\xi_{t+1})+3\Delta_{t+1}(\zeta_{t}^*,\xi_{t}^*)+3\Delta_{t}(\zeta_{t},\xi_{t})\\
    \leq&3\beta^{t+2}d^2+3\et^2\Czx\sum_{\tau=0}^{t}\beta^{t-\tau+1}\expect{\|\gt^\tau\|^2}+3\et^2\Czx\expect{\| \gt^{t}\|^2}+3\beta^{t+1}d^2+3\et^2\Czx\sum_{\tau=0}^{t-1}\beta^{t-\tau}\expect{\| \gt^\tau\|^2}\\
    &+\frac{6\rc}{1-\beta}\\
    =&3(1+\beta)\beta^{t+1}d^2+3\et^2\Czx\sum_{\tau=0}^{t}(1+\beta)\beta^{t-\tau}\expect{\|\gt^\tau\|^2}+\frac{6\rc}{1-\beta}\\
    \leq& 6\beta^{t+1}d^2+6\et^2\Czx\sum_{\tau=0}^{t}\beta^{t-\tau}\expect{\|\gt^\tau\|^2}+\frac{6\rc}{1-\beta}
\end{align*}
where for the first one we use an extended version of Young's inequality $\|\sum_{i=1}^k x_i\|^2\leq k\sum_{i=1}^k\|x_i\|^2$; in the second inequality, we use the Property 1 and 2 to give the upper bound; in the third inequality, we use the fact that $0<\beta\leq 1$. 
\end{proof}

\begin{lemma}\label{lem:diff_update_vs_truegrad}
Define $L^{D*}(\theta):=\max_{\zeta\in Z}\min_{\xi\in\Xi} \cL^D(\theta, \zeta,\xi)$.
Under the same condition of Lemma \ref{lem:recursive_split} above, with an additional constraint $\beta\leq (1-\alpha)^2/2$, for $t\geq 0$, we have:
\begin{align*}
    \EE[\|\gt^{t+1}-\nt \cL^{D*}(\theta_{t+1})\|^2]\leq&2\lambda^{t+1}\EE[\|\gt^0 -\nt\cL^D_0\|^2] +2\frac{\alpha^2\sigma^2}{(1-\lambda)|B|}\\
    &+4L^2 \Big(\beta^{t+2}d^2+\frac{\rc}{1-\beta}+\frac{3\lambda c}{(1-\beta)(1-\lambda)}+\frac{3d^2 \beta \lambda^{t+2}}{\lambda-\beta}\Big)\\
    &+2\et^2 L^2\Expect{\sum_{i=0}^{t} \Big(\lambda^{t-i+1}\Big(6\Czx \frac{\lambda}{\lambda-\beta}+1\Big)+2\Czx\beta^{t-i+1}\Big)\|g_\theta^i\|^2}
\end{align*}
where $\epsilon_{data},\epsilon_{func},\epsilon_{reg}$ are the same as those in Theorem \ref{thm:biasedness}, and $\lambda$ as a shortnote of $2(1-\alpha)^2$.
\end{lemma}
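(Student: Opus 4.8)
\textbf{Proof plan for Lemma \ref{lem:diff_update_vs_truegrad}.}

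The plan is to unroll the momentum recursion for $\gt^{t+1}$ and bound its deviation from the true smoothed gradient $\nt \cL^{D*}(\theta_{t+1})$. First I would write $e_{t+1} := \gt^{t+1} - \nt \cL^{D*}(\theta_{t+1})$ and use the update rule $\gt^{t+1} = (1-\alpha)\gt^t + \alpha \nt \cL^B(\theta_{t+1}, \zeta_{t+1}, \xi_{t+1})$ together with the gradient Lipschitz smoothness of $\cL^{D*}$ (which follows from Danskin/Lemma \ref{lem:optimum_function} and Property \ref{prop:scscs}) to obtain a recursion of the form
\begin{align*}
    e_{t+1} = (1-\alpha)e_t + \alpha\Big(\nt \cL^B_{t+1} - \nt \cL^D_{t+1}\Big) + (1-\alpha)\Big(\nt \cL^{D*}(\theta_t) - \nt \cL^{D*}(\theta_{t+1})\Big) + \alpha\Big(\nt \cL^D_{t+1} - \nt \cL^{D*}(\theta_{t+1})\Big),
\end{align*}
so that $\|e_{t+1}\|^2$ splits into a contraction term, a stochastic-noise term, a drift term controlled by $\et \|\gt^t\|$, and a critic-error term controlled by $\|\zeta_{t+1} - \zeta^*_{t+1}\|$ and $\|\xi_{t+1} - \xi^*_{t+1}\|$.

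Next I would take expectations and apply Young's inequality with the choice that makes the contraction factor $(1-\alpha)^2 \le \lambda/2$ absorb the cross terms; the stochastic term contributes the variance $\alpha^2\sigma^2/|B|$ summed geometrically, giving the $2\alpha^2\sigma^2/((1-\lambda)|B|)$ factor. The smoothness term turns $\|\nt \cL^{D*}(\theta_t) - \nt \cL^{D*}(\theta_{t+1})\|^2$ into $L^2\et^2\|\gt^t\|^2$ (up to constants), and the critic-error term into $L^2 (\|\zeta_{t+1}-\zeta^*_{t+1}\|^2 + \|\xi_{t+1}-\xi^*_{t+1}\|^2)$. The latter I would bound by the Oracle Condition \ref{def:oracle_alg} applied at step $t+1$, i.e. $\Delta_{t+1}(\zeta_{t+1},\xi_{t+1}) \le \tfrac{\beta}{2}\Delta_{t+1}(\zeta_t,\xi_t) + c$, and then invoke the estimates already established inside the proof of Lemma \ref{lem:recursive_split} (Property 2 there), which express $\Delta_{t}(\zeta_t,\xi_t)$ as $\beta^{t+1}d^2 + \et^2 \Czx \sum_\tau \beta^{t-\tau}\|\gt^\tau\|^2 + c/(1-\beta)$.

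The main obstacle will be the bookkeeping of the two coupled geometric sums with different ratios $\lambda$ and $\beta$. Unrolling $e_{t+1}$ produces an outer $\lambda$-geometric sum of terms that themselves contain inner $\beta$-geometric sums from the critic error; swapping the order of summation and re-indexing yields mixed double sums $\sum_{i}\sum_{\tau}\lambda^{t-\cdot}\beta^{\cdot-\tau}$ whose closed form introduces the factor $\lambda/(\lambda-\beta)$ (this is exactly where the constraint $\beta \le (1-\alpha)^2/2 = \lambda/4 < \lambda$ is needed to keep $\lambda - \beta > 0$ and the sums convergent). I would carefully collect the constant-in-$\|\gt^i\|^2$ contributions into the explicit expression $4L^2\big(\beta^{t+2}d^2 + \tfrac{c}{1-\beta} + \tfrac{3\lambda c}{(1-\beta)(1-\lambda)} + \tfrac{3 d^2 \beta \lambda^{t+2}}{\lambda-\beta}\big)$ and the $\|\gt^i\|^2$-dependent contributions into the coefficient $\lambda^{t-i+1}(6\Czx\tfrac{\lambda}{\lambda-\beta}+1) + 2\Czx \beta^{t-i+1}$, matching the claimed bound. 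The initial condition $e_0 = \gt^0 - \nt \cL^D_0$ (note $\zeta_0,\xi_0$ are not yet at the saddle point) accounts for the $2\lambda^{t+1}\EE[\|\gt^0 - \nt\cL^D_0\|^2]$ leading term.
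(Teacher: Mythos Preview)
Your overall strategy (unroll the momentum, feed in the Oracle bound, handle the $\lambda/\beta$ double geometric sum) is right, but your decomposition is not the one the paper uses, and the difference is exactly what produces the stated constants.

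The paper does \emph{not} recurse on $e_t=\gt^t-\nt\cL^{D*}(\theta_t)$. It first inserts the intermediate anchor $\nt\cL^D_{t+1}=\nt\cL^D(\theta_{t+1},\zeta_{t+1},\xi_{t+1})$ and splits
\[
\EE\big[\|\gt^{t+1}-\nt\cL^{D*}_{t+1}\|^2\big]\le 2\,\underbrace{\EE\big[\|\gt^{t+1}-\nt\cL^D_{t+1}\|^2\big]}_{p_1}+2\,\underbrace{\EE\big[\|\nt\cL^D_{t+1}-\nt\cL^{D*}_{t+1}\|^2\big]}_{p_2}.
\]
The recursion is then built on $p_1$ alone: writing $\gt^{t+1}-\nt\cL^D_{t+1}=(1-\alpha)(\gt^t-\nt\cL^D_t)+\alpha(\nt\cL^B_{t+1}-\nt\cL^D_{t+1})+(1-\alpha)(\nt\cL^D_t-\nt\cL^D_{t+1})$ leaves only \emph{two} non-zero-mean terms after the noise cross terms vanish, so Young's inequality gives the contraction factor exactly $\lambda=2(1-\alpha)^2$. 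The drift here is $\nt\cL^D_t-\nt\cL^D_{t+1}$, bounded via $L$-smoothness of $\cL^D$ in $(\theta,\zeta,\xi)$ and then the main statement of Lemma~\ref{lem:recursive_split} for $\|\zeta_{t+1}-\zeta_t\|^2+\|\xi_{t+1}-\xi_t\|^2$; this is the origin of the $6\Czx$ factor. The critic error $p_2\le 2L^2\Delta_{t+1}(\zeta_{t+1},\xi_{t+1})$ is bounded separately by Property~2 and is \emph{not} fed into the recursion, which is why the $2\Czx\beta^{t-i+1}$ term appears without a $\lambda$-geometric weight in front.

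Your four-term split puts the critic error $\alpha(\nt\cL^D_{t+1}-\nt\cL^{D*}_{t+1})$ inside the recursion as a third non-zero-mean term; a symmetric Young's inequality then yields contraction $3(1-\alpha)^2=\tfrac32\lambda$, not $\lambda$, so the unrolled sums no longer close to the stated form. Two more mismatches: your drift term uses smoothness of $\cL^{D*}$ (constant $\Lzx$), whereas the displayed coefficient $6\Czx\tfrac{\lambda}{\lambda-\beta}+1$ comes from $L$-smoothness of $\cL^D$ plus Lemma~\ref{lem:recursive_split}; and with your definition the initial condition is $e_0=\gt^0-\nt\cL^{D*}(\theta_0)$, not $\gt^0-\nt\cL^D_0$ as in the lemma's leading term---that term has $\nt\cL^D_0$ precisely because the paper's recursion tracks $\gt^t-\nt\cL^D_t$.
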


\begin{proof}
Recall that we will use $\nt \cL^B_t$, $\nt \cL^D_t$ and $\nt \cL^{D*}_t$ as a shortnote of $\nt \cL^B(\theta_t,\zeta_t,\xi_t)$, $\nt \cL^D(\theta_t,\zeta_t,\xi_t)$, $\nt \cL^{D*}(\theta_t)$ respectively, and in Property \ref{cond:variance}, we have already shown the variance of batch gradient can be bounded. First we can use the Young's inequality to obtain
\begin{align*}
    \EE[\|\gt^{t+1}-\nt\cL^{D*}_{t+1}\|^2]\leq 2\underbrace{\EE[\|\gt^{t+1}-\nt\cL^D_{t+1}\|^2]}_{p_1}+2\underbrace{\EE[\|\nt\cL^D_{t+1}-\nt\cL^{D*}_{t+1}\|^2]}_{p_2}\\
\end{align*}
Since the first term has already been bounded in Theorem \ref{thm:biasedness}. Next, we bound $p_1$ and $p_2$:

\paragraph{Upper bound $p_1$}
We again use $C_{\zeta,\xi}$ as a short note of $\kappa^2_\mu(\kappa_\xi+1)^2+\kappa^2_\xi(\kappa_\zeta+1)^2$. From Lemma \ref{lem:recursive_split}, we know that,
\begin{align*}
    \expect{\|\zeta_{t+1}-\zeta_t\|^2+\|\xi_{t+1}-\xi_t\|^2}\leq&6\beta^{t+1}d^2+6\et^2\Czx\sum_{\tau=0}^{t}\beta^{t-\tau}\expect{\|\gt^\tau\|^2}+\frac{6\rc}{1-\beta}
\end{align*}
In the following, we will use $\lambda$ as a shortnote of $2(1-\alpha)^2$.
\begin{align*}
    &p_1=\EE[\|\gt^{t+1}-\nt \cL^D_{t+1}\|^2]\\
    =&\EE\Big[\Big\|(1-\alpha)\gt^t+\alpha\nt\cL^B_{t+1}-\nt \cL^D_{t+1}\pm (1-\alpha) \nt \cL^D_t\Big\|^2\Big]\\
    =&\EE\Big[\Big\|(1-\alpha)(\gt^t-\nt \cL^D_t)+\alpha (\nt \cL^B_{t+1}-\nt \cL^D_{t+1})+(1-\alpha)(\nt \cL^D_{t}-\nt \cL^D_{t+1})\Big\|^2\Big]\\
    =&\EE\Big[\Big\|(1-\alpha)(\gt^t-\nt \cL^D_t)+(1-\alpha)(\nt \cL^D_{t}-\nt \cL^D_{t+1})\Big\|^2\Big]+\EE\Big[\Big\|\alpha(\nt \cL^B_{t+1}-\nt \cL^D_{t+1})\Big\|^2\Big]\tag{Drop 0 expectation}\\
    \leq &\lambda\EE\Big[\Big\|(\gt^t-\nt \cL^D_t)\Big\|^2\Big]+\lambda\EE\Big[\Big\|(\nt \cL^D_{t}-\nt \cL^D_{t+1})\Big\|^2\Big]+\frac{\alpha^2\sigma^2}{|B|}\tag{Young's Ineq.}\\
    \leq& \lambda\EE\Big[\Big\|(\gt^t-\nt \cL^D_t)\Big\|^2\Big] + \lambda L^2\EE[\|\zeta_{t+1}-\zeta_t\|^2+\|\xi_{t+1}-\xi_{t}\|^2+\|\theta_{t+1}-\theta_t\|^2]+\frac{\alpha^2\sigma^2}{|B|} \tag{Smoothness of $\cL^D$}\\
    \leq& \lambda^{t+1}\EE[\|\gt^0 -\nt\cL^D_0\|^2]+\frac{\alpha^2\sigma^2}{|B|}\frac{1-\lambda^{t+1}}{1-\lambda}+\Expect{\sum_{i=0}^{t}\lambda^{t-i+1} L^2\Big(\|\zeta_{i+1}-\zeta_i\|^2+\|\xi_{i+1}-\xi_i\|^2+\|\theta_{i+1}-\theta_i\|^2\Big)}\\
    \leq& \lambda^{t+1}\EE[\|\gt^0 -\nt\cL^D_0\|^2]+\frac{\alpha^2\sigma^2}{(1-\lambda)|B|}\\
    &+\Expect{\sum_{i=0}^{t}\lambda^{t-i+1} L^2\Big(6\beta^{i+1}d^2+6\et^2\Czx\sum_{\tau=0}^{i}\beta^{i-\tau}\expect{\|\gt^\tau\|^2}+\frac{6\rc}{1-\beta}+\et^2\|g_\theta^i\|^2\Big)}\\
    \leq& \lambda^{t+1}\EE[\|\gt^0 -\nt\cL^D_0\|^2]+\Expect{\sum_{i=0}^{t} \et^2 L^2\Big(6\Czx \sum_{\tau=i}^t\lambda^{t-\tau+1}\beta^{\tau-i}+\lambda^{t-i+1}\Big)\|g_\theta^i\|^2}\\
    &+\frac{\alpha^2\sigma^2}{(1-\lambda)|B|}+\frac{6\lambda cL^2}{(1-\beta)(1-\lambda)}+\frac{6L^2d^2 \beta \lambda^{t+2}}{\lambda-\beta}\\
    \leq& \lambda^{t+1}\EE[\|\gt^0 -\nt\cL^D_0\|^2]+\Expect{\sum_{i=0}^{t} \et^2 L^2\lambda^{t-i+1}\Big(6\Czx \frac{\lambda}{\lambda-\beta}+1\Big)\|g_\theta^i\|^2}+\frac{\alpha^2\sigma^2}{(1-\lambda)|B|}\\
    &+\frac{6\lambda cL^2}{(1-\beta)(1-\lambda)}+\frac{6L^2d^2 \beta \lambda^{t+2}}{\lambda-\beta}
\end{align*}
where the fourth equality because $\EE[\nt \cL^B_{t}]=\nt \cL^D_{t}$ holds for all $t$ and so the cross terms has 0 expectation; the first inequality is because variance is less than the second momentum; the second inequality we apply Assumption \ref{assump:smooth}; in the last but two inequality, we apply the summation formula of equal ratio sequence and use the fact that $0<\alpha\leq 1, \beta \leq 1$; in the last step, we use our condition $\beta \leq (1-\alpha)^2/2$

\paragraph{Upper bound $p_2$}
Next, we give an upper bound for $p_2$. From the Property 2 in Lemma \ref{lem:recursive_split}, we know that
\begin{align*}
    \Delta_{t+1}(\zeta_{t+1},\xi_{t+1})=&\expect{\|\zeta_{t+1}-\zeta^*_{t+1}\|^2}+ \expect{\|\xi_{t+1}-\xi^*_{t+1}\|^2}\\
    \leq& \beta^{t+2}d^2+\et^2\Czx\sum_{\tau=0}^{t}\beta^{t-\tau+1}\expect{\| \gt^{\tau}\|^2}+\frac{\rc}{1-\beta}
\end{align*}
As a result
\begin{align*}
    p_2 =& \EE[\|\nt\cL^D_{t+1}-\nt\cL^{D*}_{t+1}\|^2]\leq 2L^2\EE[\|\zeta_{t+1}-\zeta^*_{t+1}\|^2+\|\xi_{t+1}-\xi^*_{t+1}\|^2]\\
    \leq & 2L^2\Big(\beta^{t+2}d^2+\et^2\Czx\sum_{\tau=0}^{t}\beta^{t-\tau+1}\expect{\| \gt^{\tau}\|^2}+\frac{\rc}{1-\beta})
\end{align*}
Combine these two results we can finish the proof:

\begin{align*}
    &\EE[\|\gt^{t+1}-\nt \cL^{D*}(\theta_{t+1})\|^2]\leq 2p_1+2p_2\\
    \leq & 2\lambda^{t+1}\EE[\|\gt^0 -\nt\cL^D_0\|^2]+2\Expect{\sum_{i=0}^{t} \et^2 L^2\lambda^{t-i+1}\Big(6\Czx \frac{\lambda}{\lambda-\beta}+1\Big)\|g_\theta^i\|^2}\\
    &+2\frac{\alpha^2\sigma^2}{(1-\lambda)|B|}+\frac{12\lambda cL^2}{(1-\beta)(1-\lambda)}+\frac{12L^2d^2 \beta \lambda^{t+2}}{\lambda-\beta}+4L^2\Big(\beta^{t+2}d^2+\et^2\Czx\sum_{\tau=0}^{t}\beta^{t-\tau+1}\expect{\| \gt^{\tau}\|^2}+\frac{\rc}{1-\beta})\\
    \leq & 2\lambda^{t+1}\EE[\|\gt^0 -\nt\cL^D_0\|^2] +2\frac{\alpha^2\sigma^2}{(1-\lambda)|B|}+4L^2 \Big(\beta^{t+2}d^2+\frac{\rc}{1-\beta}+\frac{3\lambda c}{(1-\beta)(1-\lambda)}+\frac{3d^2 \beta \lambda^{t+2}}{\lambda-\beta}\Big)\\
    &+2\et^2 L^2\Expect{\sum_{i=0}^{t} \Big(\lambda^{t-i+1}\Big(6\Czx \frac{\lambda}{\lambda-\beta}+1\Big)+2\Czx\beta^{t-i+1}\Big)\|g_\theta^i\|^2}
\end{align*}
\end{proof}

\begin{restatable}{proposition}{PropSmoothJPi}\label{prop:smooth_LDstar}
    Under Assumption \ref{assump:bound_pi_grad} and \ref{assump:feature_matrices}, define $L^{D*}(\theta):=\max_{\zeta\in Z}\min_{\xi\in\Xi} \cL^D(\theta, \zeta,\xi)$ is $\Lzx$ smooth with $\Lzx:=L(1+\kappa_\zeta(\kappa_\xi+1)+\kappa_\zeta(\kappa_\zeta+1))$.
\end{restatable}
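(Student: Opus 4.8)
The plan is to compute $\nabla L^{D*}$ in closed form via Danskin's theorem and then bound its variation using the $L$-smoothness of $\cL^D$ together with the saddle-point stability estimate of Lemma~\ref{lem:sd_shift}. First I would argue that $L^{D*}$ is differentiable with $\nabla L^{D*}(\theta) = \nt \cL^D(\theta, \zeta^*_\theta, \xi^*_\theta)$, where $(\zeta^*_\theta, \xi^*_\theta)$ denotes the unique saddle point of $\max_{\zeta\in Z}\min_{\xi\in\Xi}\cL^D(\theta,\zeta,\xi)$. Uniqueness and the requisite regularity come from Property~\ref{prop:scscs} and Lemma~\ref{lem:optimum_function}: for fixed $\theta$ the inner value function $\Phi_\theta(\zeta) = \min_{\xi\in\Xi}\cL^D(\theta,\zeta,\xi)$ has a unique minimizer $\phi_\theta(\zeta)$ and is itself $\mu_\zeta$-strongly concave, hence possesses a unique maximizer $\zeta^*_\theta$ with $\xi^*_\theta = \phi_\theta(\zeta^*_\theta)$. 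Applying Danskin's theorem to the inner $\min_\xi$ gives $\nabla_\theta \Phi_\theta(\zeta) = \nt\cL^D(\theta,\zeta,\phi_\theta(\zeta))$ (treating $\phi_\theta(\zeta)$ as a constant when differentiating), and applying it once more to the outer $\max_\zeta$ gives $\nabla L^{D*}(\theta) = \nabla_\theta\Phi_\theta(\zeta^*_\theta) = \nt\cL^D(\theta,\zeta^*_\theta,\xi^*_\theta)$.

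With this formula in hand, for any $\theta_1,\theta_2\in\Theta$ I would write
\begin{align*}
\|\nabla L^{D*}(\theta_1) - \nabla L^{D*}(\theta_2)\| = \|\nt\cL^D(\theta_1,\zeta^*_{\theta_1},\xi^*_{\theta_1}) - \nt\cL^D(\theta_2,\zeta^*_{\theta_2},\xi^*_{\theta_2})\|
\end{align*}
and invoke the $L$-smoothness of $\cL^D$ from Property~\ref{prop:detailed_scscs}-(c) to bound the right-hand side by $L\|\theta_1-\theta_2\| + L\|\zeta^*_{\theta_1}-\zeta^*_{\theta_2}\| + L\|\xi^*_{\theta_1}-\xi^*_{\theta_2}\|$. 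Substituting the two displacement bounds of Lemma~\ref{lem:sd_shift}, namely $\|\zeta^*_{\theta_1}-\zeta^*_{\theta_2}\|\leq \kappa_\zeta(\kappa_\xi+1)\|\theta_1-\theta_2\|$ and $\|\xi^*_{\theta_1}-\xi^*_{\theta_2}\|\leq \kappa_\xi(\kappa_\zeta+1)\|\theta_1-\theta_2\|$, collapses the estimate into $\Lzx\|\theta_1-\theta_2\|$, establishing the claimed smoothness constant.

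The routine part of this argument is the final chain of inequalities; the delicate step, which I expect to be the main obstacle, is the rigorous justification of the gradient formula, i.e.\ verifying the envelope-theorem hypotheses at the level of the nested max-min. Concretely I would need both the inner and outer optimizers to be unique (supplied by the strong convexity in $\xi$ and strong concavity in $\zeta$ from Property~\ref{prop:scscs}) and the optimizer maps $\theta\mapsto\zeta^*_\theta$, $\theta\mapsto\xi^*_\theta$ to be continuous, which follows from the Lipschitz estimates in Lemmas~\ref{lem:optimum_function} and~\ref{lem:sd_shift}; together these permit differentiating through the value function while treating the optimal $\zeta$ and $\xi$ as frozen. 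Once differentiability and the gradient formula are secured, no estimate beyond $L$-smoothness and Lemma~\ref{lem:sd_shift} is needed to finish.
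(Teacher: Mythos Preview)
Your proposal is correct and essentially identical to the paper's proof: write $\nabla L^{D*}(\theta)=\nt\cL^D(\theta,\zeta^*_\theta,\xi^*_\theta)$, apply the $L$-smoothness of $\cL^D$ from Property~\ref{prop:detailed_scscs}, and substitute the two bounds of Lemma~\ref{lem:sd_shift}. The paper simply asserts the gradient formula without your Danskin-theorem justification; note also that your chain yields $L\bigl(1+\kappa_\zeta(\kappa_\xi+1)+\kappa_\xi(\kappa_\zeta+1)\bigr)$, and the final $\kappa_\zeta(\kappa_\zeta+1)$ in the stated $\Lzx$ (and in the paper's own proof) appears to be a typo for $\kappa_\xi(\kappa_\zeta+1)$.
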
    
\begin{proof}
For arbitrary $\theta_1,\theta_2\in\Theta$, denote $(\zeta_1^*,\xi^*_1)$ and $(\zeta_2^*,\xi^*_2)$ as the saddle-point of $\arg\max_{\zeta\in Z}\min_{\xi\in\Xi}\cL^D(\theta,\zeta,\xi)$ when $\theta=\theta_1$ and $\theta=\theta_2$, respectively, according to Lemma \ref{lem:sd_shift} we have:
\begin{align*}
    \|\nt L^{D*}(\theta_1) - \nt L^{D*}(\theta_2)\| \leq& L(\|\theta_1-\theta_2\|+\|\zeta_1-\zeta_2\|+\|\xi_1-\xi_2\|)\\
    \leq& L(1+\kappa_\zeta(\kappa_\xi+1)+\kappa_\zeta(\kappa_\zeta+1))\|\theta_1-\theta_2\|
\end{align*}
\end{proof}

\begin{restatable}{theorem}{AlgTwoConvergeRate}\label{thm:alg2_converage_rate}
    Under Assumption \ref{assump:bound_pi_grad}, \ref{assump:feature_matrices} and \ref{assump:dataset}, given arbitrary $\epsilon$, 
    Algorithm \ref{alg:SPIM_Oracle} will return us a policy $\pi_{\hat\theta}$, satisfying
    \begin{align*}
        \EE[\|\nt J(\pi_{\hat\theta})\|]\leq \epsilon+\epsilon_{reg} + \epsilon_{data}+\epsilon_{func}
    \end{align*}
    if the hyper-parameters in Alg. \ref{alg:SPIM_Oracle} satisfy the following constraints:
    \begin{align*}
        T=&\lceil\max \{96, \frac{16\Delta L^{D*} \Lzx}{\epsilon^2}, \frac{16\Delta L^{D*} L\sqrt{28\Czx+2}}{\epsilon^2 }\}\rceil=O(\epsilon^{-2}),\quad |B|=\lceil \frac{36\sigma^2}{\epsilon^2}]=O(\epsilon^{-2})\rceil,\\
        \alpha=0.5,&\quad \lambda=2(1-\alpha)^2=0.5,\quad\beta= \min\{\frac{\epsilon^2}{L^2d^2},(1-\lambda)^2,1/2\},\quad \et=\min\{\frac{1}{2\Lzx}, \sqrt{\frac{1-\lambda}{6L^2(14\Czx+1)}}\}.
    \end{align*}
    where $\Delta L^{D*}:=\max_{\theta\in\Theta}\cL^{D*}(\theta)-\cL^{D*}(\theta_0)$, $\Czx=\kappa^2_\mu(\kappa_\xi+1)^2+\kappa^2_\xi(\kappa_\zeta+1)^2$, and $c_{oracle}$ is an independent constant.
    
    Besides, the total gradient computation for O-SRM to obtain $\hat\theta$ should be 
    $O(\epsilon^{-4})$, if: 
    
    (1) either we choose Algorithm \ref{alg:PLSO} in Appendix \ref{appx:PLSO} as the \textsc{Oracle} and set the hyper-parameter $|N_{all}|$ in Algorithm \ref{alg:PLSO} to be
    \begin{align*}
        |N_{all}|=480\frac{d(2L^2+1)C_{LS}^2}{\epsilon^2}=O(\epsilon^{-2})
    \end{align*}
    where $C_{LS}$ is a constant defined in Appendix \ref{appx:PLSO},

    (2) or we choose Algorithm \ref{alg:SVREB} in Appendix \ref{appx:SVRE_Oracle} as the \textsc{Oracle} and set the hyper-parameters in Algorithm \ref{alg:SVREB} to be:
    \begin{align*}
        &|N|=\lceil\frac{240(L^2+1)\sigma^2}{\min\{\frac{\mz\ez}{4},\frac{\mx\ex}{4}\}\epsilon^2}(\frac{\ez}{\mz}+\frac{\ex}{\mx})\rceil,\quad K=c_{oracle}\log(\frac{1}{\beta}),\\
        &\ez\leq \frac{1}{50 \max\{\bar{L}_\zeta, \mu_\zeta\}},\quad\ex\leq \frac{1}{50 \max\{\bar{L}_\xi, \mu_\xi\}}.
    \end{align*}
    where $\bar{L}_\zeta$ and $\bar{L}_\xi$ are defined in Condition \ref{cond:smooth_SVRE}.
\end{restatable}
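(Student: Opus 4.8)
The plan is to bound $\EE[\|\nt J(\pi_{\hat\theta})\|]$ by routing through the surrogate objective $L^{D*}(\theta):=\max_{\zeta\in Z}\min_{\xi\in\Xi}\cL^D(\theta,\zeta,\xi)$, whose gradient is precisely $\nt\max_{w\in\cW}\min_{Q\in\cQ}\cL^D(\pi_\theta,w,Q)$. By the triangle inequality, $\|\nt J(\pi_\theta)\|\leq\|\nt L^{D*}(\theta)\|+\|\nt L^{D*}(\theta)-\nt J(\pi_\theta)\|$, and the second term is bounded by $\epsilon_{reg}+\epsilon_{func}+\epsilon_{data}$ directly from Theorem \ref{thm:biasedness}. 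Hence it suffices to establish $\EE[\|\nt L^{D*}(\hat\theta)\|]\leq\epsilon$ for $\hat\theta$ drawn uniformly from $\{\theta_0,\dots,\theta_T\}$, after which the complexity accounting completes the proof.

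First I would use the $\Lzx$-smoothness of $L^{D*}$ (Proposition \ref{prop:smooth_LDstar}) to write the ascent inequality along $\theta_{t+1}=\theta_t+\et\gt^t$, namely $L^{D*}(\theta_{t+1})\geq L^{D*}(\theta_t)+\et\langle\nt L^{D*}(\theta_t),\gt^t\rangle-\tfrac{\Lzx\et^2}{2}\|\gt^t\|^2$. Splitting $\gt^t=\nt L^{D*}(\theta_t)+(\gt^t-\nt L^{D*}(\theta_t))$ and applying Young's inequality to the cross term turns the inner product into $\tfrac12\|\nt L^{D*}(\theta_t)\|^2$ minus a gradient-tracking error $\tfrac12\|\gt^t-\nt L^{D*}(\theta_t)\|^2$. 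Telescoping over $t=0,\dots,T-1$ and taking expectations then yields
\[
\tfrac{\et}{2}\sum_{t=0}^{T-1}\EE[\|\nt L^{D*}(\theta_t)\|^2]\leq \Delta L^{D*}+\tfrac{\et}{2}\sum_{t=0}^{T-1}\EE[\|\gt^t-\nt L^{D*}(\theta_t)\|^2]+\tfrac{\Lzx\et^2}{2}\sum_{t=0}^{T-1}\EE[\|\gt^t\|^2].
\]

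The crux is closing this recursion by absorbing the last two sums. Bounding $\|\gt^t\|^2\leq 2\|\nt L^{D*}(\theta_t)\|^2+2\|\gt^t-\nt L^{D*}(\theta_t)\|^2$ reintroduces $\|\nt L^{D*}\|^2$ on the right, so the step size must be small enough that its net coefficient stays strictly below $\tfrac{\et}{2}$. For the tracking-error sum I would substitute the per-step estimate of Lemma \ref{lem:diff_update_vs_truegrad}; each such bound itself carries a history term $\sum_{i\le t}\big(\lambda^{t-i}(\cdots)+\beta^{t-i}(\cdots)\big)\|\gt^i\|^2$, so summing over $t$ produces a double sum. Swapping the order of summation and evaluating the inner geometric series in $\lambda$ and $\beta$ collapses this back to a single sum $\sum_i C(\Czx,\lambda,\beta)\|\gt^i\|^2$ with a \emph{constant} coefficient; this is exactly where the constants $14\Czx+1$ and $28\Czx+2$ in the hyper-parameter choices originate, calibrated so that under $\et\leq\tfrac{1}{2\Lzx}$ and $\et\leq\sqrt{(1-\lambda)/(6L^2(14\Czx+1))}$ the total coefficient multiplying $\sum_t\EE[\|\nt L^{D*}(\theta_t)\|^2]$ on the right can be kept at most $\tfrac{\et}{4}$, leaving a net $\tfrac{\et}{4}$ on the left. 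I expect this double-sum bookkeeping---keeping the momentum rate $\lambda=2(1-\alpha)^2$ and the Oracle contraction rate $\beta$ separate, with $\beta$ taken small enough (below $\lambda$ and below $(1-\alpha)^2/2$, as Lemma \ref{lem:diff_update_vs_truegrad} requires) that both geometric series converge---to be the main obstacle.

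After absorption, dividing by $\tfrac{\et T}{4}$ and averaging gives $\EE[\|\nt L^{D*}(\hat\theta)\|^2]\leq O\big(\tfrac{\Delta L^{D*}}{\et T}+\tfrac{\alpha^2\sigma^2}{(1-\lambda)|B|}+L^2\beta d^2+\tfrac{L^2 c}{1-\beta}\big)$, the residual constants coming from the non-$\|\gt\|^2$ terms of Lemmas \ref{lem:recursive_split} and \ref{lem:diff_update_vs_truegrad}. I would then check term by term that the stated choices $T=O(\epsilon^{-2})$, $|B|=O(\epsilon^{-2})$, $\beta=\min\{\epsilon^2/(L^2d^2),\dots\}$, and an Oracle accuracy $c=O(\epsilon^2)$ force each summand below $\epsilon^2$, so that by Jensen $\EE[\|\nt L^{D*}(\hat\theta)\|]\leq\epsilon$; combined with the bias bound this proves the claimed inequality. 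Finally, for the complexity I would observe that the actor runs $T=O(\epsilon^{-2})$ outer steps, each costing $|B|=O(\epsilon^{-2})$ actor samples plus one Oracle call; instantiating the Oracle with Algorithm \ref{alg:PLSO} (taking $N_{all}=O(C_{LS}^2/c)=O(\epsilon^{-2})$) or Algorithm \ref{alg:SVREB} (taking $|N|=O(\epsilon^{-2})$ and $K=O(\log(1/\beta))$, with Theorem \ref{thm:oracle_thm} certifying Condition \ref{def:oracle_alg} for the required $\beta$ and $c$) makes the per-step cost $O(\epsilon^{-2})$ up to a logarithmic factor, for a total of $O(\epsilon^{-4})$.
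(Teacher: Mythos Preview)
Your overall plan coincides with the paper's: reduce to $\EE[\|\nt L^{D*}(\hat\theta)\|^2]$ via Theorem~\ref{thm:biasedness}, unroll the $\Lzx$-smoothness ascent inequality, feed in Lemma~\ref{lem:diff_update_vs_truegrad}, collapse the double sum over the $\lambda$- and $\beta$-geometric weights, then tune the four residual contributions (variance, drift, $\beta d^2$, and oracle error $c$) to $\epsilon^2/4$ each; the complexity count is also identical.

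The one substantive difference is how you treat $\langle \nt L^{D*}(\theta_t),\gt^t\rangle$. You apply Young's inequality, obtaining $\tfrac{\et}{2}\|\nt L^{D*}\|^2-\tfrac{\et}{2}\|\gt^t-\nt L^{D*}\|^2$ and leaving $-\tfrac{\Lzx\et^2}{2}\|\gt^t\|^2$ as a cost. The paper instead uses the polarization identity $\langle a,b\rangle=\tfrac12\|a\|^2+\tfrac12\|b\|^2-\tfrac12\|a-b\|^2$, which produces an extra $+\tfrac{\et}{2}\|\gt^t\|^2$; combined with the smoothness term this is $(\tfrac{\et}{2}-\tfrac{\Lzx\et^2}{2})\|\gt^t\|^2\geq\tfrac{\et}{4}\|\gt^t\|^2$ under $\et\leq\tfrac{1}{2\Lzx}$. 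That positive $\|\gt^t\|^2$ budget is exactly what absorbs the history term $\sum_i c\,\et^2 L^2\|\gt^i\|^2$ coming out of Lemma~\ref{lem:diff_update_vs_truegrad}: the paper bounds $p:=\sum_t\bigl(\|\gt^t-\nt L^{D*}_t\|^2-\tfrac12\|\gt^t\|^2\bigr)$ and checks that $\et^2\leq\tfrac{1-\lambda}{4L^2(14\Czx+1)}$ makes the net coefficient of every $\|\gt^i\|^2$ nonpositive, so those terms are simply dropped.

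Your route can be closed, but your write-up has a small gap at precisely this point. After you substitute the lemma and collapse the double sum you land on a $\sum_i C\|\gt^i\|^2$ term; bounding it via $\|\gt^i\|^2\leq 2\|\nt L^{D*}\|^2+2\|\gt^i-\nt L^{D*}\|^2$ regenerates a tracking-error sum, not only the $\|\nt L^{D*}\|^2$ sum you mention. To finish you must treat $\sum_t\|\nt L^{D*}\|^2$, $\sum_t\|\gt^t-\nt L^{D*}\|^2$, and $\sum_t\|\gt^t\|^2$ as three coupled quantities and solve the resulting linear system (the step-size constraints make the implicit contraction factor $<1$). This works but costs slightly worse constants than those stated; the paper's polarization trick avoids the circularity entirely and is what pins down the exact $14\Czx+1$ and $28\Czx+2$ appearing in the hyper-parameters.
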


\begin{proof}
    Recall the definition of $\cL^{D*}$ in Prop \ref{prop:smooth_LDstar} and $\cL^{D*}$ is $L+C_{\zeta,\xi}$-smooth.
\begin{align*}
    L^{D*}(\theta_{T+1})=& L^{D*}(\theta_T+\et \gt^T)\\
    \geq & L^{D*}(\theta_T)+\et (\gt^T)\trans \nt L^{D*}(\theta_T)-\frac{\et^2 \Lzx}{2}\|\gt^T\|^2\\
    =&L^{D*}(\theta_T)+\frac{\et}{2}\|\nt L^{D*}(\theta_T)\|^2-\frac{\et}{2}\|\gt^T-\nt L^{D*}(\theta_T)\|^2+(\frac{\et}{2}-\frac{\et^2 \Lzx}{2})\|\gt^T\|^2\\
    \geq& L^{D*}(\theta_T)+\frac{\et}{2}\|\nt L^{D*}(\theta_T)\|^2-\frac{\et}{2}\|\gt^T-\nt L^{D*}(\theta_T)\|^2+\frac{\et}{4}\|\gt^T\|^2\\
    \geq & L^{D*}(\theta_0)+\frac{\eta_\theta}{2}\sum_{t=0}^T\|\nt L^{D*}(\theta_T)\|^2-\frac{\eta_\theta}{2}\Big(\underbrace{\sum_{t=0}^T\|\gt^t -\nt L^{D*}(\theta_T)\|^2-\frac{1}{2}\|\gt^t\|^2\Big)}_{p}
\end{align*}
where in the second equation, we use the fact that $(\gt^T)\trans \nt L^{D*}(\theta_T)=\frac{1}{2}\|\nt L^{D*}(\theta_T)\|^2+\frac{1}{2}\|\gt^T\|^2-\frac{1}{2}\|\gt^T-\nt L^{D*}(\theta_T)\|^2$; in the second inequality, we add a constraint for $\et$ that $\et\leq \frac{1}{2\Lzx}$;

Next, we give a upper bound for $p$ with Lemma \ref{lem:diff_update_vs_truegrad}:

\begin{align*}
    &p=\sum_{t=0}^T\|\gt^\tau -\nt \cL^{D*}(\theta_t)\|^2-\frac{1}{2}\|\gt^t\|^2\\
    \leq& \sum_{t=0}^T\Big\{2\lambda^{t+1}\EE[\|\gt^0 -\nt\cL^D_0\|^2] +2\frac{\alpha^2\sigma^2}{(1-\lambda)|B|}+4L^2 \Big(\beta^{t+2}d^2+\frac{\rc}{1-\beta}+\frac{3\lambda c}{(1-\beta)(1-\lambda)}+\frac{3d^2 \beta \lambda^{t+2}}{\lambda-\beta}\Big)\\
    &+2\et^2 L^2\Expect{\sum_{i=0}^{t} \Big(\lambda^{t-i+1}\Big(6\Czx \frac{\lambda}{\lambda-\beta}+1\Big)+2\Czx\beta^{t-i+1}\Big)\|g_\theta^i\|^2}-\frac{1}{2}\expect{\|\gt^t\|^2}\Big\}\\
    \leq& 2(T+1)\frac{\alpha^2\sigma^2}{(1-\lambda)|B|}+4(T+1)\rc L^2(\frac{1}{1-\beta}+\frac{3\lambda}{(1-\beta)(1-\lambda)})+\frac{2(T+1)}{1-\lambda}\EE[\|\gt^0 -\nt\cL^D_0\|^2]\\
        &+4L^2d^2(\frac{\beta^2}{1-\beta}+\frac{3\beta\lambda^2}{(1-\lambda)(\lambda-\beta)})+\sum_{t=0}^T \expect{\|\gt^t\|^2} \Big\{-\frac{1}{2}+2\et^2 L^2\sum_{i=0}^{T-t+1} \Expect{\lambda^{i}\Big(6\Czx \frac{\lambda}{\lambda-\beta}+1\Big)+2\Czx\beta^{i}}\Big\}\\
    \leq& 2(T+1)\frac{\alpha^2\sigma^2}{(1-\lambda)|B|}+4(T+1)\rc L^2(\frac{1}{1-\beta}+\frac{3\lambda}{(1-\beta)(1-\lambda)})+\frac{2(T+1)}{1-\lambda}\EE[\|\gt^0 -\nt\cL^D_0\|^2]\\
            &+4L^2d^2(\frac{\beta^2}{1-\beta}+\frac{3\beta\lambda^2}{(1-\lambda)(\lambda-\beta)})+\sum_{t=0}^T \expect{\|\gt^t\|^2} \Big\{-\frac{1}{2}+2\et^2 L^2\Big(\frac{1}{1-\lambda}\Big(6\Czx \frac{\lambda}{\lambda-\beta}+1\Big)+2\Czx\frac{1}{1-\beta}\Big)\Big\}\\
    \leq& 2(T+1)\frac{\alpha^2\sigma^2}{(1-\lambda)|B|}+4(T+1)\rc L^2(\frac{1}{1-\beta}+\frac{3\lambda}{(1-\beta)(1-\lambda)})+\frac{2(T+1)}{1-\lambda}\EE[\|\gt^0 -\nt\cL^D_0\|^2]\\
            &+4L^2d^2(\frac{\beta^2}{1-\beta}+\frac{6\beta\lambda}{1-\lambda})+\sum_{t=0}^T \expect{\|\gt^t\|^2} \Big\{-\frac{1}{2}+\frac{2\et^2 L^2}{1-\lambda}\Big(14\Czx+1\Big)\Big\}\tag{$\beta\leq (1-\alpha)^2=\frac{\lambda}{2}\leq \lambda$}\\
    \leq& 2(T+1)\frac{\alpha^2\sigma^2}{(1-\lambda)|B|}+4(T+1)\rc L^2(\frac{1}{1-\beta}+\frac{3\lambda}{(1-\beta)(1-\lambda)})+\frac{2(T+1)}{1-\lambda}\EE[\|\gt^0 -\nt\cL^D_0\|^2]\\
            &+4L^2d^2(\frac{\beta^2}{1-\beta}+\frac{6\beta\lambda}{1-\lambda}).
\end{align*}

In the fourth inequality, we add the following constraint to drop the terms containing $\|\gt\|$:
\begin{align}
\et\leq \sqrt{\frac{1-\lambda}{4L^2(14\Czx+1)}}.\label{eq:et_constraint2}
\end{align}
Therefore,

\begin{align*}
    \frac{1}{T+1}\sum_{t=0}^T\|\nt \cL^{D*}(\theta_\tau)\|^2    \leq& \frac{2}{(T+1)\et}(\cL^{D*}(\theta_T)-\cL^{D*}(\theta_0))+\frac{1}{T+1}\sum_{\tau=0}^T\Big(\|\gt^\tau -\nt \cL^{D*}(\theta_\tau)\|^2-\frac{1}{2}\|\gt^\tau\|^2\Big)\\
    \leq& {4\rc L^2(\frac{1}{1-\beta}+\frac{3\lambda}{(1-\beta)(1-\lambda)})}+{\frac{2\Delta\cL^{D*}}{(T+1)\et}}\\
    &+{\frac{2}{1-\lambda}\EE[\|\gt^0 -\nt\cL^D_0\|^2]}+{2\frac{\alpha^2\sigma^2}{(1-\lambda)|B|}}+{4L^2d^2(\frac{\beta^2}{1-\beta}+\frac{6\beta\lambda}{1-\lambda})}\\
    \leq& \underbrace{4\rc L^2\frac{(6\lambda+1)}{1-\lambda}}_{p_0}+\underbrace{\frac{2\Delta\cL^{D*}}{(T+1)\et}}_{p_1}+\underbrace{\frac{2(1+\alpha^2)}{1-\lambda}\frac{\sigma^2}{|B|}}_{p_2}+\underbrace{24L^2d^2\frac{\beta\lambda}{(T+1)(1-\lambda)}}_{p_3}. \tag{$\beta\leq 1/2,\beta\leq \lambda$, $\EE[g^0_\theta]=\cL^D_0$}
\end{align*}

\paragraph{The choice of $\alpha$}
In order to guarantee that $\lambda =2(1-\alpha)^2 < 1$ and $\alpha < 1$, we first need $\alpha\in(1-\frac{\sqrt{2}}{2}, 1)$. Easy to observe that, the coefficient of $p_2$:
\begin{align*}
    \frac{1+\alpha^2}{1-\lambda}=\frac{1+\alpha^2}{1-2(1-\alpha)^2}=\frac{1/2+2\alpha}{-1+4\alpha-2\alpha^2}-1/2.
\end{align*}
By taking the derivative w.r.t. $\alpha$, we have:
$$
(\frac{1+\alpha^2}{1-2(1-\alpha)^2})'=\frac{2(-1+4\alpha-2\alpha^2)-(1/2+2\alpha)(4-4\alpha)}{(-1+4\alpha-2\alpha^2)^2}=\frac{4\alpha^2+2\alpha-4}{(-1+4\alpha-2\alpha^2)^2}.
$$
We can observe easily that the zero point of the first-order derivative should located in interval $(\frac{1}{2},1)$.
Besides, $p_0,p_1,p_3$ are decreasing as $\alpha$ approaching 1 (i.e. $\lambda$ approaching 0). Therefore, the optimal choice of $\alpha$ should locates in $(\frac{1}{2},1)$.

As we can see, the introduction of $\alpha$ is to balancing the bias of momentum term and the variance in the current step to provide a better estimation of the gradient in current step. In the following, for simplicity, we directly choose $\alpha=1/2$.

Next, we want to carefully choose other hyper-parameters to make sure $p_0, p_1, p_2, p_3\leq \epsilon^2/4$. We consider $\beta\leq \min\{\frac{\epsilon^2}{L^2d^2},(1-\lambda)^2,1/2\}$. 
\paragraph{Control $p_0$}
Since $\beta<1/2$, we know
\begin{align*}
    p_0\leq 32\rc L^2.
\end{align*}
To make sure $p_0\leq \epsilon^2/4$, we need
\begin{align*}
    \rc \leq \frac{\epsilon^2}{128 L^2}.
\end{align*}
If we choose Algorithm \ref{alg:PLSO} as \textsc{Oracle}, the hypyer-parameter $|N|$ (total sample size) in Algorithm \ref{alg:PLSO} should be set to:
\begin{align*}
    |N|\geq 20d\frac{C_{LS}^2}{\rc} = O(\epsilon^{-2})
\end{align*}
If we choose Algorithm \ref{alg:SPIM_Oracle}, the hyper-parameter $|N|$ (mini batch size) and $K$ (total iteration) in Algorithm \ref{alg:PLSO} should be set to:
\begin{align*}
    \frac{8\sigma^2}{\min\{\frac{\mz\ez}{4},\frac{\mx\ex}{4}\}|\batch|}(\frac{\ez}{\mz}+\frac{\ex}{\mx})
    \leq  \frac{\epsilon^2}{128L^2},~~~~K=c_{oracle}\log(1/\beta)=O(\log\frac{1}{\epsilon}).
\end{align*}
where $c_{oracle}$ is an independent constant. 
Therefore,
\begin{align*}
    |\batch|\geq  \frac{240(L^2+1)\sigma^2}{\min\{\frac{\mz\ez}{4},\frac{\mx\ex}{4}\}\epsilon^2}(\frac{\ez}{\mz}+\frac{\ex}{\mx})=O(\epsilon^{-2}).
\end{align*}

\paragraph{Control $p_1$}
Since we have two constrains on $\et$: $\et\leq\frac{1}{2\Lzx}$ and $\et\leq \sqrt{\frac{1-\lambda}{4L^2(14\Czx+1)}}$. To make sure $p_1\leq \frac{\epsilon^2}{4}$, we need,
\begin{align*}
    \frac{2\Delta L^{D*}}{(T+1)}\max\{\sqrt{\frac{4L^2(14\Czx+1)}{1-\lambda}}, 2\Lzx\} \leq \frac{\epsilon^2}{4}.
\end{align*}
Therefore, we requirement
\begin{equation}
    T \geq \max\{\frac{16\Delta L^{D*} \Lzx}{\epsilon^2}, \frac{16\Delta L^{D*} L\sqrt{28\Czx+2}}{\epsilon^2 }\}.\label{eq:control_p1_1}
\end{equation} 

\paragraph{Control $p_2$}
Recall that we choose $\alpha=1/2$, we need:
\begin{align*}
    p_2=\frac{2(1+\alpha)^2}{1-\lambda}\frac{\sigma^2}{|B|}=9\frac{\sigma^2}{|B|}\leq \frac{\epsilon^2}{4}.
\end{align*}
which requires:
\begin{align}
    |B|\geq \frac{36\sigma^2}{\epsilon^2}\label{eq:control_p2}
\end{align}

\paragraph{Control $p_3$}
Because $K$ (iteration number of $Oracle$ Algorithm) only depends on $\log \frac{1}{\beta}$, we choose $\beta \leq \frac{\epsilon^2}{L^2d^2}$. Then, as long as $T \geq 168$, we have:
\begin{align}
    p_3=24L^2d^2\frac{\beta\lambda}{(T+1)(1-\lambda)}=&\frac{24\beta L^2d^2}{T+1}\leq \frac{\epsilon^2}{4}\label{eq:control_p3}
\end{align}
Combine \eqref{eq:control_p1_1}-\eqref{eq:control_p3}, we need
\begin{align*}
    T\geq \max \{96, \frac{16\Delta L^{D*} \Lzx}{\epsilon^2}, \frac{16\Delta L^{D*} L\sqrt{28\Czx+2}}{\epsilon^2 }\},\quad |B|\geq\frac{36\sigma^2}{\epsilon^2};
\end{align*}
As a result, if we use Algorithm \ref{alg:PLSO} as \textsc{Oracle}, the total computation before obtaining $\theta_{out}$ should be
\begin{align*}
    |B|\cdot(T+1) + |\batch|\cdot T=O(\epsilon^{-2})+O(\epsilon^{-4})+O(\epsilon^{-2}) \cdot O(\epsilon^{-2})=O(\epsilon^{-4})
\end{align*}
and if we use Algorithm \ref{alg:SVREB} as \textsc{Oracle}, it should be
\begin{align*}
    |B|\cdot (T+1) + |\batch|\cdot K\cdot T=O(\epsilon^{-2})+O(\epsilon^{-4})+O(\epsilon^{-2})\cdot O(\log\frac{1}{\epsilon}) \cdot O(\epsilon^{-2})=O(\epsilon^{-4})
\end{align*}
where we omit $\log$ terms in $O(\cdot)$.

\paragraph{Guarantee of $\EE[\|\nt J(\pi_{\hat\theta})\|]$}
By applying the hyper-parameters discussed above, we have:
\begin{align*}
    \EE[\|\nt J(\pi_{\hat\theta})\|]\leq&\EE[\|\nt \cL^{D*}(\hat\theta)\|]+\epsilon_{reg}+\epsilon_{func}+\epsilon_{data} \\
    \leq & \sqrt{\EE[\|\nt \cL^{D*}(\hat\theta)\|^2]}+\epsilon_{reg}+\epsilon_{func}+\epsilon_{data}\\
    =&\sqrt{\frac{1}{T+1}\sum_{t=0}^{T}\EE[\|\nt \cL^{D*}(\theta_t)\|^2]}+\epsilon_{reg}+\epsilon_{func}+\epsilon_{data}\\
    \leq& \epsilon+\epsilon_{reg}+\epsilon_{func}+\epsilon_{data}
\end{align*}
\end{proof}
\end{document}